\documentclass[twoside,11pt]{article}

%
\usepackage{jmlr2e}
\usepackage{flushend}
\usepackage{cite}
\usepackage{amsmath,amssymb}    
\usepackage{graphicx}   
\usepackage{verbatim}   
\usepackage{color}      
\usepackage{subfigure}  
\usepackage{booktabs}
\usepackage{multirow}
\usepackage{graphics}
\usepackage{epsfig}
\usepackage{float}
\usepackage{algorithm}
\usepackage{algorithmic}
\usepackage{subfigure}

\usepackage{xcolor}
\usepackage{ctable}
\usepackage{colortbl}

\usepackage{tabularx}
\newcolumntype{Y}{>{\centering\arraybackslash}X}

\setlength\abovedisplayskip{0.3mm}
\setlength\belowdisplayskip{0.3mm}
\newcommand*{\rom}[1]{\expandafter\@slowromancap\romannumeral #1@}

\newcommand{\si}[0]{\mathbf{x_i}}





\ShortHeadings{A Spectral Approach for the Design of Experiments: Design, Analysis and Algorithms}{anonymous}
\firstpageno{1}

\begin{document}

\title{A Spectral Approach for the Design of Experiments: Design, Analysis and Algorithms}

\author{\name Bhavya Kailkhura \email kailkhura1@llnl.gov \\
       \addr Center for Applied Scientific Computing\\
       Lawrence Livermore National Lab\\
       Livermore, CA 94550, USA
       \AND
       \name Jayaraman J.\ Thiagarajan \email jjayaram@llnl.gov  \\
       \addr Center for Applied Scientific Computing\\
       Lawrence Livermore National Lab\\
       Livermore, CA 94550, USA
       \AND
       \name Charvi\ Rastogi \email charvirastogi@iitb.ac.in \\
       \addr Department of EECS\\
       Indian Institute of Technology\\
       Bombay, MH 400076, India
       \AND
       \name Pramod K.\ Varshney \email varshney@syr.edu \\
       \addr Departments of EECS\\
       Syracuse University\\
       Syracuse, NY 13244, USA
       \AND
       \name Peer-Timo \ Bremer \email bremer5@llnl.gov \\
       \addr Center for Applied Scientific Computing\\
       Lawrence Livermore National Lab\\
       Livermore, CA 94550, USA}


\editor{TBD}

\maketitle

\begin{abstract}%
This paper proposes a new approach to construct high quality space-filling sample designs. First, we propose a novel technique to quantify the space-filling property and optimally trade-off uniformity and randomness in sample designs in arbitrary dimensions. 
Second, we connect the proposed metric (defined in the spatial domain) to the objective measure of the design performance (defined in the spectral domain). 
This connection serves as an analytic framework for evaluating the qualitative properties of space-filling designs in general. 
Using the theoretical insights provided by this spatial-spectral analysis, we derive the notion of optimal space-filling designs, which we refer to as space-filling spectral designs. 
Third, we propose an efficient estimator to evaluate the space-filling properties of sample designs in arbitrary dimensions and use it to develop an optimization framework to generate high quality space-filling designs. 
Finally, we carry out a detailed performance comparison on two different applications in $2$ to $6$ dimensions: a) image reconstruction and b) surrogate
modeling on several benchmark optimization functions and an inertial confinement fusion (ICF) simulation code.
We demonstrate that the propose spectral designs significantly outperform existing approaches especially in high dimensions.
\end{abstract}

\begin{keywords}
design of experiments, space-filling, poisson-disk sampling, surrogate modeling, regression
\end{keywords}
\section{Introduction}
\label{intro}
Exploratory analysis and inference in high dimensional parameter
spaces is a ubiquitous problem in science and engineering. As a
result, a wide-variety of machine learning tools and optimization
techniques have been proposed to address this challenge. In its most
generic formulation, one is interested in analyzing a high-dimensional
function $f: \mathcal{D} \rightarrow \mathbb{R}$ defined on the
$d$-dimensional domain $\mathcal{D}$. A typical approach for such an
analysis is to first create an initial sampling
$\mathcal{X}=\{\mathbf{x_i}\in \mathcal{D}\}_{i=1}^N$ of
$\mathcal{D}$, evaluate $f$ at all $\mathbf{x_i}$, and perform
subsequent analysis and learning using only the resulting tuples
$\{(\mathbf{x_i},f(\mathbf{x_i}))\}_{i=1}^N$. Despite the widespread
use of this approach, a critical question that still persists is: how
should one obtain a high quality initial sampling $\mathcal{X}$ for
which the data $f(\mathcal{X})$ is collected? This challenge is
typically refered to as Design of Experiments (DoE) and solutions have
been proposed as early as ~\citep{fisher} to optimize agricultural
experiments. Subsequently, DoE has received significant attention from
researchers in different fields~\citep{doe:review} as it is an
important \textit{building block} for a wide variety of applications,
such as surrogate modeling, image reconstruction, reinforcement
learning, or data analysis. In several scenarios, it has been shown
that success crucially depends on the quality of the initial sampling
$\mathcal{X}$. Currently, a plethora of sampling solutions exist in
the literature with a wide-range of assumptions and statistical
guarantees; see~\citep{doe:review, Sampling:survey} for a detailed
review of related methods. Conceptually, most approaches aim to cover
the sampling domain as uniformly as possible, in order to generate the
so called \textit{space-filling} experimental designs. However, it is
well know that uniformity alone does not necessarily lead to high
performance. For example, optimal sphere packings lead to highly
uniform designs, yet are well known to cause strong aliasing artifacts
most easily perceived in many computer graphics applications. Instead,
a common assumption is that a good design should balance uniformity
and randomness. Unfortunately, an exact definition for what should be
considered a good space-filling design remains elusive.


Most common approaches use various scalar metrics to encapsulate
different notions of ideal sampling properties. One popular metric is
the discrepancy of an experimental design, defined as an appropriate
$\ell_p$ norm of the ratio of points within all (hyper-rectangular)
sub-volumes of $\mathcal{D}$ and the corresponding volume ratio. In
other words, discrepancy quantifies the non-uniformity of a sample
design. The most prominent examples of so called {\it discrepancy
  sequences} are Quasi-Monte Carlo (QMC) methods
and their variants~\citep{caflisch_1998}. In their classical form, discrepancy
sequences are deterministic though extensions to incorporate randomess
have been proposed, for example, using digital
scrambling~\citep{Owen:1995}. Nevertheless, by optimizing for
discrepancy these techniques focus almost exclusively on uniformity,
and consequently even optimized QMC patterns can be quite structured and create
aliasing artifacts. Furthermore, even the fastest known strategies for
evaluating popular discrepancy measures require $O(N^2d)$ operations
making evaluation, let alone optimization, for discrepancy difficult even
for moderate dimensions. Finally, for most discrepancy measures, the
optimal achievable values are not known. This makes it difficult to
determine whether a poorly performing sample design is due to the
insufficiency of the chosen discrepancy measure or due to ineffective optimization.


Another class of metrics to describe sample designs are based on
geometric distances. These can be used directly by, for example,
optimizing the maximin or minimax distance of a sample
design~\citep{Schlomer:2011} or indirectly by enforcing empty disk
conditions. The latter is the basis for the so-called Poisson disk
samples~\citep{Lagae:2008}, which aim to generate random points such
that no two samples can be closer than a given minimal distance
$r_{min}$, i.e.\ enforcing an empty disc of radius $r_{min}$ around
each sample. Typically, Poisson-type samples are characterized by the
{\it relative radius}, $\rho$, defined as the ratio of the minimum
disk radius $r_{min}$ and the maximum possible disk radius $r_{max}$
for $N$ samples to cover the sampling domain. Similar to the
discrepancy sequences, maximin and minimax designs exclusively
consider uniformity, are difficult to optimize for especially in
higher dimensions, and often lead to very regular patterns. Poisson
disk samples use $\rho$ to trade off randomness (lower $\rho$ values)
and uniformity (higher $\rho$ values). A popular recommendation in
$2$-d is to aim for $0.65 \le \rho \le 0.85$ as a good
compromise. However, there does not exist any theoretical guidance for choosing
$\rho$ and hence, optimal values for higher dimensions are not
known. As discussed in more detail in Section~\ref{sec:realted}, there
also exist a wide variety of techniques that combine different metrics
and heuristics. For example, Latin Hypercube sampling (LHS) aims to
spread the sample points uniformly by stratification, and one can
potentially optimize the resulting design using maximin or minimax
techniques~\citep{lhs:dist}.

In general, scalar metrics to evaluate the quality of a sample design
tend not to be very descriptive. Especially in high dimensions
different designs with, for example, the same $\rho$ can exhibit
widely different performance and for some discrepancy sequences the
optimal designs converge to random samples in high dimensions~\citep{qmc:ana,lds:hd}.
Furthermore, one rarely knows the best achievable value of the metric, i.e.\ the lowest
possible discrepancy, for a given problem which makes evaluating and
comparing sampling designs difficult. Finally, most metrics are expensive to
compute and not easily optimized. This makes it challenging in
practice to create good designs in high dimensions and with large sample sizes. 

To alleviate this problem, we propose a new technique to quantify the
space-filling property, which enables us to systematically trade-off
uniformity and randomness, consequently producing better quality
sampling designs. More specifically, we use tools from statistical
mechanics to connect the qualitative performance (in the spectral
domain) of a sampling pattern with its spatial properties
characterized by the pair correlation function (PCF). The PCF measures
the distribution of point samples as a function of distances, thus,
providing a holistic view of the space-filling property (See
Figure~\ref{pcfpoisson}). Furthermore, we establish the connection
between the PCF and the power spectral density (PSD) via the $1-$D
Hankel transform in arbitrary dimensions, thus providing a relation
between the PCF and an objective measure of sampling quality to help
subsequent design and analysis.

Using insights from the analysis of space-filling designs in the spectral
domain, we provide design guidelines to systematically trade-off
uniformity and randomness for a good sampling pattern. The analytical
tractability of the PCF enables us to perform theoretical analysis in the
spectral domain to derive the structure of optimal space-filling
designs, referred to as \textit{space-filling spectral design} in the
rest of this paper.  Next, we develop an edge corrected kernel density
estimator based technique to measure the space-filling property via
PCFs in arbitrary dimensions. In contrast to existing PCF estimation
techniques, the proposed PCF estimator is both accurate and
computationally efficient. Based on this estimator, we
develop a systematic optimization framework and a novel algorithm to
synthesize space-filling spectral designs. In particular, we propose to
employ a weighted least-squares based gradient descent optimization,
coupled with the proposed PCF estimator, to accurately match the
optimal space-filling spectral design defined in terms of the PCF.

Note that, there is a strong connection between the proposed
space-filling spectral designs and coverage based designs such as
Poisson Disk Sampling (PDS)~\citep{Gamito:2009}. However, the major
difference lies in the metric/criterion these techniques use to
estimate and optimize the space-filling designs. Furthermore, existing
works on PDS focus primarily on algorithmic issues, such as worst-case
running times and numerical issues associated with providing
high-quality implementations. However, different PDS methods often
demonstrate widely different performances which raises the questions
of how to evaluate the qualitative properties of different PDS
patterns and how to define an optimal PDS pattern? We argue that,
coverage $(\rho)$ based metrics alone are insufficient for
understanding the statistical aspects of PDS. This makes it difficult
to generate high quality PDS patterns. As we will demonstrate below,
existing PDS approaches largely ignore the randomness objective and
instead concentrate exclusively on the coverage objective resulting in
inferior sampling patterns compared to space-filling spectral designs,
especially in high dimensions. Note that, on the other hand, the
proposed PCF based metric does not have these limitations and enables
a comprehensive analysis of statistical properties of space-filling
designs (including PDS), while producing higher quality sampling
patterns compared to the state-of-the-art PDS approaches.

In~\citep{Kailkhura:2016}, we use the PCF to understand the nature of PDS
and provided theoretical bounds on the sample size of achievable PDS. Here we
significantly extend our previous work and provide a more
comprehensive analysis of the problem along with a novel space-filling
spectral designs, an edge corrected PCF estimator, an optimization approach
to synthesize the space-filling spectral designs and a detailed
evaluation of the performance of the proposed sample design.  The main
contributions of this paper can be summarized as follows:

\begin{itemize}
	\item We provide a novel technique to quantify the space-filling property of sample designs in arbitrary dimensions and systematically trade-off uniformity and randomness. 
	\item We use tools from statistical mechanics to connect the qualitative performance (in the spectral domain) of a sample design  with its spatial properties characterized by the PCF. 
	\item We develop a computationally efficient edge corrected kernel density estimator based technique to estimate the space-filling property in arbitrary dimensions.
	\item Using theoretical insights obtained via spectral analysis of point distributions, we provide design guidelines for optimal space-filling designs.
	\item We devise a systematic optimization framework and a gradient descent optimization algorithm to generate high quality space-filling designs.
	\item We demonstrate the superiority of proposed space-filling spectral samples compared to existing  space-filling approaches
	through rigorous empirical studies on two
	different applications: $a)$ image
	reconstruction and $b)$ surrogate modeling on several benchmark optimization
	functions and an inertial confinement fusion (ICF) simulation code. 
\end{itemize}

\section{Related Work}
\label{sec:realted}
\begin{figure}[t]
	\centering
	\subfigure[]{
		\includegraphics[%
		width=0.35\textwidth,clip=true]{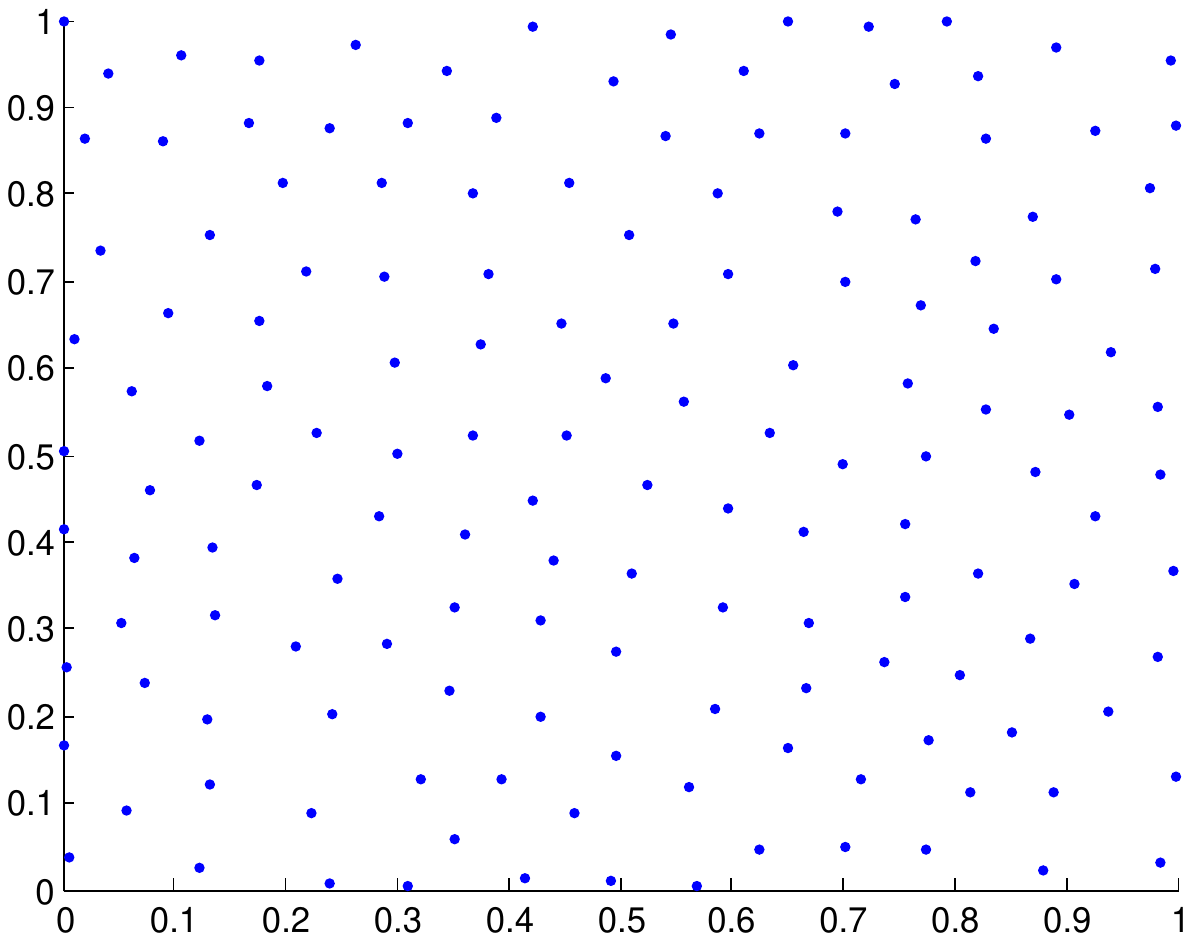}
		\label{point}}
	\hspace{0.1in}
	\subfigure[] {
		\includegraphics[%
		width=0.45\textwidth,clip=true]{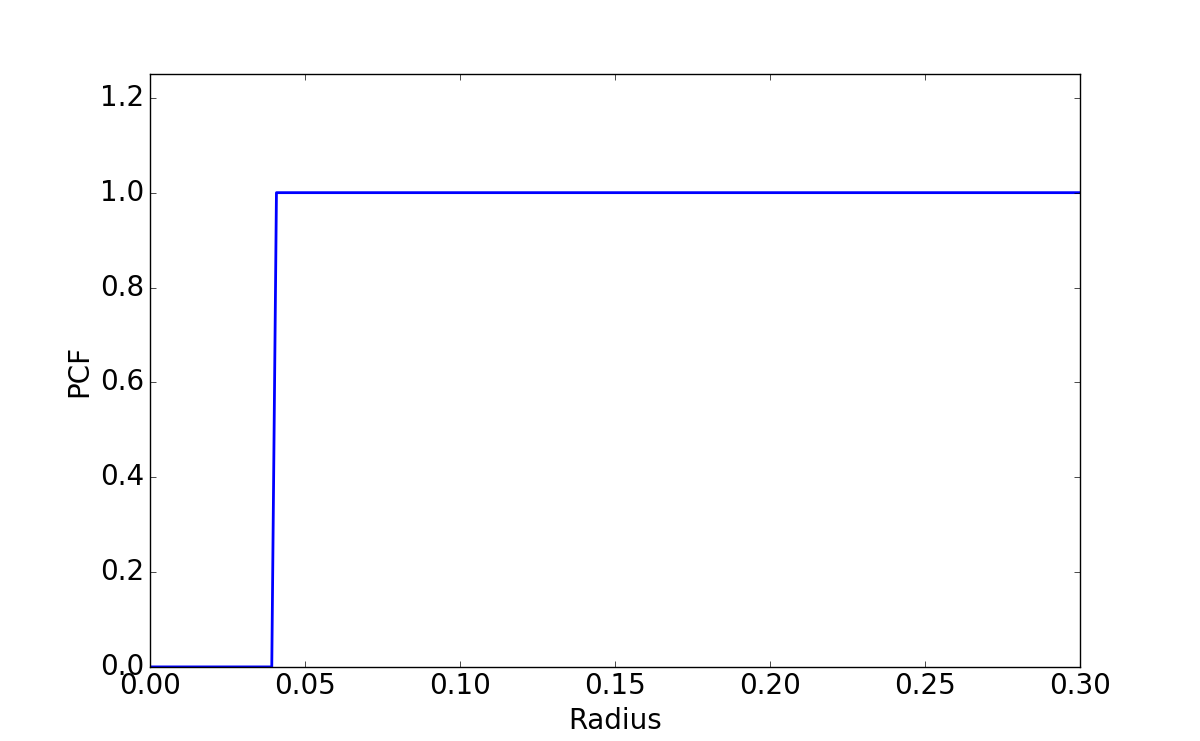}
		\label{pcfpoisson} }
	\caption{A sample design that balances \textit{randomness} and \textit{uniformity}. \subref{point} Point distribution, and \subref{pcfpoisson} Pair correlation function.}
	\label{PDSfig}
\end{figure}

In this section, we provide a brief overview of existing approaches for creating space-filling sampling patterns. Note that, the prior art for this long-studied research area is too extensive, and hence we recommend interested readers to refer to \citep{doe:review, Sampling:survey} for a more comprehensive review.

\subsection{Latin Hypercube Sampling}
Monte-Carlo methods form an important class of techniques for
space-filling sample design. However, it is well known that
Monte-Carlo methods are characterized by high variance in the
resulting sample distributions. Consequently, variance reduction
methods are employed in practice to improve the performance of simple
Monte Carlo techniques. One example is stratified sampling using the
popular Latin Hypercube Sampling (LHS)~\citep{McKay:1992,lhs:var}.
Since its inception, several variants of LHS have been proposed with
the goal of achieving better space-filling properties, in addition to
reducing variance. A notable improvement in this regard are techniques
that achieve LHS space filling not only in one-dimensional
projections, but also in higher dimensions. For example,
Tang~\citep{oa1:lhs,oa2:lhs} introduced orthogonal-array-based Latin
hypercube sampling to improve space-filling in higher dimensional
subspaces. Furthermore, a variety of space-filling criteria such as
entropy, integrated mean square error, minimax and maximin distances,
have been utilized for optimizing LHS~\citep{lhs:dist}. A particularly
effective and widely adopted metric is the maximin distance criterion,
which maximizes the minimal distance between points to avoid designs
with points too close to one another~\citep{lhs:maximin}. A detailed
study on LHS and its variants can be found in~\citep{owen:book}.

\subsection{Quasi Monte Carlo Sampling}
Following the success of Monte-Carlo methods, Quasi-Monte Carlo (QMC)
sampling was introduced in ~\citep{Halton:1964} and since then has
become the de facto solution in a wide-range of
applications~\citep{caflisch_1998, lds:hd}. The core idea of QMC
methods is to replace the random or pseudo-random samples in
Monte-Carlo methods with well-chosen deterministic points. These
deterministic points are chosen such that they are highly uniform,
which can be quantified using the measure of discrepancy.
Low-discrepancy sequences along with bounds on their discrepancy were
introduced in the 1960's by Halton~\citep{Halton:1964} and
Sobol~\citep{Sobol:1967}, and are still in use today. However, despite their
effectiveness, a critical limitation of QMC methods is that error
bounds and statistical confidence bounds of the resulting designs
cannot be obtained due to the deterministic nature of low-discrepancy
sequences. In order to alleviate this challenge, randomized
quasi-Monte Carlo (RQMC) sampling has been proposed~\citep{rqmc}, and in
many cases shown to be provably better than the classical QMC
techniques~\citep{owen2005quasi}. This has motivated the development of
other randomized quasi-Monte Carlo techniques, for example, methods based
on digital scrambling~\citep{Owen:1995}.

\subsection{Poisson Disk Sampling}
While discrepancy-based designs have been popular among
uncertainty quantification researchers, the computer graphics
community has had long-standing success with coverage-based designs.
In particular, Poisson disk sampling (PDS) is widely used in applications such as
image/volume rendering. The authors in~\citep{Dippe:1985,Cook:1986}
were the first to introduce PDS for turning regular aliasing patterns
into featureless noise, which makes them perceptually less visible.
Their work was inspired by the seminal work of Yellott
\textit{et.al.}~\citep{Yellott:1983}, who observed that the
photo-receptors in the retina of monkeys and humans are distributed
according to a Poisson disk distribution, thus explaining its
effectiveness in imaging.

Due to the broad interest created by the initial work on PDS, a large
number of approaches to generate Poisson disk distributions have been
developed over the last decade
\citep{Gamito:2009,Ebeida:2012,Ebeida:2011,Amitabh,
  Bridson:2007,Oztireli:2012,Heck:2013,Wei:2008,Dunbar:2006,Wei:2010,Balzer:2009,Bo,Yan:2012,Yan:2013,Ying:2013,Ying2014,Hou2013,Ying2013,Guo,Wachtel:2014,Xu,
  Ebeida2014,deGoes:2012,Zhou:2012}. Most Poisson disk sample
generation methods are based on dart
throwing~\citep{Dippe:1985,Cook:1986}, which attempts to generate as
many darts as necessary to cover the sampling domain while not
violating the Poisson disk criterion. Given the desired disk size
$r_{min}$ (or coverage $\rho$), dart throwing generates random samples
and rejects or accepts each sample based on its distance to the
previously accepted samples. Despite its effectiveness, its primary
shortcoming is the choice of termination condition, since the
algorithm does not know whether or not the domain is fully covered.
Hence, in practice, the algorithm has poor convergence, which in turn
makes it computationally expensive. On the other hand, dart throwing
is easy to implement and applicable to any sampling domain, even
non-Euclidean. For example, Anirudh \textit{et.al.} use a dart
throwing technique to generate Poisson disk samples on the
Grassmannian manifold of low-dimensional linear
subspaces~\citep{Anirudh:2017}.

Reducing the computational complexity of PDS generation, particularly
in low and moderate dimensions, has been the central focus of many
existing efforts. To this end, approximate techniques that produce
sample sets with characteristics similar to Poisson disk have been
developed. Early examples~\citep{McCool:1992} are relatively simple
and can be used for a wide range of sampling domains, but the gain in
computational efficacy is limited. Other methods partition the space
into grid cells in order to allow parallelization across the different
cells and achieve linear time algorithms~\citep{Bridson:2007}. Another
class of approaches, referred to as \textit{tile-based} methods, have
been developed for generating a large number of Poisson disk samples
in $2$-D. Broadly, these methods either start with a smaller set of
samples, often obtained using other PDS techniques, and tile these
samples~\citep{Wachtel:2014}, or alternatively use a regular tile
structure for placing each sample~\citep{Ostromoukhov:2004}. With the
aid of efficient data structures, these methods can generate a large
number of samples efficiently. Unfortunately, these approximations can
lead to low sample quality due to artifacts induced at tile boundaries
and the inherent non-random nature of tilings. More recently, many
researchers have explored the idea of partitioning the sampling space
in order to avoid generating new samples that will be ultimately
rejected by dart throwing. While some of these methods only work in
$2-$D~\citep{Dunbar:2006, Ebeida:2011}, the efficiency of other
methods that are designed for higher
dimensions~\citep{Gamito:2009,Ebeida:2012} drops exponentially with
increasing dimensions. Finally, relaxation methods that iteratively
increase the Poisson disk radius of a sample set~\citep{McCool:1992}
by re-positioning the samples also exist. However, these methods have
the risk of converging to a regular pattern with tight packing unless
randomness is explicitly enforced~\citep{Balzer:2009,Schlomer:2011}.

A popular variant of PDS is the maximal PDS (MPDS) distribution, where
the \textit{maximality} constraint requires that the sample disks
overlap, in the sense that they cover the whole domain leaving no room
to insert an additional point. In practice, maximal PDS tends to
outperform traditional PDS due to better coverage. However,
algorithmically guaranteeing maximality requires expensive checks
causing the resulting algorithms to be slow in moderate ($2$-$5$) and
practically unfeasible in higher ($7$ and above) dimensions. Though
strategies to alleviate this limitation have been proposed
in~\citep{Ebeida:2012}, the inefficiency of MPDS algorithms in higher
dimensions still persists. Interestingly, a common limitation of all
existing MPDS approaches is that there is no direct control over the
number of samples produced by the algorithm, which makes the use of
these algorithms difficult in practice, since optimizing samples for a
given sample budget is the most common approach.

As discussed in Section~\ref{intro}, the metrics used by the
space-filling designs discussed above do not provide insights into how to
systematically trade-off uniformity and randomness. Thereby, making the
design and optimization of sampling pattern a cumbersome process. To
alleviate this problem, we propose a novel metric for assessing the
space-filling property and connect the proposed metric (defined in the
spatial domain) to the objective measure of design performance
(defined in the spectral domain).


\section{A Metric for Assessing Space-filling Property}
\label{sec:metric}
\begin{figure*}[!htb]
\centering
\subfigure[Random]{
\includegraphics[trim=2.2cm 1.5cm 3cm 1.5cm,clip=true, width=0.22\linewidth]{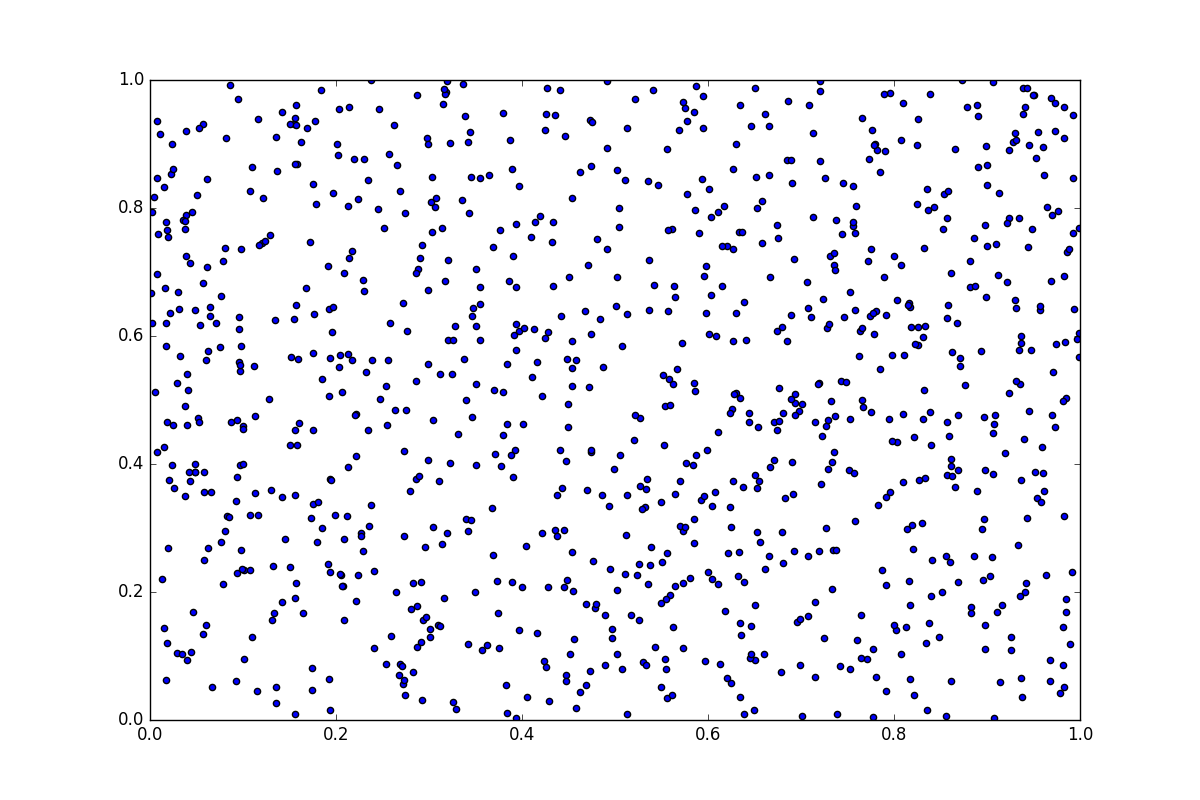}
}\subfigure[Regular]{
\includegraphics[trim=2.2cm 1.5cm 3cm 1.5cm,clip=true, width=0.22\linewidth]{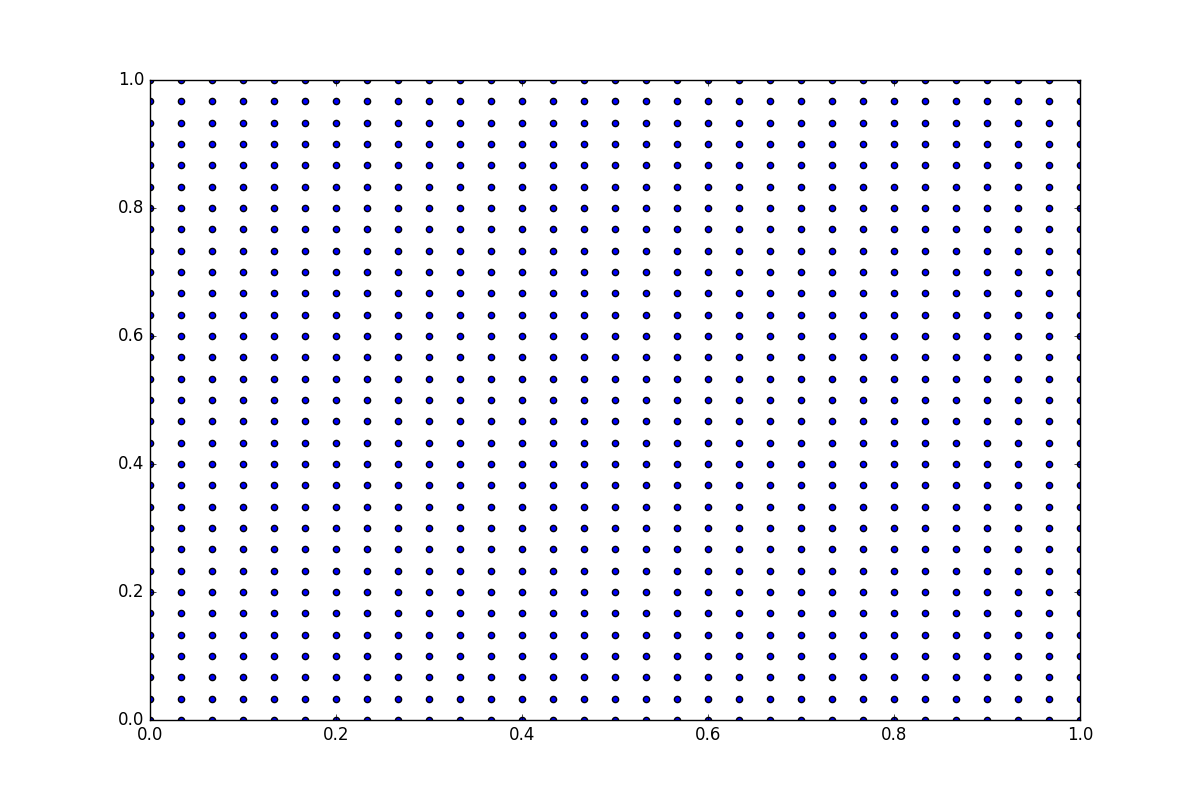}
}
\subfigure[Sobol]{
\includegraphics[trim=2.2cm 1.5cm 3cm 1.5cm,clip=true, width=0.22\linewidth]{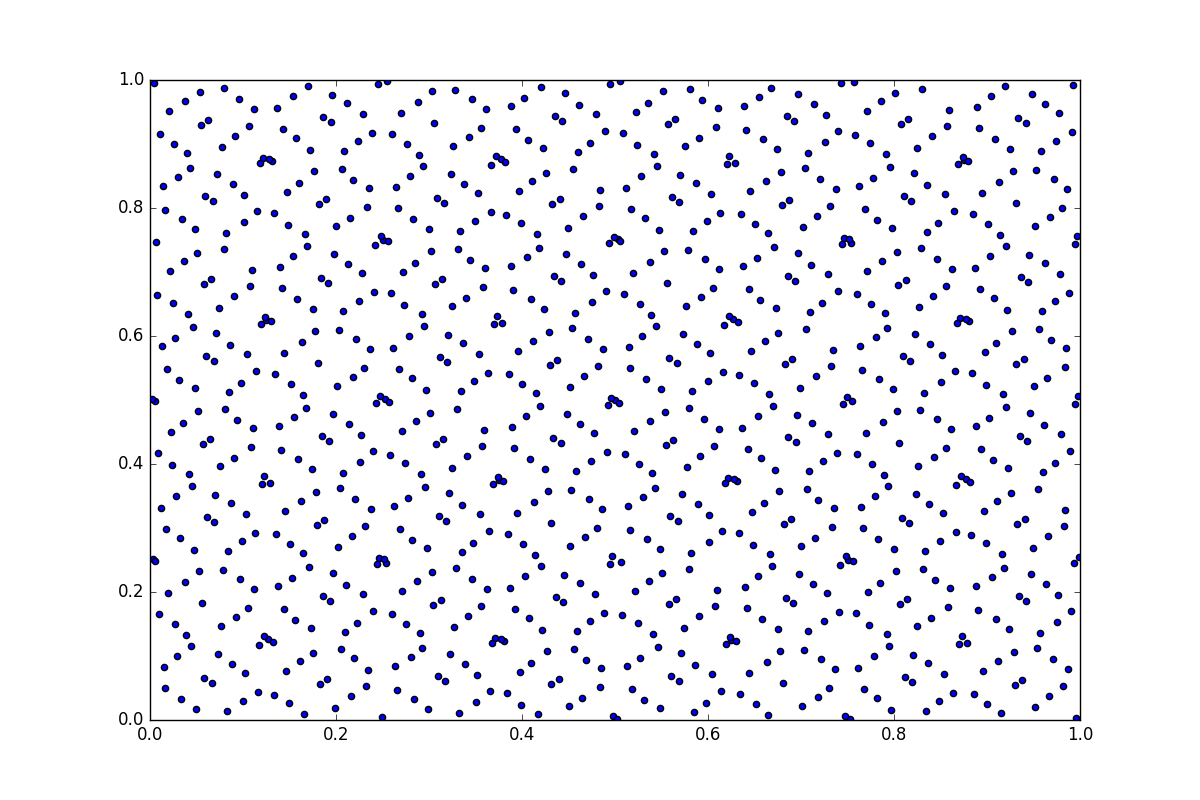}
   }
\subfigure[Halton]{
\includegraphics[clip=true, width=0.22\linewidth]{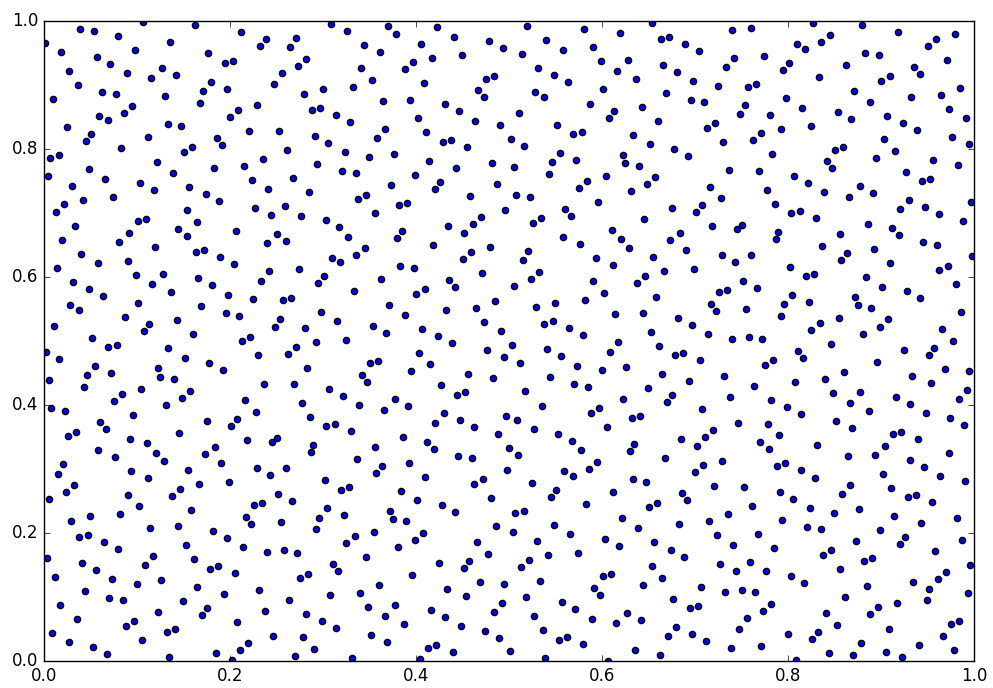}
}
\vfill
\subfigure[LHS]{
\includegraphics[trim=2.2cm 1.5cm 3cm 1.5cm,clip=true, width=0.22\linewidth]{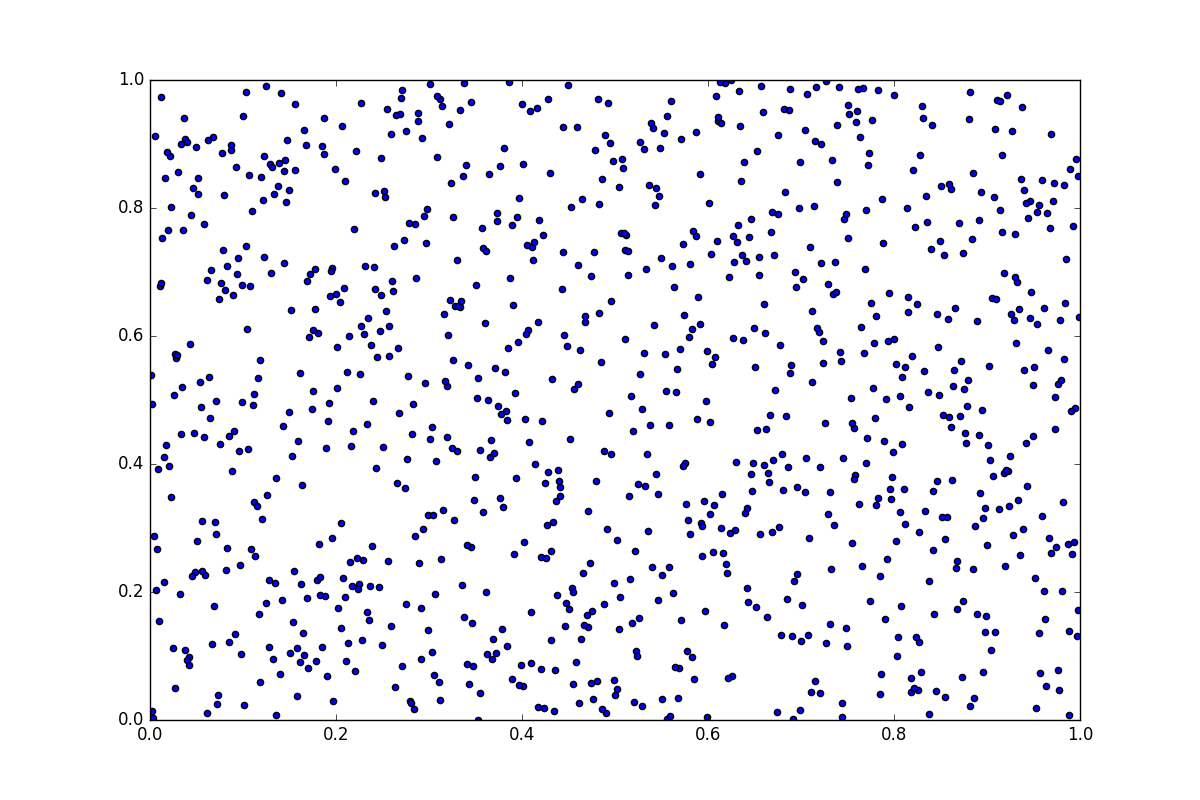}
 }
\subfigure[MPDS]{
\includegraphics[trim=2.2cm 1.5cm 3cm 1.5cm,clip=true, width=0.22\linewidth]{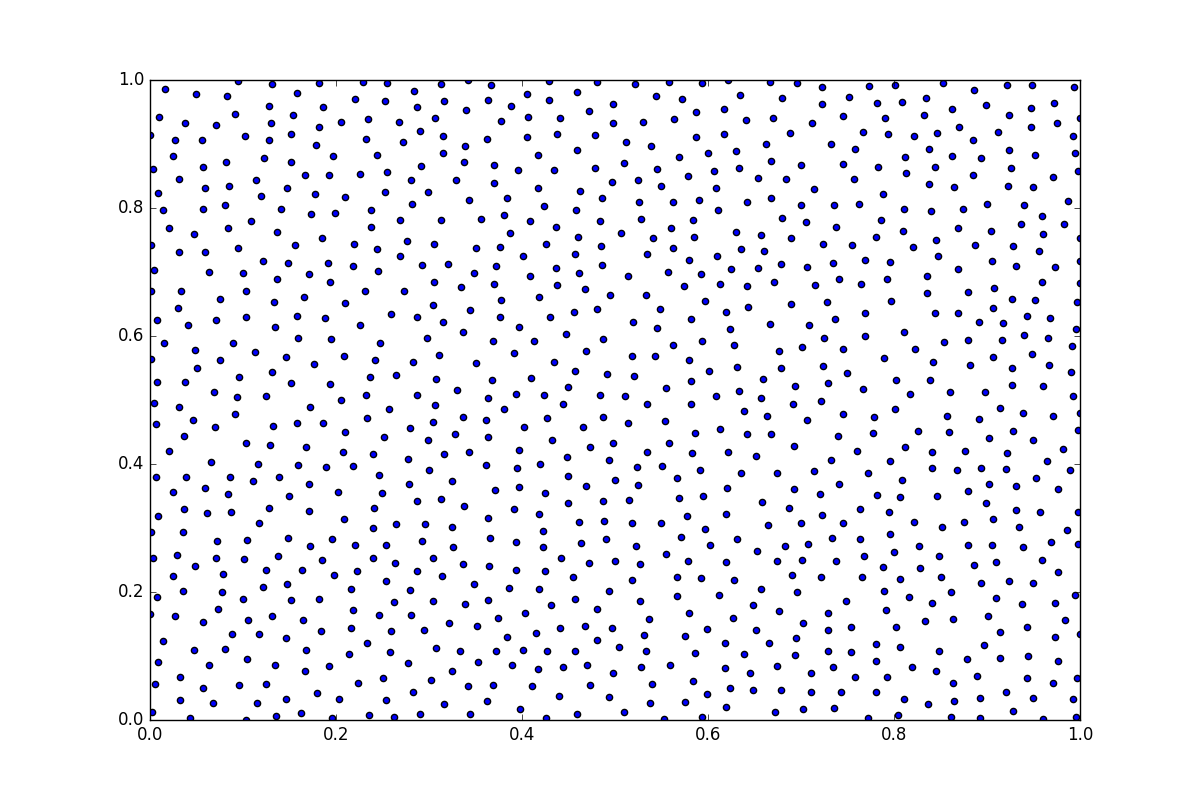}
}
\subfigure[Step PCF]{
\includegraphics[trim=2.2cm 1.5cm 3cm 1.5cm,clip=true, width=0.22\linewidth]{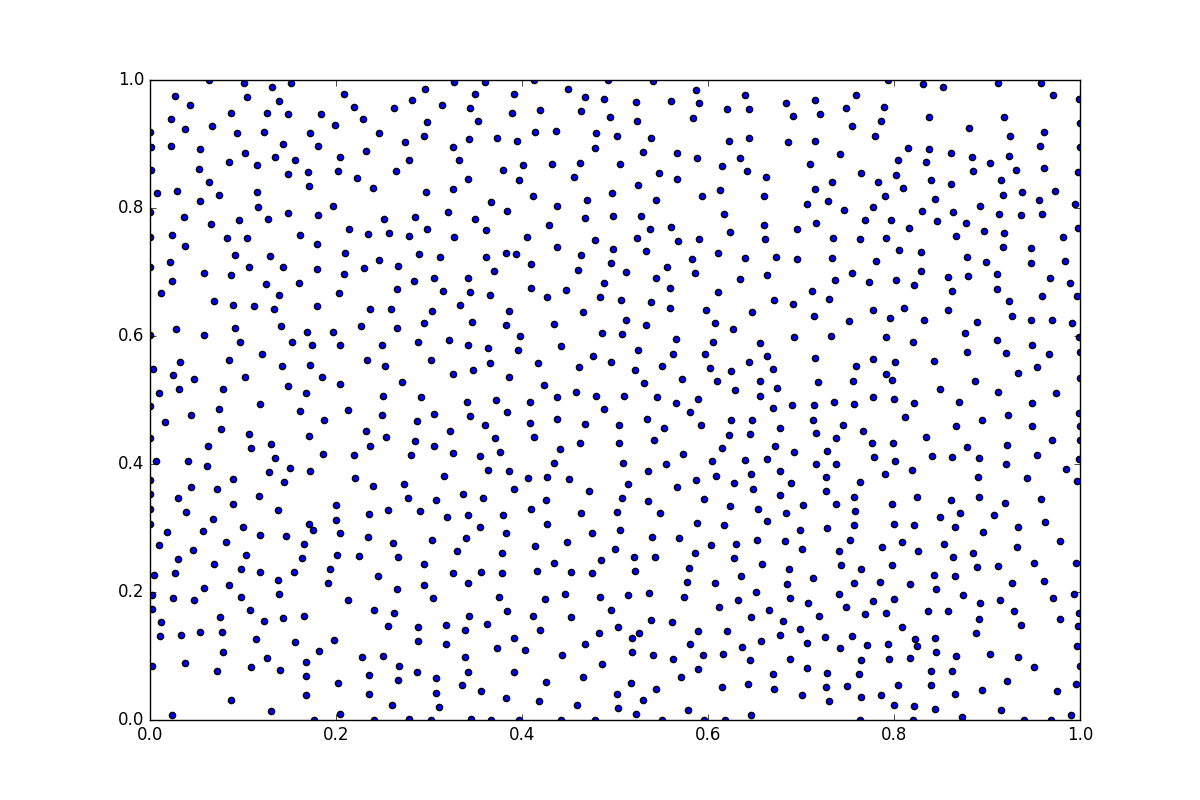}
 }
\subfigure[Stair PCF]{
\includegraphics[trim=2.2cm 1.5cm 3cm 1.5cm,clip=true, width=0.22\linewidth]{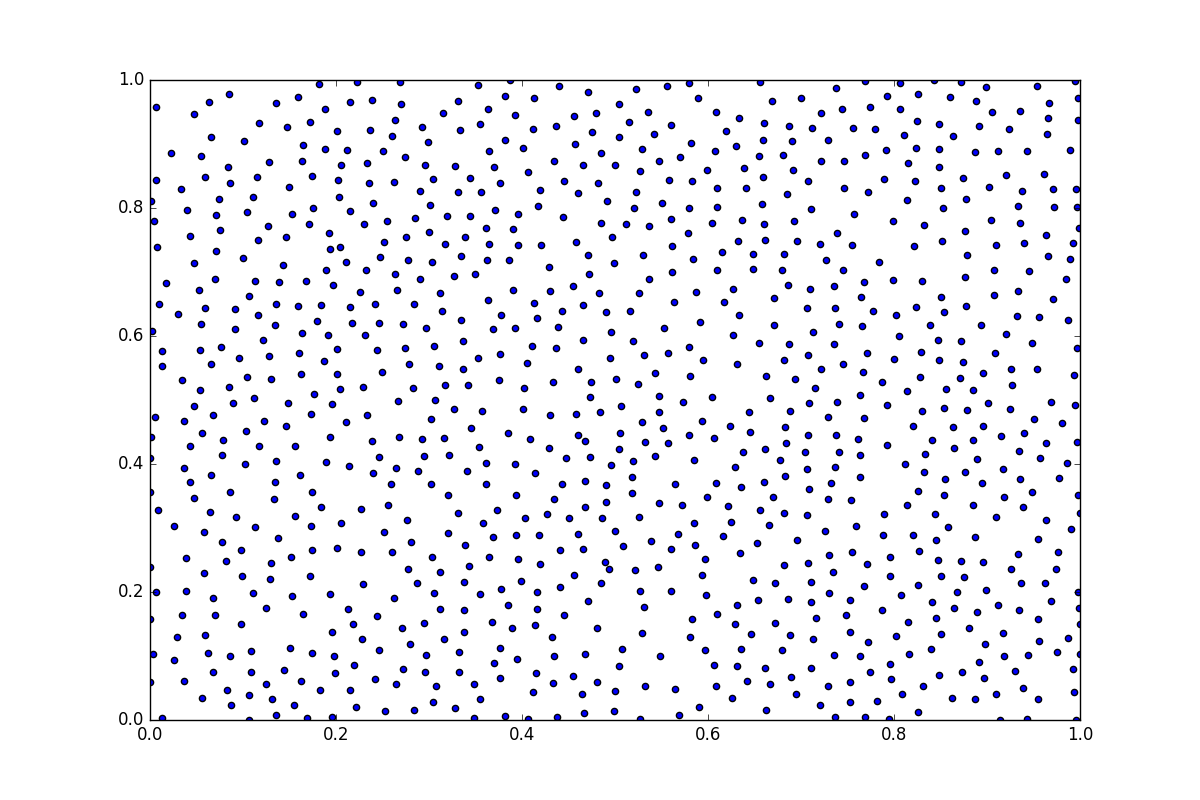}}
\caption{Visualization of $2-$d point distributions obtained using different sample design techniques. In all cases, the number of samples $N$ was fixed at $1000$.}
\label{point_samp}
\end{figure*}

\begin{figure*}[!htb]
\centering
\subfigure[Random]{
\includegraphics[trim=2.2cm 0.5cm 2.2cm 1.5cm,
  width=0.21\textwidth,clip=true]{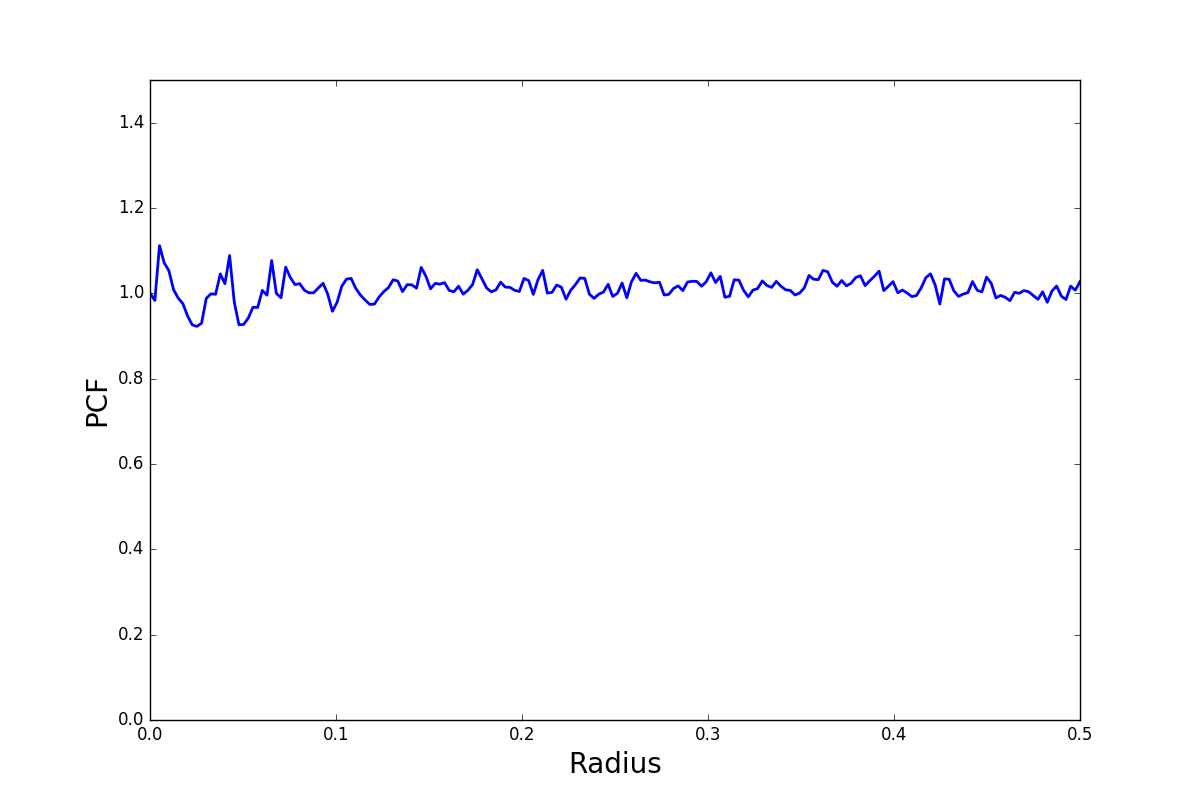}
}\subfigure[Regular]{
\includegraphics[trim=2.2cm 0.5cm 2.2cm 1.5cm,
  width=0.21\textwidth,clip=true]{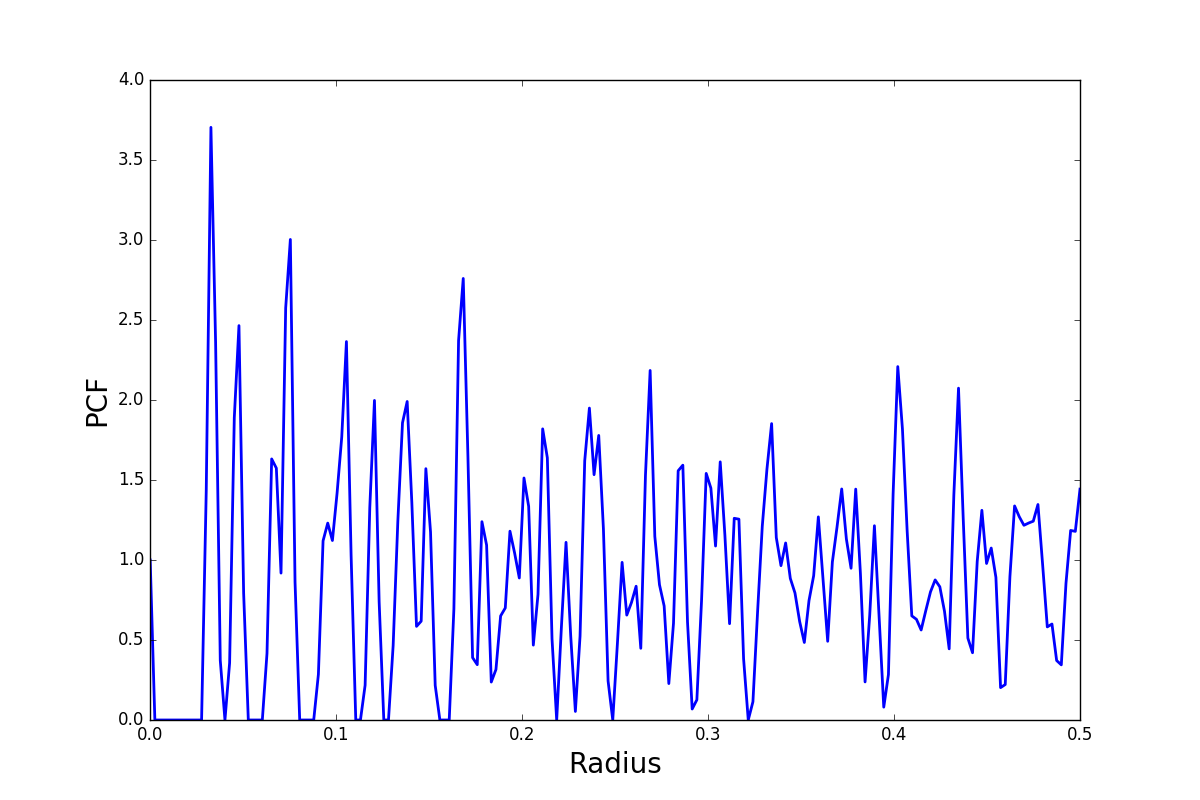}
}
\subfigure[Sobol]{
\includegraphics[
  width=0.22\textwidth,clip=true]{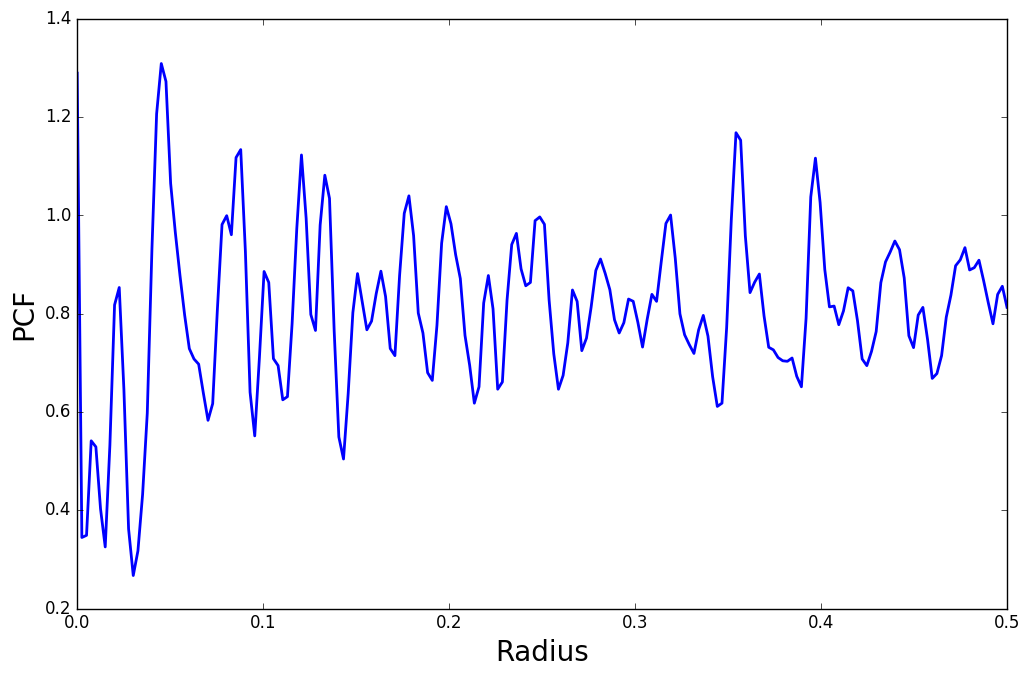}
   }
\subfigure[Halton]{
\includegraphics[
  width=0.22\textwidth,clip=true]{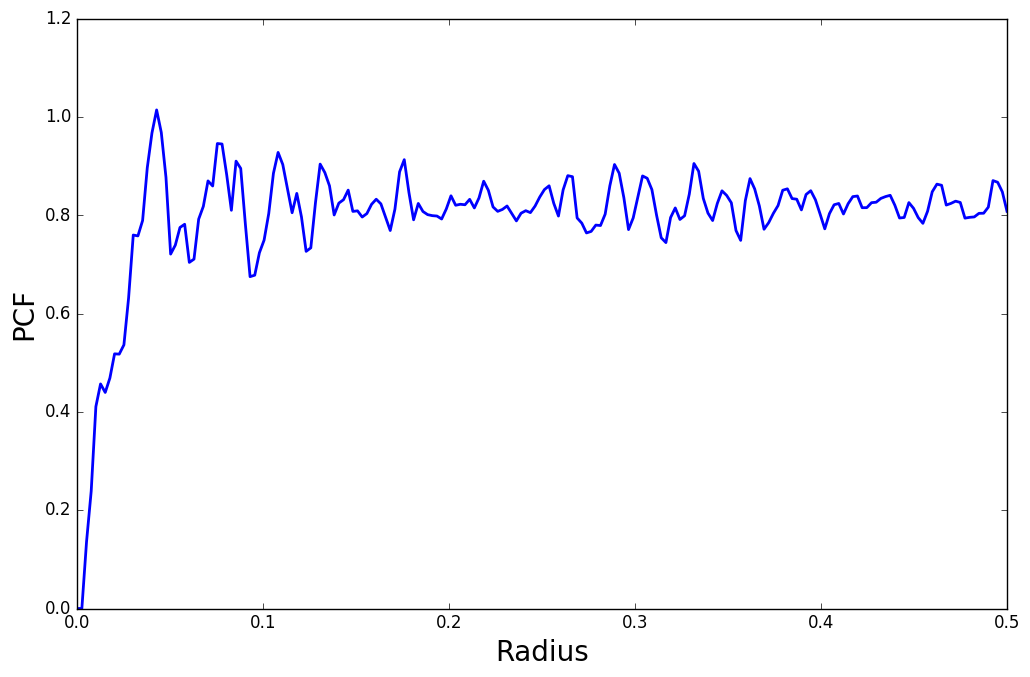}
}
\vfill
\subfigure[LHS]{
\includegraphics[
  width=0.22\textwidth,clip=true]{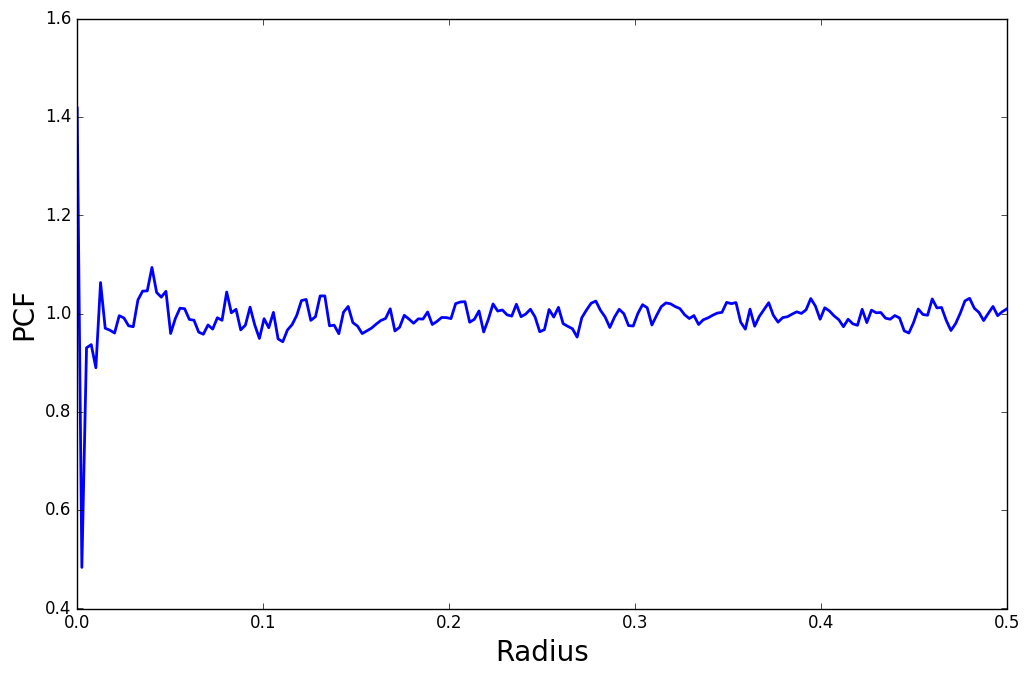}
 }
\subfigure[MPDS]{
\includegraphics[
  width=0.22\textwidth,clip=true]{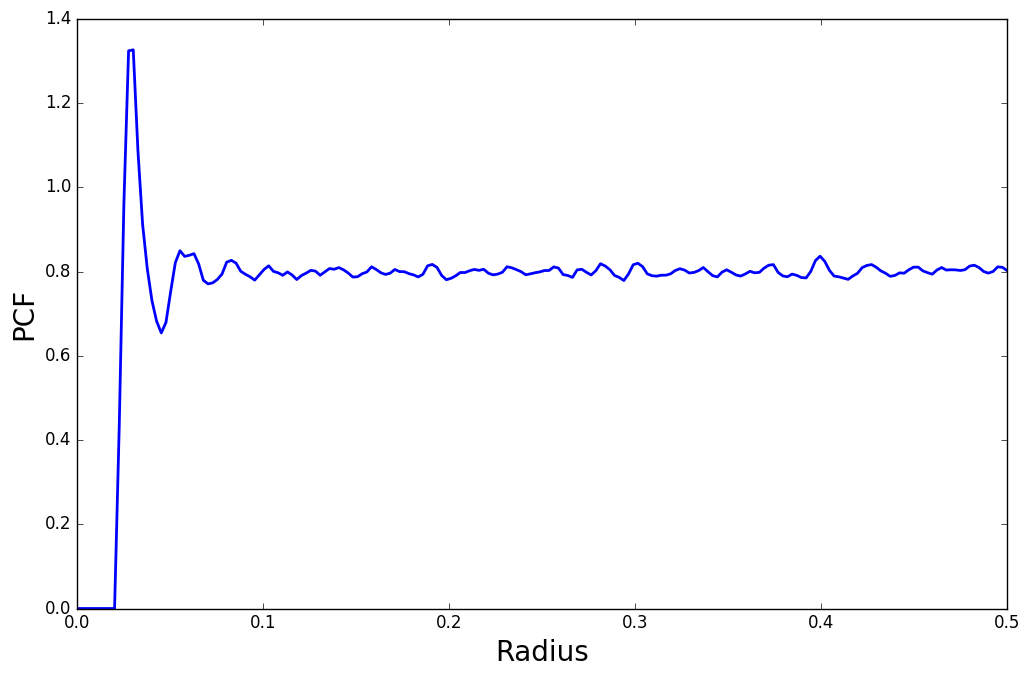}
}
\subfigure[Step PCF]{
\includegraphics[
  width=0.22\textwidth,clip=true]{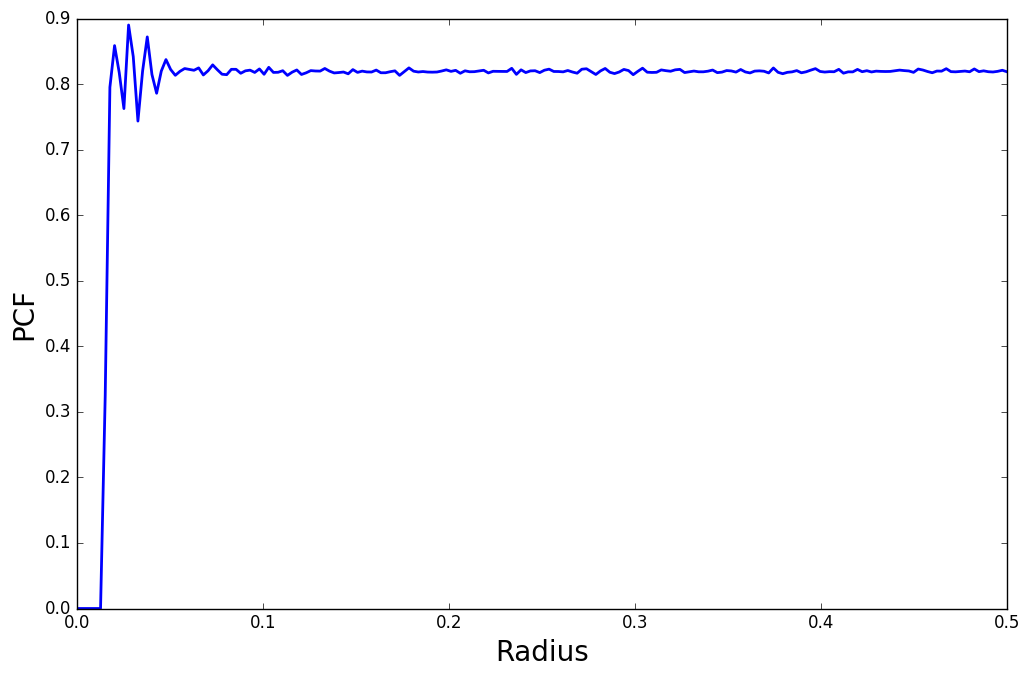}
 }
\subfigure[Stair PCF]{
\includegraphics[
  width=0.22\textwidth,clip=true]{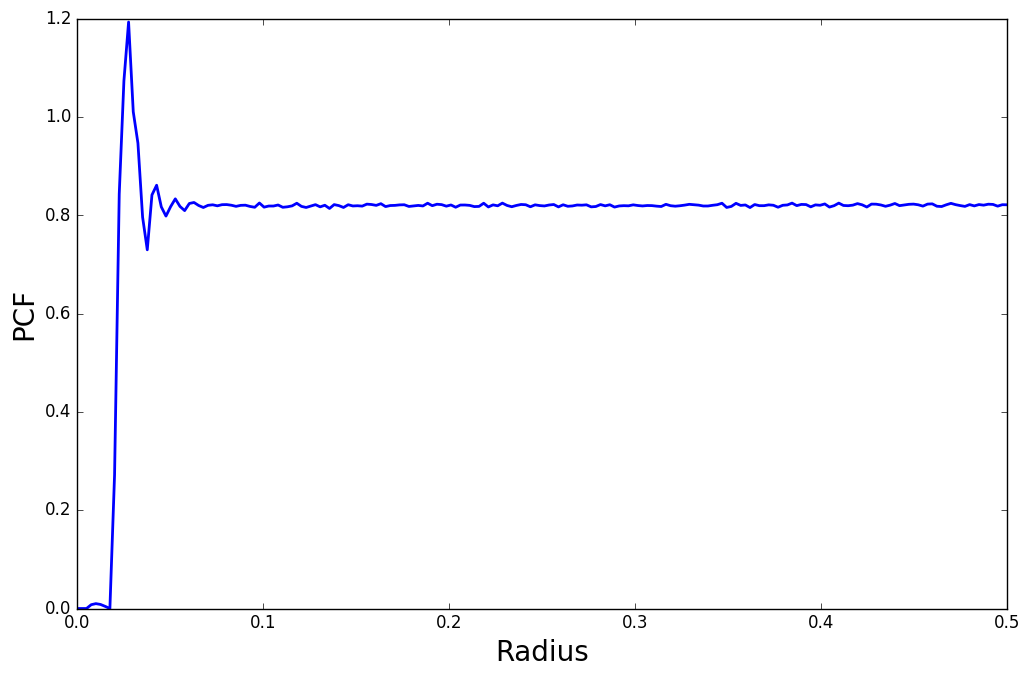}
 }
\caption{\textit{Space-filling Metric}: Pair correlation functions, corresponding to the samples in Figure \ref{point_samp}, characterize the coverage (and randomness) of point distributions obtained using different techniques.}
\label{pcf_samp}
\end{figure*}

\begin{figure*}[!htb]
\centering
\subfigure[Random]{
\includegraphics[%
  width=0.22\textwidth,clip=true]{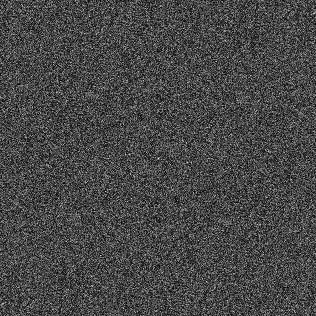}
}\subfigure[Regular]{
\includegraphics[%
  width=0.22\textwidth,clip=true]{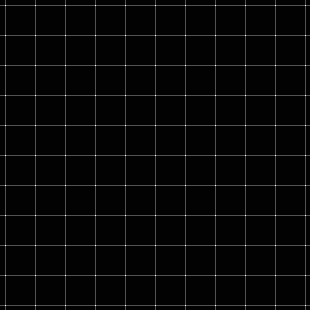}
}
\subfigure[Sobol]{
\includegraphics[%
  width=0.22\textwidth,clip=true]{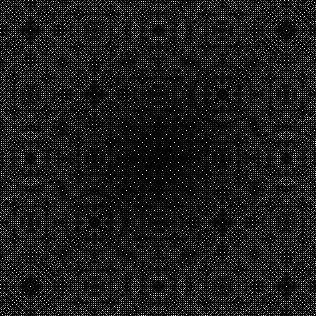}
   }
\subfigure[Halton]{
\includegraphics[%
  width=0.22\textwidth,clip=true]{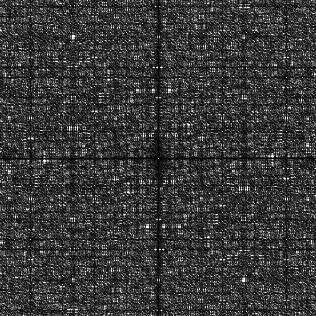}
}
\vfill
\subfigure[LHS]{
\includegraphics[%
  width=0.22\textwidth,clip=true]{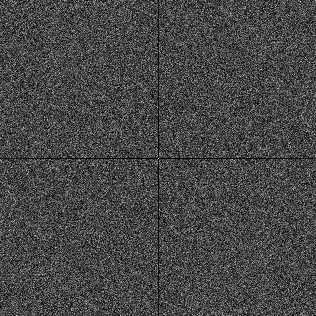}
 }
\subfigure[MPDS]{
\includegraphics[%
  width=0.22\textwidth,clip=true]{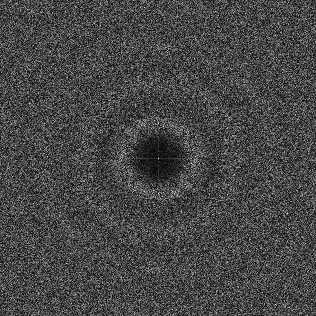}
}
\subfigure[Step PCF]{
\includegraphics[%
  width=0.22\textwidth,clip=true]{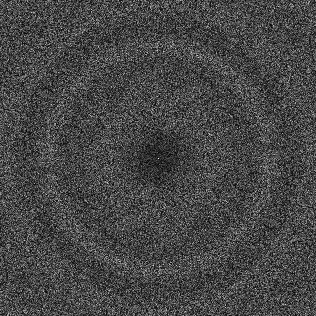}
 }
\subfigure[Stair PCF]{
\includegraphics[%
  width=0.22\textwidth,clip=true]{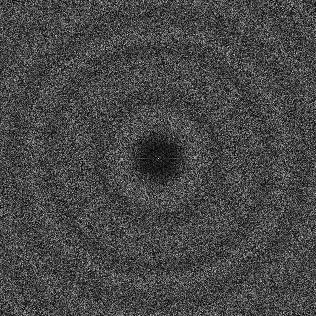}}
\caption{\textit{Performance Quality Metric}: Power spectral density is used to characterize the effectiveness of sample designs, through the distribution of power in different frequencies.}
\label{psd_samp}
\end{figure*}

In this section we first provide a definition of a space-filling
design. Subsequently, we propose a metric to quantify space-filling
properties of sample designs.

\subsection{ Space-filling Designs}
Without any prior knowledge of the function $f$ of interest, a reasonable
objective when creating $\mathcal{X}$ is that the samples should be
random to provide an equal chance of finding features of interest,
e.g., local minima in an optimization problem, anywhere in
$\mathcal{D}$. However, to avoid sampling only parts of the parameter
space, a second objective is to cover the space in $\mathcal{D}$
uniformly, in order to guarantee that all sufficiently large features
are found. Therefore, a good space-filling design can be defined as follows:

\begin{definition}
\label{sp-def}
	A space-filling design is a set of $\si$'s
	that are randomly distributed (Objective $1$: Randomness) but no two
	samples are closer than a given minimum distance $r_{min}$ (Objective
	$2$: Coverage).
\label{def:sp-def}
\end{definition}

Next, we describe the metric that we use to quantify the space-filling
property of a sample design.  The proposed metric is based on the
spatial statistic, \textit{pair correlation function }(PCF) and we
will show that this metric is directly linked to an objective measure of
design performance defined in the spectral domain.

\subsection{Pair Correlation Function as a Space-filling Metric}
In contrast to existing scalar space-filling metrics such as
discrepancy, and coverage, the PCF characterizes the distribution of
sample distances, thus providing a comprehensive description of the
sample designs. More specifically, PCF describes the joint probability
of having sampling points at a certain distance apart. A precise
definition of the PCF can be given in terms of the intensity $\lambda$
and product density $\beta$ of a point process~\citep{illian2008,
  Oztireli:2012}.

\begin{definition}
  Let us denote the intensity of a point process $\mathcal{X}$ as
  $\lambda(\mathcal{X})$, which is the average number of points in an
  infinitesimal volume around $\mathcal{X}$. For isotropic point
  processes, this is a constant value. To define the product density
  $\beta$, let $\{B_i\}$ denote the set of infinitesimal spheres
  around the points, and $\{dV_i\}$ indicate the volume measures of
  $B_i$. Then, we have
  $P(\mathbf{x_1}, \cdots, \mathbf{x_N}) = \beta(\mathbf{x_1}, \cdots,
  \mathbf{x_N})dV_1\cdots dV_N$.
  In the isotropic case, for a pair of points, $\beta$ depends only on
  the distance between the points, hence one can write
  $\beta(\mathbf{x_i},\mathbf{x_j})=\beta(||\mathbf{x_i}-\mathbf{x_j}||)=\beta(r)$
  and $P(r)=\beta(r)dxdy$. The PCF is then defined as
\begin{equation}
\label{pcf}
G(r)=\dfrac{\beta}{\lambda^2}. 
\end{equation}
\end{definition}
Note that, the PCF characterizes spatial properties of a
sample design. However, in several cases, it is easier to link
the objective performance of a sample design to its spectral
properties. Therefore, we establish a connection between the spatial
property of a sample design defined in PCF space to its spectral
properties.

\subsection{Connecting Spatial Properties and Spectral Properties of Space-filling Designs}
Fourier analysis is a standard approach for understanding the objective properties of sampling patterns. Hence, we propose to analyze the spectral properties of sample designs, using tools such as the power spectral density, in order to assess their quality. For isotropic samples, a quality metric of interest is the radially-averaged power spectral density, which describes how the signal power is distributed over different frequencies.

\begin{definition}
  For a finite set of $N$ points, $\{\mathbf{x}_j\}_{j=1}^N$, in a
  region with unit volume, the radially-averaged power spectral
  density (PSD) is formally defined as
	\begin{equation}
	P(k) = \frac{1}{N} |S(k)|^2 = \frac{1}{N} \sum_{j,\ell} e^{-2\pi i k(\mathbf{x}_{\ell} - \mathbf{x}_j)},  
	\end{equation}
	where $S(k)$ denotes the Fourier transform of the sample function.
\end{definition}Next, we show that the connection between spectral properties of a $d$-dimensional isotropic sample design and its corresponding pair correlation function can be obtained via the $d$-dimensional Fourier transform or more efficiently using the $1$-d Hankel transform. 

\begin{proposition}
\label{ht1}
For an isotropic sample design with $N$ points,
$\{\mathbf{x}_j\}_{j=1}^N$, in a $d$-dimensional region with unit
volume, the pair correlation function $G(r)$ and radially averaged
power spectral density $P(k)$ are related as follows:
\begin{equation}
G(r)= 1+\dfrac{V}{2\pi N} H\left[P({k})-1\right]
\end{equation}
where $V$ is the volume of the sampling region and $H[.]$ denotes the $1$-d Hankel transform, defined as
\begin{equation*}
H(f(k))=\int_{0}^{\infty}k J_0(kr)f(k) dk,
\end{equation*}
with $J_0(.)$ denoting the Bessel function of order zero.
\end{proposition}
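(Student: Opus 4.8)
The plan is to compute the ensemble average of the radially-averaged PSD, relate its off-diagonal part to the second-order product density (hence to $G$), recognize the result as a $d$-dimensional Fourier transform of the centered PCF $G(r)-1$, and finally reduce that transform to the stated $1$-d Hankel transform and invert it.

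First I would write the transform of the sample function as $S(\mathbf{k})=\sum_{j=1}^{N} e^{-2\pi i \mathbf{k}\cdot\mathbf{x}_j}$, so that
\begin{equation*}
P(\mathbf{k})=\frac{1}{N}\sum_{j,\ell} e^{-2\pi i \mathbf{k}\cdot(\mathbf{x}_\ell-\mathbf{x}_j)}
= 1 + \frac{1}{N}\sum_{j\neq \ell} e^{-2\pi i \mathbf{k}\cdot(\mathbf{x}_\ell-\mathbf{x}_j)},
\end{equation*}
where the $N$ diagonal terms ($j=\ell$) supply the constant $1$ that is responsible for the ``$-1$'' in $P(k)-1$. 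Taking expectations over the point process and invoking the definition $\beta=\lambda^2 G$ from the product-density definition, the off-diagonal sum becomes
\begin{equation*}
\mathbb{E}\!\left[\sum_{j\neq\ell} e^{-2\pi i \mathbf{k}\cdot(\mathbf{x}_\ell-\mathbf{x}_j)}\right]
= \lambda^2 \int_{V}\!\int_{V} e^{-2\pi i \mathbf{k}\cdot(\mathbf{x}'-\mathbf{x})}\, G(\|\mathbf{x}'-\mathbf{x}\|)\, d\mathbf{x}\, d\mathbf{x}'.
\end{equation*}

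Next, using homogeneity I would substitute $\mathbf{r}=\mathbf{x}'-\mathbf{x}$, collapse one spatial integral into the factor $V$, and split $G=1+(G-1)$. The constant part yields a Dirac mass at $\mathbf{k}=0$, which I discard as the forward/DC component, leaving
\begin{equation*}
P(\mathbf{k})-1 = \lambda \int_{\mathbb{R}^d} (G(r)-1)\, e^{-2\pi i \mathbf{k}\cdot\mathbf{r}}\, d\mathbf{r},
\qquad \lambda=\frac{N}{V}.
\end{equation*}
This is precisely the $d$-dimensional Fourier transform of the radial function $G(r)-1$. Because the integrand is radially symmetric, the angular integration collapses the exponential to a Bessel kernel, turning the $d$-dimensional transform into a $1$-d integral against $J_0$ (times the radial Jacobian); inverting this Hankel transform, which is self-reciprocal up to the normalization collected in $\tfrac{V}{2\pi N}=\tfrac{1}{2\pi\lambda}$, gives the claimed identity $G(r)=1+\tfrac{V}{2\pi N}H[P(k)-1]$.

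The hard part will be the angular reduction together with the normalization bookkeeping. In general dimension the angular average of $e^{-2\pi i \mathbf{k}\cdot\mathbf{r}}$ produces a Bessel function of order $d/2-1$, so obtaining exactly the order-zero kernel $J_0$ stated here --- and tracking the $2\pi$ factors through the Fourier and Hankel conventions so that the prefactor comes out as $\tfrac{V}{2\pi N}$ --- is the delicate step that the radial-averaging convention must resolve. A secondary subtlety is the finite-domain effect: replacing the double integral over $V\times V$ by $V$ times a single integral over $\mathbb{R}^d$ neglects boundary contributions, precisely the error that the edge-corrected estimator introduced later is designed to control.
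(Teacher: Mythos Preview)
Your approach is essentially the paper's: establish $P(\mathbf{k})-1=\frac{N}{V}\int_{\mathbb{R}^d}(G(r)-1)e^{-i\mathbf{k}\cdot\mathbf{r}}\,d\mathbf{r}$, reduce by radial symmetry to a Hankel transform, and invert using self-reciprocity. The paper simply \emph{asserts} the first identity as known, whereas you actually derive it from the diagonal/off-diagonal split of $|S|^2$ and the product-density definition $\beta=\lambda^2 G$; that extra work is sound and makes the argument self-contained. Your worry about the Bessel order is well placed: the angular average of $e^{-i\mathbf{k}\cdot\mathbf{r}}$ in $d$ dimensions yields $J_{d/2-1}$, not $J_0$, and indeed the paper silently switches to the kernel $J_{d/2-1}$ with prefactor $(2\pi)^{d/2}k^{1-d/2}$ in its later Proposition~\ref{PDSinPSD}, so the $J_0$ formulation in Proposition~\ref{ht1} should be read as the $d=2$ instance (or as schematic). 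Your remark on the finite-domain approximation is also apt and anticipates exactly the edge-correction discussion the paper takes up later.
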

\begin{proof}
Note that, PSD and PCF of a sample design are related via the $d$-dimensional Fourier transform as follows:
\begin{eqnarray*}
P(\mathbf{k})&=& 1+\frac{N}{V} F\left(G(\mathbf{r})-1\right)\\
&=& 1+\frac{N}{V} \int_{\mathbb{R}^d} \left(G(\mathbf{r})-1\right)\exp(-i\mathbf{k.r}) d\mathbf{r}.
\end{eqnarray*}
It can be shown that, for radially symmetric or isotropic functions, the above relationship simplifies to
\begin{equation*} 
P(k)=1+2\pi\frac{N}{V} H\left[G({r})\right]. 
\end{equation*}
Next, using the inverse property of the Hankel transform, i.e.,
\begin{equation*}
H_0^{-1}(f(r))=\int_{0}^{\infty}r J_0(kr)f(r) dr,
\end{equation*}
we have
\begin{equation}
G(r)= 1+\dfrac{V}{2\pi N} H\left[P({k})-1\right]. 
\end{equation}
\end{proof}
Proposition~\ref{ht1} is important as it enables us to qualitatively understand space-filling designs by first mapping them into the PCF space constructed based on spatial distances between points and, then, evaluating and understanding spectral properties of sample designs.

In Figure~\ref{pcf_samp}, we show the PCF\footnote{Note that, for
  non-isotropic sample designs, $d$-dimensional PCF~\citep{illian2008}
  can be more descriptive.} of some commonly used $2$-d sample designs
($N=1000$) illustrated in Figure~\ref{point_samp}. As can be
observed, both regular grid samples and QMC sequences have significant
oscillations in their PCFs, which can be attributed to their
structured nature. Regular grid sample design demonstrates a large
disk radius $r_{min}$ ($G(r)=0$ for $0 \leq r \leq r_{min}$) as every
sample is at least $r_{min}$ apart from the rest of the samples, which in
turn implies a better coverage. However, in practice, they perform
poorly compared to randomized sample designs and this can be
understood by studying their spectral properties. In contrast, random
sample (Monte-Carlo) designs have a constant PCF with nearly no
oscillations, since point samples are uncorrelated, thus,
$P(r) = \lambda dx \lambda dy$ and theoretically have
$G(r)=1,\;\forall r$. Furthermore, the LHS design has a similar PCF as
random designs with the exception of a small, yet non-zero, $r_{min}$.

Other variants of PDS like MPDS, Step PCF and Stair PCF designs attempt to
trade-off between coverage ($G(r)=0$ for $0 \leq r \leq r_{min}$) and
randomness $G(r)=1,\;\text{for}\; r>r_{min}$. Note that, the Step and
the Stair PCF methods are space-filling spectral designs proposed
later in this paper. However, upon a careful comparison, it can be
seen that MPDS has a larger peak and more oscillations in its PCF
compared to the proposed designs. In fact, our empirical studies show
that the amount of oscillations in the PCF of the MPDS design
significantly increases with dimensions.

Next, in Figure~\ref{psd_samp}, we show the corresponding PSDs of the
different sample designs. It can be seen that, oscillations in PCF
directly correspond to oscillations in PSDs. For example, the
oscillatory behavior of the PCF for regular and QMC sequences cause a
non-uniform distribution of power in their corresponding
PSDs. Furthermore, the larger peak height in the PCF of MPDS implies
that a large amount of power is concentrated in a small frequency band
instead of power being distributed over all frequencies. In
Section~\ref{pds_analysis_sec}, we will analyze the effect of the
shape of PCF on the performance of a sample design in detail.

It is important to note that, not every PCF (or PSD) is physically
realizable by a sample design. In fact, there are two necessary
mathematical conditions~\footnote{Whether or not these two conditions
  are not only necessary but also sufficient is still an open question
  (however, no counterexamples are known).} that a sample design must
satisfy to be realizable.

\begin{definition}[Realizability]
\label{real}
A PCF can be defined to be realizable through a sample design, if it satisfies the following conditions:
\begin{itemize}
\item its PCF must be non-negative, i.e., $G(r)\geq 0,\;\forall r$, and
\item its corresponding PSD must be non-negative, i.e., $P(k)\geq 0,\;\forall k$.
\end{itemize}
\end{definition}
As both the PSD and the PCF characteristics are strongly tied to each other (as shown in Proposition~\ref{ht1}), these two conditions limit the space of realizable space-filling spectral designs. The results from this section will serve as tools for qualitatively understanding and, thus, designing optimal space-filling spectral designs in the following sections.



\section{Space-filling Spectral Designs}
\label{sfsd}
In this section, we first formalize desired characteristics of a good space-filling design, as given in Definition~\ref{def:sp-def}. Next, we will describe the proposed framework for creating space-filling spectral designs.

\begin{definition}
	A set of $N$ point samples $\mathcal{X}$ in a sampling domain $\mathcal{D}$ can be characterized as a space-filling design, if $\mathcal{X}=\{\mathbf{x_i}\in \mathcal{D};\;i=1,\cdots N\}$ satisfy the following two objectives:
	\begin{itemize}
		\item $\forall \mathbf{x_i}\in \mathcal{X},\;\forall \triangle \mathcal{D} \subseteq \mathcal{D}\;:\;P(\mathbf{x_i}\in \triangle \mathcal{D})=\int_{\triangle \mathcal{D}}\mathbf{dx}$
		\item $\forall \mathbf{x_i},\mathbf{x_j}\in \mathcal{X}\;:\;||\mathbf{x_i}-\mathbf{x_j}||\geq r_{\text{min}}$
	\end{itemize}
	where $r_{\text{min}}$ is referred to as the coverage radius.
\end{definition}In the above definition, the first objective states that the probability of a uniformly distributed random sample $\mathbf{x_i}\in \mathcal{X}$  falling inside a subset $\triangle \mathcal{D}$ of $\mathcal{D}$ is equal to the hyper-volume of $\triangle \mathcal{D}$. The
second condition enforces the minimum distance constraint between
point sample pairs for improving coverage. 

A Poisson design enforces the first condition alone, in which case the number of samples that fall inside any subset $\triangle \mathcal{D} \subseteq \mathcal{D}$ obeys a discrete Poisson distribution. Though easier to implement, Poisson sampling often produces distributions where the samples are grouped into clusters and leaves holes in possibly the regions of interest. In other words, this increases the risk of missing important features, when only the samples are used for analysis. Consequently, a sample design that distributes random samples in a uniform manner across $\mathcal{D}$ is preferred, so that clustering patterns are not observed. The coverage condition explicitly eliminates the clustering behavior by preventing samples from being closer than $r_{\text{min}}$. 
A space-filling design can be defined conveniently in the PCF domain and we refer to this as the space-filling spectral design, due to its direct connection to the spectral domain properties.


\subsection{Defining a Space-filling Spectral Design in Spatial Domain}
\label{sec:pcf}

For Poisson design, point locations are not correlated and, therefore, $P(r)=\lambda dx \lambda dy$. This implies that for Poisson designs $G(r)=1$. Similarly, for space-filling designs, due to the minimum distance constraint between the point sample pairs, we do not have any point samples in the region $ 0\leq r <  r_{\text{min}} $. Consequently, space-filling spectral designs are defined as a step pair correlation function in the spatial domain (\textit{Step PCF}).

\begin{proposition}
\label{PDSinPCF}
Given the desired coverage radius $r_{\text{min}}$, a space-filling spectral design is defined in the spatial domain as
\[ G(r-r_{\text{min}}) = \left\{ \begin{array}{rll}
				0  & \mbox{if}\ r< r_{\text{min}} \\
			   1  & \mbox{if}\ r \geq r_{\text{min}}.
				\end{array}\right. 
\]
\end{proposition}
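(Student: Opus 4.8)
The plan is to derive the two pieces of the step function directly from the two defining objectives of a space-filling design, using the definition $G(r)=\beta(r)/\lambda^2$ from Equation~\eqref{pcf}. The argument naturally splits into the near-field regime $r<r_{\text{min}}$, which is governed by the coverage objective, and the far-field regime $r\geq r_{\text{min}}$, which is governed by the randomness objective. I would state at the outset that the PCF is shorthand here for a quantity that is a pure step at $r_{\text{min}}$, so the shifted notation $G(r-r_{\text{min}})$ in the statement is just the step located at the coverage radius.

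First I would treat the regime $r<r_{\text{min}}$. The coverage objective (the second condition of the preceding definition) requires $\|\mathbf{x_i}-\mathbf{x_j}\|\geq r_{\text{min}}$ for every pair of distinct samples. Hence the probability of observing a pair of points separated by a distance $r<r_{\text{min}}$ is identically zero. Since the product density $\beta(r)$ is exactly the (normalized) joint density of finding points at separation $r$, this forces $\beta(r)=0$, and therefore $G(r)=\beta(r)/\lambda^2=0$ on $[0,r_{\text{min}})$.

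Next I would treat the regime $r\geq r_{\text{min}}$. Here the randomness objective (the first condition) asks that, subject only to the exclusion constraint, the samples be as uncorrelated as a uniform Poisson process. For such a process the point locations are independent, so $P(r)=\lambda\,dx\,\lambda\,dy$ and $\beta(r)=\lambda^2$, yielding $G(r)=\lambda^2/\lambda^2=1$ for all $r\geq r_{\text{min}}$. This reproduces exactly the Poisson baseline established in the discussion preceding the proposition, and the minimum-distance constraint leaves this far-field value unchanged because it removes only the contribution of pairs lying inside the exclusion disk. Combining the two regimes gives the claimed step form, and I would close by noting that this $G(r)$ satisfies the nonnegativity requirement of Definition~\ref{real} trivially.

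The main subtlety — more an interpretive point than a technical obstacle — is justifying why the far-field value is exactly unity rather than merely some constant. This is where the randomness objective must be read as requiring \emph{maximal} decorrelation beyond the exclusion radius, i.e.\ the residual process is Poisson-like once the forbidden disk is excised. Once that interpretation is fixed, matching it to $G(r)=1$ through the uncorrelatedness relation $P(r)=\lambda\,dx\,\lambda\,dy$ is immediate, and no further computation is needed.
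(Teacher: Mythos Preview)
Your proposal is correct and follows essentially the same approach as the paper: the paper's justification (given in the text immediately preceding the proposition rather than in a formal proof block) argues that Poisson designs have $G(r)=1$ from the uncorrelatedness relation $P(r)=\lambda\,dx\,\lambda\,dy$, and that the minimum-distance constraint forces $G(r)=0$ on $[0,r_{\text{min}})$. Your version is more explicit and carefully flags the interpretive step behind the far-field value, but the underlying decomposition and reasoning are identical.
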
As a consequence of Proposition~\ref{ht1}, space-filling spectral designs can equivalently be defined in the spectral domain. 

\subsection{Defining a Space-filling Spectral Design in Spectral Domain}
\label{sec:psd}

We derive the power spectral density of the space-filling spectral design using the connection established in Section~\ref{sec:metric}. Following our earlier notation, we denote the $d$-dimensional power spectral density by $P(\mathbf{k})$ and $d$-dimensional PCF by $G(\mathbf{r})$.

\begin{proposition}
\label{PDSinPSD}
Given the desired coverage radius $r_{\text{min}}$, a $d$-dimensional space-filling spectral design $\mathcal{X}$, with $N$ sample points in a sampling domain $\mathcal{D}$ of volume $V$, can be defined in the PSD domain as
$$P(k)=1-\frac{N}{V} \left(\dfrac{2\pi r_{\text{min}}}{k}\right)^{\frac{d}{2}}J_{\frac{d}{2}}(kr_{\text{min}})$$
where $J_{\frac{d}{2}}(.)$ is the Bessel function of order $d/2$.
\end{proposition}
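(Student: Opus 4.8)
The plan is to bypass the specialized $J_0$ Hankel form of Proposition~\ref{ht1} (which is really only the $d=2$ instance) and instead start from the \emph{genuinely $d$-dimensional} Fourier relation that appears on the first line of that proof, namely
\[
P(\mathbf{k}) = 1 + \frac{N}{V}\int_{\mathbb{R}^d}\bigl(G(\mathbf{r})-1\bigr)\exp(-i\,\mathbf{k}\cdot\mathbf{r})\,d\mathbf{r}.
\]
Into this I would simply substitute the step PCF of Proposition~\ref{PDSinPCF}. Since $G(r)=0$ for $r<r_{\text{min}}$ and $G(r)=1$ for $r\geq r_{\text{min}}$, the integrand collapses: $G(\mathbf{r})-1 = -\mathbf{1}\{\|\mathbf{r}\|<r_{\text{min}}\}$ is minus the indicator of the ball of radius $r_{\text{min}}$. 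Thus the entire problem reduces to computing the $d$-dimensional Fourier transform of the indicator of a ball, and the $-1$ out front will supply the minus sign seen in the claimed formula.

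The key computational step is the radial Fourier transform identity. For a radially symmetric $f$, the $d$-dimensional transform reduces to a single Hankel-type integral of order $d/2-1$,
\[
\widehat{f}(k) = (2\pi)^{d/2}\,k^{1-d/2}\int_0^\infty f(r)\,J_{d/2-1}(kr)\,r^{d/2}\,dr,
\]
so with $f=\mathbf{1}\{r<r_{\text{min}}\}$ the integral runs only over $[0,r_{\text{min}}]$. I would then invoke the Bessel recurrence in its integrated form $\int_0^x t^\nu J_{\nu-1}(t)\,dt = x^\nu J_\nu(x)$ with $\nu=d/2$ (equivalently $\tfrac{d}{dx}[x^\nu J_\nu(x)] = x^\nu J_{\nu-1}(x)$). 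After the change of variable $t=kr$ this evaluates the truncated integral to $k^{-1} r_{\text{min}}^{d/2} J_{d/2}(k r_{\text{min}})$, and the surviving powers of $k$ combine to give
\[
\widehat{f}(k) = (2\pi)^{d/2}\,k^{-d/2}\,r_{\text{min}}^{d/2}\,J_{d/2}(k r_{\text{min}}) = \Bigl(\tfrac{2\pi r_{\text{min}}}{k}\Bigr)^{d/2} J_{d/2}(k r_{\text{min}}).
\]

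Finally I would assemble the pieces: plugging $G(\mathbf{r})-1=-\mathbf{1}\{\|\mathbf{r}\|<r_{\text{min}}\}$ into the Fourier relation yields $P(k)=1-\tfrac{N}{V}\,\widehat{f}(k)$, which is exactly the stated expression. I expect the main obstacle to be purely at the level of the Bessel algebra rather than anything conceptual: one must keep careful track of the order shift from $J_{d/2-1}$ (which governs the radial $d$-dimensional transform) to $J_{d/2}$ (which appears after the integration identity), and ensure the powers of $2\pi$, $k$, and $r_{\text{min}}$ are bookkept consistently so they package cleanly into $(2\pi r_{\text{min}}/k)^{d/2}$. It is also worth flagging that using the $J_0$ Hankel transform $H[\cdot]$ verbatim from Proposition~\ref{ht1} would give the wrong Bessel order for $d\neq2$, so the correctness of the general-$d$ statement really hinges on returning to the full radial Fourier transform with order $d/2-1$.
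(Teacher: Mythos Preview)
Your proposal is correct and follows essentially the same approach as the paper: start from the $d$-dimensional Fourier relation, reduce via radial symmetry to a Hankel-type integral of order $d/2-1$, evaluate the truncated integral $\int_0^{r_{\text{min}}} r^{d/2} J_{d/2-1}(kr)\,dr$ using the Bessel recurrence to get $k^{-1} r_{\text{min}}^{d/2} J_{d/2}(kr_{\text{min}})$, and assemble. Your remark that the $J_0$ form of Proposition~\ref{ht1} is only the $d=2$ case is apt; the paper likewise works directly with the order-$(d/2-1)$ Hankel transform here.
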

\begin{proof}
We know that, 
\begin{eqnarray}
P(\mathbf{k})&=& 1+\frac{N}{V} F\left(G(\mathbf{r})-1\right),\\
&=& 1+\frac{N}{V} \int_{\mathbb{R}^d} \left(G(\mathbf{r})-1\right)\exp(-i\mathbf{k.r}) d\mathbf{r},
\end{eqnarray}where $F(.)$ denotes the $d$-dimensional Fourier transform. Note that, for the radially symmetric or isotropic functions, i.e., $G(r)$ where $r=||\mathbf{r}||$, the above relationship simplifies to
\begin{equation}
\label{PSDht1}
P(k)=1+\frac{N}{V}\;{(2\pi)^{\frac{d}{2}}} k^{1-\frac{d}{2}}H_{\frac{d}{2}-1}\left(r^{\frac{d}{2}-1}(G({r})-1)\right), 
\end{equation}where 
$$H_v(f(r))(k)=\int_{0}^{\infty}r J_v(kr)f(r) dr$$ is the $1-$d Hankel transform of order $v$ with $J$ being the Bessel function.
To derive the PSD of a step function, we first evaluate the Hankel transform of $f(r)=(G(r)-1)$ where $G(r)$ is a step function.
\begin{eqnarray*}
H_{\frac{d}{2}-1}\left(r^{\frac{d}{2}-1}(G(r)-1)\right)&=& \int_{0}^{\infty}r^{\frac{d}{2}} J_{\frac{d}{2}-1}(kr)\left(G(r)-1\right) dr\\
&=&- \int_{0}^{r_{\text{min}}}r^{\frac{d}{2}} J_{\frac{d}{2}-1}(kr) dr\\
&=&- \dfrac{^{r_{\text{min}}^\frac{d}{2}}}{k}J_{\frac{d}{2}}(kr_{\text{min}})
\end{eqnarray*}
Using this expression in (\ref{PSDht1}), we obtain
\begin{equation}
\label{PSDstep}
P(k)=1-\frac{N}{V} \left(\dfrac{2\pi r_{\text{min}}}{k}\right)^{\frac{d}{2}}J_{\frac{d}{2}}(kr_{\text{min}}).
\end{equation}
\end{proof}
This proposition connects the spatial properties of a space-filling spectral design, defined via the PCF, to its spectral properties. The motivation for this is the fact that in several cases, it is easier to link the objective performance of a sample design to its spectral properties. In the next section, we will develop the relation between spectral properties and an objective measure of the performance, which in turn provides us guidelines for designing better space-filling spectral sampling patterns. 


\section{Qualitative Analysis of Space-filling Spectral Designs}
\label{pds_analysis_sec}
In this section, we derive insights regarding the objective performance of space-filling spectral designs. To this end, we analyze the impact of the shape of the PCF on the reconstruction performance. Further, for a tractable analysis, we consider the task of recovering the class of periodic functions using space-filling spectral designs and analyze the reconstruction error 
as a function of their spectral properties. The analysis presented in this section will clarify how the shape of the PCF of a sample design directly impacts its reconstruction performance.

\subsection{Analysis of Reconstruction Error for Periodic Functions}
\label{sec:anal-per}
Let us denote the Fourier transform of the sample design $\mathcal{X}$ by $\mathcal{S}$. The function to be sampled and its corresponding Fourier representation are denoted by $\mathcal{I}$ and $\hat {\mathcal{I}}(k)$ respectively. 
Now, the spectrum of the sampled function is given by $\hat {\mathcal{I}}_{s}(k) = \mathcal{S} \ast \hat {\mathcal{I}}(k)$. Note that, a sampling pattern with a finite number of points is comprised of two components, a DC peak at the origin and a noisy remainder $\mathcal{\bar{S}}$. Thus, equivalently, we have $\hat {\mathcal{I}}_{s}(k) = \{ n\delta(k) + \mathcal{\bar{S}} \} \ast \hat {\mathcal{I}}(k).$
The error introduced in the process of function reconstruction is the difference between the reconstructed and the original functions: 
$$ \mathcal{E}(k) = | \hat {\mathcal{I}}_{s}(k)/N - \mathcal{I}(k) |^{2} = |\mathcal{\bar{S}} \ast \hat {\mathcal{I}}(k)/N |^{2}$$
where we have divided the R.H.S. by $N$ to normalize the energy of $\mathcal{I}_{s}$. For error analysis, we focus on the low frequency content of the error term, since the high frequency components are removed during the reconstruction process.

Denoting the power spectrum without the DC component by $\mathcal{P'}(k)$, for a constant function the error simplifies to
\begin{equation}
\mathcal{E}(k)  \propto |\mathcal{S'}(k)|^2 \propto \mathcal{P'}(k).
\end{equation}
This, as stated above, allows for the characterization of the error in terms of the spectral properties of the sampling pattern used.

Next, we consider an important class of functions, the family of periodic functions, for further analysis. All periodic functions with a finite period can be expressed as a Fourier series, which is a summation of sine and cosine terms
$$\mathcal{I}(x) = a_0 + \sum_{m=1}^{M} a_{m}cos(2\pi mx) + \sum_{m=1}^{M} b_{m}sin(2\pi mx).$$
The Fourier transform of this function is equivalently a summation of pulses:
$$\hat{\mathcal{I}}(k) = a_0 \delta(k) + \sum_{m=1}^{M} a_{m}\left(\dfrac{1}{2}(\delta(k-m) + \delta(k+m)\right) + \sum_{m=1}^{M} b_{m}\left(\dfrac{1}{2}(\delta(k+m) - \delta(k-m))\right).$$
Making substitutions, $a_m + b_m = A_m, a_m - b_m = B_m$, we obtain
$$\mathcal{E}(k) = \dfrac{1}{4N^2} \Big|  4a_0\mathcal{S'}(k) + \sum_{m=1}^{M} \big( A_m \mathcal{S'}(k+m) +  B_m \mathcal{S'}(k-m)\big) \Big|^2. $$
The reconstruction error can then be upper bounded as follows:
\begin{equation}
\label{error_ub_1}
\mathcal{E}(k) \le \dfrac{1}{4N} \Big[ 4a_{0}^{2}\mathcal{P'}(k) +  \sum_{m=1}^{M} \big( A_m^{2} \mathcal{P'}(k+m) + B_m^{2} \mathcal{P'}(k-m) \big) \Big] .
\end{equation}
In the case of a single sinusoidal function, $cos( 2 \pi f x)$, using triangle inequality, this becomes~\citep{Heck:2013} 
\begin{equation}
\label{recon:upb}
\mathcal{E}(k) \le \dfrac{1}{4N} \Big[ \mathcal{P'}(k+f) + \mathcal{P'}(k-f) + 2\; \sqrt[]{\mathcal{P'}(k+f)\mathcal{P'}(k-f)}  \Big].
\end{equation}

Even though this is only an upper bound, we will see that it accurately predicts the characteristics of the sampling error and provides useful guidelines.

The above analysis implies that to assess the quality of the sample designs, one can analyze their spectral behavior. More specifically, the above analysis suggests that to minimize the reconstruction error: (a) the power spectra of the sample design should be close to zero, and (b) for errors to be broadband white noise (uniform over frequencies), the power spectra should be a constant. Note that, in several applications, e.g., image reconstruction, most relevant information is predominantly at low frequencies. In such scenarios, this naturally leads to the following criteria for sample designs: (a) the spectrum should be close to zero for low frequencies which indicates the range of frequencies that can be represented with almost no error, (b) the spectrum should be a constant for high frequencies or contain minimal amount of oscillations in the power spectrum. 

\subsection{Effect of PCF Characteristics on Sampling Performance}
\label{g-pcf}
Based on the two criteria discussed above, we assess the effect of the shape of the PCF on the quality of space-filling designs in the spectral domain. Note that, PCFs of the samples constructed in practice (Figure \ref{point_samp}) often demonstrates the following characteristics: (a) presence of a zero-region characterized by $r_{min}$, (b) a large peak around $r_{min}$, and (c) damped oscillations. To model and analyze these characteristics, we consider the following parametric PCF family:
\begin{eqnarray}
\label{gpcf}
&&G(r)=G(r-r_{\text{min}})+a\left(G(r-r_{\text{min}})-G(r-r_{\text{min}}-\delta)\right)\\\nonumber
&& \qquad + \frac{a}{4r}\exp(-r/2)\sin(c\times r-c)G(r-r_{\text{min}})
\end{eqnarray}
where $G(r-r_{\text{min}})$ is the Step function, peak width $\delta\geq 0$ and the peak height $a\geq 1$ and last term in~\eqref{gpcf} corresponds to damped oscillations. This family is a generalization of Step PCF, with additional parameterization of peak height and oscillations in the PCF.

\subsubsection{Effect of Peak Height on Spectral Properties}

In order to study the impact of increasing peak height in the PCF on the PSD characteristics, we conduct an empirical study. We compute the PSD of a sample design with the following parameters: $N=195, r_{\text{min}}=0.02, \delta=0.005$. Note that, we vary the PCF peak height $a$, which actually reflects the behavior of existing coverage based PDS algorithms. As shown in Figure~\ref{psdvsa}, increasing $a$ results in both significantly \textit{higher} low frequency power and \textit{larger} high frequency oscillations. As expected, the PSD of the Step PCF (or $a=1$) performs the best, i.e., the spectrum is close to zero for low frequencies and constant for high frequencies.

\subsubsection{Effect of Disk Radius on Spectral Properties}
Next, we study the importance of choosing an appropriate $r_{\text{min}}$ (or coverage $\rho$) while generating sample distributions. In Figure \ref{psdvsrmin}, we show the PSD for $N=195$ and $a$ = 1, with varying disk radius values $r_{\text{min}}$. For a fixed sample budget, as we increase the radius, we observe two contrasting changes in the PSD: $(i)$ the spectrum tends to be close to zero at low frequencies and $(ii)$ an increase in oscillations for high frequencies. Consequently, there is a trade-off between low frequency power and high frequency oscillations in power spectra which can be controlled by varying $r_{\text{min}}$. However, the increase in oscillations are less significant compared to the gain in the zero-region. Furthermore, in several applications, low frequency content is more informative, and hence one may still attempt to maximize $r_{min}$ or coverage.

\subsubsection{Effect of Oscillations on Spectral Properties}
Finally, we study the effect of oscillations in the PCF on the power distribution in the spectral domain. In Figure~\ref{psdvsc}, we plot the PSD for $a=1$ with varying amounts of oscillations controlled via the parameter $c$. It can be seen that introducing oscillations in the PCF results in significantly \textit{higher} low frequency power and \textit{larger} high frequency oscillations. As expected, the PSD of the Step PCF (or $c=0$) behaves the best.

In summary, the discussion in this section suggests that the PCF of an ideal space-filling spectral design should have the following three properties: $(a)$ large $r_{min}$, $(b)$ small peak height, and $(c)$ low oscillations. Since, the Step PCF satisfies these three properties, it is expected to be a good space-filling spectral design.
Next, we consider the problem of optimizing the parameter of the Step PCF design, i.e. $r_{min}$.

\begin{figure}[t!]
\vspace*{-0.2in}
  \centering
  \subfigure[]{
    \includegraphics[width=0.3\columnwidth, clip = true]{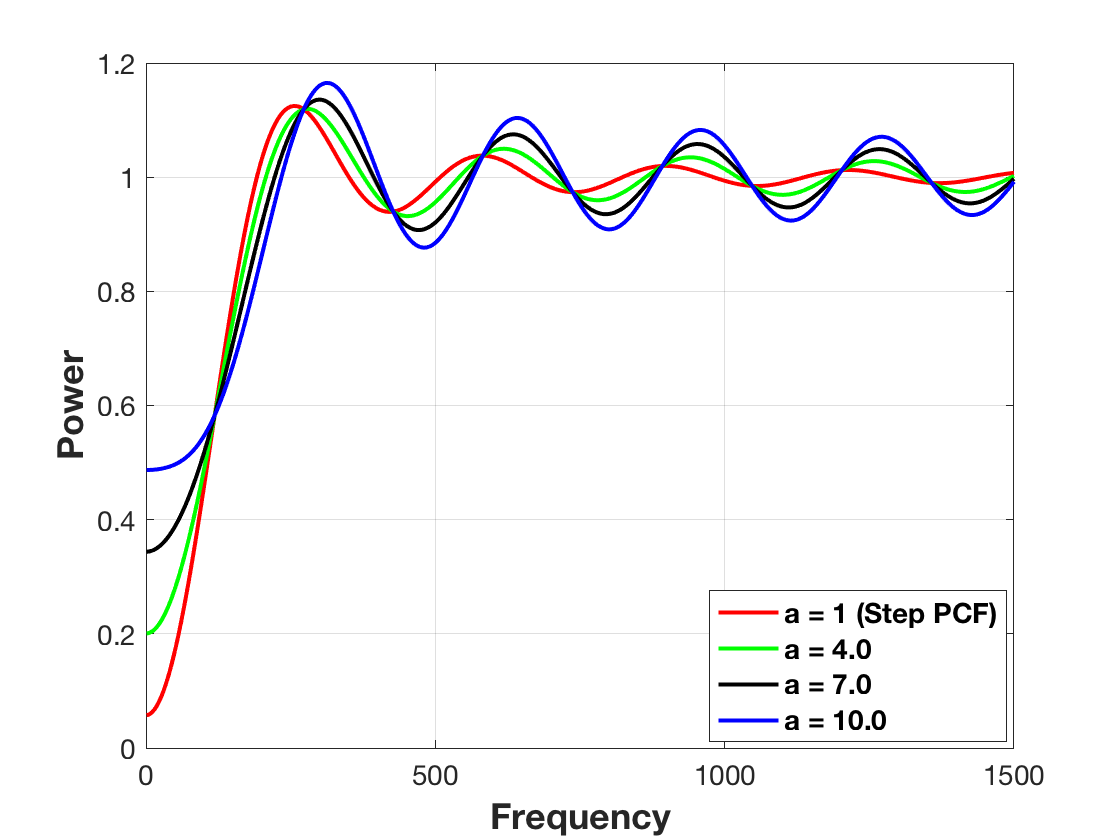}
    \label{psdvsa}
}
\subfigure[]{

    \includegraphics[width=0.3\columnwidth, clip = true]{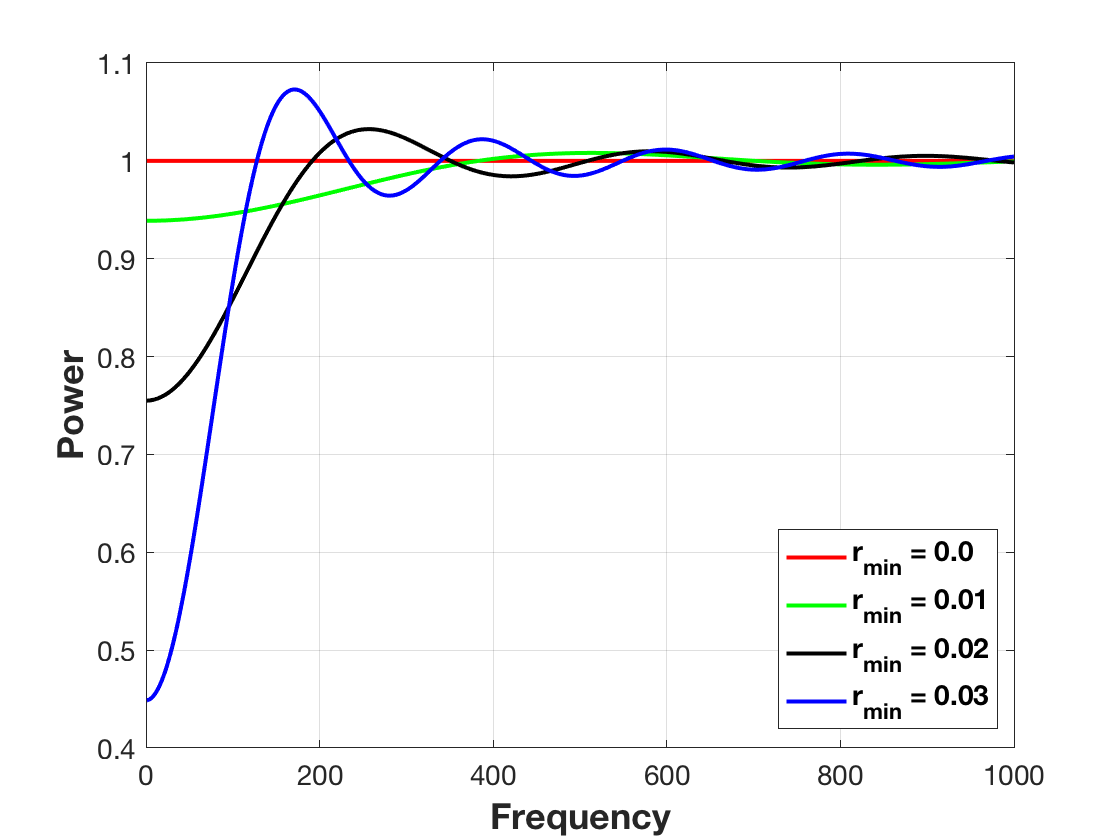}
    \label{psdvsrmin}
}
\subfigure[]{

    \includegraphics[width=0.3\columnwidth, clip = true]{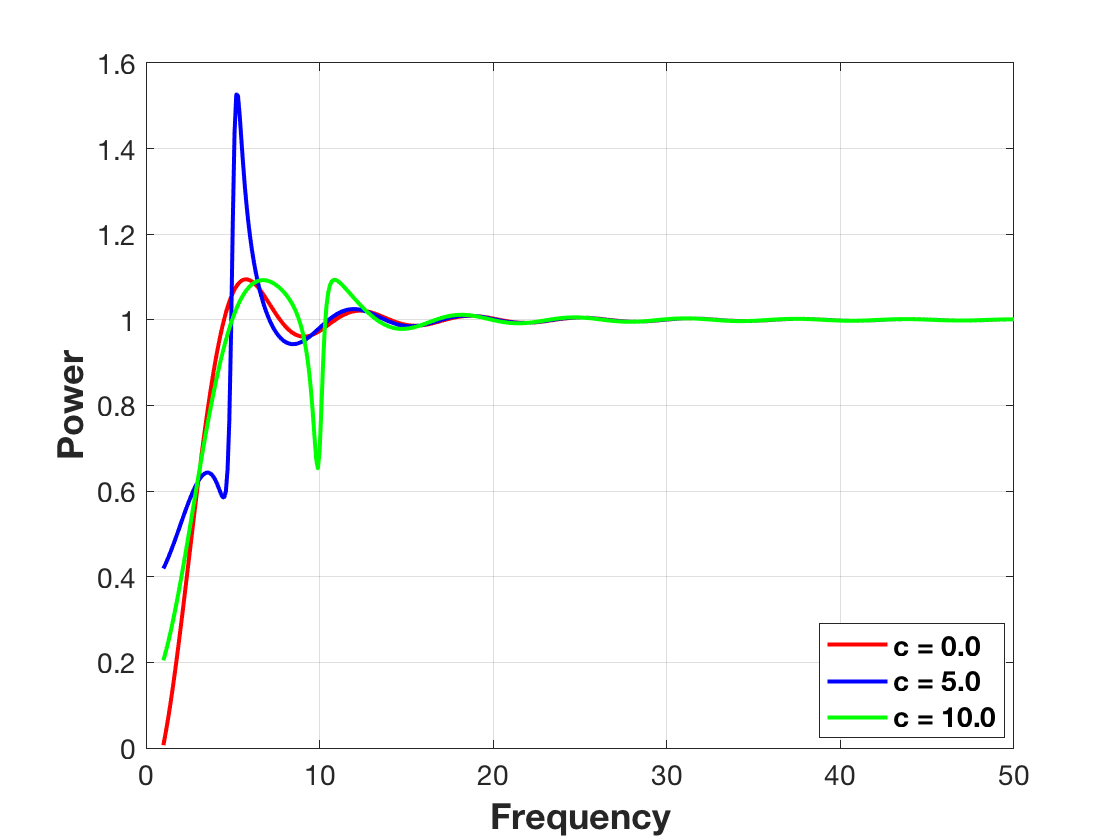}
    \label{psdvsc}
}
    \caption{(a) Effect of peak height in the PCF on power spectra, (b) Effect of disk radius in the PCF on power spectra, (c) Effect of oscillations in the PCF on power spectra.}
    
    \label{pdsanalys}
\end{figure}




\section{Optimization of Step PCF based Space-filling Spectral Designs}

The proposed space-filling metric enjoys mathematical tractability and is supported by theoretical results as defined in Section~\ref{sfsd}. This enables us to obtain new insights for optimizing Step PCF based space-filling spectral designs. In particular, (a) For a fixed $r_{\text{min}}$, we obtain the maximum number of point samples in any arbitrary dimension $d$, (b) For a fixed sampling budget $N$, we derive the maximum achievable $r_{\text{min}}$ in arbitrary dimension $d$. 

\subsection{Case 1: Fixed $r_{\text{min}}$}

The problem of finding the maximum number of point samples in a Step PCF based space-filling spectral design with a given disk radius $r_{\text{min}}$ can be formalized as follows:
\begin{equation}
\begin{split}
{\mathrm{maximize}}\quad& N \\
\mbox{subject to}  
\quad &  P(k)\geq 0,\;\forall k\\
\quad &  G(r-r_{\text{min}})\geq 0,\;\forall r,
\end{split}
\end{equation}where $P(k)=1-\frac{N}{V} \left(\dfrac{2\pi r_{\text{min}}}{k}\right)^{\frac{d}{2}}J_{\frac{d}{2}}(kr_{\text{min}})$. Note that, a space-filling spectral design has to satisfy realizability constraints as defined in Definition~\ref{real}.

\begin{proposition}
\label{prp1}
For a fixed disk radius $r_{\text{min}}$, the maximum number
of point samples possible for a realizable Step PCF based space-filling spectral design in the sampling region with volume $V$ is given by
$$N=\dfrac{V\Gamma\left(\frac{d}{2}+1\right)}{\pi^{\frac{d}{2}}r_{\text{min}}^d}.$$
\end{proposition}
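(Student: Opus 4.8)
The plan is to recognize that among the two realizability constraints, only the spectral one is active. The step function $G(r-r_{\text{min}})$ is non-negative by construction, so the constraint $G(r-r_{\text{min}})\geq 0$ holds for every candidate value of $N$ and imposes no restriction. Hence the whole problem reduces to finding the largest $N$ for which $P(k)\geq 0$ for all $k$, where $P(k)$ is the PSD of Proposition~\ref{PDSinPSD}. Writing the constraint as
\begin{equation*}
\frac{N}{V}\left(\frac{2\pi r_{\text{min}}}{k}\right)^{\frac{d}{2}}J_{\frac{d}{2}}(kr_{\text{min}})\leq 1 \quad\text{for all } k,
\end{equation*}
I would isolate $N$ and observe that the maximal admissible value equals $V$ divided by the supremum over $k$ of the left-hand factor. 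Since the subtracted term is non-positive wherever $J_{d/2}(kr_{\text{min}})<0$ (which only makes $P(k)$ larger and so cannot bind), the relevant frequencies are exactly those where the factor is positive, and the optimal $N$ is set by its global maximum.

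The next step is to compute that maximum. Substituting $u=kr_{\text{min}}$ strips the radius out of the oscillatory part and gives
\begin{equation*}
\left(\frac{2\pi r_{\text{min}}}{k}\right)^{\frac{d}{2}}J_{\frac{d}{2}}(kr_{\text{min}}) = (2\pi)^{\frac{d}{2}}\, r_{\text{min}}^{d}\,\frac{J_{\frac{d}{2}}(u)}{u^{\frac{d}{2}}},
\end{equation*}
so the problem collapses to maximizing $g_\nu(u):=J_\nu(u)/u^{\nu}$ with $\nu=d/2$ over $u>0$. The claim I must establish is that the maximum is attained in the limit $u\to 0^{+}$, where the small-argument expansion $J_\nu(u)\sim (u/2)^{\nu}/\Gamma(\nu+1)$ yields $g_\nu(0^{+})=1/\bigl(2^{\nu}\Gamma(\nu+1)\bigr)$.

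The crux of the argument, and the step I expect to be the main obstacle, is showing that $u\to 0^{+}$ is the \emph{global} maximizer rather than merely a local one, since $g_\nu$ oscillates with decaying amplitude. I would settle this with Poisson's integral representation
\begin{equation*}
\frac{J_\nu(u)}{u^{\nu}} = \frac{1}{2^{\nu}\,\Gamma\!\left(\nu+\tfrac12\right)\Gamma\!\left(\tfrac12\right)}\int_{-1}^{1}(1-t^2)^{\nu-\frac12}\cos(ut)\,dt,
\end{equation*}
valid for $\nu>-\tfrac12$ and hence for all $d\geq 1$. Bounding $|\cos(ut)|\leq 1$ and evaluating the resulting Beta integral $\int_{-1}^{1}(1-t^2)^{\nu-\frac12}\,dt = \Gamma\!\left(\tfrac12\right)\Gamma\!\left(\nu+\tfrac12\right)/\Gamma(\nu+1)$ gives $g_\nu(u)\leq 1/\bigl(2^{\nu}\Gamma(\nu+1)\bigr)$ for every $u$, with equality exactly at $u=0$. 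This pins the supremum down unambiguously.

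Finally I would assemble the pieces: the maximum of the subtracted factor is
\begin{equation*}
(2\pi)^{\frac{d}{2}} r_{\text{min}}^{d}\cdot\frac{1}{2^{\frac{d}{2}}\Gamma\!\left(\frac{d}{2}+1\right)} = \frac{\pi^{\frac{d}{2}} r_{\text{min}}^{d}}{\Gamma\!\left(\frac{d}{2}+1\right)},
\end{equation*}
so the tightest constraint, coming from $k\to 0$, reads $\tfrac{N}{V}\cdot \pi^{d/2} r_{\text{min}}^{d}/\Gamma(d/2+1)\leq 1$. Solving for the extremal $N$ gives $N = V\,\Gamma(d/2+1)/\bigl(\pi^{d/2} r_{\text{min}}^{d}\bigr)$, as claimed. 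The optimum carries a clean interpretation: non-negativity of the power spectrum is hardest to satisfy at the origin, so the packing bound is governed entirely by the low-frequency (DC) behavior of $P(k)$.
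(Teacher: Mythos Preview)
Your proposal is correct and follows essentially the same line as the paper: dismiss the $G\geq 0$ constraint as trivially satisfied, rewrite $P(k)\geq 0$ as an upper bound on $N$ in terms of $\max_k\, J_{d/2}(kr_{\text{min}})/(kr_{\text{min}})^{d/2}$, and evaluate that maximum at $k\to 0$ using the small-argument behavior of the Bessel function. The one place you go beyond the paper is in justifying that $u\to 0^{+}$ is the \emph{global} maximizer of $J_\nu(u)/u^{\nu}$: the paper simply cites the asymptotic $J_\nu(x)\approx (x/2)^{\nu}/\Gamma(\nu+1)$ without addressing why no later oscillation can exceed the limiting value, whereas your Poisson integral argument turns this into a clean, rigorous bound.
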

\begin{proof}
Using the definition of the Step PCF function, the constraint $G(r-r_{\text{min}})$ is trivially satisfied. Note that, the constraint $P(k)\geq 0,\;\forall k$ is equivalent to $\underset{k}{\min}\; P(k)\geq 0$. In other words,
\begin{eqnarray*}
&& \underset{k}{\min}\; 1-\rho \left(\dfrac{2\pi r_{\text{min}}}{k}\right)^{\frac{d}{2}}J_{\frac{d}{2}}(kr_{\text{min}})\geq 0\\
&\Leftrightarrow& \underset{k}{\max}\;\rho \left(\dfrac{2\pi r_{\text{min}}}{k}\right)^{\frac{d}{2}}J_{\frac{d}{2}}(kr_{\text{min}})\leq 1\\
&\Leftrightarrow& \rho \left({2\pi }\right)^{\frac{d}{2}}r_{\text{min}}^d\;\underset{k}{\max}\;\left(\dfrac{J_{\frac{d}{2}}(kr_{\text{min}})}{(kr_{\text{min}})^{\frac{d}{2}}}\right)\leq 1\\
&\Leftrightarrow& \rho \left({2\pi }\right)^{\frac{d}{2}}r_{\text{min}}^d\;\dfrac{1}{2^{\frac{d}{2}}\Gamma\left(\frac{d}{2}+1\right)}\leq 1\\
&\Leftrightarrow& N \leq \dfrac{V\Gamma\left(\frac{d}{2}+1\right)}{\left({\pi }\right)^{\frac{d}{2}}r_{\text{min}}^d}
\end{eqnarray*}
where, we have used the fact that $J_v(x)\approx {(x/2)^v}/{\Gamma(v+1)}$. 
\end{proof}

Note that, for the $2$-dimensional case, we have $\frac{J_{1}(kr_{\text{min}})}{kr_{\text{min}}}=\text{jinc}(kr_{\text{min}})$. Now using the fact that $\text{jinc}(x)$ has the maximum value equal to $1/2$, for a fixed disk radius $r_{\text{min}}$, the maximum number of point samples possible in a $2$-d Step PCF based space-filling spectral design is given by $$N={V}/{\pi(r_{\text{min}})^2},$$
which again corroborates our bound in Proposition~\ref{prp1}.

\subsection{Case 2: Fixed $N$}

Alternately, we can also derive the bound for the disk radius of Step PCF with a fixed sampling budget $N$ as follows:
\begin{equation}
\begin{split}
{\mathrm{maximize}}\quad& r_{\text{min}} \\
\mbox{subject to}  
\quad &  P(k)\geq 0,\;\forall k\\
\quad &  G(r-r_{\text{min}})\geq 0,\;\forall r
\end{split}
\end{equation}

\begin{proposition}
\label{prp2}
For a fixed sampling budget $N$, 
the maximum possible disk radius $r_{\text{min}}$ for a realizable Step PCF based space-filling spectral design
in the sampling region with volume $V$ is given by
$$r_{\text{min}}=\sqrt[d]{\dfrac{V\Gamma\left(\frac{d}{2}+1\right)}{\pi^{\frac{d}{2}}N}}.$$
\end{proposition}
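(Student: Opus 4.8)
The plan is to treat this as the dual of Proposition~\ref{prp1}: the same realizability analysis constrains the pair $(N,r_{\text{min}})$ through a single inequality, so having already maximized $N$ for fixed $r_{\text{min}}$, I can read off the maximal $r_{\text{min}}$ for fixed $N$ simply by solving that inequality the other way. First I would dispose of the coverage constraint: since the Step PCF of Proposition~\ref{PDSinPCF} takes only the values $0$ and $1$, the requirement $G(r-r_{\text{min}})\geq 0$ holds automatically and imposes no restriction on $r_{\text{min}}$.

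Next I would turn to the binding constraint, non-negativity of the PSD. Using the closed form from Proposition~\ref{PDSinPSD}, namely $P(k)=1-\frac{N}{V}\left(\frac{2\pi r_{\text{min}}}{k}\right)^{\frac{d}{2}}J_{\frac{d}{2}}(kr_{\text{min}})$, the condition $P(k)\geq 0$ for all $k$ is equivalent to $\min_k P(k)\geq 0$, which after the substitution $x=kr_{\text{min}}$ becomes
$$\frac{N}{V}(2\pi)^{\frac{d}{2}} r_{\text{min}}^{d}\;\max_{x}\frac{J_{\frac{d}{2}}(x)}{x^{\frac{d}{2}}}\leq 1.$$
As in Proposition~\ref{prp1}, the crux is evaluating this maximum: invoking the small-argument asymptotic $J_v(x)\approx (x/2)^v/\Gamma(v+1)$ gives $J_{\frac{d}{2}}(x)/x^{\frac{d}{2}}\to 1/\bigl(2^{\frac{d}{2}}\Gamma(\frac{d}{2}+1)\bigr)$ as $x\to 0$, and since $J_v(x)/x^v$ attains its global maximum at the origin (its oscillations decay in amplitude), this limiting value is exactly the required maximum. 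Substituting it and using $(2\pi)^{\frac{d}{2}}/2^{\frac{d}{2}}=\pi^{\frac{d}{2}}$, the constraint collapses to
$$r_{\text{min}}^{d}\leq \frac{V\,\Gamma\!\left(\frac{d}{2}+1\right)}{\pi^{\frac{d}{2}}N}.$$

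Finally I would take the $d$-th root and observe that the maximum feasible radius is attained with equality, yielding the claimed expression for $r_{\text{min}}$. The main obstacle is precisely the same Bessel-function fact already relied upon in Proposition~\ref{prp1} --- that $J_{\frac{d}{2}}(x)/x^{\frac{d}{2}}$ is globally maximized as $x\to 0$ --- so once that is granted, the remainder is a routine algebraic inversion of the relation derived there, and I expect no additional difficulty.
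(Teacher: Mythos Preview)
Your proposal is correct and mirrors the paper's own approach: the paper simply states that the proof is similar to that of Proposition~\ref{prp1}, and you have spelled out exactly that argument, inverting the inequality obtained there to solve for $r_{\text{min}}$ instead of $N$.
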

\begin{proof}
The proof is similar to the one in Proposition~\ref{prp1}.
\end{proof}

\subsection{Relative Radius of Step PCF}
As mentioned before, the current literature characterizes coverage by the fraction $\rho$ of the maximum possible
radius $r_{max}$ for $N$ samples to cover the sampling domain,
such that $r_{min} = \rho r_{max}$. 
The maximum possible disk radius is achieved by the deterministic hexagonal lattice~\citep{Hex-lattice} and can be approximated in a $d$ dimensional sampling region as $r_{max} \approx \sqrt[\leftroot{-3}\uproot{3}d]{\tfrac{A_d}{C_d N}}$. Here, $A_d$ is the hypervolume of the sampling domain and $C_d=V_d/r^d$ with $V_d$ being the hypervolume of a hypersphere with radius $r$.
Note that, a uniformly distributed point
set can have a relative radius of $0$, and the relative radius of a hexagonal lattice equals $1$ (in $2$-d). Next, we derive a closed-form expression for the relative radius of Step PCF based design.

\begin{proposition}
\label{prp_rho}
For a fixed sampling budget $N$, the maximum relative radius $\rho$ for Step PCF based space-filling spectral design in the sampling region with volume $V$ is given by $\rho = \dfrac{1}{2\;\sqrt[d]{\eta_d}} $ where $\eta_d$ is maximal density of a sphere packing in $d$-dimensions.
\end{proposition}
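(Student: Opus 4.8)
The plan is to obtain $\rho$ directly as the ratio $\rho = r_{\text{min}}/r_{\max}$, where the numerator is the largest coverage radius a \emph{realizable} Step PCF design can attain (already supplied by Proposition~\ref{prp2}) and the denominator is the largest minimum inter-point distance achievable by \emph{any} arrangement of $N$ points in $\mathcal{D}$, i.e.\ the radius of the densest sphere packing. The first step is to rewrite Proposition~\ref{prp2} so that the unit-sphere volume appears explicitly. Since $C_d = V_d/r^d = \pi^{d/2}/\Gamma(d/2+1)$ is precisely the hypervolume of the unit $d$-sphere, the bound reads
\begin{equation*}
r_{\text{min}} = \sqrt[d]{\dfrac{V\,\Gamma(d/2+1)}{\pi^{d/2}\,N}} = \sqrt[d]{\dfrac{V}{C_d\,N}} .
\end{equation*}

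The second and central step is to express $r_{\max}$ through the maximal packing density $\eta_d$. I would note that a point set whose pairwise minimum distance equals $r$ is equivalent to a packing of $N$ non-overlapping balls of radius $r/2$, because two such balls touch exactly when their centers are $r$ apart. The fraction of $\mathcal{D}$ these balls occupy is $\eta = N\,C_d\,(r/2)^d / V$, and the tightest admissible configuration is the one attaining the maximal density $\eta_d$. Setting $\eta = \eta_d$ and solving for the corresponding largest radius gives
\begin{equation*}
r_{\max} = 2\,\sqrt[d]{\dfrac{V\,\eta_d}{C_d\,N}} = 2\,\sqrt[d]{\eta_d}\;\sqrt[d]{\dfrac{V}{C_d\,N}} .
\end{equation*}

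With both radii written over the common factor $\sqrt[d]{V/(C_d N)}$, forming the ratio cancels that factor along with all dependence on $V$, $N$, $\pi$, and $\Gamma$, leaving $\rho = r_{\text{min}}/r_{\max} = 1/\bigl(2\sqrt[d]{\eta_d}\bigr)$, as claimed. The main obstacle here is conceptual rather than algebraic: one must correctly identify $r_{\max}$ with the densest-packing radius and carefully track the factor $1/2$ that relates the minimum inter-point distance to the packing-ball radius. A useful consistency check is that substituting the densest lattice's own minimum distance for $r_{\text{min}}$ yields $\rho = 1$ (e.g.\ the hexagonal lattice in $d=2$), matching the stated convention that the optimal lattice has unit relative radius; the Step PCF design then sits below it by the constant factor $2\sqrt[d]{\eta_d}$. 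I would also remark that the crude estimate $r_{\max}\approx\sqrt[d]{A_d/(C_d N)}$ quoted earlier corresponds to dropping exactly this $2\sqrt[d]{\eta_d}$ factor, so the exact relative radius must retain it.
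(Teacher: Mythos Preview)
Your proposal is correct and follows essentially the same route as the paper: invoke Proposition~\ref{prp2} for $r_{\text{min}}$, express $r_{\max}$ through the maximal packing density $\eta_d$, and take the ratio. If anything, your argument is cleaner than the paper's, because you explicitly distinguish the minimum inter-point distance from the packing-ball radius and track the resulting factor of $2$; the paper's proof leaves that step implicit between its last two displayed lines.
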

\begin{proof}
Let us denote by $r_{max} = \underset{r}{\arg\min}\; \eta_d$, then, the maximal density of a sphere packing with $N$ samples in $d$-dimensions is given by 
\begin{eqnarray}
&& \eta_d = \dfrac{N\pi^{d/2}}{\Gamma(1+\frac{d}{2})} \dfrac{r_{max}^d}{V}\\
&\Leftrightarrow & \eta_d = \left(\frac{r_{max}}{r_{min}}\right)^d \label{equality1} \\
&\Leftrightarrow & \rho = \dfrac{1}{2\;\sqrt[d]{\eta_d}}
\end{eqnarray}
where equality in \eqref{equality1} uses Proposition~\eqref{prp2}.
\end{proof}

For $d=2$ and $3$, the relative radius simplifies to:
$$\rho = 0.5 \sqrt[2]{\dfrac{\pi \sqrt{3}}{6}},\;\textbf{for}\; d =2,\; \text{and} $$
$$\rho = 0.5 \sqrt[3]{\dfrac{\pi \sqrt{2}}{6}},\;\textbf{for}\; d =3 .$$

Note that, finding the maximal density of a sphere packing for an arbitrary high dimension (except in $d=2,3$ and recently in $8,24$~\citep{sp8, sp24}) is an open problem. 
Note that, best known packings are
often lattices, thus, we use the best known lattices to be an approximation of $r_{max}$ in our analysis\footnote{We use relative radius as a metric only for analysis and not for design optimization.}.

In Figure~\ref{rhovsd}, we plot the relative radius $\rho=r_{min}/r_{max}$ of Step PCF for different dimensions $d$. It is interesting to notice that the relative radius of Step PCF based designs increases as the dimension $d$ increases, i.e., Step PCF based designs approach a more regular pattern. Further, note that, for a fixed sampling budget both $r_{min}$ and $r_{max}$ increase as the number of dimensions increases. The Step PCF based designs maintain randomness by keeping the PCF flat, but this comes at a cost: the disk radius $r_{min}$ of these patterns is very small (as can be seen from Figure~\ref{rhovsd}). For several applications, covering the space better (by trading-off randomness) is more important. In the next section, we will propose a new class of space-filling spectral designs that can achieve a much higher $r_{min}$ at the small cost of compromising randomness by introducing a single peak into an otherwise flat PCF.

\begin{figure}[t!]
\vspace*{-0.2in}
  \centering
    \includegraphics[width=0.5\columnwidth, clip = true]{./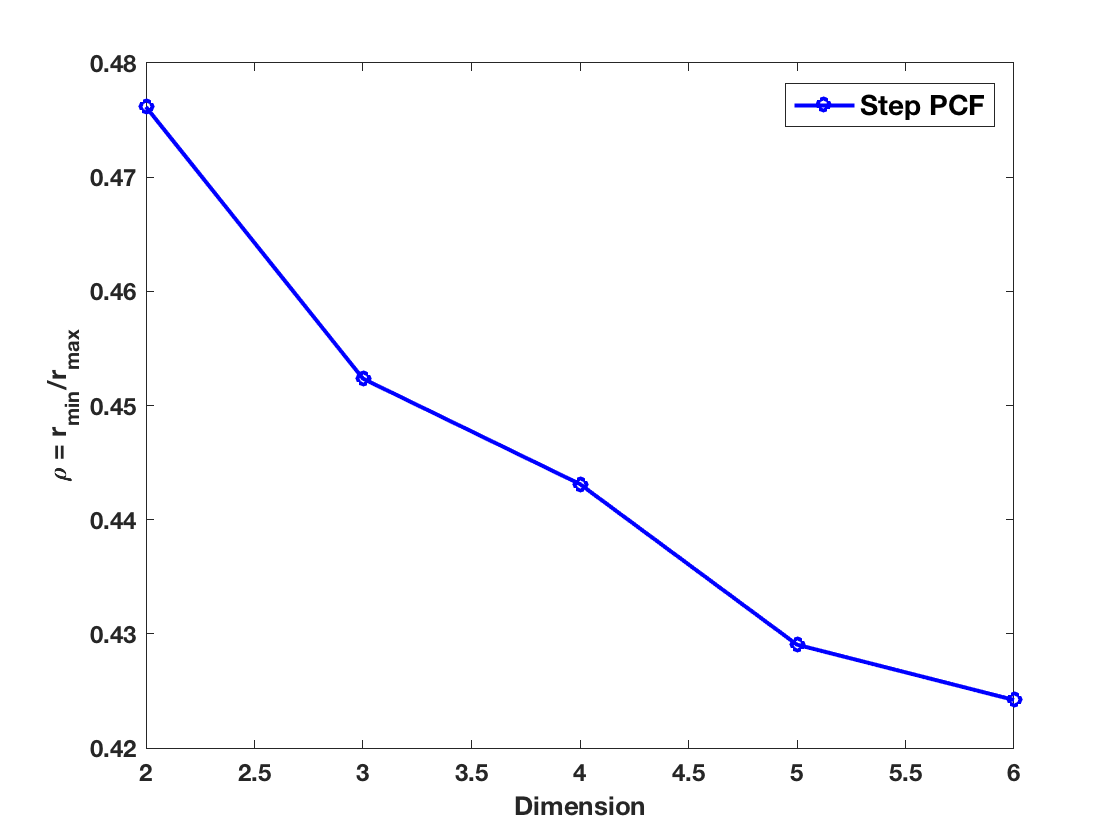}
    \caption{Relative radius $\rho=r_{min}/r_{max}$ of Step PCF based space-filling spectral design for different dimensions $d$.}    
\label{rhovsd}
\end{figure}


\section{Space-filling Spectral Designs with Improved Coverage}
To improve the coverage of Step PCF base space-filling spectral design, in this section, we propose a novel space-filling spectral design which systematically trades-off randomness with coverage of the resulting samples. Note that, the randomness property can be relaxed either by increasing the peak height of the PCF, or by increasing the amounts oscillations in the PCF (as discussed in Section~\ref{g-pcf}). For simplicity\footnote{In our initial experiments, we found that increasing the peak height alone is sufficient for trading-off randomness to maximize coverage, and performs better than trading-off randomness by increasing oscillations in the PCF.}, we adopt the former strategy and use only the peak height parameter. More specifically, as an alternative to Step PCF, we design the following generalization which we refer as the Stair PCF design.

\subsection{Stair PCF based Space-filling Spectral Design}
Now, we define the proposed {\it Stair} PCF based space-filling design and quantify the gains achieved in the coverage characteristics (\textit{i.e.} $r_{min}$).

\vspace{0.1in}

\noindent \textbf{Stair PCF in the Spatial Domain}: The Stair PCF construction is defined as follows:

\begin{align}
  \label{eqn:stair}
  & G(r;r_0,r_1,P_0)=f(r-r_1)+P_0\left(f(r-r_0)-f(r-r_1)\right), \\
  \nonumber  & \text{with } f(r-r_0) = \left\{ \begin{array}{rll}
      0  & \mbox{if}\ r\leq r_0 \\
      1  & \mbox{if}\ r>r_0
    \end{array} \right\}, \\
\nonumber & \text{where } r_0\le r_1 \text{ and } P_0 \geq 1. 
\end{align}
This family of space-filling spectral designs has three interesting properties:
\begin{itemize}
\item except for a single peak in the region $r_0 \leq r\leq r_1$, the PCF is flat, thus, does not compromise randomness entirely,
\item both the height and width of the peak can be optimized to maximize coverage,
\item the Step PCF based spectral design can be derived as as a special case of this construction.
\end{itemize}
A representative example of Stair PCF is shown in Figure~\eqref{stair_fig}.

\vspace{0.1in}

\noindent \textbf{Stair PCF in the Spectral Domain}: Following the analysis in the earlier sections, we derive the power spectral density of Stair PCF based space-filling spectral designs. 

\begin{proposition}
\label{psd_stair}
The power spectral density of a Stair PCF based space-filling spectral designs, $G(r;r_0,r_1,P_0)$, with $N$ samples in the sampling region with volume $V$ is given by
$$P(k)=1-\frac{N}{V} P_0\left(\dfrac{2\pi r_{0}}{k}\right)^{\frac{d}{2}}J_{\frac{d}{2}}(kr_{0})-\frac{N}{V} (1-P_0)\left(\dfrac{2\pi r_{1}}{k}\right)^{\frac{d}{2}}J_{\frac{d}{2}}(kr_{1}).
$$
\end{proposition}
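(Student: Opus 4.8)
The plan is to reduce the Stair PCF to a superposition of step functions and then invoke the machinery already developed for Proposition~\ref{PDSinPSD}. The starting point is the master relation~\eqref{PSDht1},
\begin{equation*}
P(k)=1+\frac{N}{V}\,(2\pi)^{\frac{d}{2}} k^{1-\frac{d}{2}} H_{\frac{d}{2}-1}\!\left(r^{\frac{d}{2}-1}(G(r)-1)\right),
\end{equation*}
which expresses the PSD of any isotropic design in terms of the Hankel transform of $G(r)-1$. Since this relation is linear in $G(r)-1$ and the Hankel transform $H_{\frac{d}{2}-1}$ is itself linear, it suffices to write the Stair PCF as a linear combination of step functions and transform each piece separately.

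First I would rewrite the construction~\eqref{eqn:stair} in a compact form. Collecting the two indicator terms gives $G(r;r_0,r_1,P_0)=P_0\,f(r-r_0)+(1-P_0)\,f(r-r_1)$, which one checks equals $0$ for $r\le r_0$, equals $P_0$ for $r_0<r\le r_1$, and equals $1$ for $r>r_1$. Subtracting the constant $1$ and using that the coefficients $P_0$ and $1-P_0$ sum to one, I obtain the key decomposition
\begin{equation*}
G(r)-1=P_0\,\bigl(f(r-r_0)-1\bigr)+(1-P_0)\,\bigl(f(r-r_1)-1\bigr),
\end{equation*}
that is, a weighted difference of two Step-PCF residuals centered at $r_0$ and $r_1$.

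Next I would apply linearity of $H_{\frac{d}{2}-1}$ and reuse the single-step computation already carried out in the proof of Proposition~\ref{PDSinPSD}, namely that for a step function with radius $a$ one has $H_{\frac{d}{2}-1}\!\left(r^{\frac{d}{2}-1}(f(r-a)-1)\right)=-\,a^{\frac{d}{2}}J_{\frac{d}{2}}(ka)/k$. Applying this to the two terms with $a=r_0$ and $a=r_1$ under the weights $P_0$ and $1-P_0$, then substituting back into the master relation and simplifying the prefactor via $(2\pi)^{\frac{d}{2}}k^{1-\frac{d}{2}}\cdot a^{\frac{d}{2}}/k=(2\pi a/k)^{\frac{d}{2}}$, yields exactly the claimed expression for $P(k)$.

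The only genuinely new step is the decomposition: recognizing that the staircase splits into two shifted step functions whose weights sum to one, so that the asymptotic condition $G(r)-1\to 0$ is respected. Everything after that is linearity plus a verbatim reuse of the Hankel-transform evaluation from Proposition~\ref{PDSinPSD}, so I expect no analytic obstacle; the main care is bookkeeping the signs and the $P_0$ versus $1-P_0$ weights. A useful consistency check is that setting $P_0=1$ collapses the two contributions (the $r_1$ term vanishes) and recovers the Step-PCF PSD of Proposition~\ref{PDSinPSD} with $r_{\text{min}}=r_0$, as expected since $G(r;r_0,r_1,1)=f(r-r_0)$.
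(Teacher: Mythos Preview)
Your proposal is correct and is essentially the paper's own proof: the paper computes the Hankel transform of $G(r)-1$ directly and, after splitting the integral over $[0,r_0]$ and $[r_0,r_1]$, arrives at precisely $-P_0\int_0^{r_0}r^{d/2}J_{d/2-1}(kr)\,dr-(1-P_0)\int_0^{r_1}r^{d/2}J_{d/2-1}(kr)\,dr$, which is exactly what your decomposition $G-1=P_0(f(\cdot-r_0)-1)+(1-P_0)(f(\cdot-r_1)-1)$ gives by linearity. The only difference is packaging---you explicitly reuse Proposition~\ref{PDSinPSD}, whereas the paper re-evaluates each integral---but the mathematical content is identical.
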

\begin{proof}
Using results from Section~\ref{sec:psd}, we have
\begin{equation}
\label{PSDht}
P(k)=1+\frac{N}{V}\;{(2\pi)^{\frac{d}{2}}} k^{1-\frac{d}{2}}H_{\frac{d}{2}-1}\left(r^{\frac{d}{2}-1}(G({r})-1)\right). 
\end{equation}
To derive the PSD of a Stair function, we first evaluate the Hankel transform of $f(r)=(G(r)-1)$ where $G(r)$ is a Stair function.
\begin{eqnarray*}
&&H_{\frac{d}{2}-1}\left(r^{\frac{d}{2}-1}(G(r)-1)\right)= \int_{0}^{\infty}r^{\frac{d}{2}} J_{\frac{d}{2}-1}(kr)\left(G(r)-1\right) dr\\
&&= -P_0 \int_{0}^{r_0} r^{\frac{d}{2}} J_{\frac{d}{2}-1}(kr) dr - (1-P_0)\int_{0}^{r_1}r^{\frac{d}{2}} J_{\frac{d}{2}-1}(kr) dr\\
&&= -P_0 \dfrac{^{r_0^\frac{d}{2}}}{k}J_{\frac{d}{2}}(kr_0) - (1-P_0)\dfrac{^{r_1^\frac{d}{2}}}{k}J_{\frac{d}{2}}(kr_1)
\end{eqnarray*}
Using this expression in (\ref{PSDht}),
\begin{equation}
\label{PSDstep}
P(k)=1-\frac{N}{V} P_0\left(\dfrac{2\pi r_{0}}{k}\right)^{\frac{d}{2}}J_{\frac{d}{2}}(kr_{0})-\frac{N}{V} (1-P_0)\left(\dfrac{2\pi r_{1}}{k}\right)^{\frac{d}{2}}J_{\frac{d}{2}}(kr_{1}).
\end{equation}
\end{proof}

\begin{figure*}[t!]
\vspace*{-0.2in}
  \centering
  \subfigure[]{\includegraphics[width=0.3\columnwidth, clip = true]{./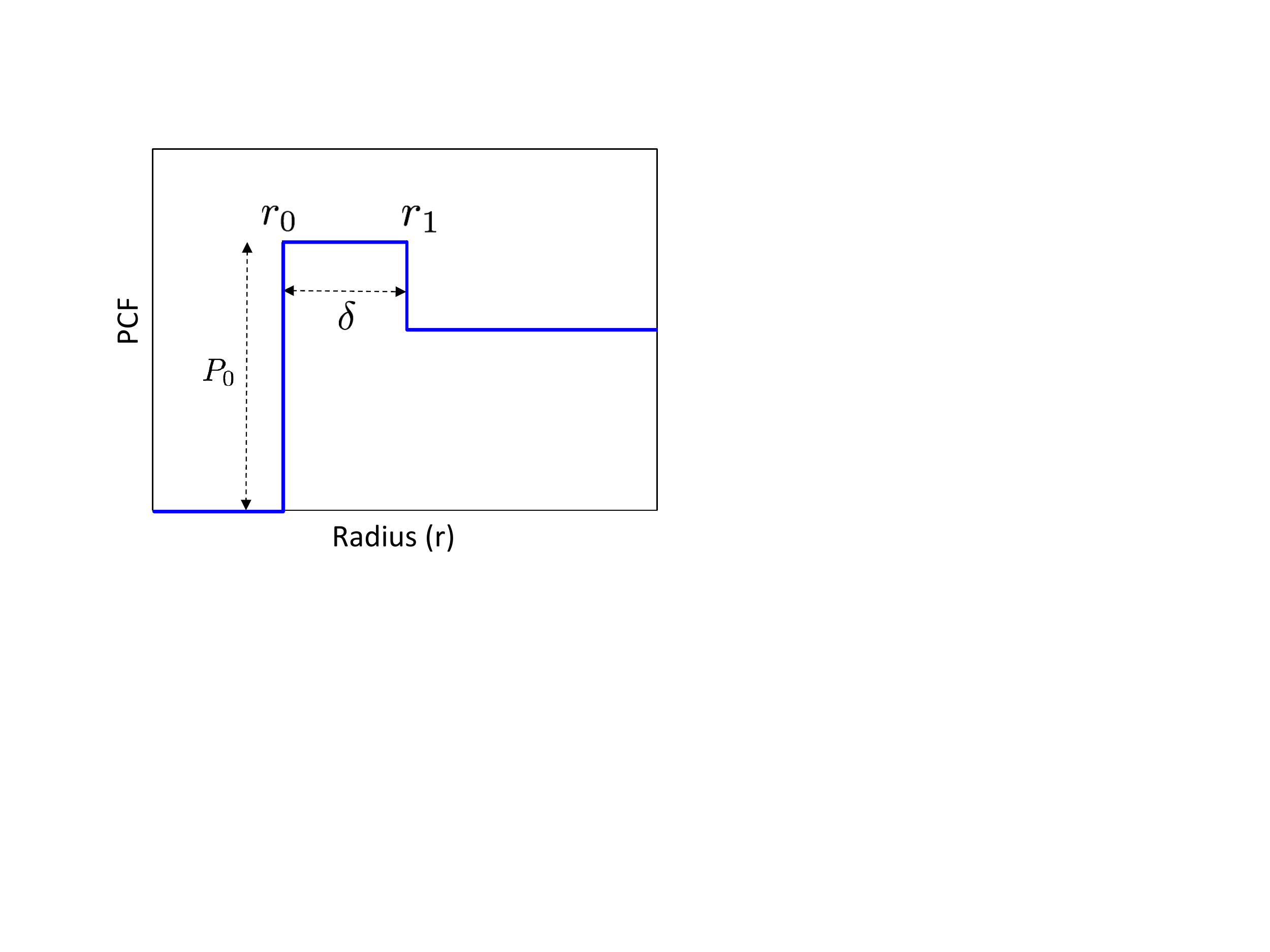}
  \label{stair_fig}}
   \subfigure[$d=2$]{
   	\includegraphics[
   	width=0.3\textwidth,clip=true]{./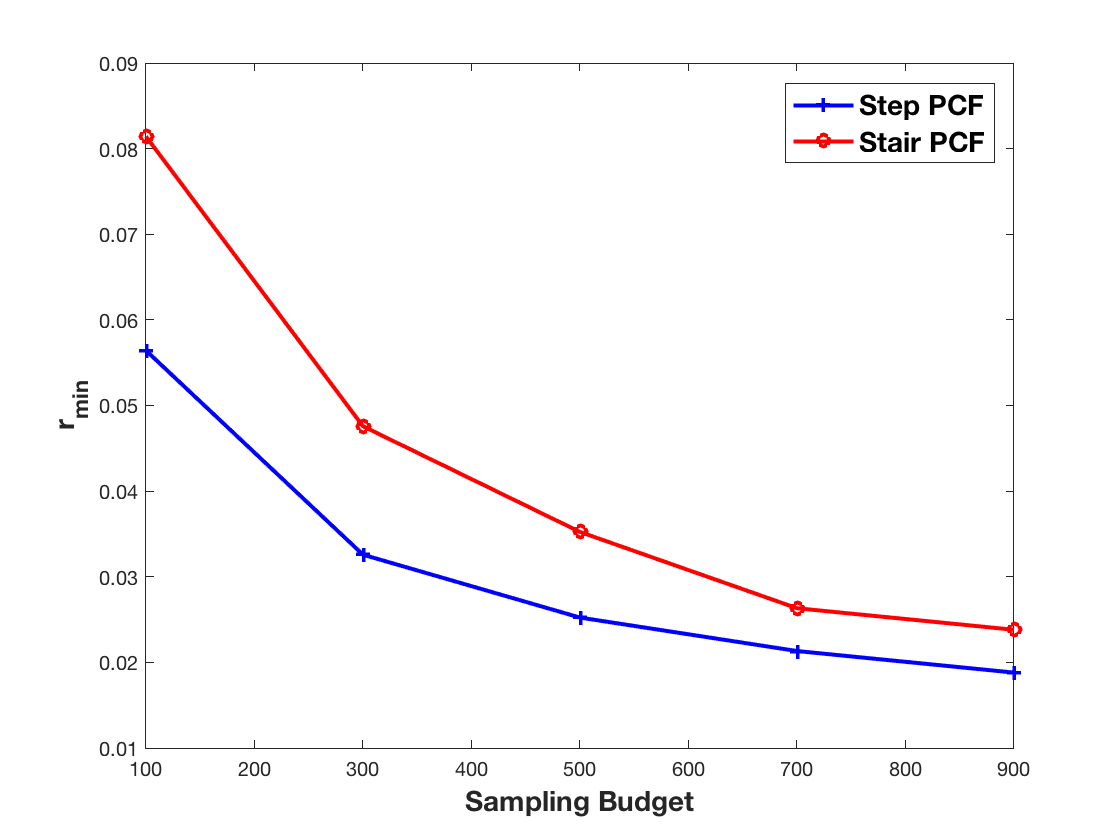}
   	\label{fr2} } 
   \subfigure[$d=3$]{
   	\includegraphics[%
   	width=0.3\textwidth,clip=true]{./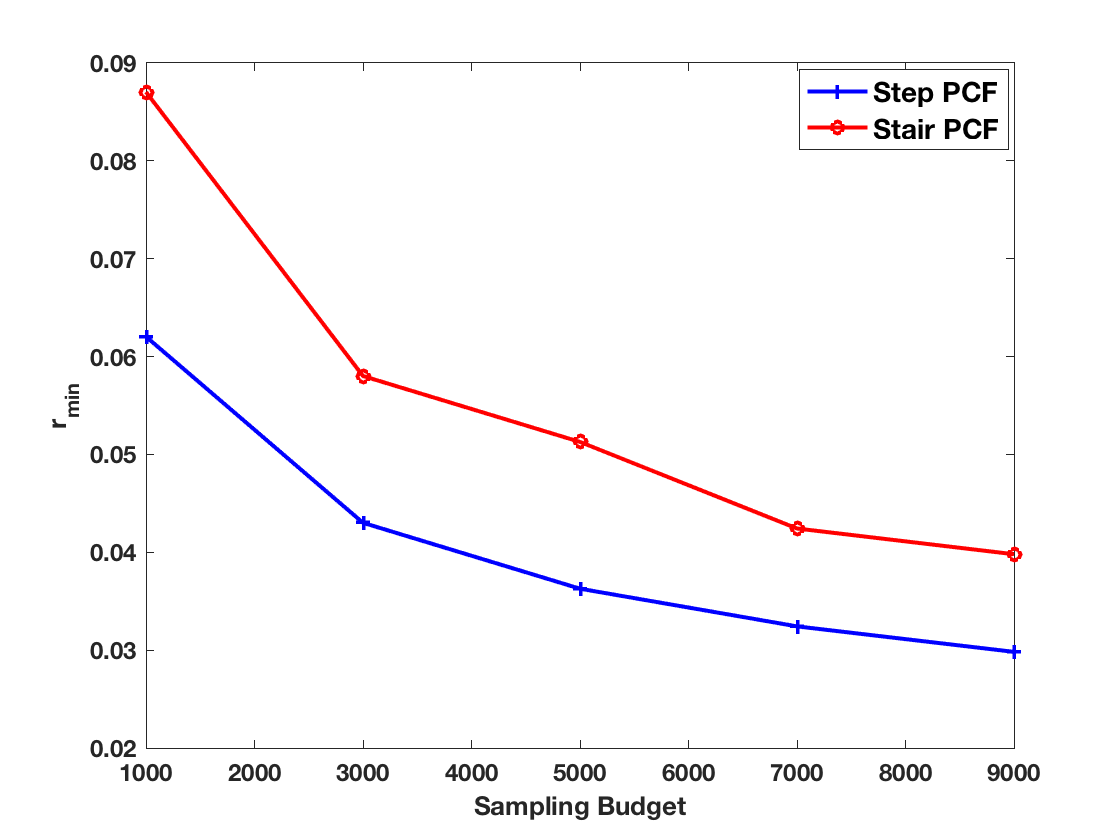}
   	\label{fr3} }
   \vfill
   \subfigure[$d=4$]{
   	\includegraphics[
   	width=0.3\textwidth,clip=true]{./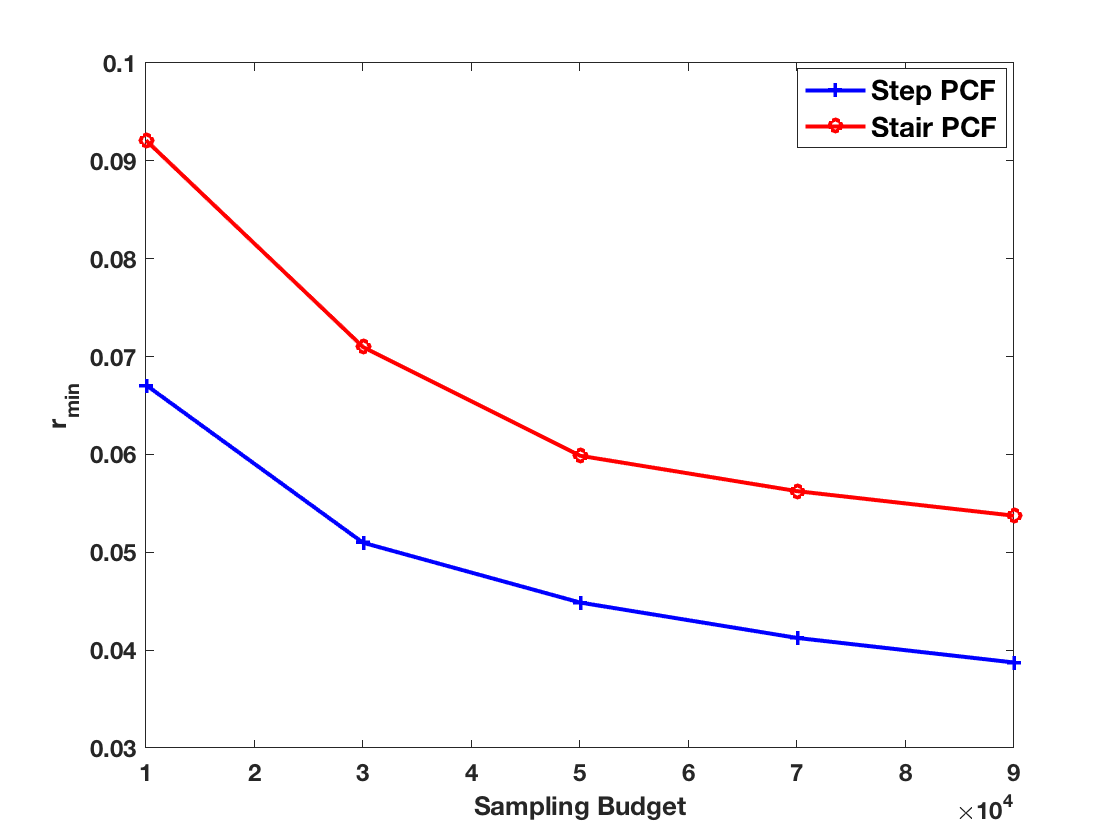}
   	\label{fr4} }
   \subfigure[$d=5$]{
   	\includegraphics[
   	width=0.3\textwidth,clip=true]{./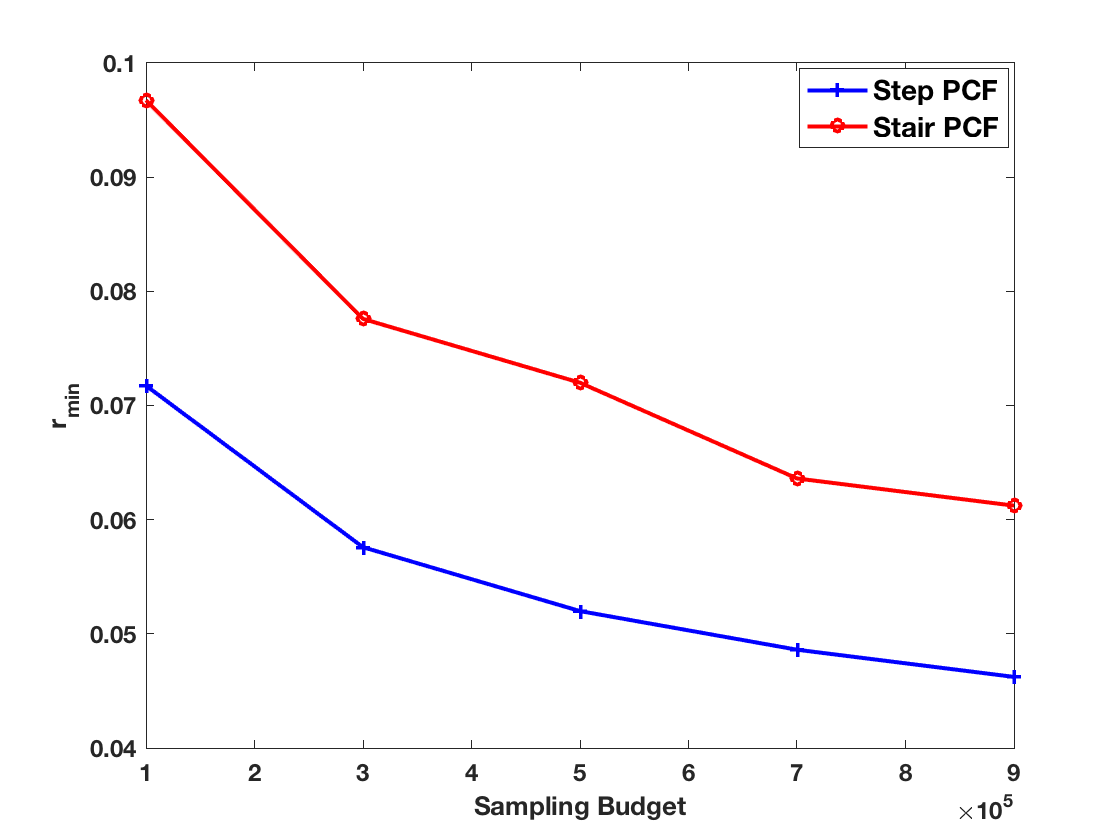}
   	\label{fr5} }
   \subfigure[$d=6$]{
   	\includegraphics[
   	width=0.3\textwidth,clip=true]{./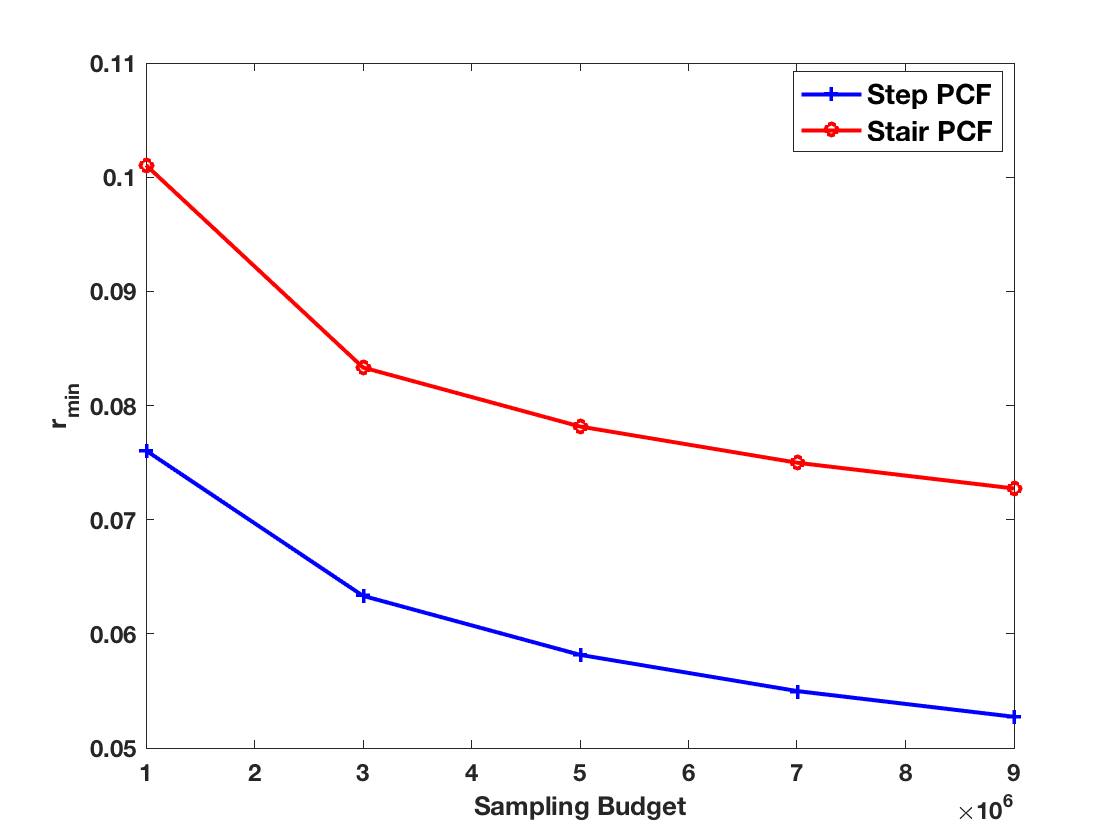}
   	\label{fr6} }

    \caption{(a) Pair correlation function of Stair PCF based designs, (b)-(f) Maximum Disk Radius For Step and Stair PCF for dimensions $2$ to $6$.}    

\end{figure*}

Next, we empirically evaluate the gain in coverage achieved by Stair PCF based designs compared to the Step PCF based designs.

\subsection{Coverage Gain with Stair PCF}
Ideally, the optimal Stair PCF should be obtained by simultaneously maximizing $r_{0}$ ($:=r_{min}$) and minimizing $P_0$. Furthermore, not all PCFs in the Stair PCF family are realizable. Due to the realizability conditions, the parameters cannot be adjusted independently. The main challenge, therefore, is to find the combinations of the three parameters $(r_0,r_1,P_0)$ that is realizable and yield a good sample design. Unlike Step PCF, the closed form expression for the optimal parameters $(r_0,r_1,P_0)$ are difficult to obtain, and, therefore, we explore this family of PCF patterns empirically by searching configurations for which:
\begin{itemize}
\item the disk radius $r_0$ is as high as possible, and
\item the PCF is flat with minimal increase in the peak height $P_0$.
\end{itemize}

\subsubsection{Disk Radius $r_{min}$ vs. Sample Budget $N$}
In this section, we show the increase in coverage (or $r_{min}$) obtained by compromising randomness by increasing peak height in the PCF. We constrain the peak height to be below $P_0\leq 1.5$ and analyze the gain in $r_{min}$ due to this small compromise in randomness. Furthermore, we assume that $r_{min}^{step} \leq r_0 \leq 2\times r_{min}^{step}$ and $r_0 \leq r_1 \leq 1.5 \times r_0$. In Figures~\ref{fr2} through~\ref{fr6}, we compare the maximum $r_{min}$ achieved by the Step and Stair PCF designs, for varying sample sizes in dimensions $2$ to $6$. It can be seen that introducing a small peak in the PCF results in a significant increase in the coverage. This gain can be observed for all sampling budgets in all dimensions. Furthermore, as expected, for low sampling budgets maximal gain is observed, and should decrease with increasing $N$ as the $r_{min}$ for both the families will asymptotically (in $N$) converge to zero.

\subsubsection{Relative Radius $\rho$ vs. Dimension $d$}
In this section, we study the increase in relative radius $\rho$ due to the introduction of a peak in the PCF. Again, we assume that $P_0\leq 1.5$, $r_{min}^{step} \leq r_0 \leq 2\times r_{min}^{step}$ and $r_0 \leq r_1 \leq 1.5 \times r_0$. In Figure~\ref{stair_den}, we show the maximum $\rho=r_{min}/r_{max}$ achieved by the Step and Stair PCFs for different dimensions $d$. For Stair PCF, we do not have a closed form expression of $\rho$, thus, we obtain the maximum achievable $r_{min}$ empirically for various sampling budgets and plot the mean (with standard deviation) behavior of the $\rho$. It can be seen that introducing a small peak in the PCF results in a significant increase in the relative radius. This gain can be observed
at all sampling budgets in all dimensions. This also corroborates 
the recommendation of using $0.65 \le \rho \le 0.85$ in practice for coverage based designs and suggests that in higher dimensions $\rho$ should be higher.

\begin{figure}[t!]
  \centering
  \subfigure[]{\includegraphics[width=0.45\textwidth, clip = true]{./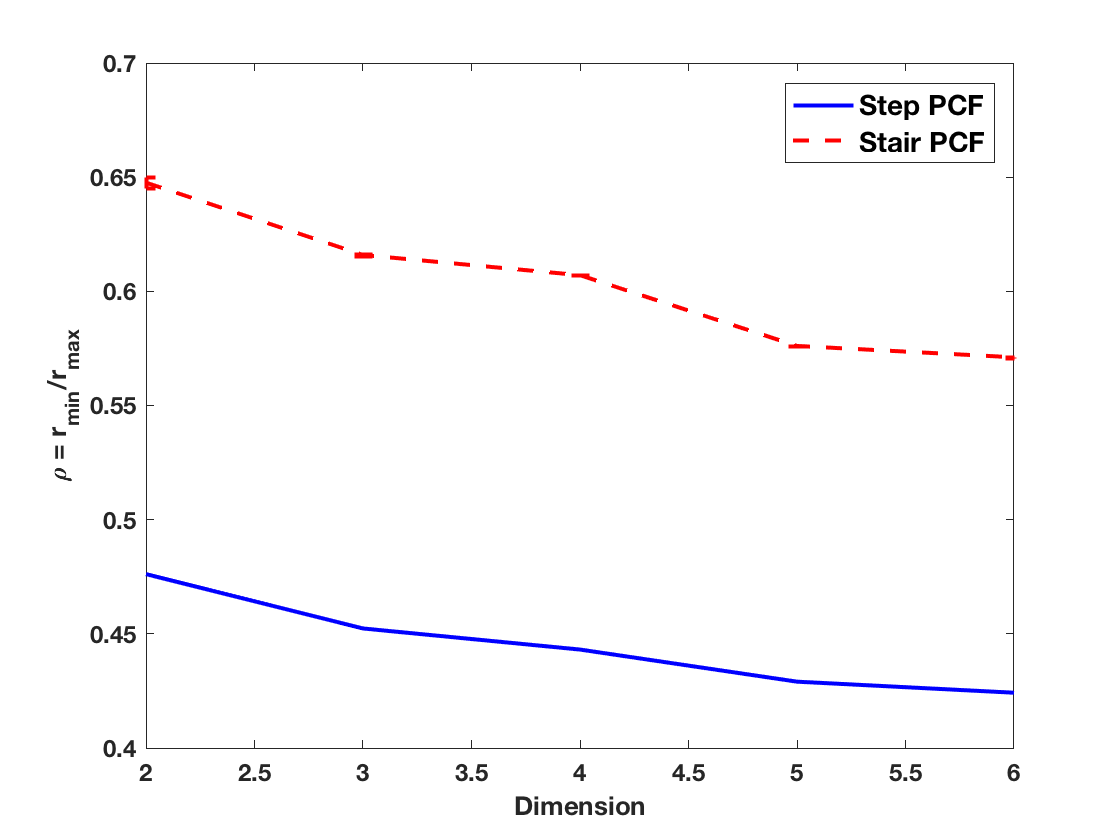}
  \label{stair_den}}
\subfigure[]{\includegraphics[width=0.45\textwidth, clip = true]{./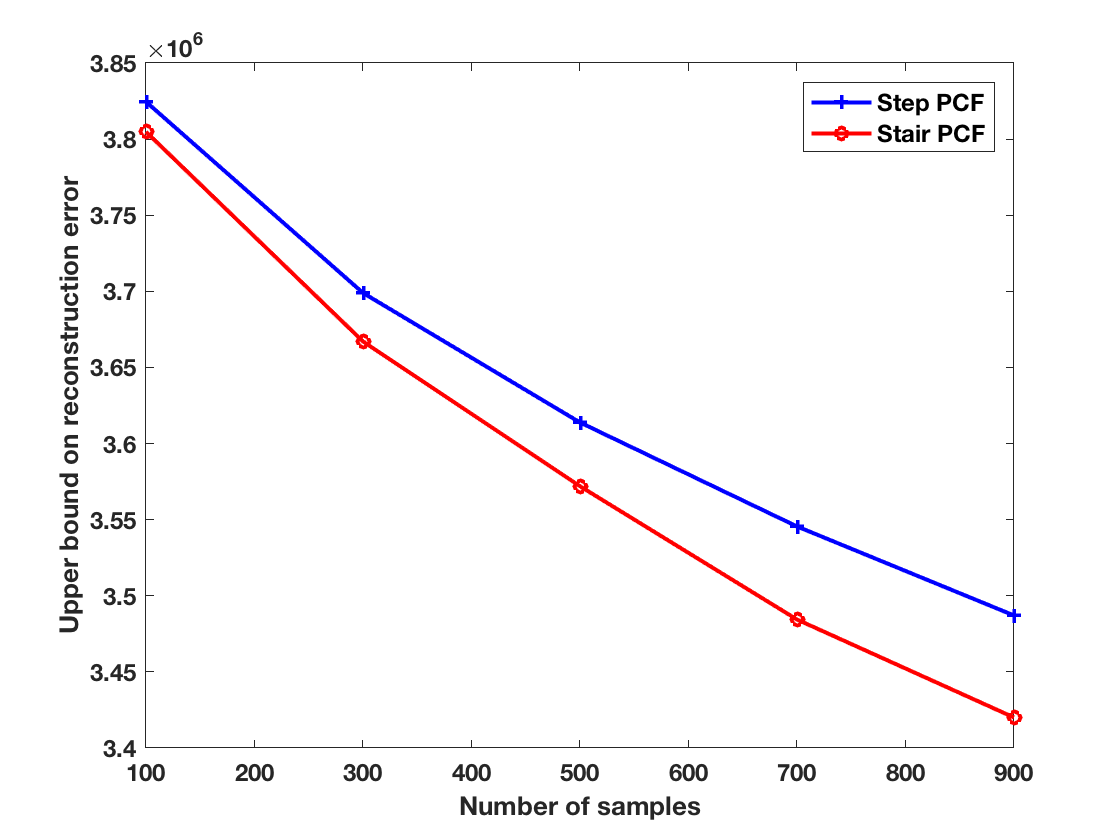}
	\label{recon_up}}
    
    \caption{(a) Gain in the relative radius $\rho$ achieved with the Stair PCF constructions, in comparison to the Step PCF constructions; (b) Upper bound on the reconstruction error of Step and Stair PCF based constructions.}
\end{figure}

\subsubsection{Analysis of Reconstruction Error Upper Bound}
We also assess the reconstruction quality of the Step and Stair PCF based spectral designs, on the class of periodic functions considered in Section \ref{sec:anal-per}, for varying sampling budgets. Here, we consider the setup where $0\leq k\leq 1000$ and $0\leq f\leq 1000$. In Figure~\ref{recon_up}, we plot the average reconstruction error upper bounds as given in~\eqref{recon:upb} for Step and Stair PCF. As expected, for both sample designs, the reconstruction error decreases with an increase in the sampling budget. More interestingly, the reconstruction error of Stair PCF is lower compared to the reconstruction error of Step PCF, thus showing the effectiveness of increased coverage in sample designs.


\section{Synthesis of Space-filling Spectral Designs}
In this section, we describe the proposed approach for synthesizing sample designs that match the optimal (Stair or Step) PCF characteristics.
Existing approaches for PCF matching such as~\citep{Oztireli:2012, Kailkhura:2016:SBN} rely on kernel density estimators to evaluate the PCF of a point set. 
A practical limitation of these approaches is the lack of an efficient PCF estimator in high dimensions. 
More specifically, these estimators are biased
due to lack of an appropriate edge correction strategy. This bias in PCF estimation arises  due to the fact that sample hyper-spheres used in calculating point-pattern statistics may fall partially outside the study region and will produce a biased estimate of the PCF unless a correction is applied. The effect of this bias is barely noticeable in $2$ dimensions and hence existing PCF matching algorithms have ignored this. However, this problem becomes severe in higher dimensions, thus, making the matching algorithm highly inaccurate. To address this crucial limitation, we introduce an edge corrected estimator for computing the PCF of sample designs in arbitrary dimensions. Following this, we describe a gradient descent based optimization technique to synthesize samples that match the desired PCF.

\subsection{PCF Estimation in High Dimensions with Edge Correction}

In order to create an unbiased PCF estimator, we propose to employ an edge corrected kernel density estimator, defined as follows: 

\begin{equation}
  \hat{G}(r)=\frac{V_W}{\gamma_W}\frac{V_W}{N}\frac{1}{S_E
    (N-1)}\sum\limits_{i=1}^{N}\underset{i\neq
    j}{\sum\limits_{j=1}^{N}}k\left(r-|x_i-x_j|\right) \end{equation}
where $k(.)$ denotes the kernel function; here we use the classical Gaussian kernel
\begin{equation}
  k(z)=\frac{1}{\sqrt{\pi}\sigma}\exp\left(-\frac{z^2}{2\sigma^2}\right).
\end{equation}In the above expression, $V_W$ indicates the volume of the sampling region. When the sampling region is a hyper-cube with length $1$, we have $V_W = 1$. Let $S_E$ denote the area of hyper-sphere with radius $r$ which is given by
$$S_E = \dfrac{dr^{d-1}\pi^{\frac{d}{2}}}{\Gamma(1+\frac{d}{2})}.$$
Also, we denote the surface area of the sampling region by $S_W$, which is expressed as
$$S_W = r^{d-1}\sin^{d-2}\phi_1\sin^{d-3}\phi_2\cdots\sin\phi_{d-2}.$$

The term $\frac{V_W}{\gamma_W}$ performs edge correction to handle the
unboundedness of the estimator, where $\gamma_W$ is an isotropic set covariance function given by
\begin{equation}
\gamma_W=\dfrac{1}{S_E} \underset{ \underset{0\leq\phi_{i}\leq \pi,i=1\; \text{to}\; d-2}{0\leq\phi_{d-1}\leq 2\pi}}{\int} S_W \gamma d\phi_1\cdots d\phi_{d-1} 
\end{equation}
where $\gamma = \prod_{p=1}^d (1-|x^p|)$ with
\begin{eqnarray*}
&& x^1 = r \cos \phi_1\\
&& x^2 = r \sin \phi_1\cos \phi_2\\
&& x^3 = r \sin \phi_1 \sin \phi_2 \cos \phi_3\\
&& \vdots\\
&& x^{d-1} = r \sin \phi_1 \cdots \sin \phi_{d-2} \cos \phi_{d-1}\\
&& x^{d} = r \sin \phi_1 \cdots \sin \phi_{d-2} \sin \phi_{d-1}.
\end{eqnarray*}

\begin{figure*}[t!]
	\vspace*{-0.2in}
	\centering
	\subfigure[]{\includegraphics[width=0.45\columnwidth, clip = true]{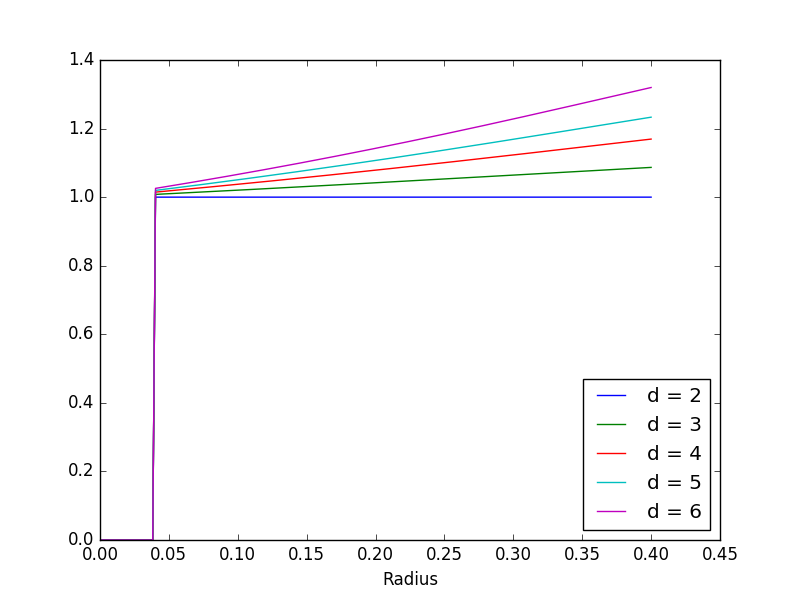}
		\label{mismatch}}
	\subfigure[$d=2$]{
		\includegraphics[
		width=0.45\textwidth,clip=true]{./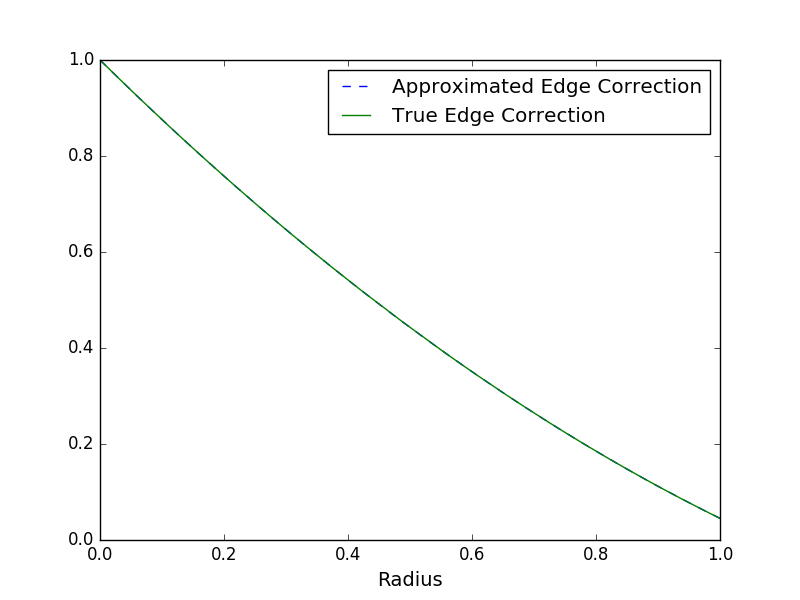}
		\label{ec2} } 
	\subfigure[$d=3$]{
		\includegraphics[%
		width=0.45\textwidth,clip=true]{./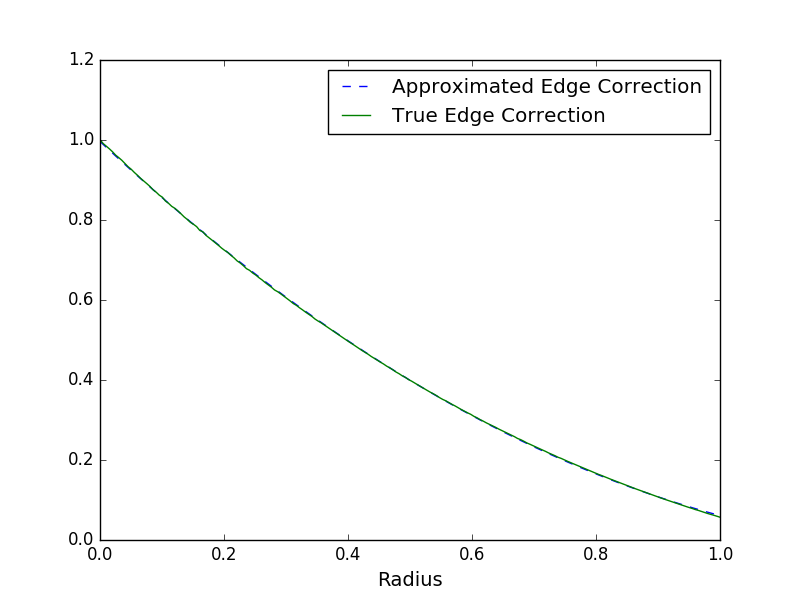}
		\label{ec3} }
	\subfigure[$d=4$]{
		\includegraphics[
		width=0.45\textwidth,clip=true]{./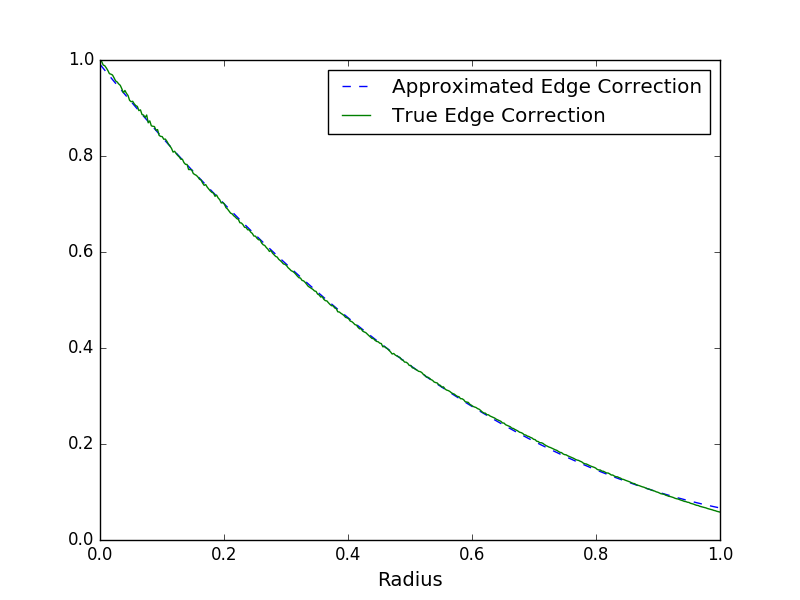}
		\label{ec4} }
	\subfigure[$d=5$]{
		\includegraphics[
		width=0.45\textwidth,clip=true]{./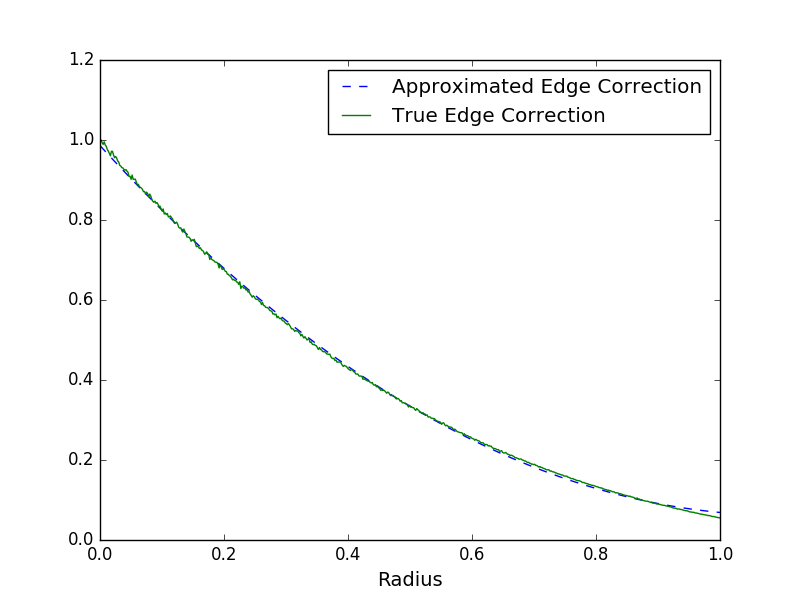}
		\label{ec5} }
	\subfigure[$d=6$]{
		\includegraphics[
		width=0.45\textwidth,clip=true]{./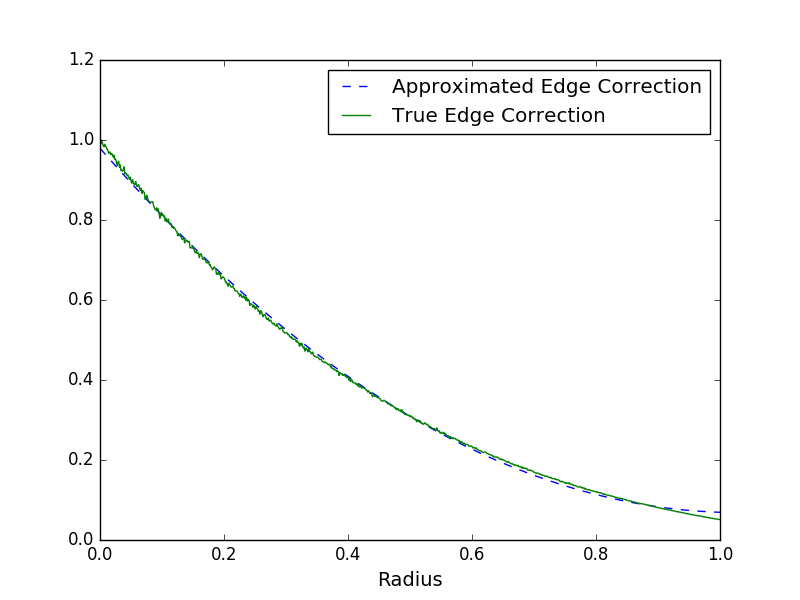}
		\label{ec6} }
	
	\caption{(a) Incorrect PCF estimation due to the use of an approximate edge correction factor, (b)-(f) Effectiveness of the approximation edge correction, obtained using polynomial regression, in comparison to the true edge correction from the evaluation of a multi-dimensional integral, for dimensions $2$ to $6$.}    
	
\end{figure*}

In Figure~\ref{mismatch}, we show that by using an approximate edge correction factor (using the same factor as $d=2$), the PCF is wrongly estimated. Moreover, as the dimension increases, the estimated PCF moves farther away from the true PCF very quickly. 

Note that, the calculation of the correct edge correction factor requires the evaluation of a multi-dimensional integral which is computationally expensive in high dimensions. In this paper, we provide a closed form approximation of $\gamma_W$ (using polynomial regression of order $2$) in different dimensions $d=2$ to $6$ when $r \leq 1.0$. More specifically, we have the following approximation $\hat{\gamma_W} = 1 - a_1 r +a_2 r^2$ where $a_1$ and $a_2$ are as given below.

\begin{center}
    \begin{tabular}{| l | l | l | l | l | l |}
    \hline
    Dimension & $d=2$ & $d=3$ & $d=4$ & $d=5$ & $d=6$ \\ \hline
    $a_1$ & $4/\pi$ & $1.47$ & $1.63$ & $1.75$ & $1.89$  \\ \hline
    $a_2$ & $1/\pi$ & $0.54$ & $0.72$ & $0.87$ & $1.04$ \\ \hline
    \end{tabular}
\end{center}It can be observed from Figures~\ref{ec2} through~\ref{ec6} that the proposed approximations are quite tight.

\subsection{Synthesis Algorithm}
The underlying idea of the proposed algorithm is to iteratively transform an initial random input sample design such that its PCF matches the target PCF. In particular, we propose a non-linear least squares formulation to optimize for the desired space-filling properties. Let us denote the target PCF by $G^*(r)$. We discretize the radius $r$ into $m$ points $\{r_j\}_{j=1}^{m}$ and minimize the sum of the weighted squares of errors between the target PCF $G^*(r_j)$ and the curve-fit function (kernel density estimator of PCF) $G(r_j)$ over $m$ points. This scalar-valued goodness-of-fit measure is referred to as the
\textit{chi-squared error} criterion and can be posed as a non-linear weighted least squares problem as follows.
$$\arg\min\;\; \sum\limits_{j=1}^{M}\left(\dfrac{G(r_j)-G^*(r_j)}{w_j}\right)^2,$$
where $w_j$ indicates the weight (importance) assigned to the fitting error at radius $r_j$.
This optimization problem can be efficiently solved using a variant of gradient descent algorithm (discussed next), that in our experience converges quickly. 
In the simplest cases of uniform weights the
solution tends to produce a higher fitting error at lower radii
$r_j$. To address this challenge we use a
non-uniform distribution for the weights $\{w_j\}$. These weights 
are initialized to be uniform and are updated in an adaptive fashion in the gradient descent iterations. The weight $w_j$ at gradient descent iteration $t+1$ is given by~\citep{Kailkhura:2016:SBN}:
$$w_j=\dfrac{1}{|G^{t}(r_j)-G^*(r_j)|}$$
where $G^t(r_j)$ is the value of the PCF at radius $r_j$ during the gradient descent iteration $t$.
Note that, PCF matching is a highly non-convex problem. We found that the following trick further helps solve PCF matching problem more efficiently.

\subsubsection{One Sided PCF Smoothing}  
We propose to perform one sided smoothing of the target PCF which is given as follows:

\[ \hat{G}^*(r) = \left\{ \begin{array}{rll}
				(cr)^b  & \mbox{if}\ r< r_{\text{min}} \\
			   1  & \mbox{if}\ r \geq r_{\text{min}}.
				\end{array}\right. 
\]
where $c$ is some pre-specified constant and $b>1$ is the smoothing constant obtained via cross-validation. More specifically, we add polynomial noise in the low radius region of the PCF. This can also be interpreted as polynomial approximation of the PCF in the low radii regime. We have noticed that sometimes adding a controlled amount of Gaussian noise instead of polynomial noise also improves the performance.

\subsubsection{Edge Corrected Gradient Descent}  
The non-linear least squares problem is solved iteratively using gradient descent. Starting with a random point set $X=\{x_i\}_{i=1}^N$, we iteratively
update $x_i$ in the negative gradient direction of the objective
function. At each iteration $k$, this can be formally stated as
$$x_i^{k+1}=x_i^k-\lambda\dfrac{\Delta_i}{|\Delta_i|},$$
where $\lambda$ is the step size and $\Delta_i=\{\Delta_i^k\}_{k=1}^d$ in the normalized edge corrected gradient is given by

\begin{align}
\nonumber \Delta_i^p=\underset{i\neq l}{\sum}&\dfrac{(x_l^p-x_i^p)}{|x_l-x_i|}\sum_{j=1}^{m} \dfrac{G(r_j)^k-G^*(r_j)}{w_j(1-a_1r_j+a_2r_j^2)r_j^{d-1}} \\
&\left(|x_l-x_i|-r_j\right)   k\left(r_j-|x_i-x_l|\right). \label{gradient}
\end{align}
We re-evaluate the PCF $G(r_j)^k$ of the updated
point set after each iteration using the unbiased estimator from the
previous section.

\begin{figure}[t!]
\centering
\subfigure[]{
\includegraphics[%
  width=0.45\textwidth,clip=true]{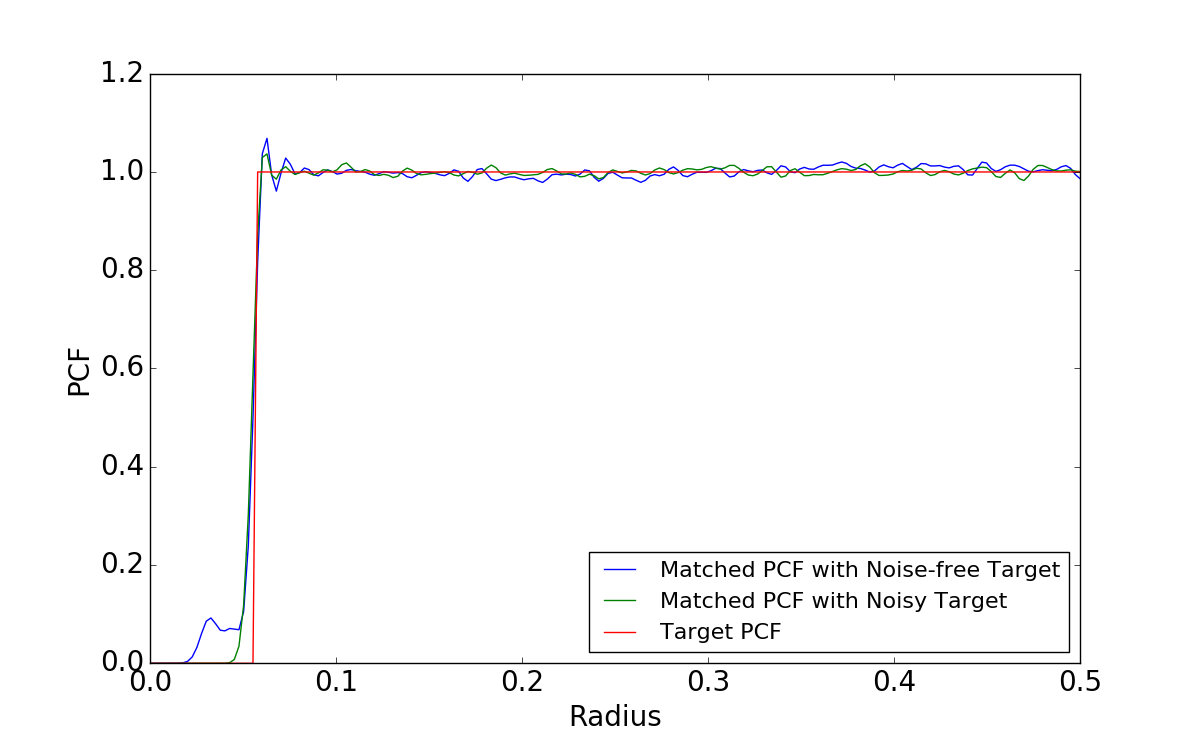}
\label{pcf_noise2}}
\subfigure[] {
\includegraphics[%
width=0.45\textwidth,clip=true]{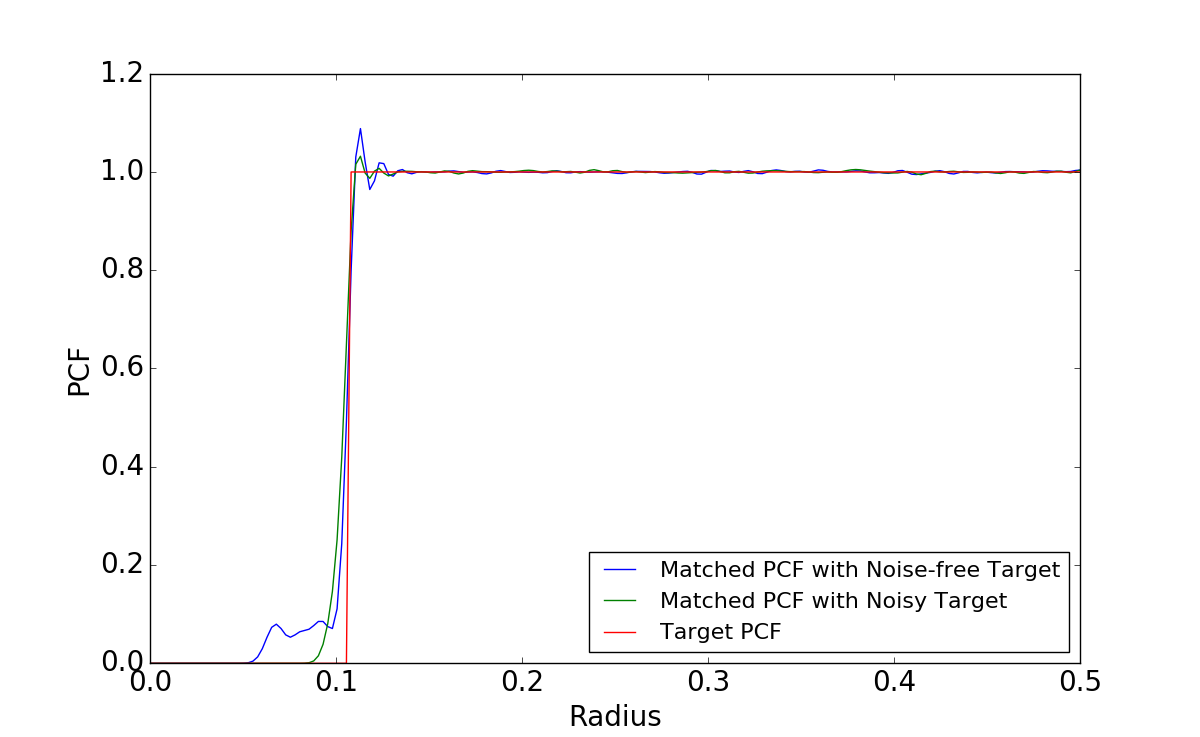}
\label{pcf_noise3} }
\subfigure[]{
\includegraphics[%
width=0.45\textwidth,clip=true]{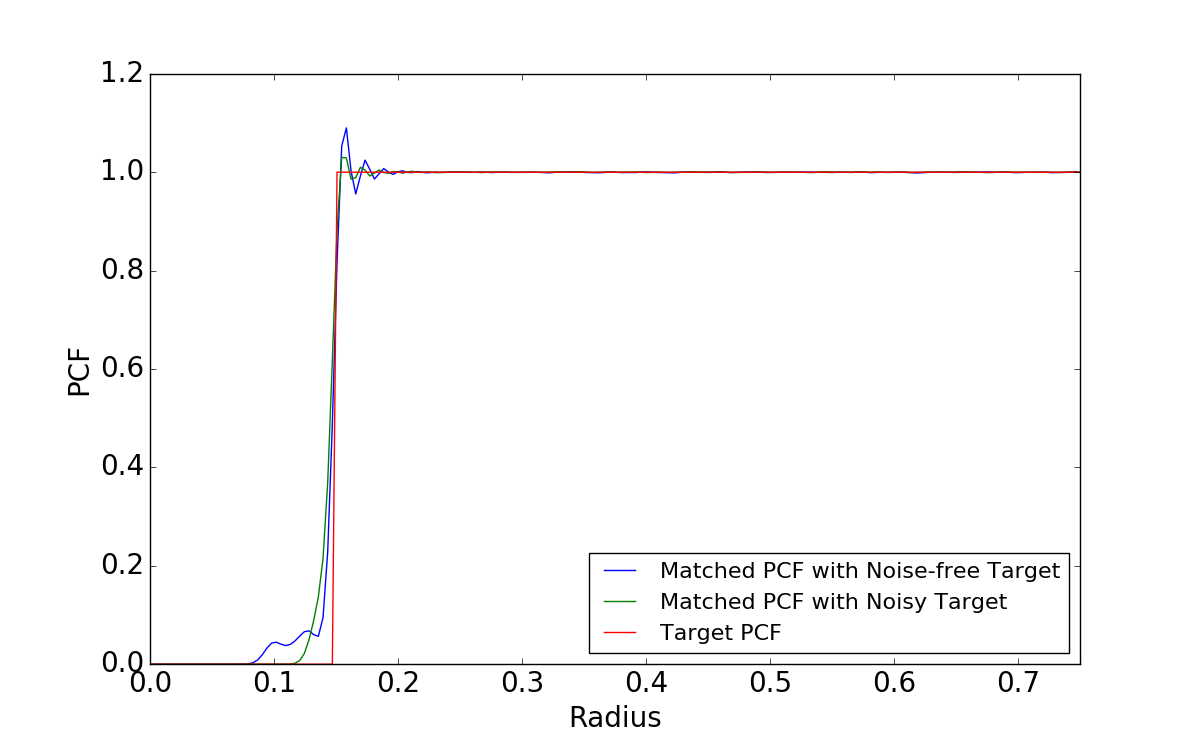}
\label{pcf_noise4}}
\subfigure[]{
\includegraphics[%
width=0.45\textwidth,clip=true]{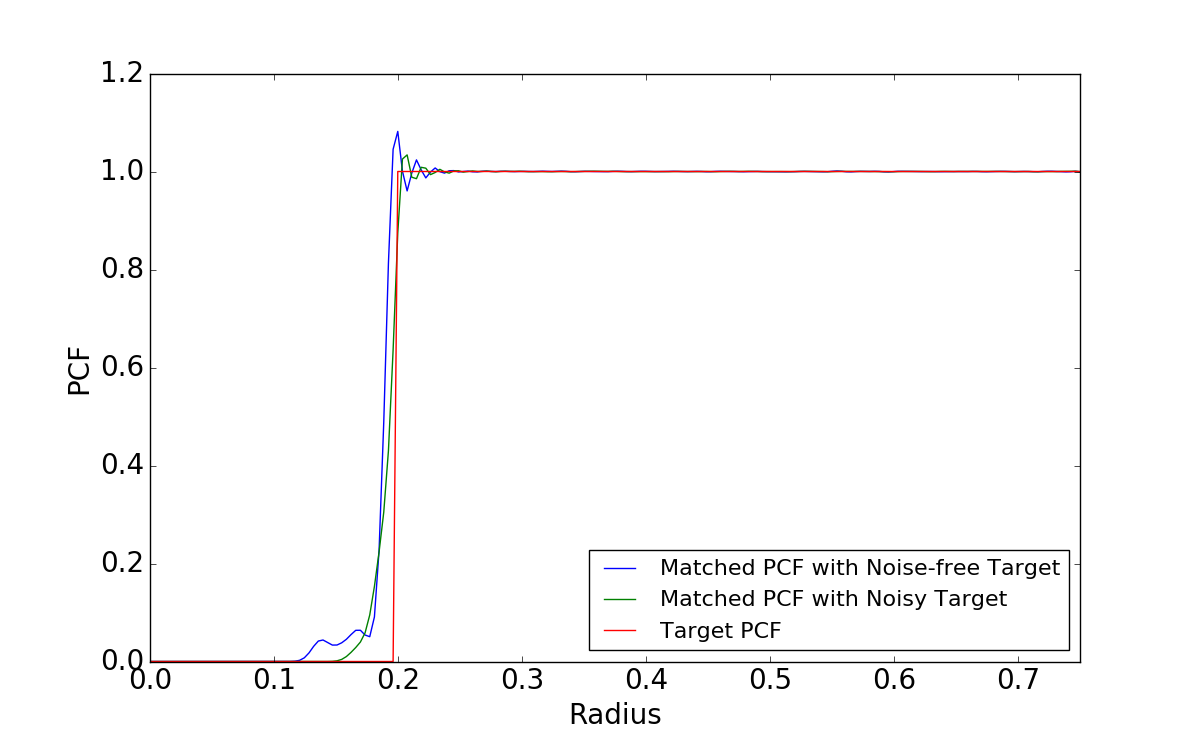}
\label{pcf_noise5}}
\subfigure[]{
\includegraphics[%
width=0.45\textwidth,clip=true]{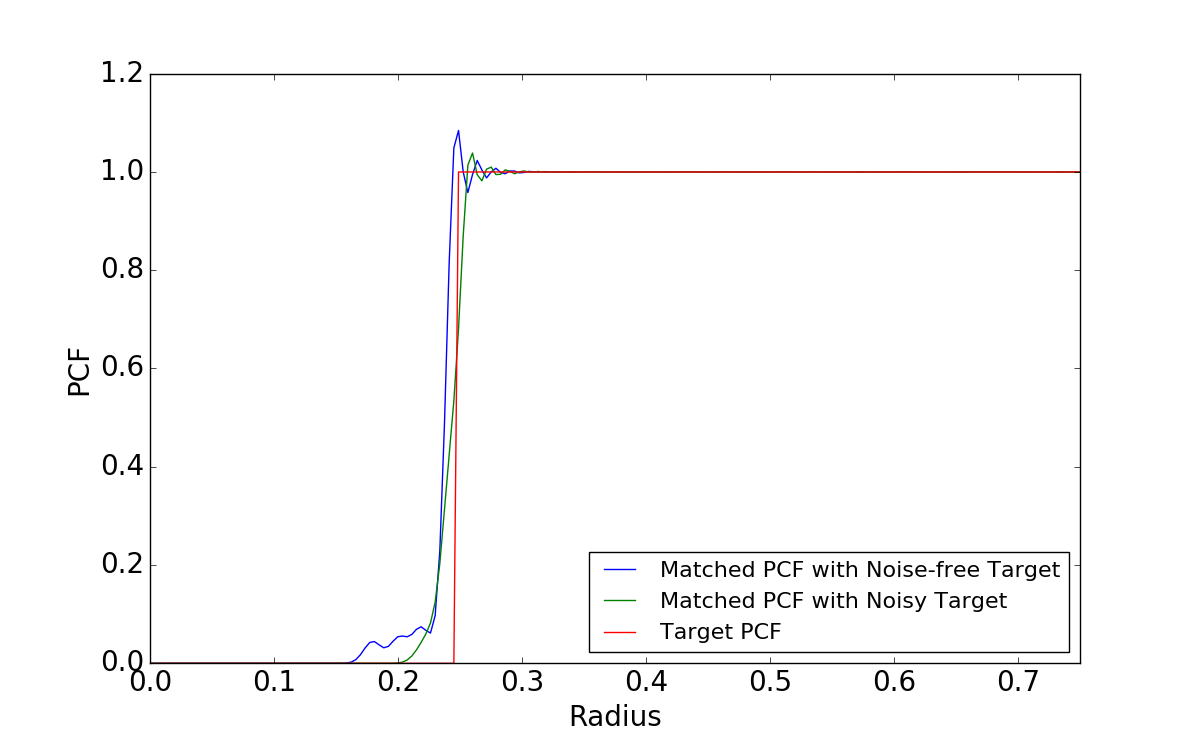}
\label{pcf_noise6}}
\caption{Step PCF synthesis using one sided PCF smoothing technique. \subref{pcf_noise2} $d=2$ \subref{pcf_noise3} $d=3$ \subref{pcf_noise4} $d=4$ \subref{pcf_noise5} $d=5$ \subref{pcf_noise6} $d=6$.}
\label{pcf_noise}
\vspace{-0.1in}
\end{figure}

\begin{figure}[t!]
\centering
\subfigure[]{
\includegraphics[%
  width=0.45\textwidth,clip=true]{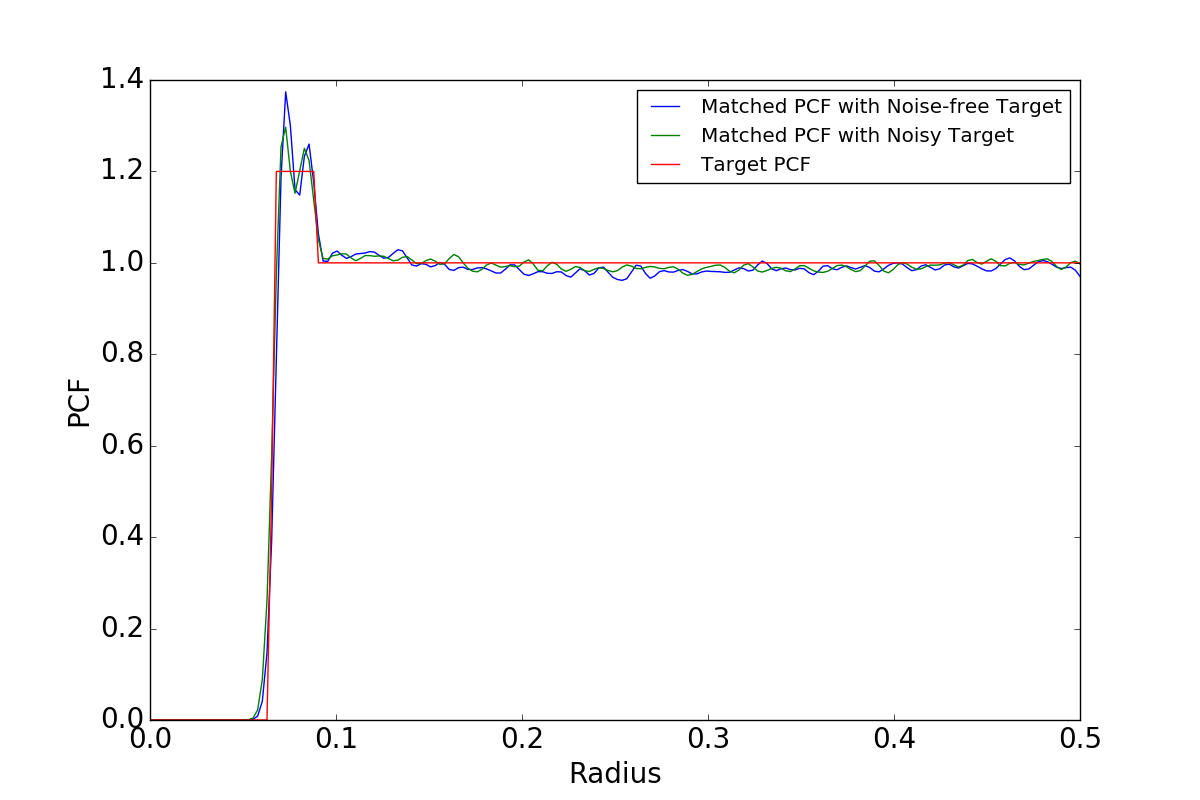}
\label{spcf_noise2}}
\subfigure[] {
\includegraphics[%
width=0.45\textwidth,clip=true]{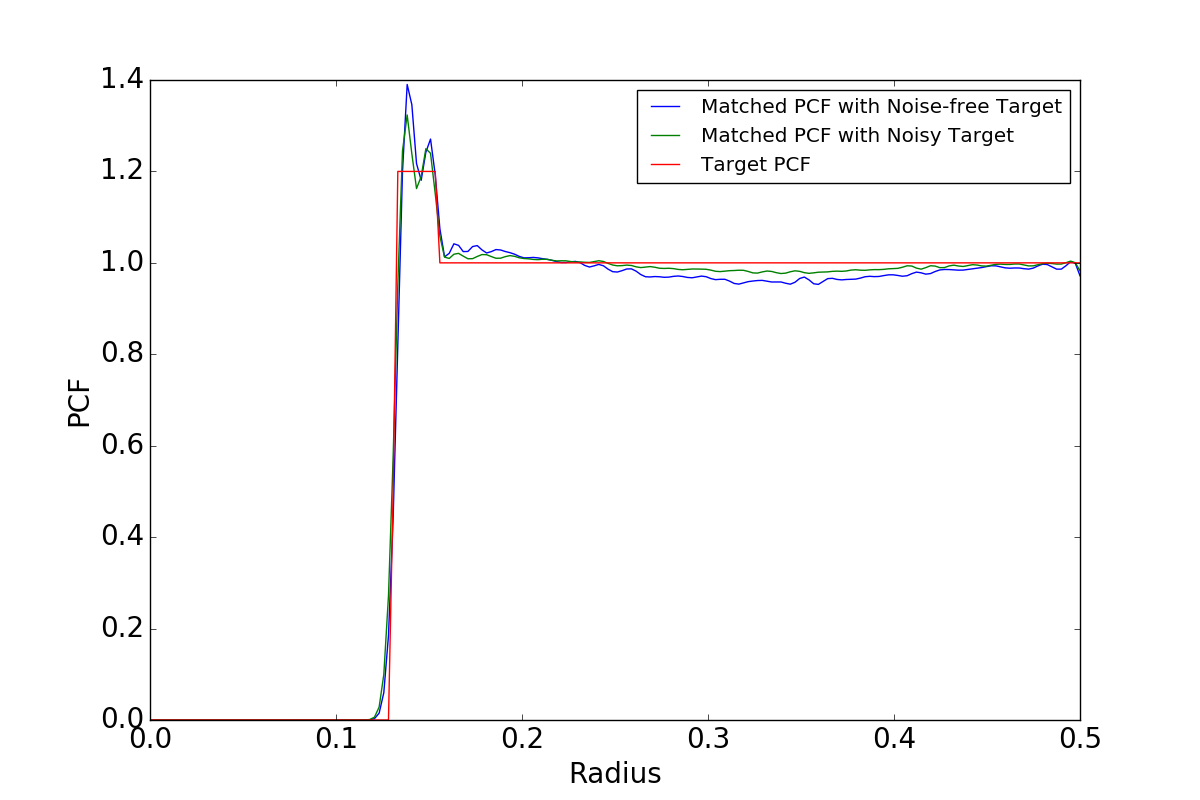}
\label{spcf_noise3} }
\subfigure[]{
\includegraphics[%
width=0.45\textwidth,clip=true]{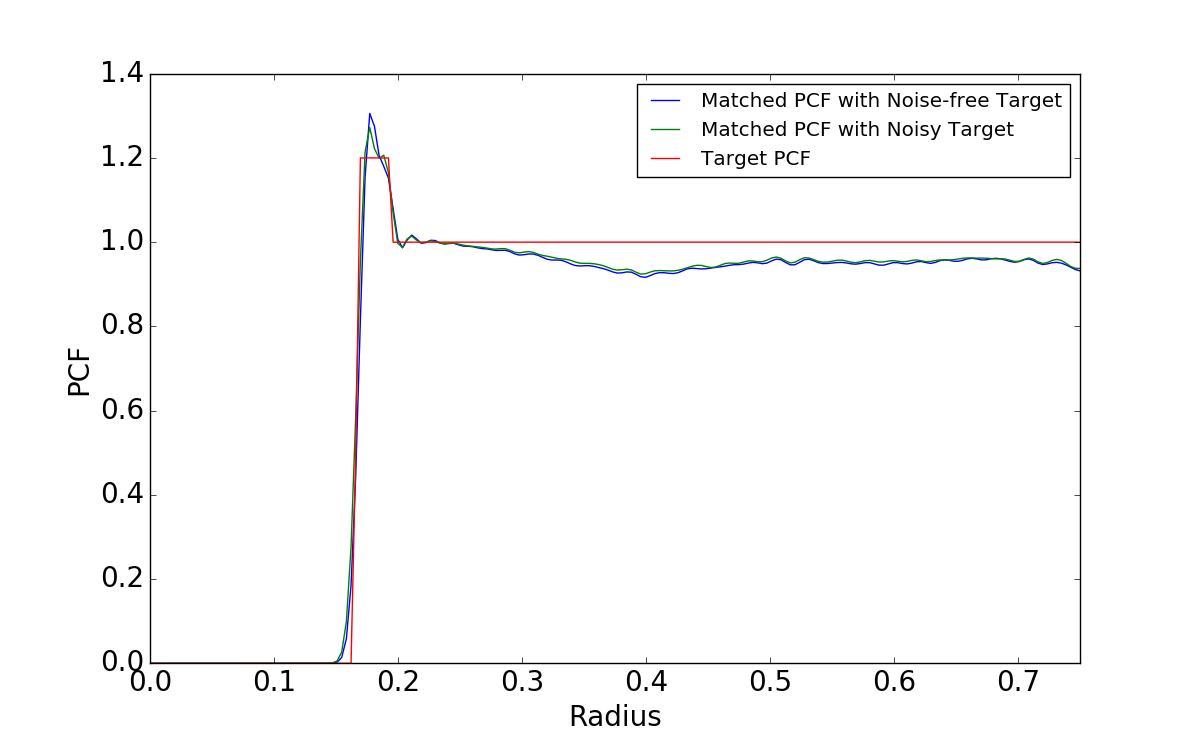}
\label{spcf_noise4}}
\subfigure[]{
\includegraphics[%
width=0.45\textwidth,clip=true]{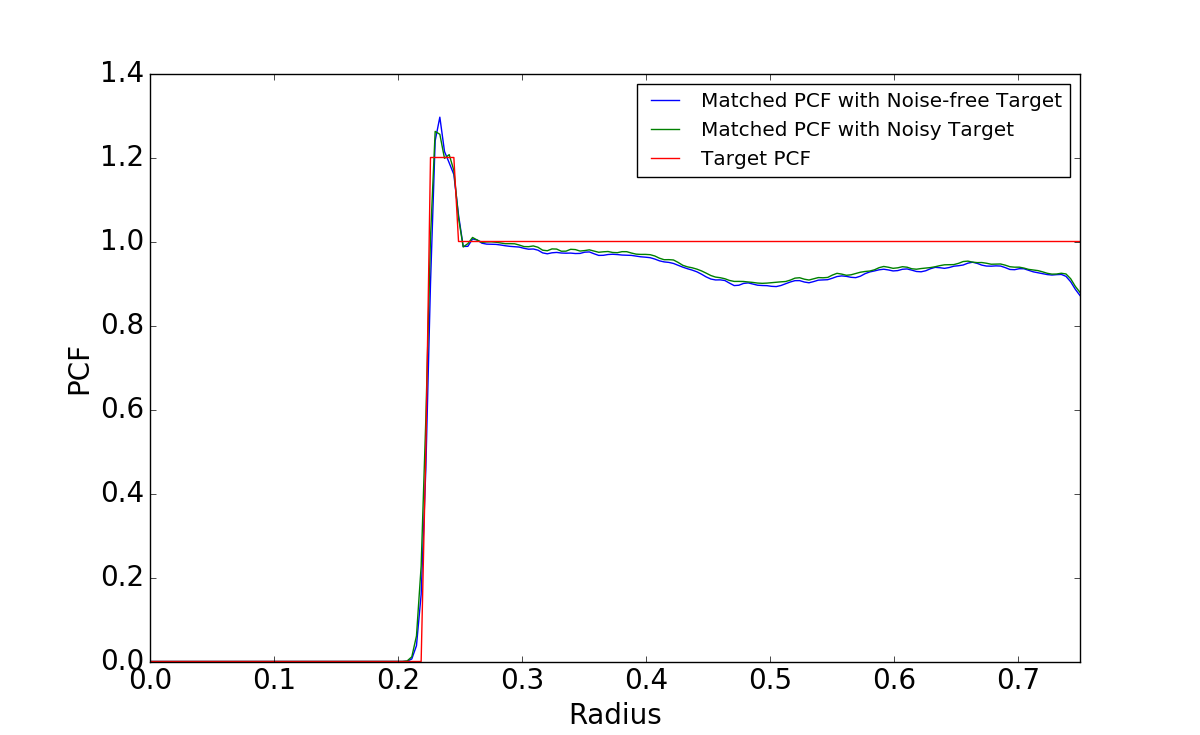}
\label{spcf_noise5}}
\subfigure[]{
\includegraphics[%
width=0.45\textwidth,clip=true]{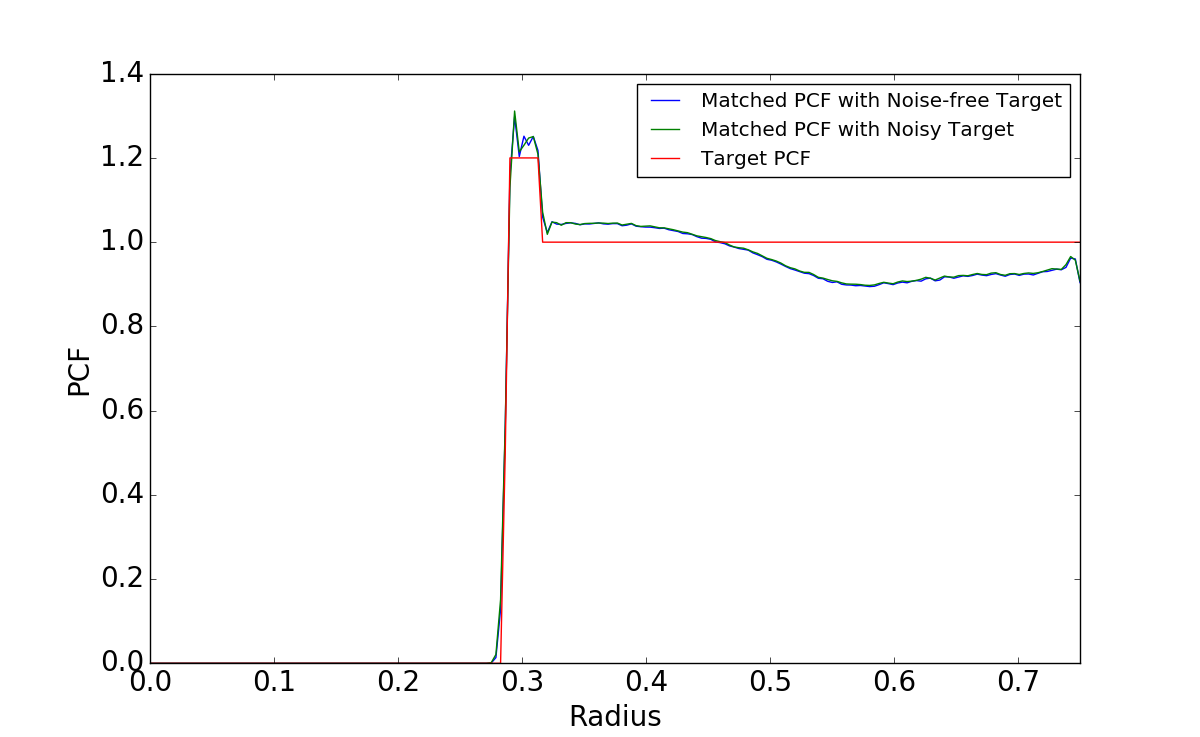}
\label{spcf_noise6}}
\caption{Stair PCF synthesis using one sided PCF smoothing technique. \subref{spcf_noise2} $d=2$ \subref{spcf_noise3} $d=3$ \subref{spcf_noise4} $d=4$ \subref{spcf_noise5} $d=5$ \subref{spcf_noise6} $d=6$.}
\label{spcf_noise}
\vspace{-0.1in}
\end{figure}

In Figure~\ref{pcf_noise}, we compare the behavior of the proposed PCF matching algorithm with and without the one sided PCF smoothing. The target PCF is designed using a Step PCF design with $r_{min}$ as given in Proposition~\ref{prp2}. PCF matching is carried out with varying sampling budget, $N=100,\;200,\; 400,\;600,\;800$ for $d=2,\;3,\;4,\;5,\;6$, respectively. The variances of the Gaussian kernel were set at $\sigma^2=0.0065,\; 0.007,\;0.01,\;0.01,\;0.01$ for $d=2,\;3,\;4,\;5,\;6$, respectively and the step size for the gradient descent algorithm was fixed at $0.001$. The value of $b$ was obtained using cross-validation. The initial point set was generated randomly (uniform) in the unit hyper-cube and matching was carried for $100$ gradient descent iterations. It can be observed that the proposed algorithm produces an accurate fit to the target, and that the smoothing actually leads to improved performance. 

In Figure~\ref{spcf_noise}, we demonstrate the synthesis of a Stair PCF based spectral design, using parameters $P_0=1.2,\delta = 0.025$. Similar to the previous case, PCF matching is carried out with varying sampling budget, $N=100,\;200,\; 400,\;600,\;800$, for $d=2,\;3,\;4,\;5,\;6$ respectively. The variances of the Gaussian kernel were set at $\sigma^2=0.0065,\; 0.007,\;0.01,\;0.01,\;0.01$ for $d=2,\;3,\;4,\;5,\;6$, respectively and the step size for the gradient descent algorithm was fixed at $0.001$. We found that matching the Stair PCF is more challenging for a gradient descent optimization compared to the Step PCF. When a random point set is used for initialization, reaching convergence takes much longer. However, choosing the initial point set intelligently improves the quality of matching significantly. In all our experiments, we used the maximal PDS \citep{Ebeida:2012} to initialize the optimization and matching was carried for $100$ gradient descent iterations. We observed that another reasonable choice for the initialization is a regular grid sample, and interestingly in most cases it matches the performance of the MPDS initialization. Furthermore, one sided PCF smoothing does not provide significant improvements in this case, particularly in higher dimensions.

\section{Experiments}
In this section, we evaluate the qualitative performance of proposed space-filling spectral designs and present comparisons to popularly adopted space-filling designs, such as LHS, QMC and MPDS. Note that, currently, there does not exist any PDS synthesis approach which can generate sample sets with a desired size $N$ while achieving user-specified spatial characteristics (e.g. relative radius). In all PDS synthesis approaches, there is no control over the number of samples generated by the algorithm which makes the use of these algorithms difficult in practice. However, the proposed approach can control both $N$ and $r_{min}$ simultaneously. For our qualitative comparison, we perform three empirical studies, in dimensions $2$ to $6$ : (a) image reconstruction, (b) regression on several benchmark optimization functions, and (c) surrogate modeling for an inertial confinement fusion (ICF) simulation code.

\begin{table}[htb!]
	\centering
	\caption{Impact of different space-filling designs on image reconstruction performance. In all cases, we show the reconstructed images and their PSNR values.}
	\label{table:recon}
\begin{tabularx}{\textwidth}{@{}lYYYYYY@{}}
	\toprule
	$\boldsymbol{\alpha}$ & \textbf{Sobol} & \textbf{Halton} & \textbf{LHS} & \textbf{MPDS} & \textbf{Step} & \textbf{Stair}\\
	\hline
	
	 \colorbox{gray!25}{0.001}&\raisebox{-.5\height}{\includegraphics[trim = 3cm 2.5cm 3cm 0cm ,width=0.12\textwidth,clip=True]{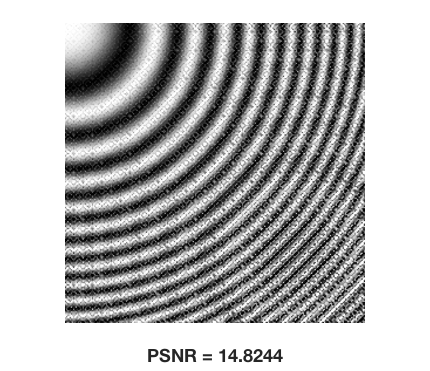}}&\raisebox{-.5\height}{\includegraphics[trim = 3cm 2.5cm 3cm 0cm ,width=0.12\textwidth,clip=True]{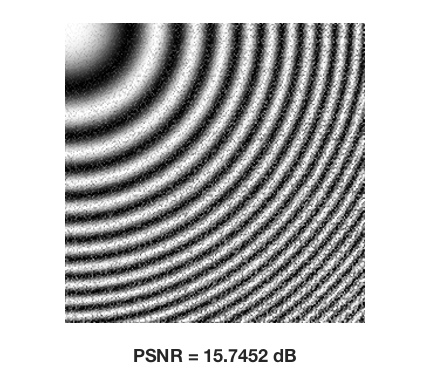}}&\raisebox{-.5\height}{\includegraphics[trim = 3cm 2.5cm 3cm 0cm ,width=0.12\textwidth,clip=True]{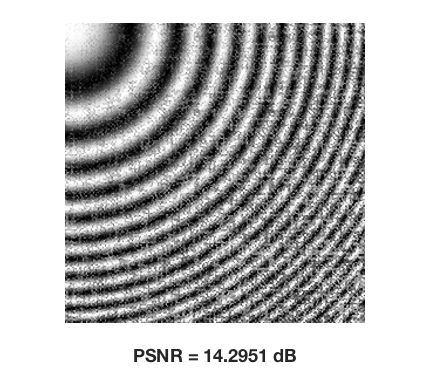}}&\raisebox{-.5\height}{\includegraphics[trim = 3cm 2.5cm 3cm 0cm ,width=0.12\textwidth,clip=True]{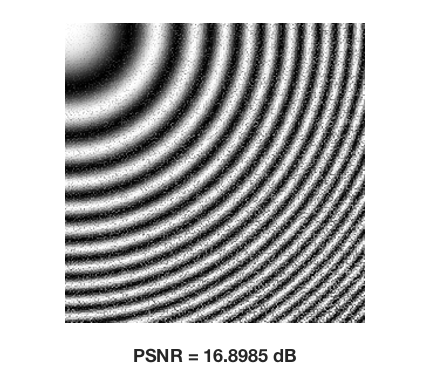}}&\raisebox{-.5\height}{\includegraphics[trim = 3cm 2.5cm 3cm 0cm ,width=0.12\textwidth,clip=True]{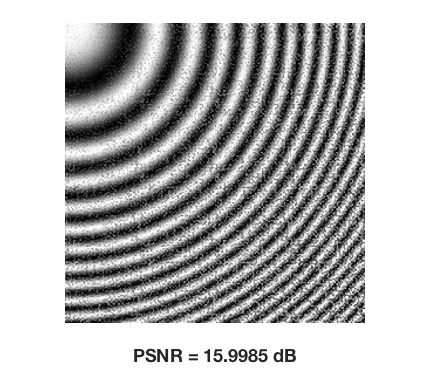}}&\raisebox{-.5\height}{\includegraphics[trim = 3cm 2.5cm 3cm 0cm ,width=0.12\textwidth,clip=True]{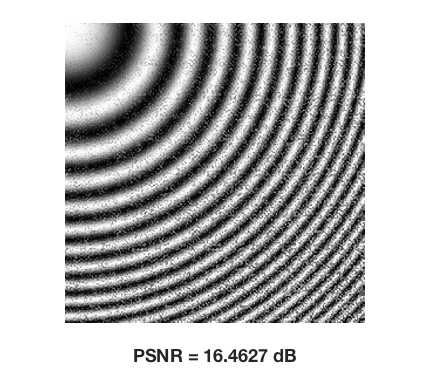}}\\
	 &\footnotesize 14.82 dB & \footnotesize 15.75 dB & \footnotesize 14.29 dB &\footnotesize 16.90 dB & \footnotesize 16.00 dB & \footnotesize 16.46 dB \\
	 
	 \hline
	 
	 \colorbox{gray!25}{0.002}&\raisebox{-.5\height}{\includegraphics[trim = 3cm 2.5cm 3cm 0cm ,width=0.12\textwidth,clip=True]{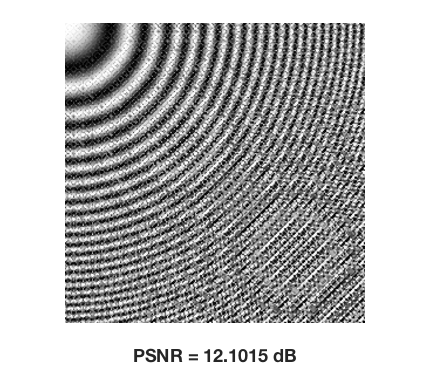}}&\raisebox{-.5\height}{\includegraphics[trim = 3cm 2.5cm 3cm 0cm ,width=0.12\textwidth,clip=True]{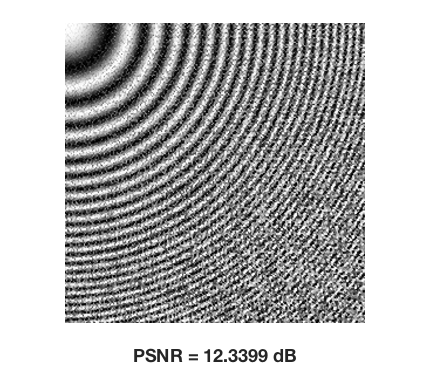}}&\raisebox{-.5\height}{\includegraphics[trim = 3cm 2.5cm 3cm 0cm ,width=0.12\textwidth,clip=True]{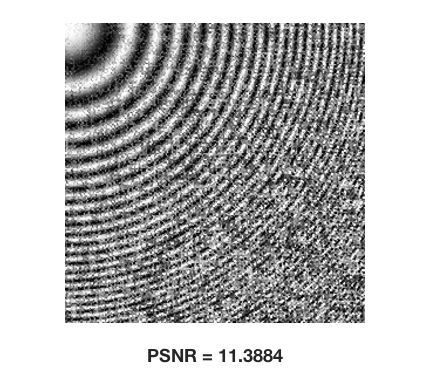}}&\raisebox{-.5\height}{\includegraphics[trim = 3cm 2.5cm 3cm 0cm ,width=0.12\textwidth,clip=True]{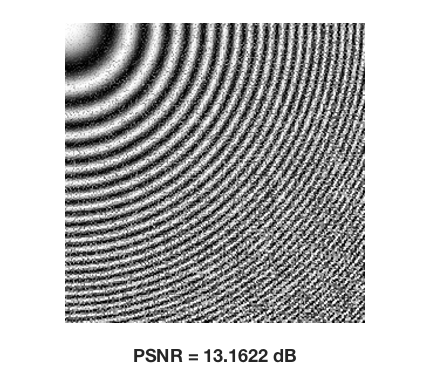}}&\raisebox{-.5\height}{\includegraphics[trim = 3cm 2.5cm 3cm 0cm ,width=0.12\textwidth,clip=True]{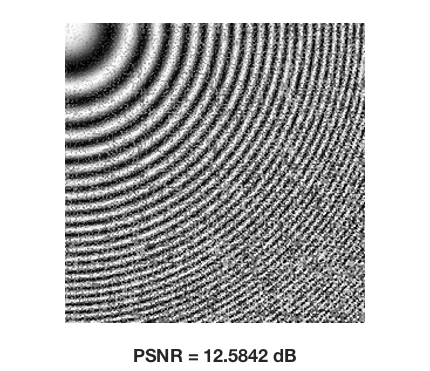}}&\raisebox{-.5\height}{\includegraphics[trim = 3cm 2.5cm 3cm 0cm ,width=0.12\textwidth,clip=True]{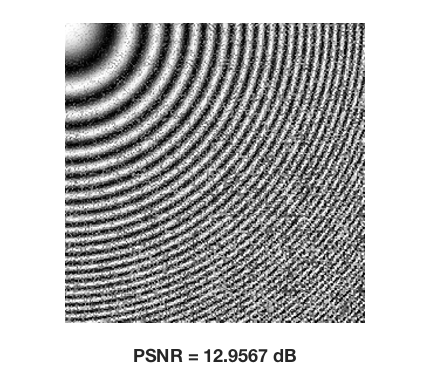}}\\
	 &\footnotesize 12.10 dB & \footnotesize 12.34 dB & \footnotesize 11.39 dB &\footnotesize 13.16 dB & \footnotesize 12.58 dB & \footnotesize 12.96 dB \\
	 
	 \hline
	 
	 \colorbox{gray!25}{0.003}&\raisebox{-.5\height}{\includegraphics[trim = 3cm 2.5cm 3cm 0cm ,width=0.12\textwidth,clip=True]{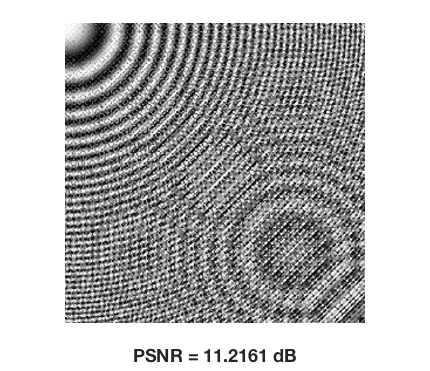}}&\raisebox{-.5\height}{\includegraphics[trim = 3cm 2.5cm 3cm 0cm ,width=0.12\textwidth,clip=True]{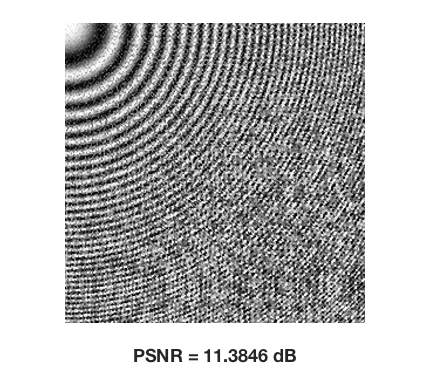}}&\raisebox{-.5\height}{\includegraphics[trim = 3cm 2.5cm 3cm 0cm ,width=0.12\textwidth,clip=True]{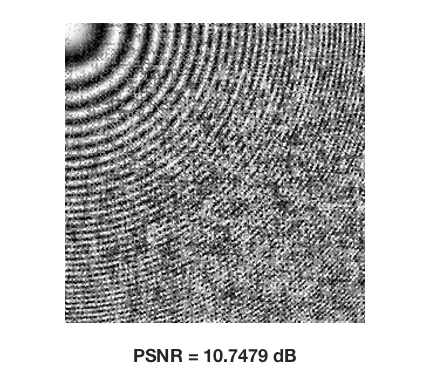}}&\raisebox{-.5\height}{\includegraphics[trim = 3cm 2.5cm 3cm 0cm ,width=0.12\textwidth,clip=True]{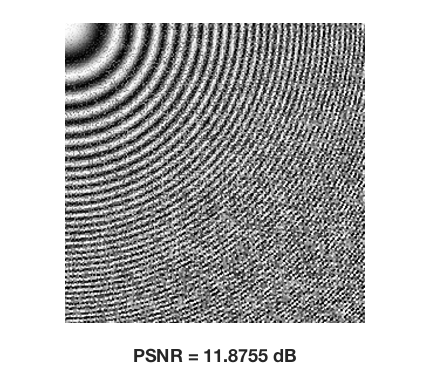}}&\raisebox{-.5\height}{\includegraphics[trim = 3cm 2.5cm 3cm 0cm ,width=0.12\textwidth,clip=True]{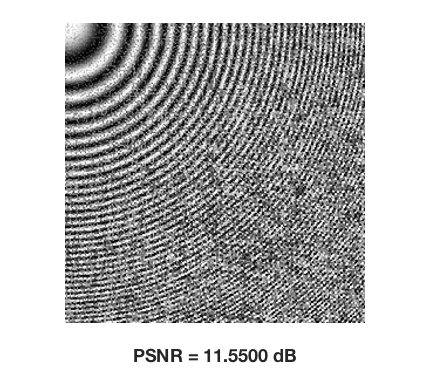}}&\raisebox{-.5\height}{\includegraphics[trim = 3cm 2.5cm 3cm 0cm ,width=0.12\textwidth,clip=True]{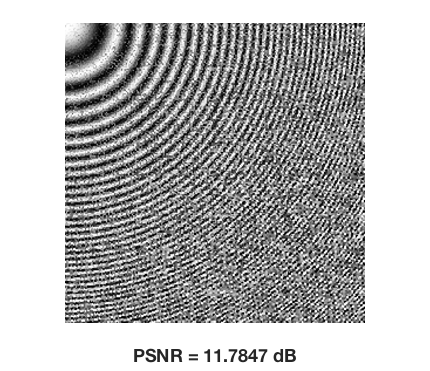}}\\
	 &\footnotesize 11.22 dB & \footnotesize 11.38 dB & \footnotesize 10.75 dB &\footnotesize 11.88 dB & \footnotesize 11.55 dB & \footnotesize 11.78 dB \\
	 
	 \hline
	 
	\colorbox{gray!25}{ 0.004}&\raisebox{-.5\height}{\includegraphics[trim = 3cm 2.5cm 3cm 0cm ,width=0.12\textwidth,clip=True]{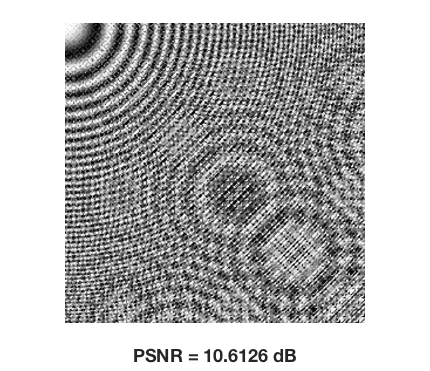}}&\raisebox{-.5\height}{\includegraphics[trim = 3cm 2.5cm 3cm 0cm ,width=0.12\textwidth,clip=True]{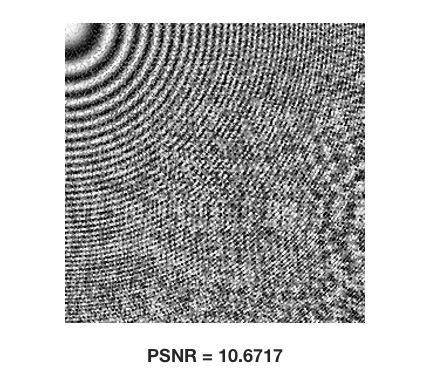}}&\raisebox{-.5\height}{\includegraphics[trim = 3cm 2.5cm 3cm 0cm ,width=0.12\textwidth,clip=True]{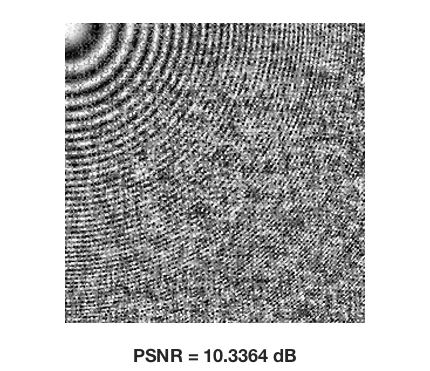}}&\raisebox{-.5\height}{\includegraphics[trim = 3cm 2.5cm 3cm 0cm ,width=0.12\textwidth,clip=True]{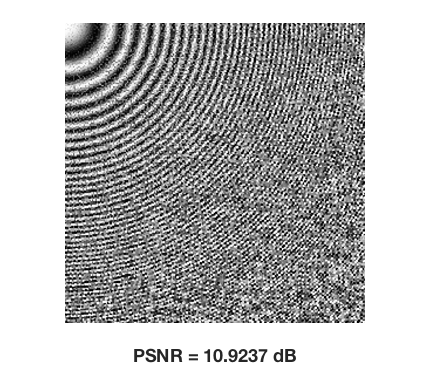}}&\raisebox{-.5\height}{\includegraphics[trim = 3cm 2.5cm 3cm 0cm ,width=0.12\textwidth,clip=True]{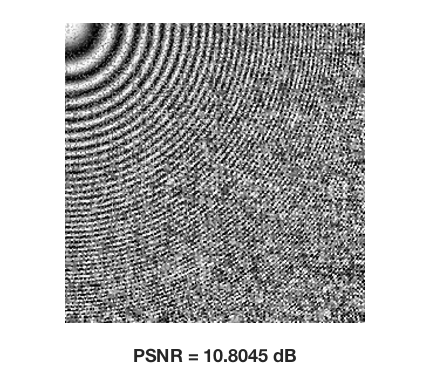}}&\raisebox{-.5\height}{\includegraphics[trim = 3cm 2.5cm 3cm 0cm ,width=0.12\textwidth,clip=True]{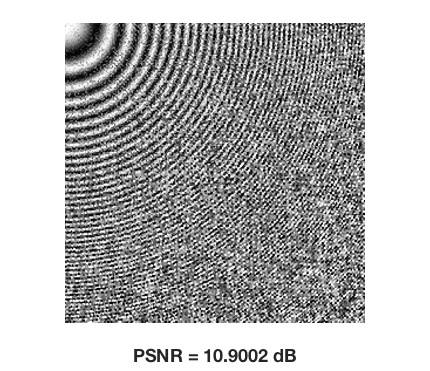}}\\
	 &\footnotesize 10.61 dB & \footnotesize 10.67 dB & \footnotesize 10.34 dB &\footnotesize 10.92 dB & \footnotesize 10.80 dB & \footnotesize 10.90 dB \\
	 
	 \hline
	 
	 \colorbox{gray!25}{0.005}&\raisebox{-.5\height}{\includegraphics[trim = 3cm 2.5cm 3cm 0cm ,width=0.12\textwidth,clip=True]{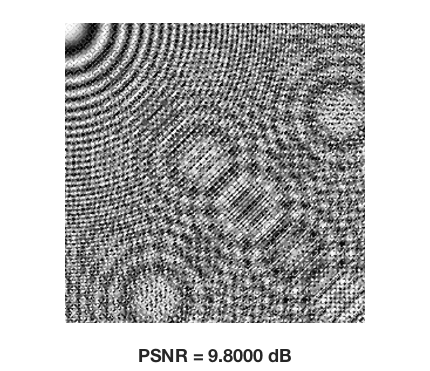}}&\raisebox{-.5\height}{\includegraphics[trim = 3cm 2.5cm 3cm 0cm ,width=0.12\textwidth,clip=True]{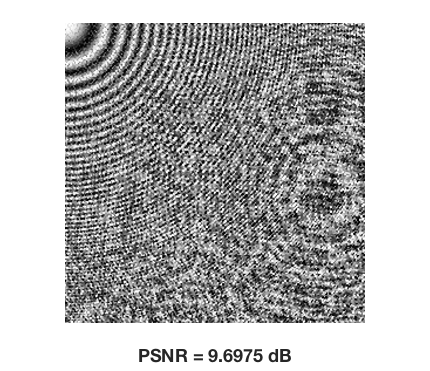}}&\raisebox{-.5\height}{\includegraphics[trim = 3cm 2.5cm 3cm 0cm ,width=0.12\textwidth,clip=True]{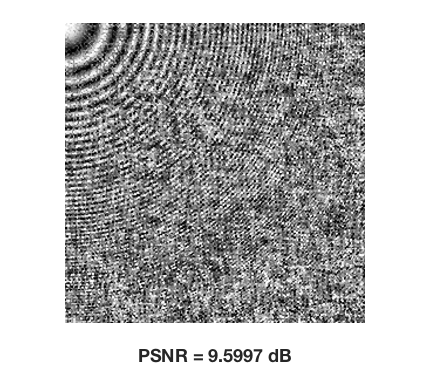}}&\raisebox{-.5\height}{\includegraphics[trim = 3cm 2.5cm 3cm 0cm ,width=0.12\textwidth,clip=True]{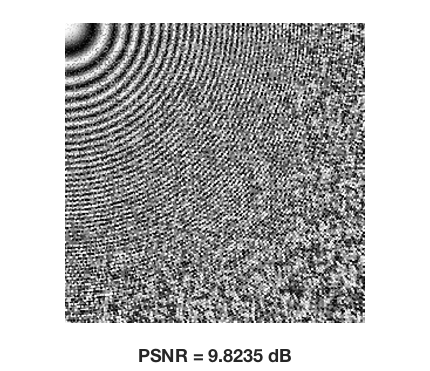}}&\raisebox{-.5\height}{\includegraphics[trim = 3cm 2.5cm 3cm 0cm ,width=0.12\textwidth,clip=True]{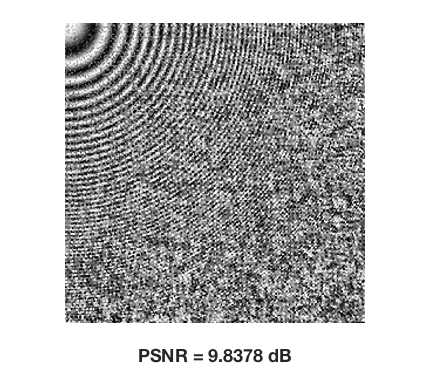}}&\raisebox{-.5\height}{\includegraphics[trim = 3cm 2.5cm 3cm 0cm ,width=0.12\textwidth,clip=True]{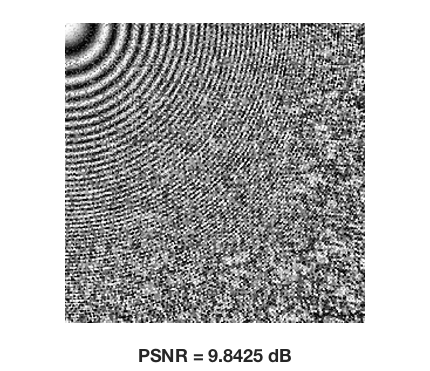}}\\
	 &\footnotesize 9.80 dB & \footnotesize 9.70 dB & \footnotesize 9.59 dB &\footnotesize 9.82 dB & \footnotesize 9.83 dB & \footnotesize 9.84 dB \\
	 
	 \hline
	 
	 \colorbox{gray!25}{0.006}&\raisebox{-.5\height}{\includegraphics[trim = 3cm 2.5cm 3cm 0cm ,width=0.12\textwidth,clip=True]{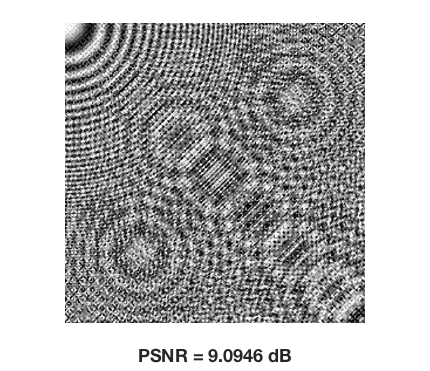}}&\raisebox{-.5\height}{\includegraphics[trim = 3cm 2.5cm 3cm 0cm ,width=0.12\textwidth,clip=True]{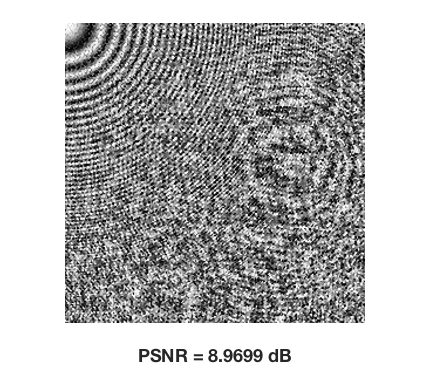}}&\raisebox{-.5\height}{\includegraphics[trim = 3cm 2.5cm 3cm 0cm ,width=0.12\textwidth,clip=True]{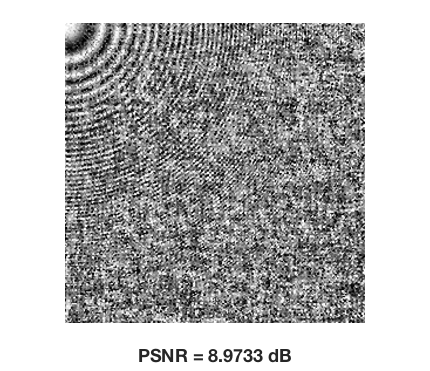}}&\raisebox{-.5\height}{\includegraphics[trim = 3cm 2.5cm 3cm 0cm ,width=0.12\textwidth,clip=True]{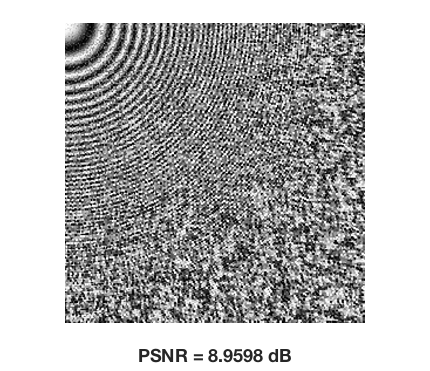}}&\raisebox{-.5\height}{\includegraphics[trim = 3cm 2.5cm 3cm 0cm ,width=0.12\textwidth,clip=True]{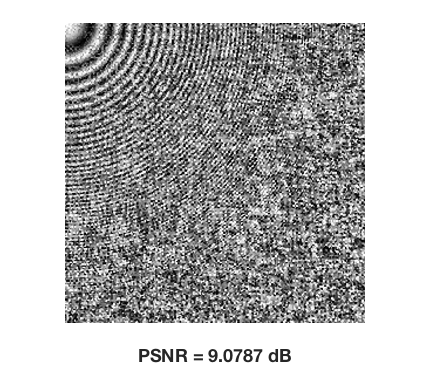}}&\raisebox{-.5\height}{\includegraphics[trim = 3cm 2.5cm 3cm 0cm ,width=0.12\textwidth,clip=True]{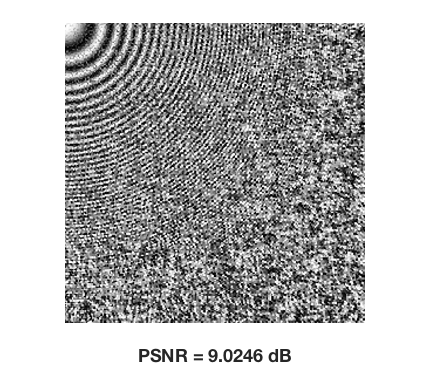}}\\
	 &\footnotesize 9.09 dB & \footnotesize 8.97 dB & \footnotesize 8.97 dB &\footnotesize 8.96 dB & \footnotesize 9.08 dB & \footnotesize 9.02 dB \\
	 
	  \hline
	  
	  \colorbox{gray!25}{0.007}&\raisebox{-.5\height}{\includegraphics[trim = 3cm 2.5cm 3cm 0cm ,width=0.12\textwidth,clip=True]{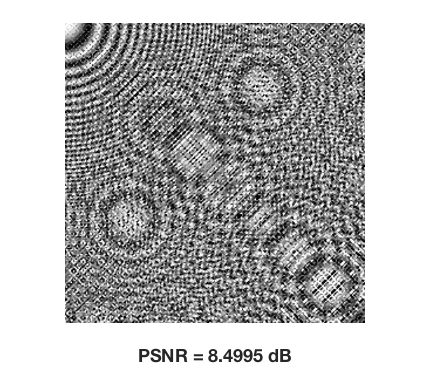}}&\raisebox{-.5\height}{\includegraphics[trim = 3cm 2.5cm 3cm 0cm ,width=0.12\textwidth,clip=True]{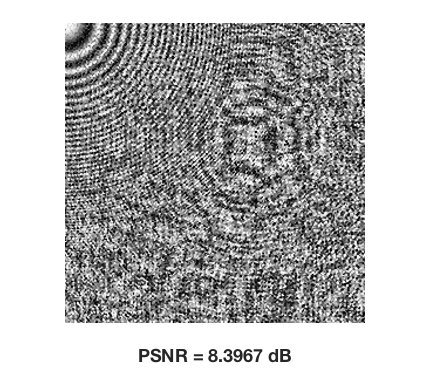}}&\raisebox{-.5\height}{\includegraphics[trim = 3cm 2.5cm 3cm 0cm ,width=0.12\textwidth,clip=True]{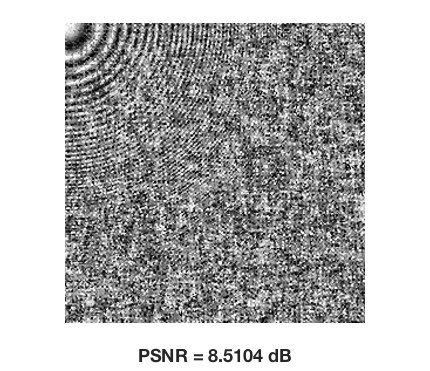}}&\raisebox{-.5\height}{\includegraphics[trim = 3cm 2.5cm 3cm 0cm ,width=0.12\textwidth,clip=True]{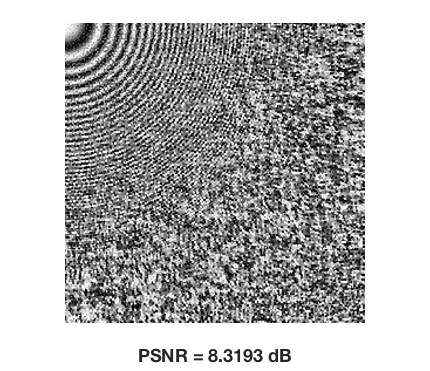}}&\raisebox{-.5\height}{\includegraphics[trim = 3cm 2.5cm 3cm 0cm ,width=0.12\textwidth,clip=True]{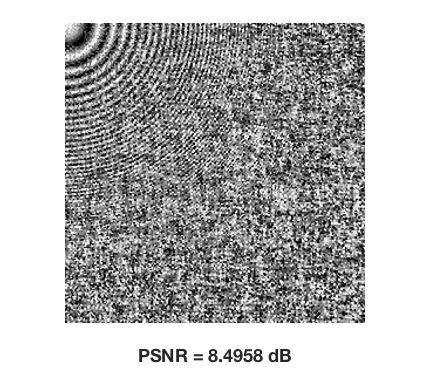}}&\raisebox{-.5\height}{\includegraphics[trim = 3cm 2.5cm 3cm 0cm ,width=0.12\textwidth,clip=True]{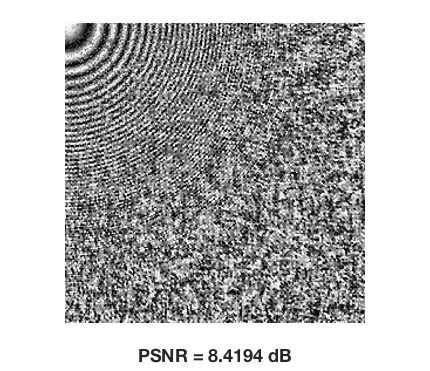}}\\
	  &\footnotesize 8.49 dB & \footnotesize 8.39 dB & \footnotesize 8.51 dB &\footnotesize 8.32 dB & \footnotesize 8.49 dB & \footnotesize 8.42 dB \\
	 
	\bottomrule
	
\end{tabularx}
\end{table}

\subsection{Image Reconstruction}
In this experiment, we consider the problem of designing sample distributions for image reconstruction. More specifically, we consider the commonly used zone plate test function:
$$z(r)=(1+\cos(\alpha r^2))/2,$$ 
with varying levels of complexity (or frequency content) $\alpha$. Note that, we choose the zone plate for our study over natural images, since it shows the response for a wide range of frequencies and aliasing effects that are not masked by image features. For all zone plate renderings in this paper, we have tiled toroidal sets of $1000$ $2$-dimensional points over the image-plane and utilized a Lanczos filter with a support of width $4$ for resampling. Further, we also report the peak signal-to-noise ratio (PSNR) as a quantitative error measure:
$$\text{PSNR} = 20 \log_{10} \dfrac{1}{\text{MSE}},$$ where MSE is the mean squared error. However, it is well known in the image processing community that PSNR can be a weak surrogate for visual quality and, therefore, we also show the reconstructed images. 

Table~\ref{table:recon} illustrates the reconstructions obtained using different space-filling designs, for varying values of $\alpha$. It can be observed from the results that the QMC sequences produce a large amount of aliasing artifacts in the high frequency regions, which can be directly linked to the oscillations in their corresponding PCFs. On the other hand, LHS design recovers a small amount of low-frequencies, and maps most of the frequencies to white noise due to its small $r_{min}$ and near-constant PCF.    
In contrast, sample designs which attempt to trade-off between coverage and randomness properties, i.e., MPDS and the proposed spectral space-filling designs (as seen in Figure~\ref{pcf_samp}), have superior reconstruction quality. These designs reduce the aliasing artifacts, have cleaner low frequency content (upper left corner)) and map all high frequencies (bottom right corner) to white noise.
More interestingly, we see that for low complexity cases, i.e., lower $\alpha$, the MPDS performs the best followed by the proposed Stair and Step PDS respectively. For moderately complex images, the Stair PDS performs the best followed by the Step and the MPDS. Finally, for highly complex images, the Step PDS performs the best followed by the Stair and the MPDS. 
These observations corroborate our discussion in Section~\ref{g-pcf} that an increase in $r_{min}$ (coverage) in the PCF results in an increase in the range of low frequencies that can be recovered without aliasing, and equivalently reduction in the amount of oscillations (or an increase in randomness) in the PCF leads to reduced oscillations in the PSD, which in turn indicates a systematic mapping of high frequency content to white noise.

\subsection{Regression Modeling for Benchmark Optimization Functions}
\label{sm:bf}
In this study, we consider the problem of fitting regression models to analytical functions and perform a comparative study of different sample designs, in terms of their generalization performance. More specifically, we consider a set of benchmark analytic functions between dimensions $2$ and $6$, that are commonly used in global optimization tests~\citep{opt:fun}. They are chosen due to their diversity in terms of their complexity and shapes. We compare the performance of proposed space-filling spectral designs (Step, Stair) with coverage based designs (MPDS), low-discrepancy designs (Halton and Sobol), latin hypercube sampling and random sampling. Appendix~\ref{apd:first} lists the set of functions used in our experiments. In each case, we fit a random forest regressor with $30$ trees and repeated for $20$ independent realizations of sample designs. We evaluate the generalization performance on $10^6$ regular grid based test samples. Finally, we employ $3$ popular quality metrics to quantify the performance of the resulting regression models: mean squared error (MSE), relative average absolute error (AAE), and the R$^2$-statistic. The metrics are defined as follows:
\begin{equation}
MSE(\mathbf{y},\hat {\mathbf{y}}) = \frac{\sum_{i=1}^N (y_i - \hat{y}_i)^2}{N},
\end{equation}
\begin{equation}
AAE(\mathbf{y},\hat {\mathbf{y}}) = \frac{\sum_{i=1}^N |y_i - \hat{y}_i|}{N*STD(\mathbf{y})},
\end{equation}
\begin{equation}
R^2(\mathbf{y},\hat {\mathbf{y}}) = 1-\frac{\sum_{i=1}^N (y_i - \hat{y}_i)^2}{\sum_{i=1}^N (y_i - MEAN(\mathbf{y}))^2}
\end{equation}
where $\mathbf{y} = f(\mathbf{x})$ are the true function values and $\hat {\mathbf{y}}$ are the predicted values. 

Tables~\ref{reg2} through~\ref{reg6} show the performance of different space-filling designs for various analytic functions in dimensions $2$ to $6$, respectively. We see that, for $d=2$ (Table~\ref{reg2}), LHS and Halton sequences perform better compared to the rest of the sample designs on most of the test functions. However, on some functions, e.g., GoldsteinPrice, Stair PCF and MPDS perform better. Therefore, none of the sample designs consistently guarantee superior performance. For $d=3$ (Table~\ref{reg3}), we see that Stair PCF design and MPDS (followed by Sobol sequences) perform consistently better compared to the rest of the approaches. As we go higher in dimensions, i.e., $d>3$, we notice a significant gain in the performance of Stair PCF based space-filling spectral designs. Interesting, the amount of performance gain of Stair PCF based design increases as we go higher in dimensions. The reason for the poor performance of QMC sequences and LHS for $d>3$ is due to their poor space-filling properties in high dimensions~\citep{lds:hd}. In comparison, both space-filling spectral designs and MPDS have good space-filling properties. We found that Stair PCF design and MPDS have similar coverage characteristics ($r_{min}$). However, the difference in their performance can be attributed to the fact that MPDS designs have significantly more oscillations in their PCF compared to an equivalent Stair PCF based space-filling spectral design, i.e. violation of the randomness objective.   

\begin{table}[htb!]
	\centering
	\caption{Impact of sample design on generalization performance of regression models fit to benchmark analytical functions in $2$ dimensions. LHS and Halton sequences perform slightly better compared to rest of the sample designs.}\label{reg2}
	\begin{tabularx}{\textwidth}{@{}YYYY@{}}
		\toprule
		\textbf{Function} & \textbf{MSE} & \textbf{AAE} & \textbf{R2-Statistic}\\
		\hline
		
		\footnotesize \colorbox{gray!25}{GoldsteinPrice}&\raisebox{-.5\height}{\includegraphics[trim = 1cm 0cm 0cm 0cm,width=0.25\textwidth,clip=True]{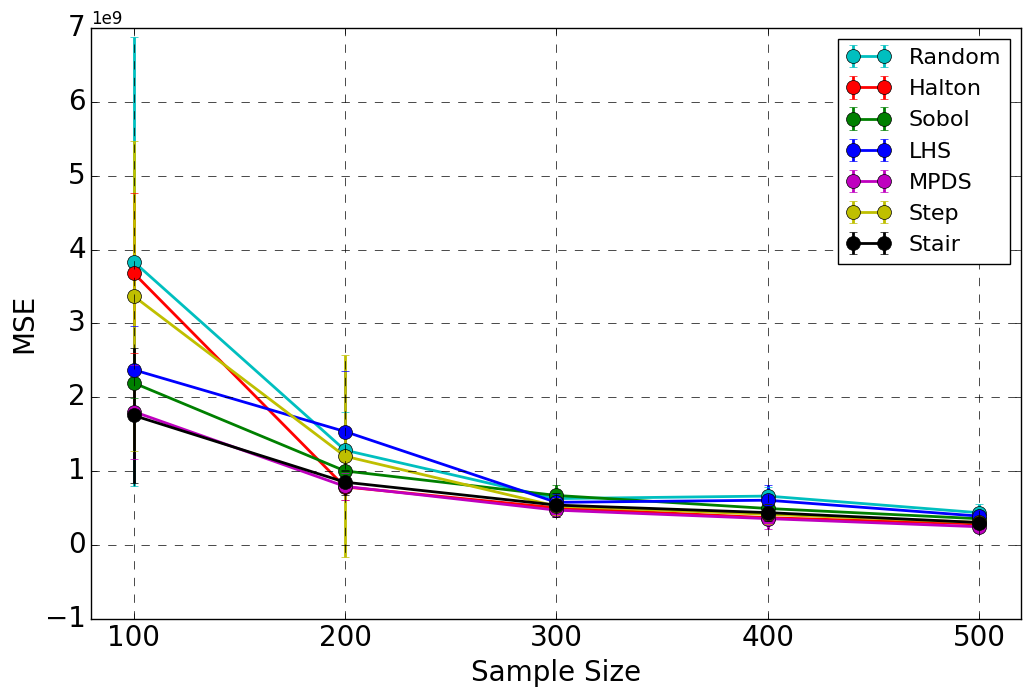}}&\raisebox{-.5\height}{\includegraphics[trim = 1cm 0cm 0cm 0cm ,width=0.25\textwidth,clip=True]{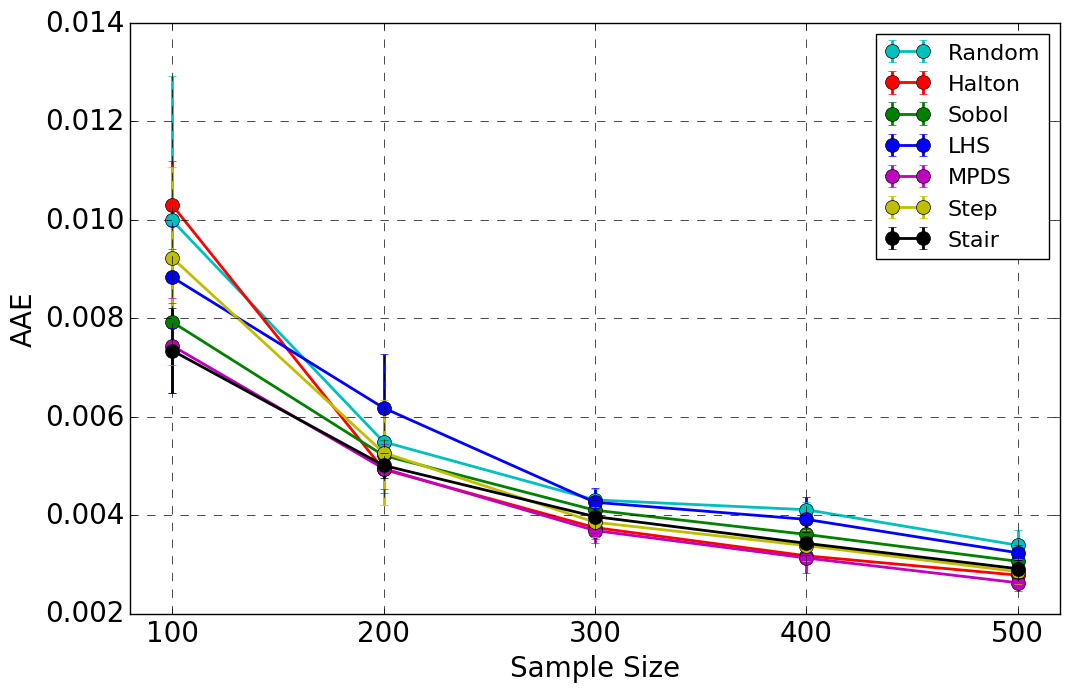}}&\raisebox{-.5\height}{\includegraphics[trim = 1cm 0cm 0cm 0cm ,width=0.25\textwidth,clip=True]{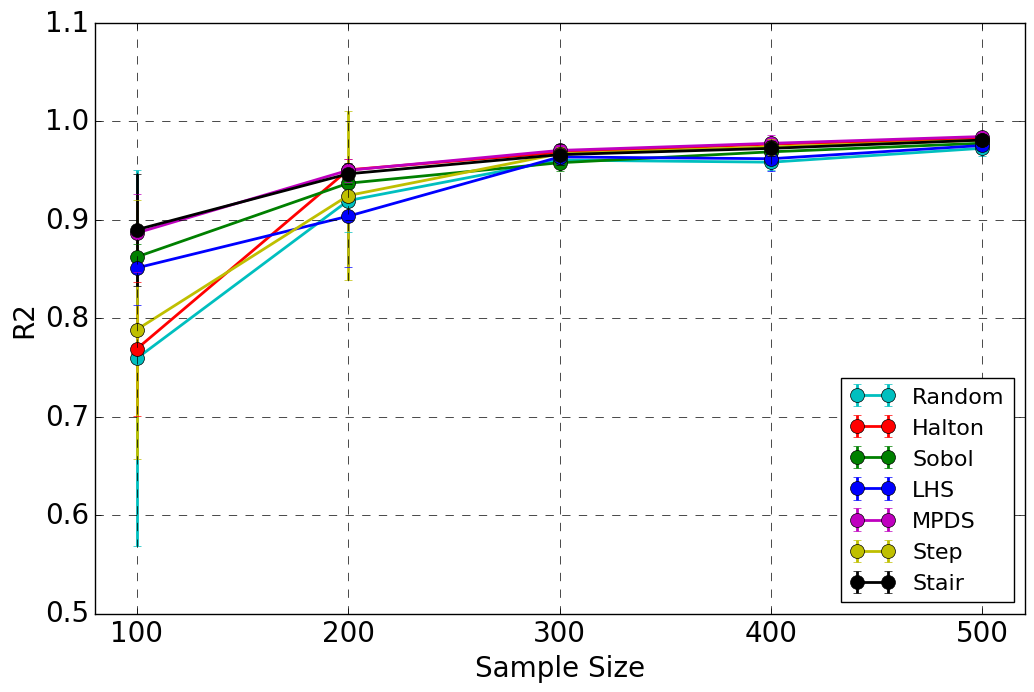}} \\
		
		\hline
		
		\footnotesize \colorbox{gray!25}{Chichinadze}&\raisebox{-.5\height}{\includegraphics[trim = 1cm 0cm 0cm 0cm,width=0.25\textwidth,clip=True]{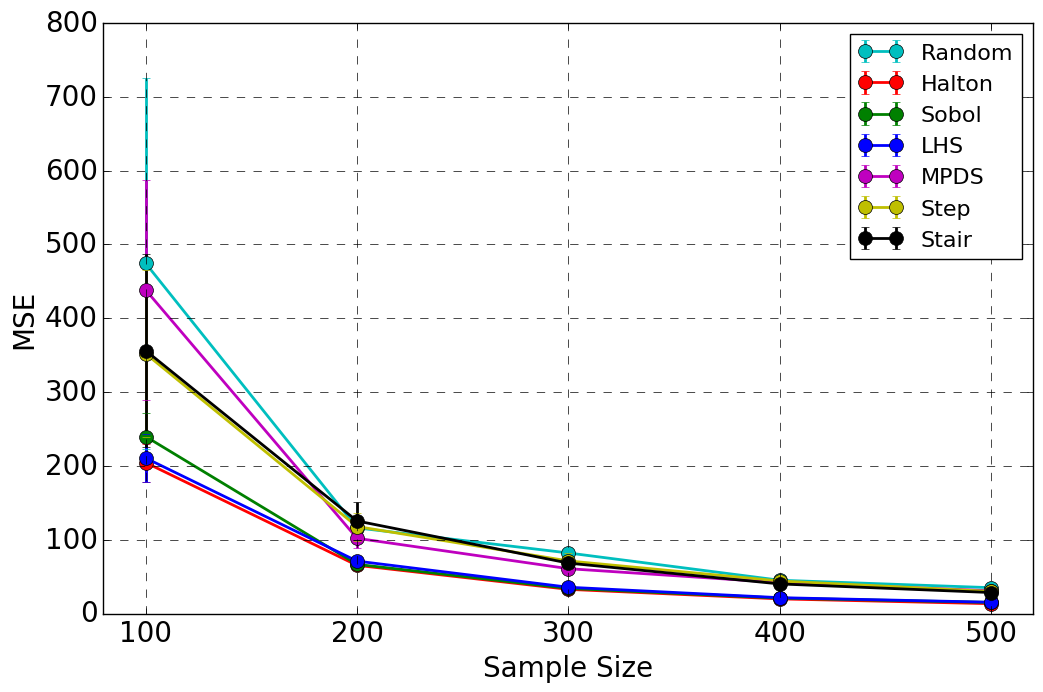}}&\raisebox{-.5\height}{\includegraphics[trim = 1cm 0cm 0cm 0cm ,width=0.25\textwidth,clip=True]{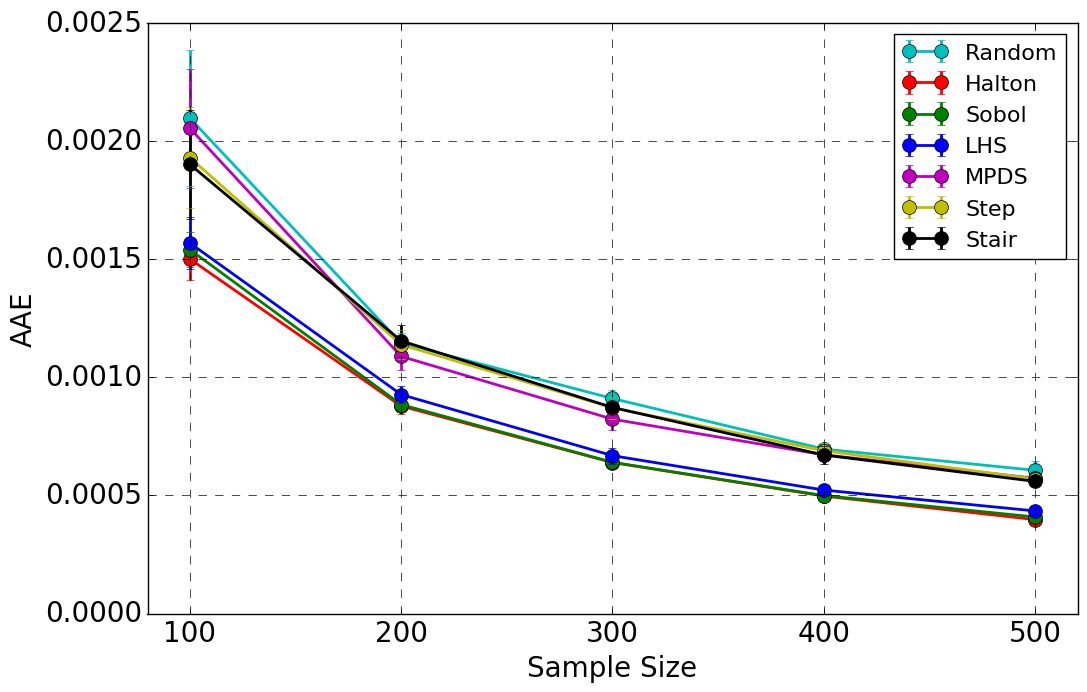}}&\raisebox{-.5\height}{\includegraphics[trim = 1cm 0cm 0cm 0cm ,width=0.25\textwidth,clip=True]{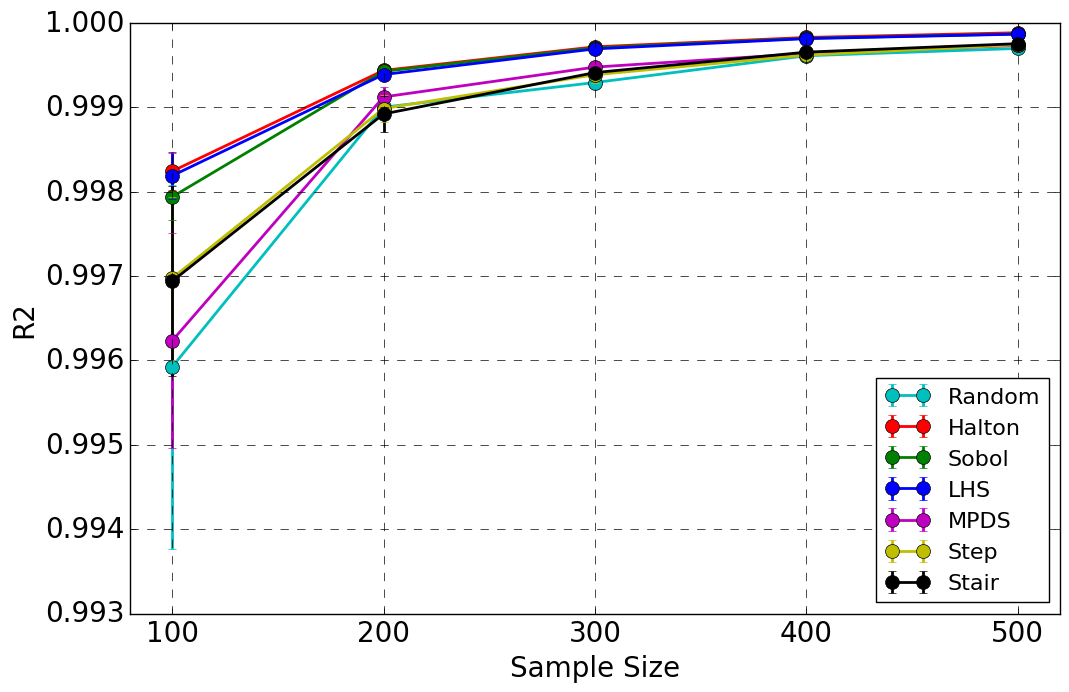}} \\
		
		\hline
		
		\footnotesize \colorbox{gray!25}{Rosenbrock}&\raisebox{-.5\height}{\includegraphics[trim = 1cm 0cm 0cm 0cm,width=0.25\textwidth,clip=True]{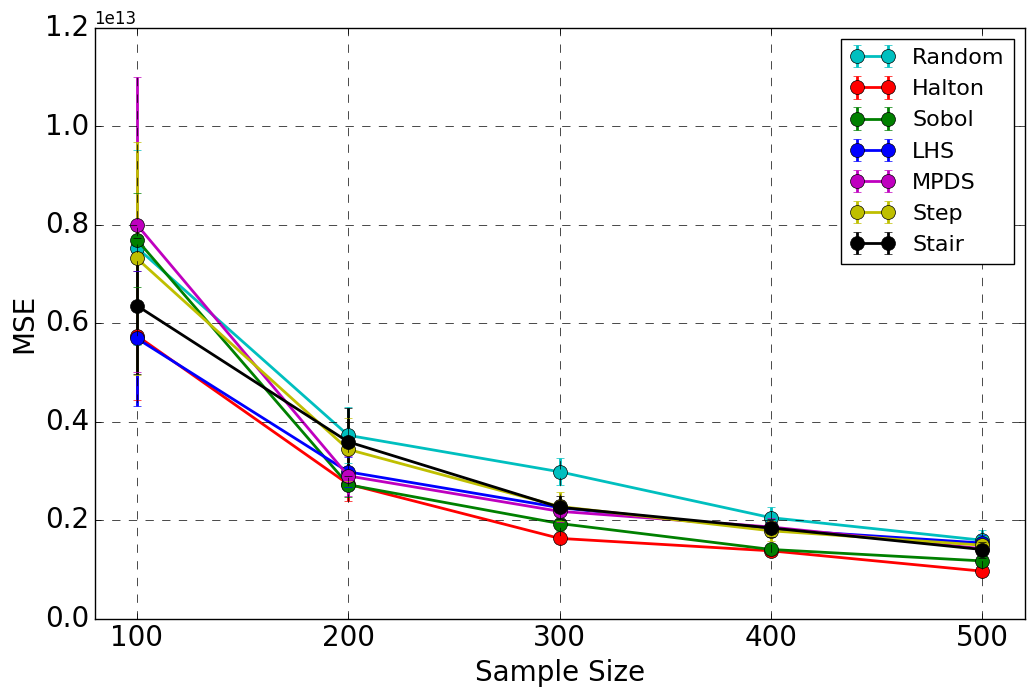}}&\raisebox{-.5\height}{\includegraphics[trim = 1cm 0cm 0cm 0cm ,width=0.25\textwidth,clip=True]{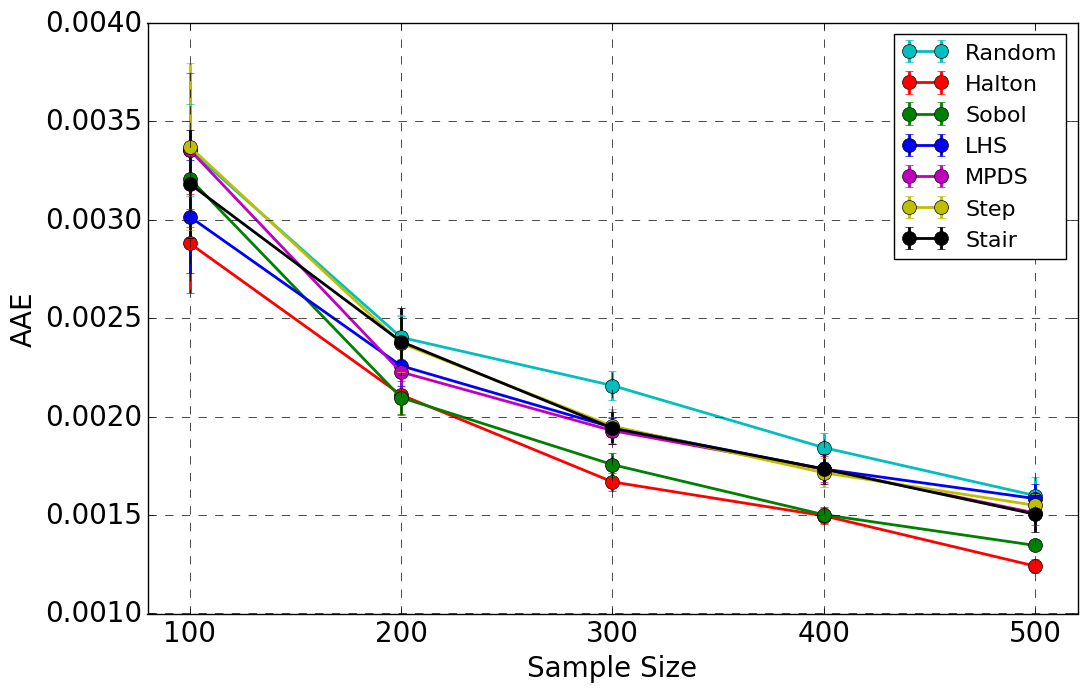}}&\raisebox{-.5\height}{\includegraphics[trim = 1cm 0cm 0cm 0cm ,width=0.25\textwidth,clip=True]{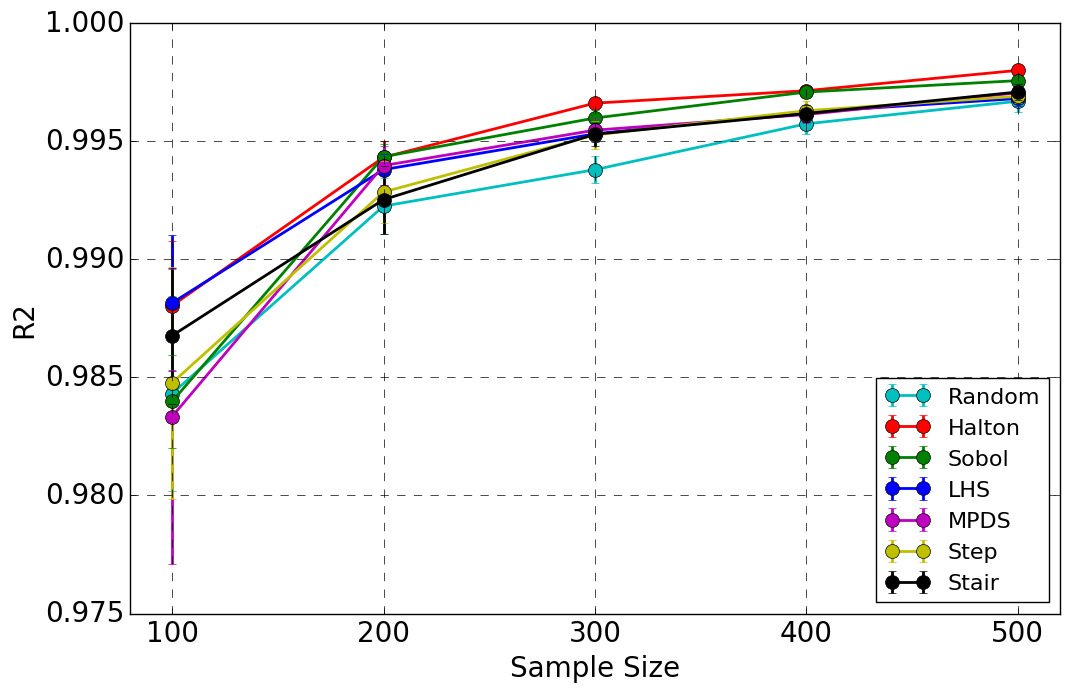}} \\
		
		\hline
		
		\footnotesize \colorbox{gray!25}{Cube}&\raisebox{-.5\height}{\includegraphics[trim = 1cm 0cm 0cm 0cm,width=0.25\textwidth,clip=True]{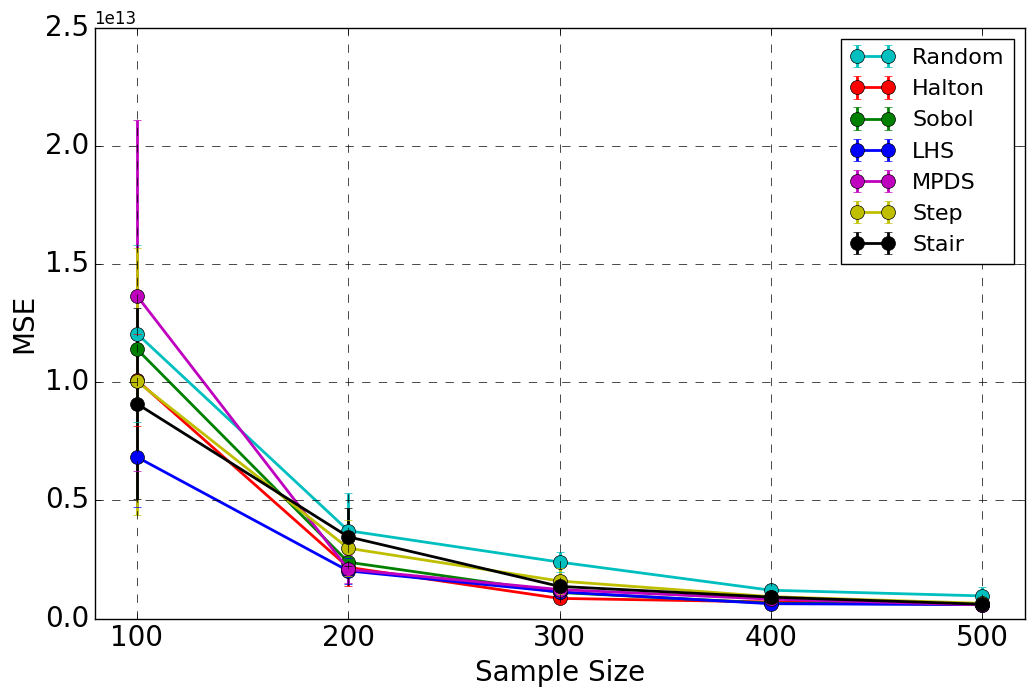}}&\raisebox{-.5\height}{\includegraphics[trim = 1cm 0cm 0cm 0cm ,width=0.25\textwidth,clip=True]{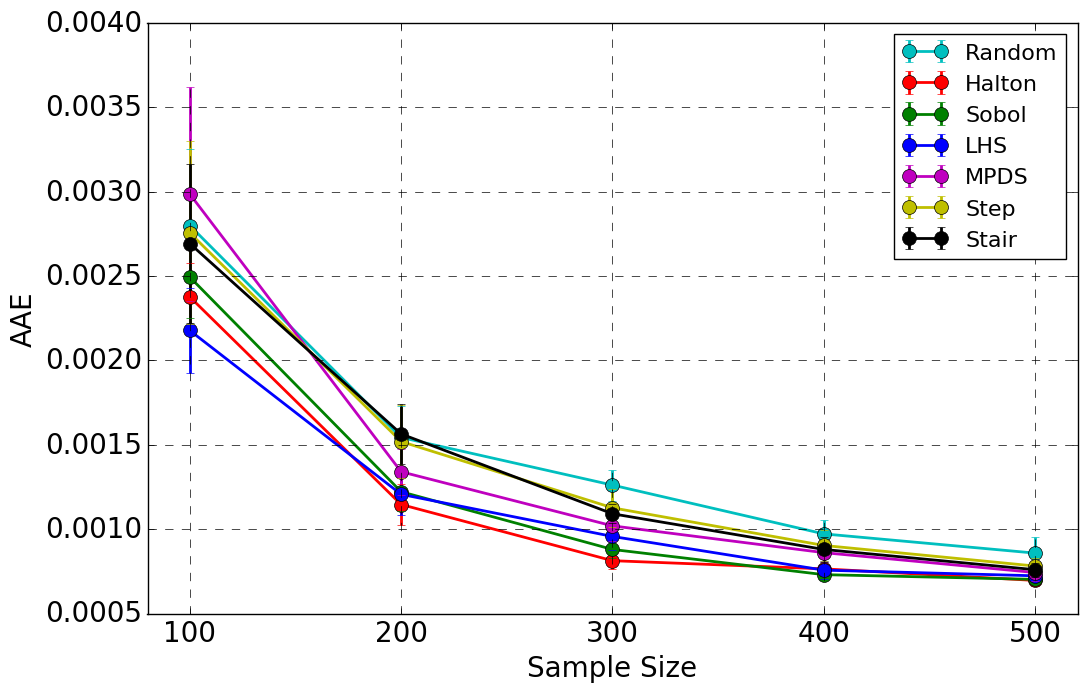}}&\raisebox{-.5\height}{\includegraphics[trim = 1cm 0cm 0cm 0cm ,width=0.25\textwidth,clip=True]{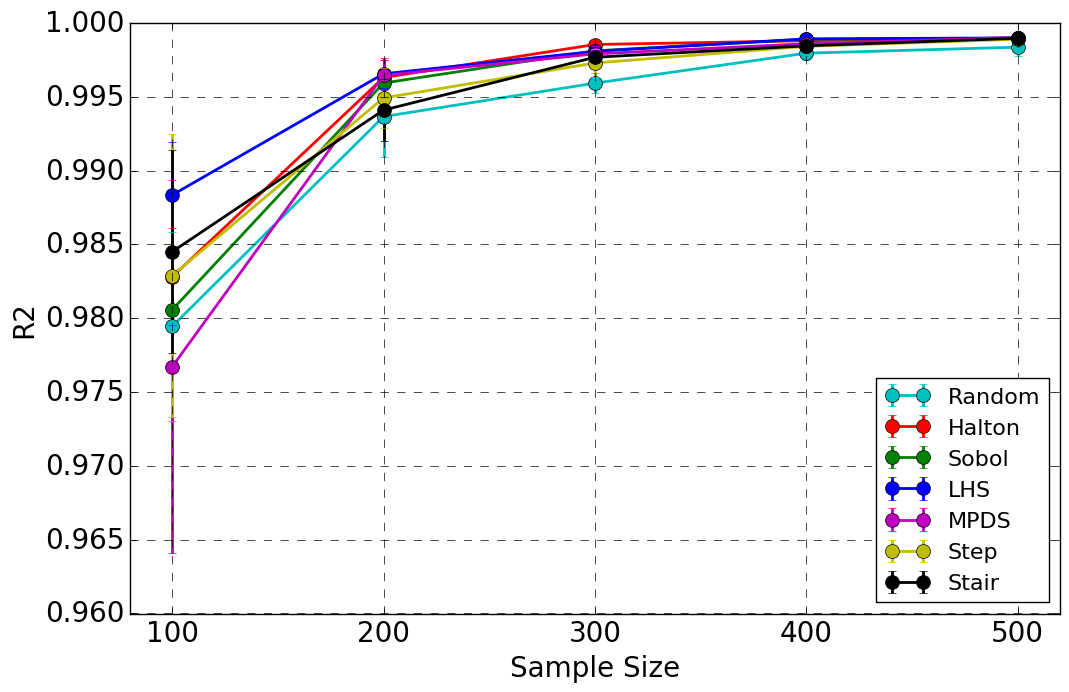}} \\
		
		\bottomrule
		
	\end{tabularx}
\end{table}

\begin{table}[th]
	\centering
	\caption{Impact of sample design on generalization performance of regression models fit to benchmark analytical functions in $3$ dimensions. While the Stair PCF and MPDS designs are consistently better than the other methods, the amount of performance gain is minimal.}\label{reg3}
	\begin{tabularx}{\textwidth}{@{}YYYY@{}}
		\toprule
		\textbf{Function} & \textbf{MSE} & \textbf{AAE} & \textbf{R2-Statistic}\\
		\hline
		
		\footnotesize \colorbox{gray!25}{BoxBetts}&\raisebox{-.5\height}{\includegraphics[trim = 1cm 0cm 0cm 0cm,width=0.25\textwidth,clip=True]{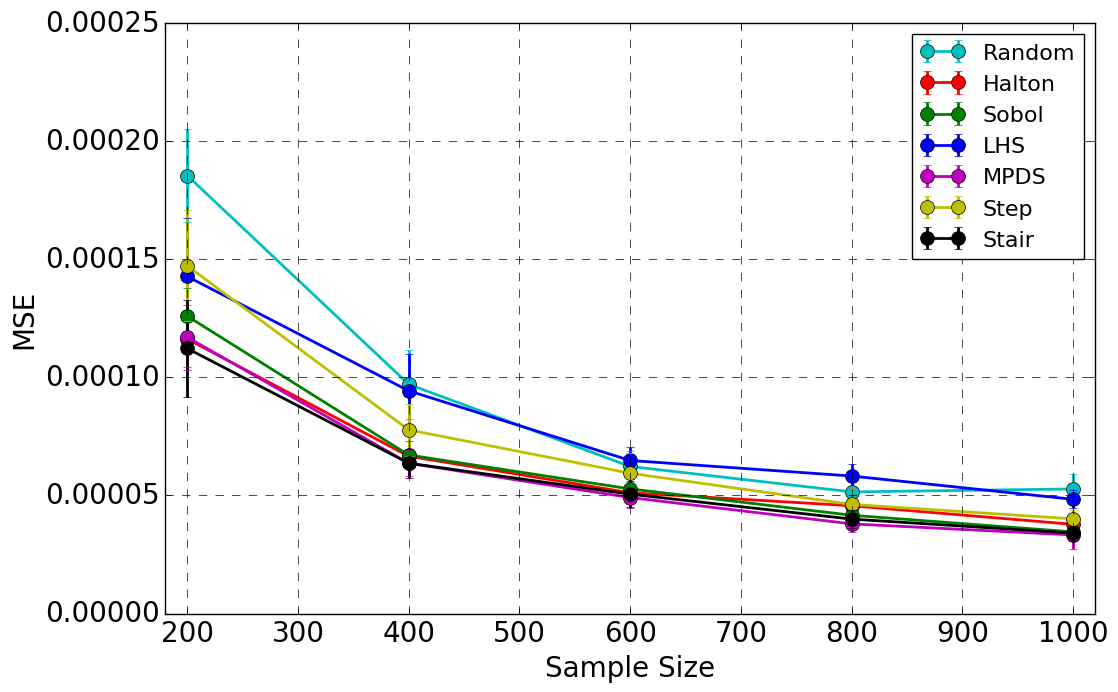}}&\raisebox{-.5\height}{\includegraphics[trim = 1cm 0cm 0cm 0cm ,width=0.25\textwidth,clip=True]{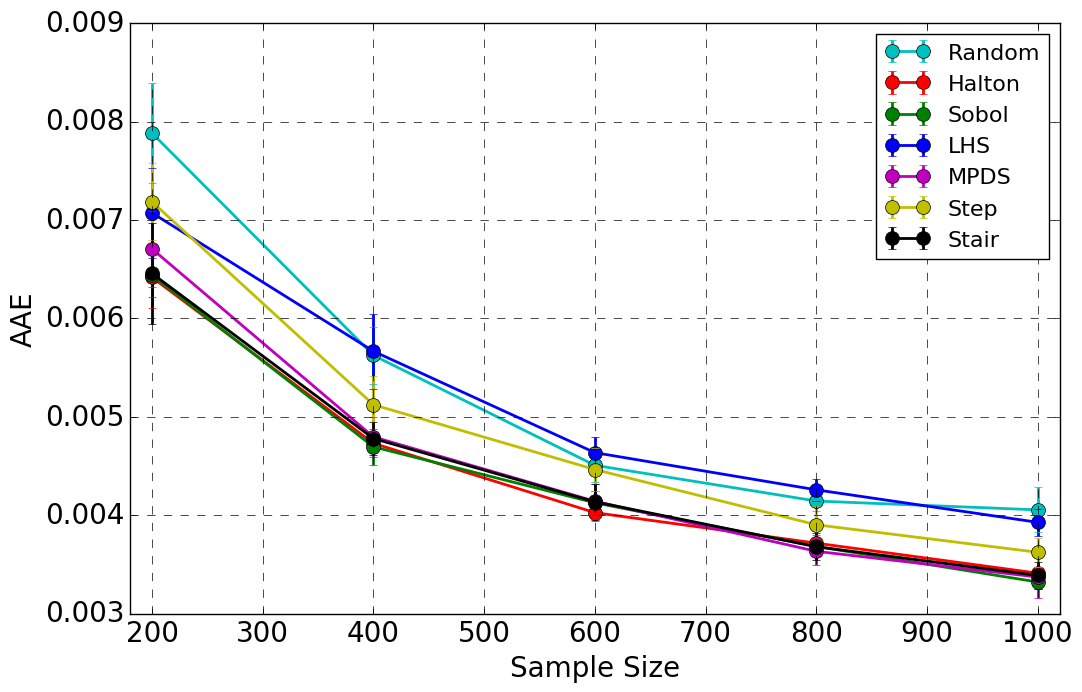}}&\raisebox{-.5\height}{\includegraphics[trim = 1cm 0cm 0cm 0cm ,width=0.25\textwidth,clip=True]{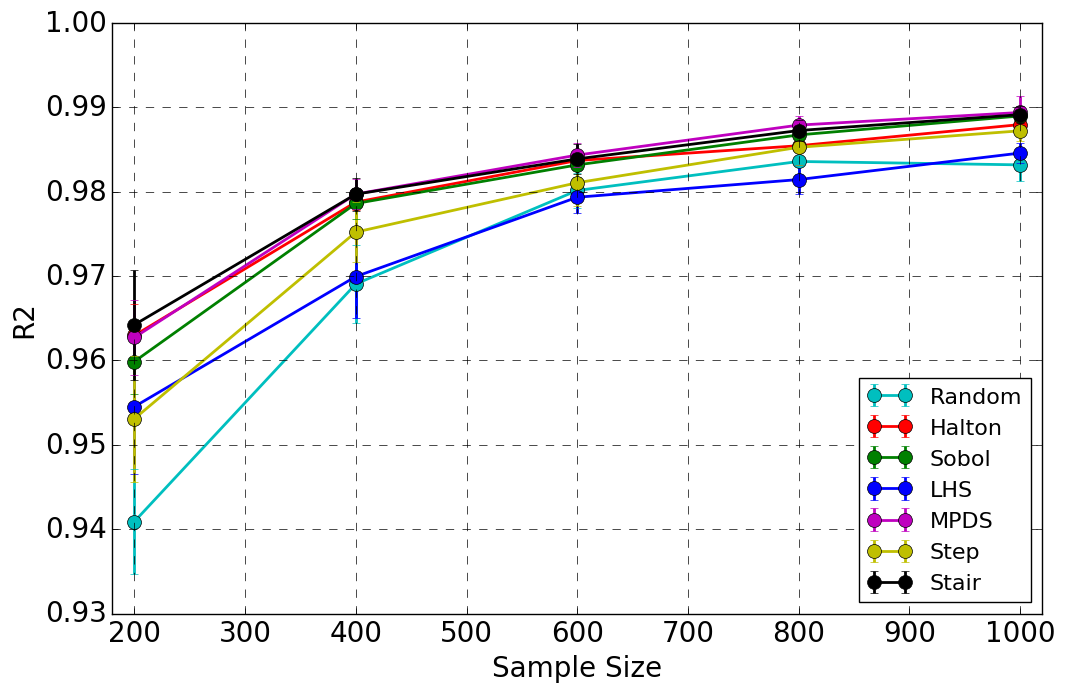}} \\
		
		\hline
		
		\footnotesize \colorbox{gray!25}{Hartmann3}&\raisebox{-.5\height}{\includegraphics[trim = 1cm 0cm 0cm 0cm,width=0.25\textwidth,clip=True]{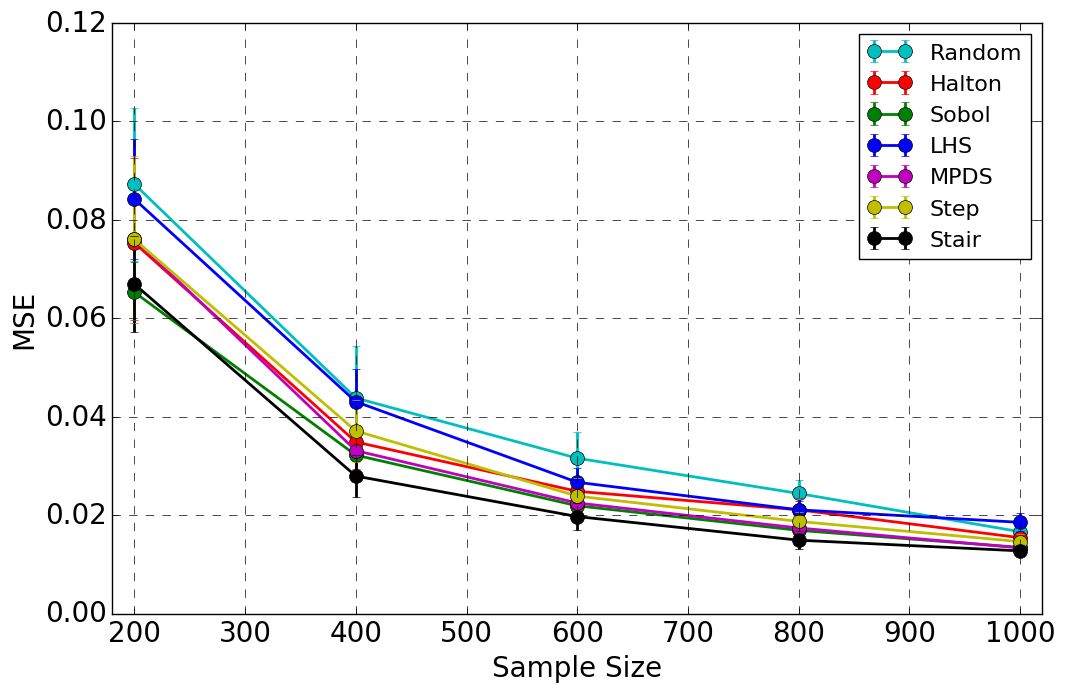}}&\raisebox{-.5\height}{\includegraphics[trim = 1cm 0cm 0cm 0cm ,width=0.25\textwidth,clip=True]{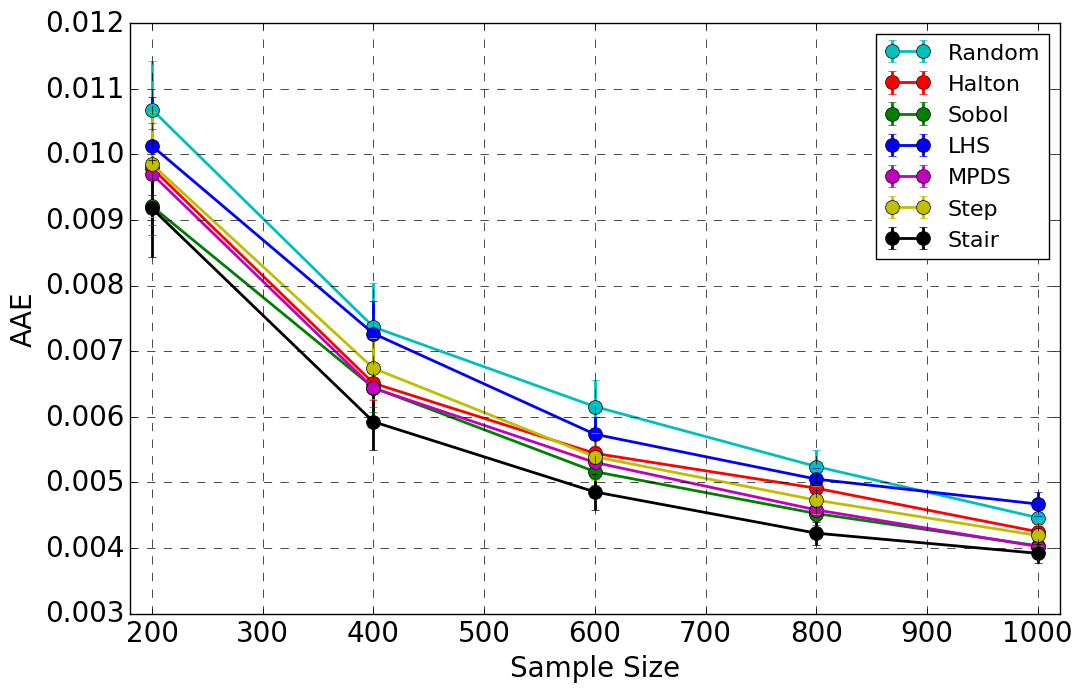}}&\raisebox{-.5\height}{\includegraphics[trim = 1cm 0cm 0cm 0cm ,width=0.25\textwidth,clip=True]{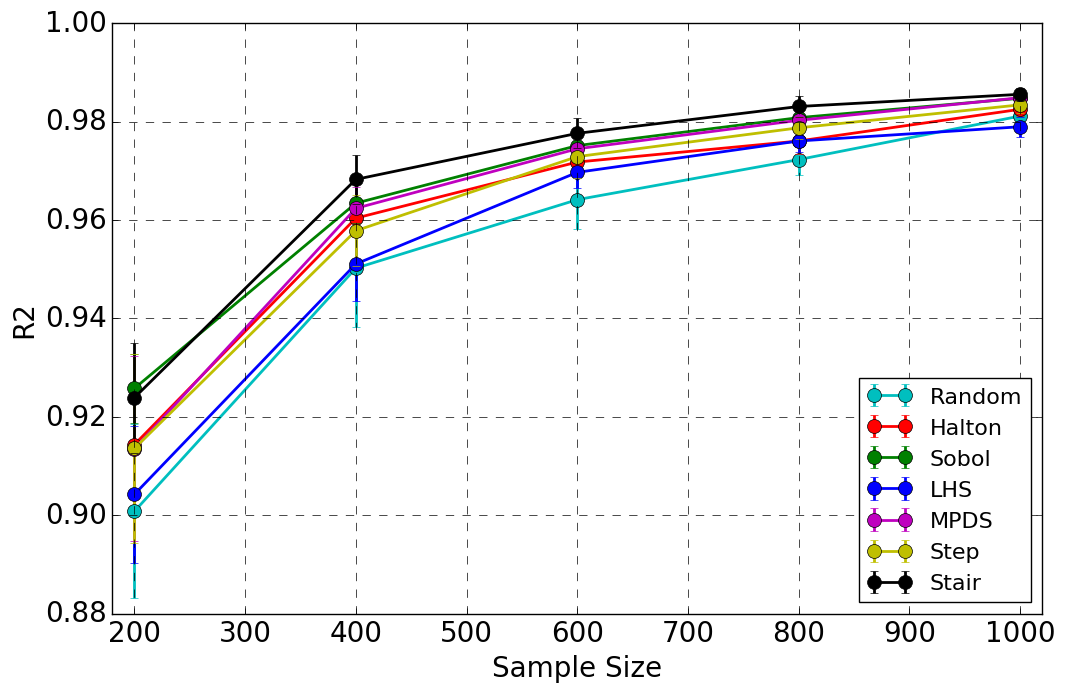}} \\
		
		\hline
		
		\footnotesize \colorbox{gray!25}{Wolfe}&\raisebox{-.5\height}{\includegraphics[trim = 1cm 0cm 0cm 0cm,width=0.25\textwidth,clip=True]{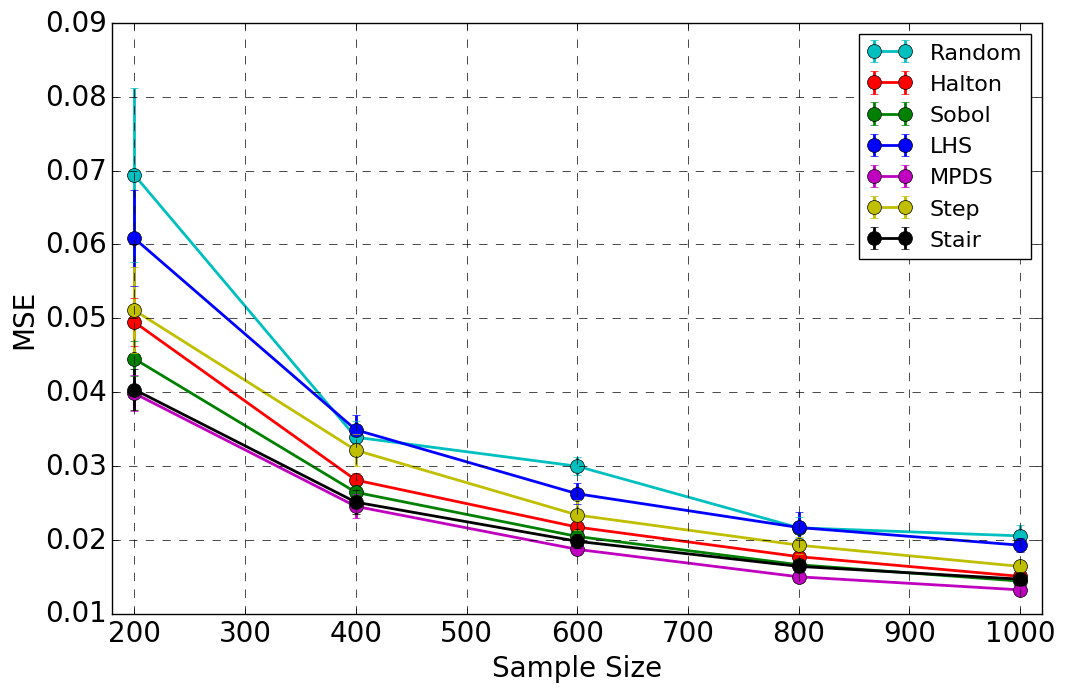}}&\raisebox{-.5\height}{\includegraphics[trim = 1cm 0cm 0cm 0cm ,width=0.25\textwidth,clip=True]{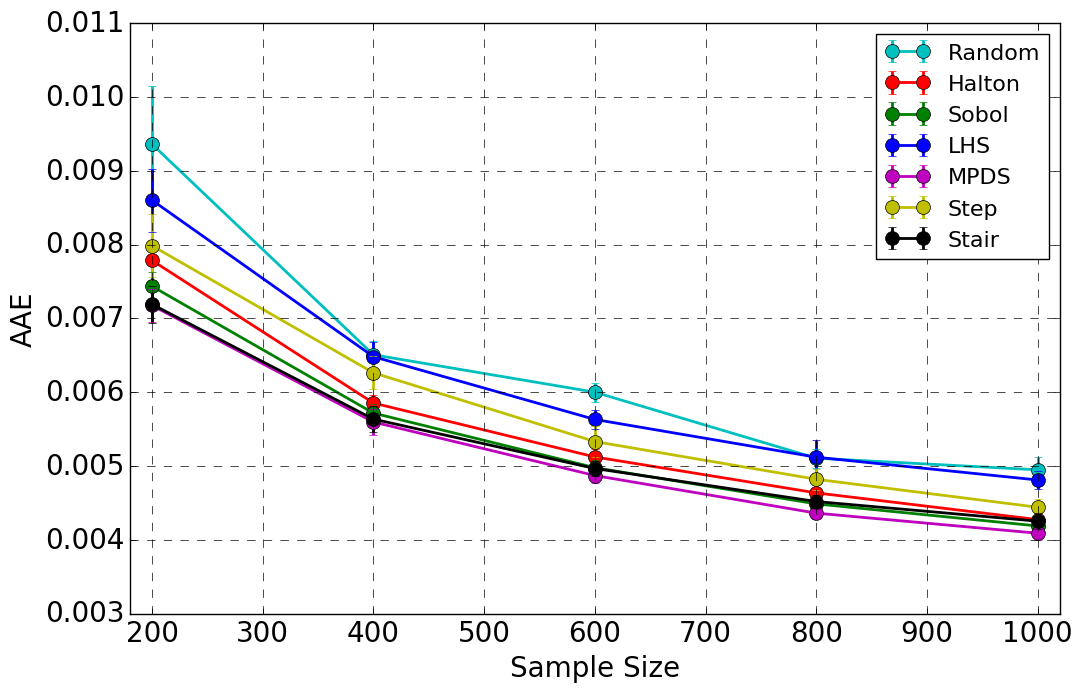}}&\raisebox{-.5\height}{\includegraphics[trim = 1cm 0cm 0cm 0cm ,width=0.25\textwidth,clip=True]{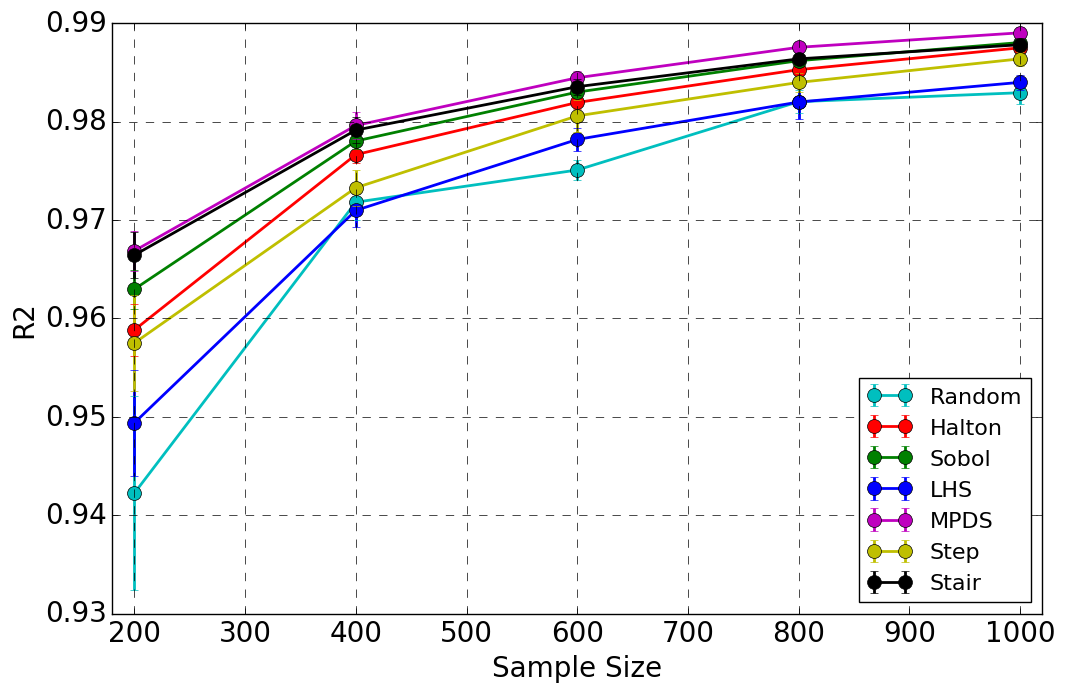}} \\
		
		\hline
		
		\footnotesize \colorbox{gray!25}{Helical Valley}&\raisebox{-.5\height}{\includegraphics[trim = 1cm 0cm 0cm 0cm,width=0.25\textwidth,clip=True]{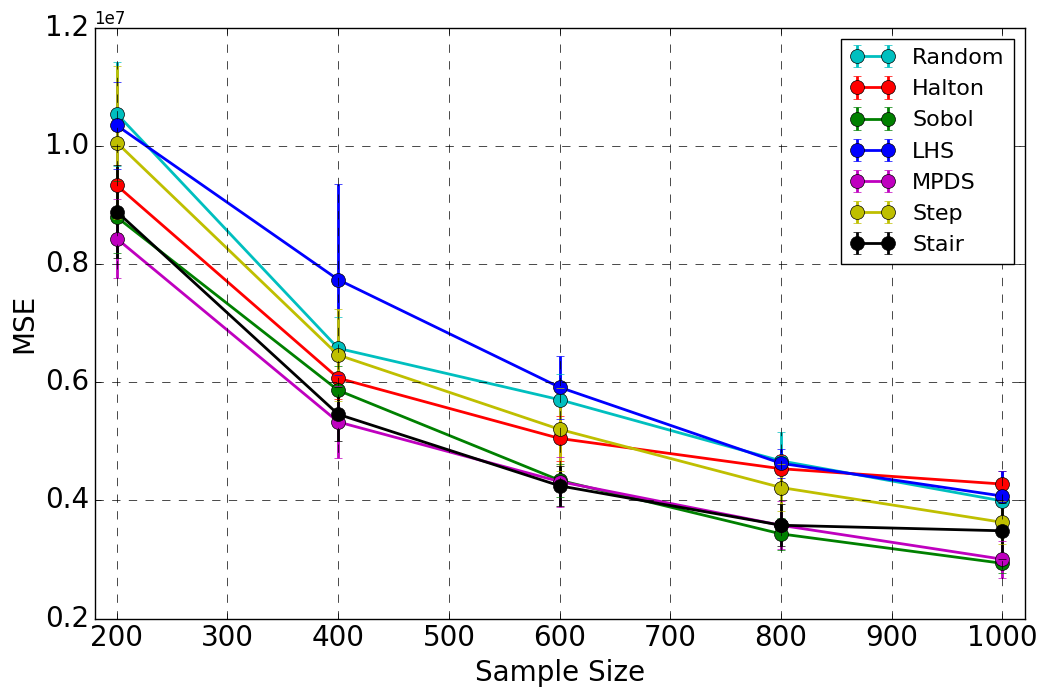}}&\raisebox{-.5\height}{\includegraphics[trim = 1cm 0cm 0cm 0cm ,width=0.25\textwidth,clip=True]{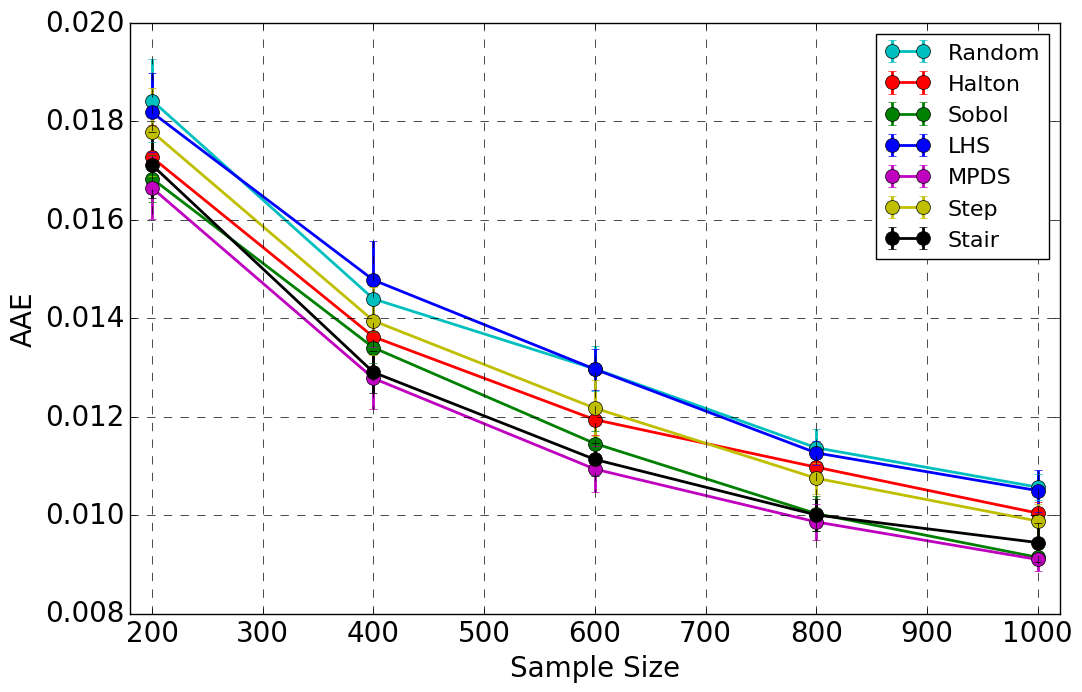}}&\raisebox{-.5\height}{\includegraphics[trim = 1cm 0cm 0cm 0cm ,width=0.25\textwidth,clip=True]{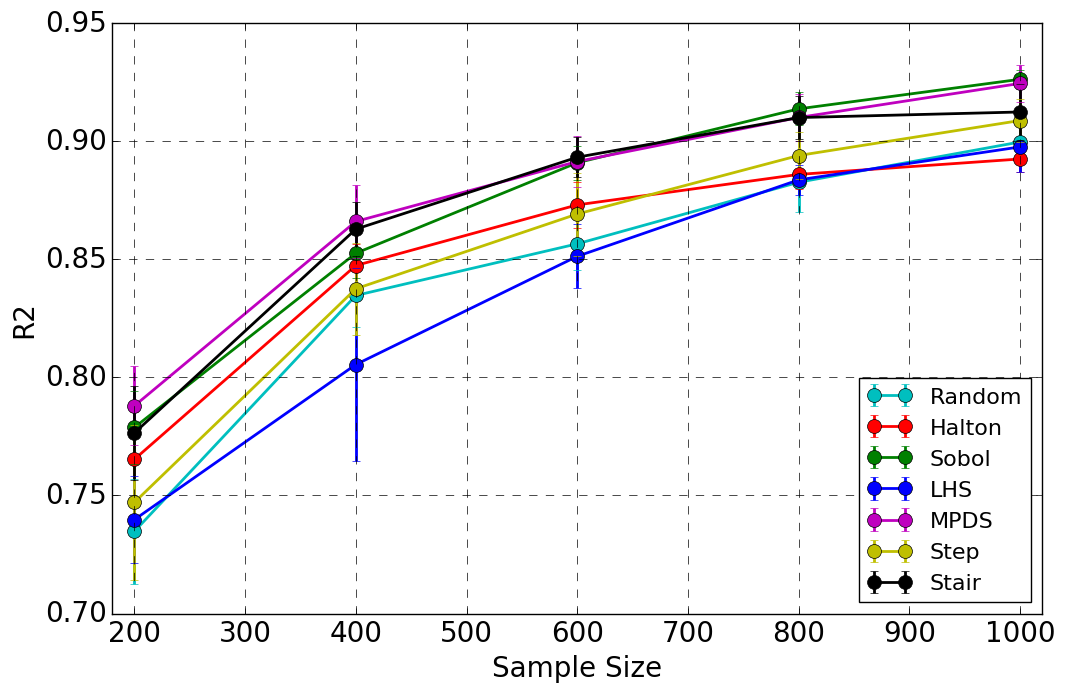}} \\
		
		\bottomrule
		
	\end{tabularx}
\end{table}

\begin{table}[th]
	\centering
	\caption{Impact of sample design on generalization performance of regression models fit to benchmark analytical functions in $4$ dimensions. Stair PCF and MPDS designs demonstrate appreciable gains over popular sample design choices.}\label{reg4}
	\begin{tabularx}{\textwidth}{@{}YYYY@{}}
		\toprule
		\textbf{Function} & \textbf{MSE} & \textbf{AAE} & \textbf{R2-Statistic}\\
		\hline
		
		\footnotesize \colorbox{gray!25}{DeVilliersGlasser01}&\raisebox{-.5\height}{\includegraphics[trim = 1cm 0cm 0cm 0cm,width=0.25\textwidth,clip=True]{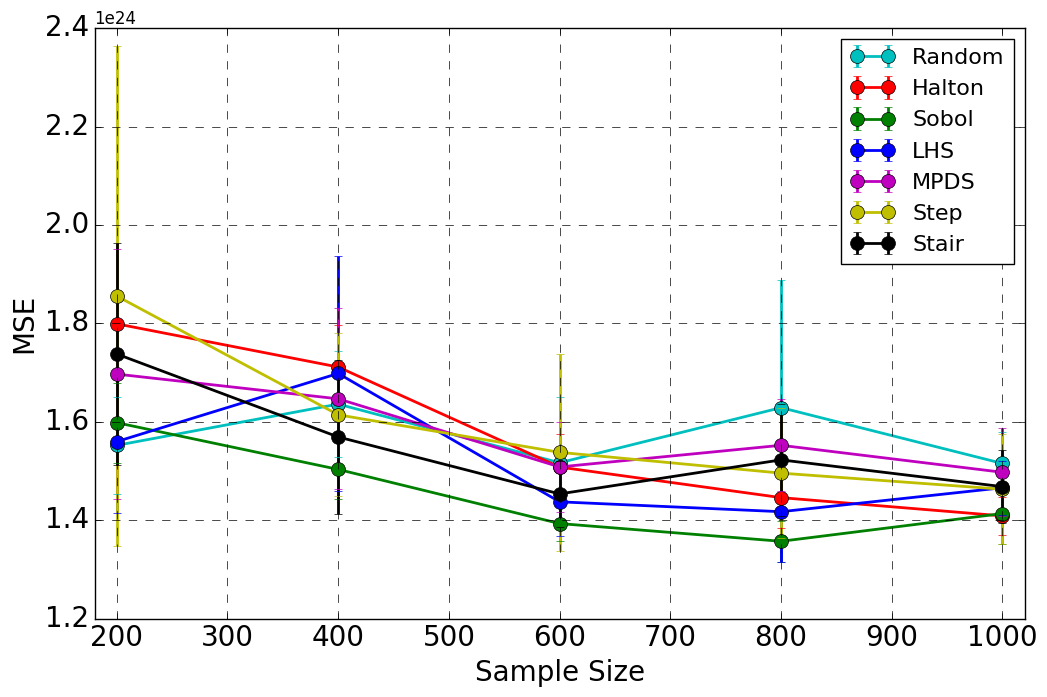}}&\raisebox{-.5\height}{\includegraphics[trim = 1cm 0cm 0cm 0cm ,width=0.25\textwidth,clip=True]{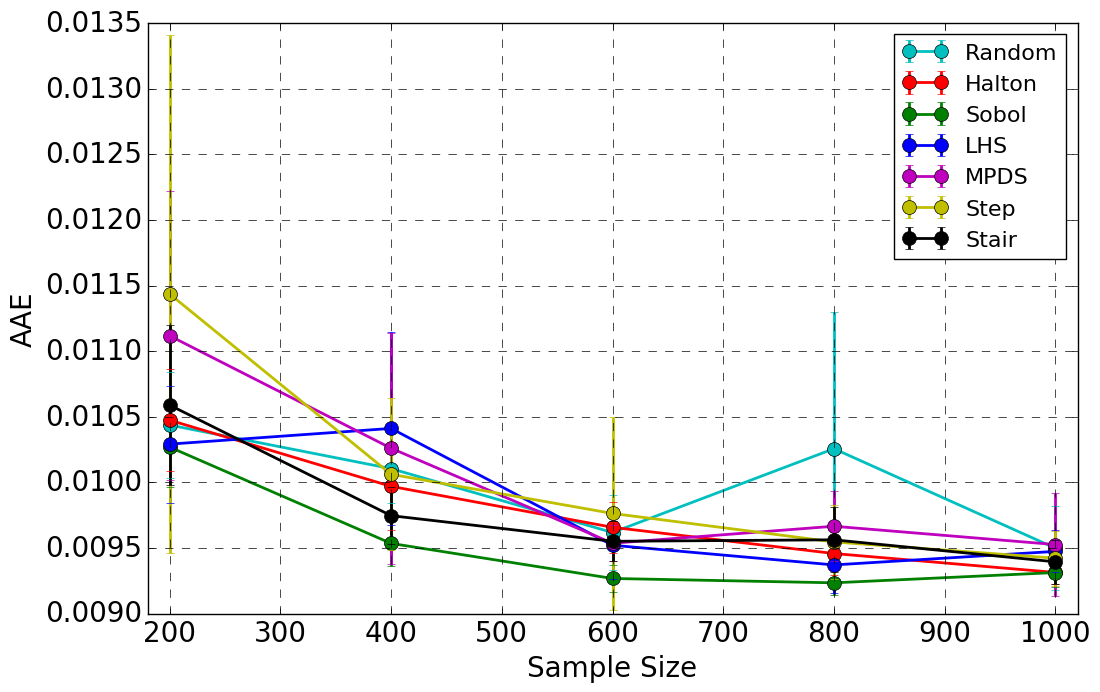}}&\raisebox{-.5\height}{\includegraphics[trim = 1cm 0cm 0cm 0cm ,width=0.25\textwidth,clip=True]{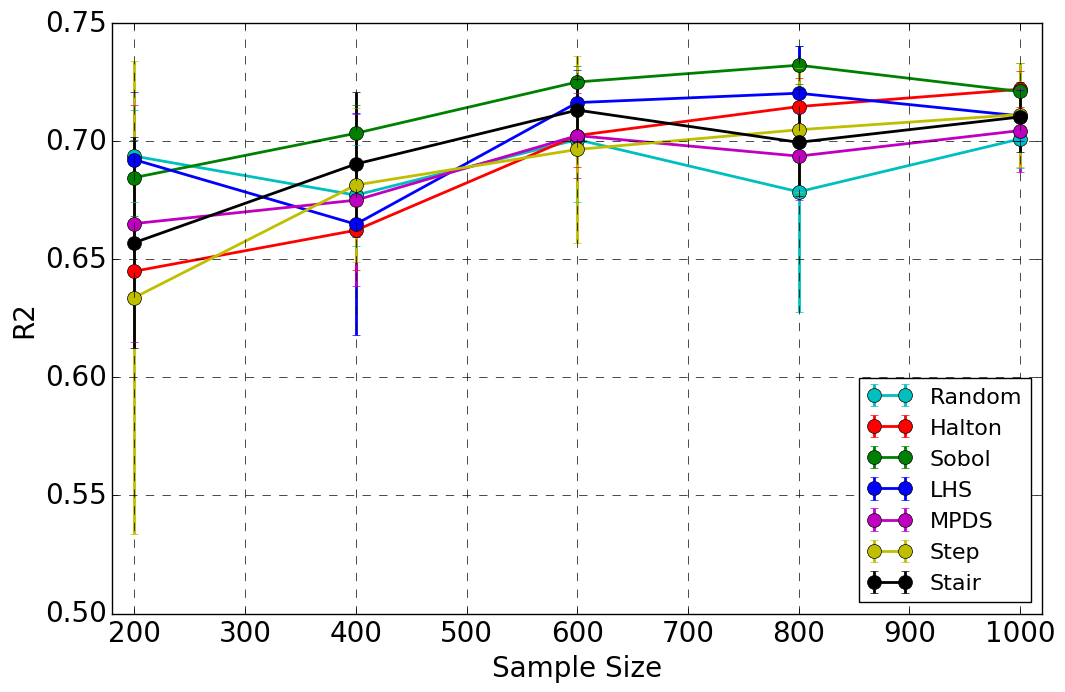}} \\
		
		\hline
		
		\footnotesize \colorbox{gray!25}{Colville}&\raisebox{-.5\height}{\includegraphics[trim = 1cm 0cm 0cm 0cm,width=0.25\textwidth,clip=True]{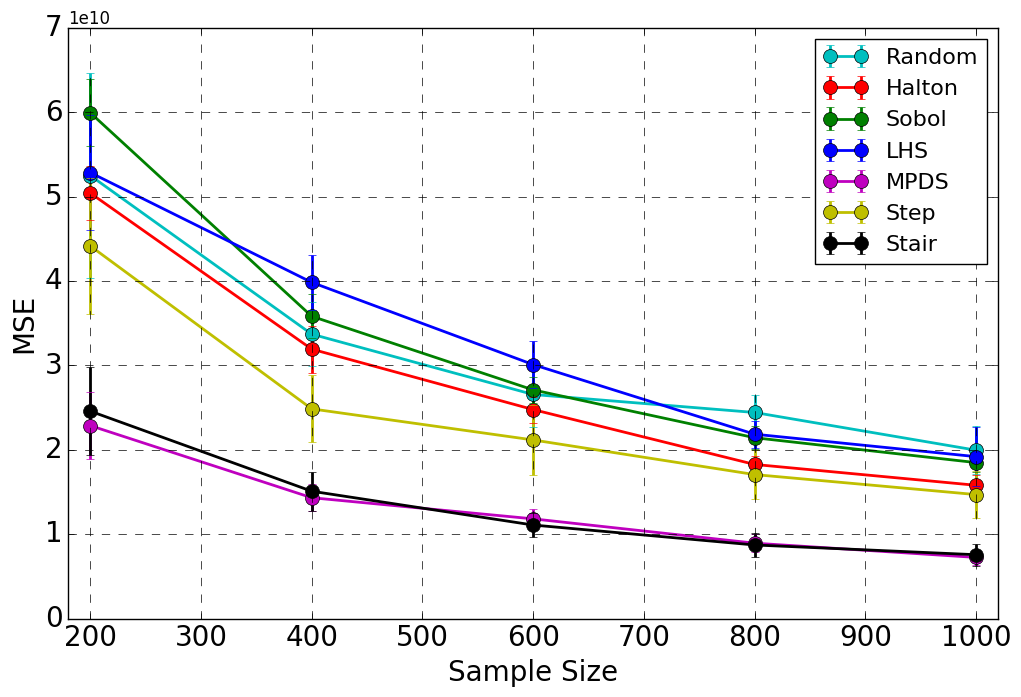}}&\raisebox{-.5\height}{\includegraphics[trim = 1cm 0cm 0cm 0cm ,width=0.25\textwidth,clip=True]{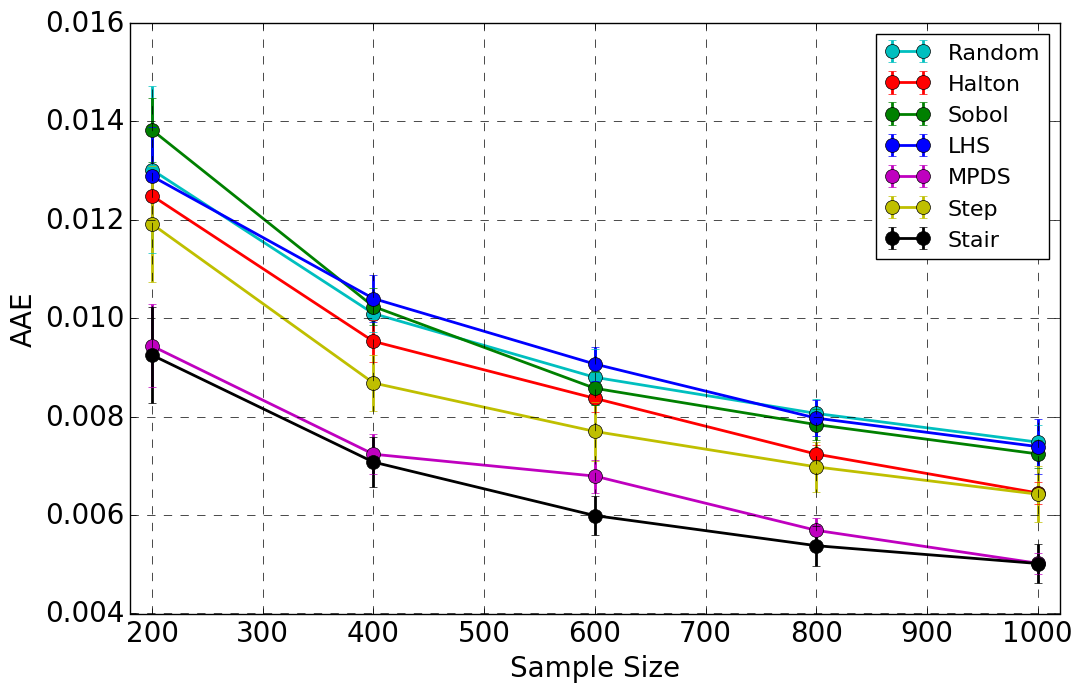}}&\raisebox{-.5\height}{\includegraphics[trim = 1cm 0cm 0cm 0cm ,width=0.25\textwidth,clip=True]{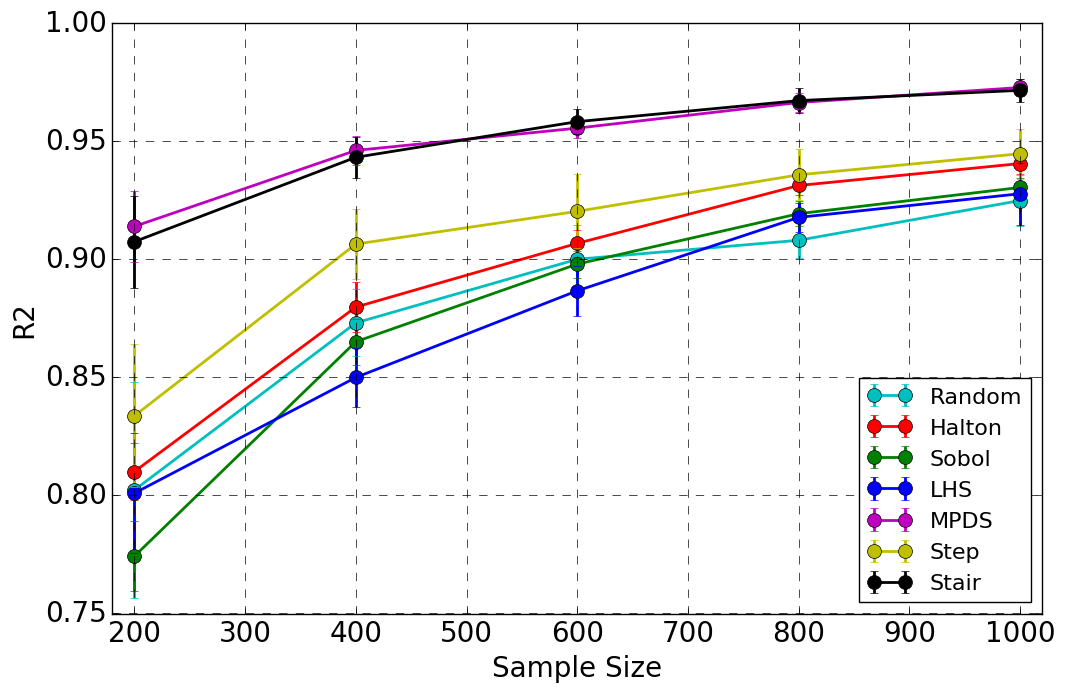}} \\
		
		\hline
		
		\footnotesize \colorbox{gray!25}{Powell}&\raisebox{-.5\height}{\includegraphics[trim = 1cm 0cm 0cm 0cm,width=0.25\textwidth,clip=True]{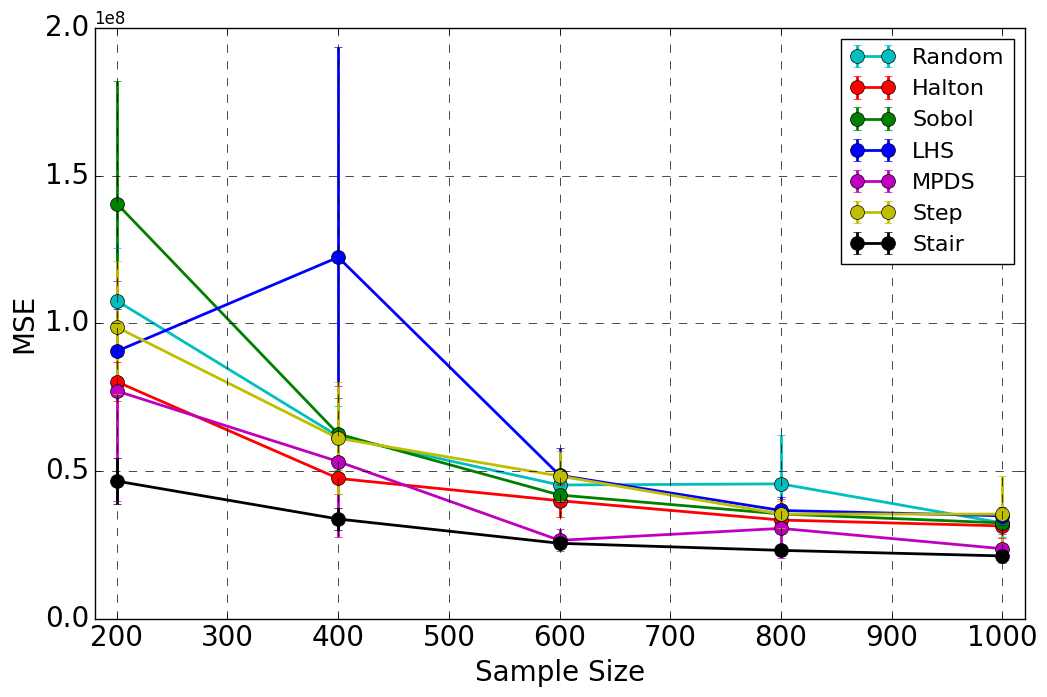}}&\raisebox{-.5\height}{\includegraphics[trim = 1cm 0cm 0cm 0cm ,width=0.25\textwidth,clip=True]{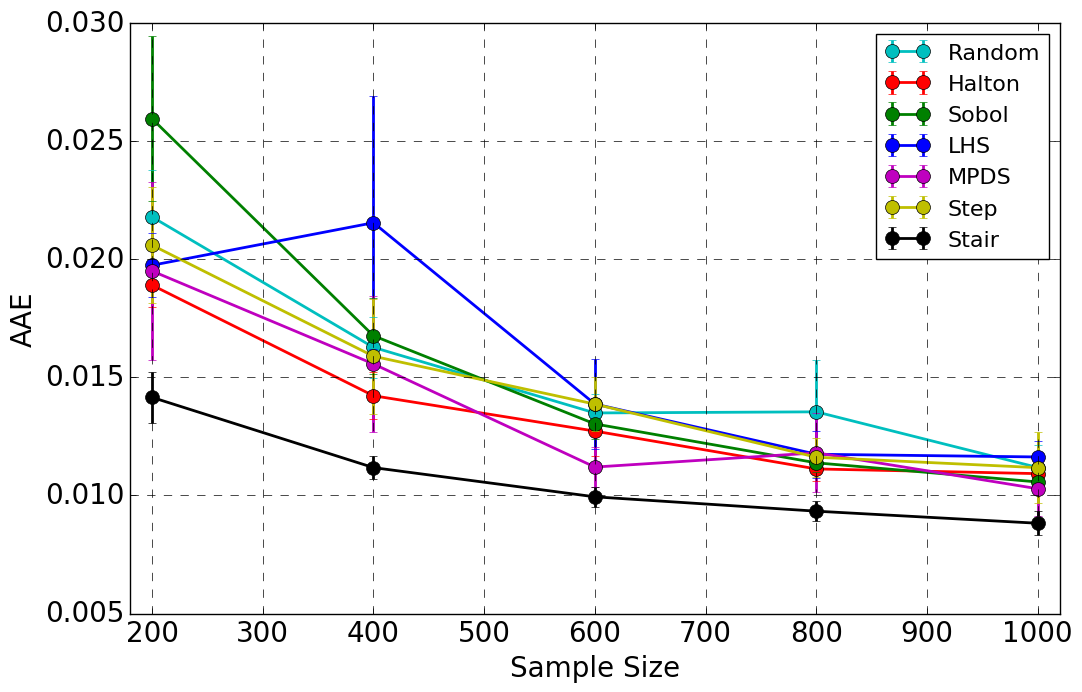}}&\raisebox{-.5\height}{\includegraphics[trim = 1cm 0cm 0cm 0cm ,width=0.25\textwidth,clip=True]{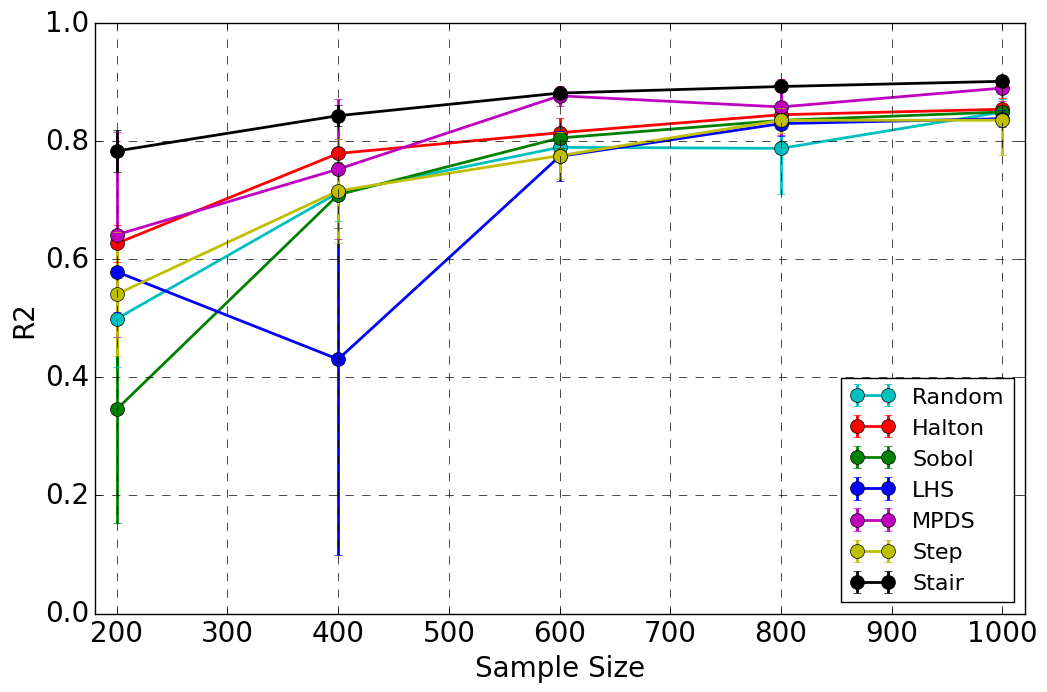}} \\

		\bottomrule
		
	\end{tabularx}
\end{table} 

\begin{table}[th]
	\centering
	\caption{Impact of sample design on generalization performance of regression models fit to benchmark analytical functions in $5$ dimensions. In higher dimensions, conventional methods such as the LHS and QMC perform very poorly, while Stair PCF design significantly outperforms all competing methods, because of improved trade-off between coverage and randomness properties.}\label{reg5}
	\begin{tabularx}{\textwidth}{@{}YYYY@{}}
		\toprule
		\textbf{Function} & \textbf{MSE} & \textbf{AAE} & \textbf{R2-Statistic}\\
		\hline
		
		\footnotesize \colorbox{gray!25}{Dolan}&\raisebox{-.5\height}{\includegraphics[trim = 1cm 0cm 0cm 0cm,width=0.25\textwidth,clip=True]{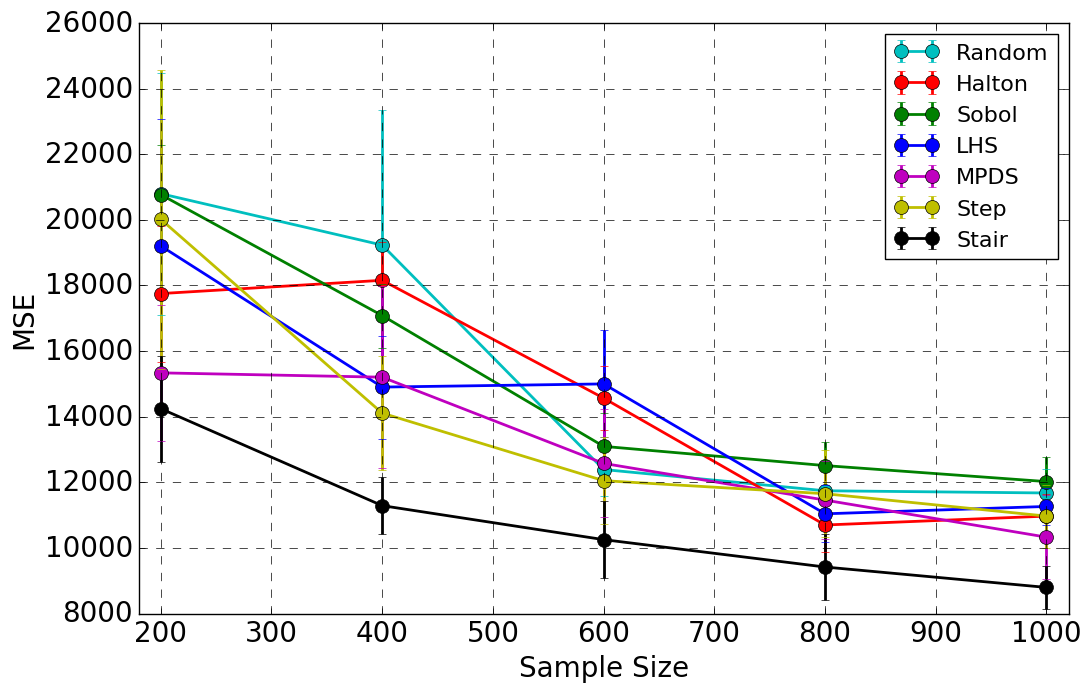}}&\raisebox{-.5\height}{\includegraphics[trim = 1cm 0cm 0cm 0cm ,width=0.25\textwidth,clip=True]{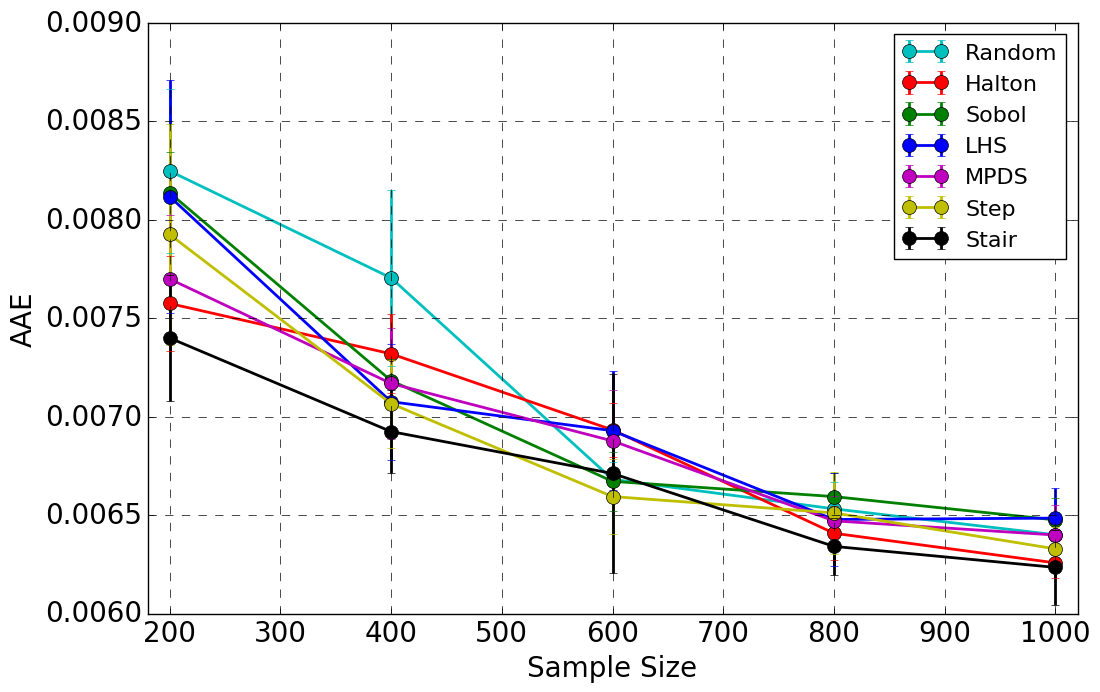}}&\raisebox{-.5\height}{\includegraphics[trim = 1cm 0cm 0cm 0cm ,width=0.25\textwidth,clip=True]{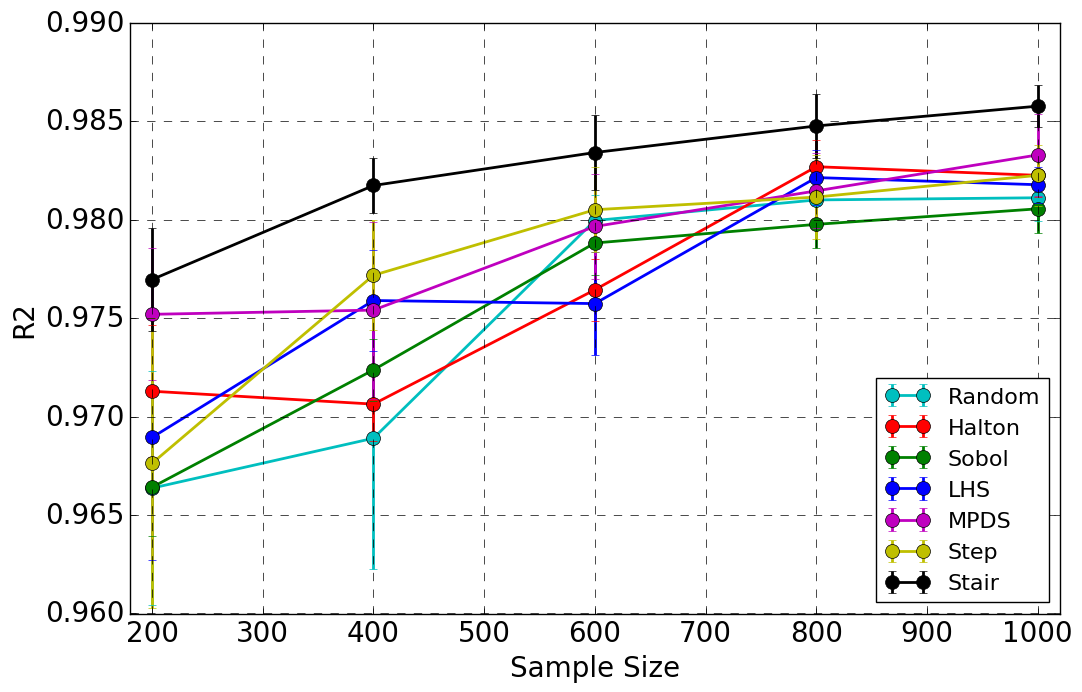}} \\
		
		\hline
		
		\footnotesize \colorbox{gray!25}{BiggsExp05}&\raisebox{-.5\height}{\includegraphics[trim = 1cm 0cm 0cm 0cm,width=0.25\textwidth,clip=True]{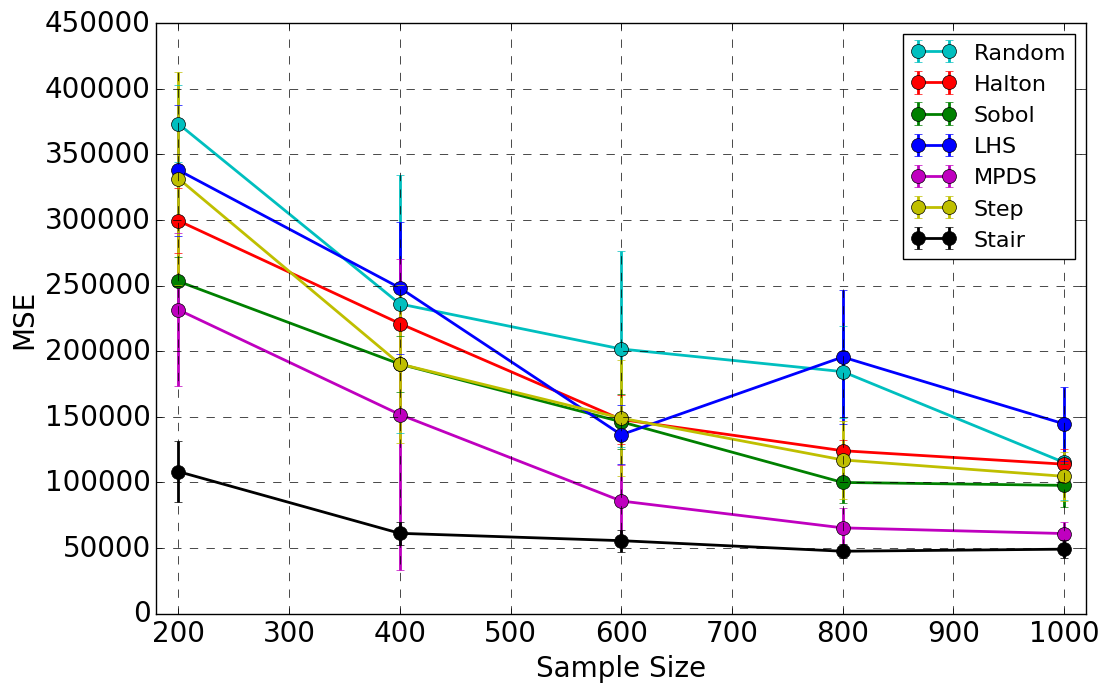}}&\raisebox{-.5\height}{\includegraphics[trim = 1cm 0cm 0cm 0cm ,width=0.25\textwidth,clip=True]{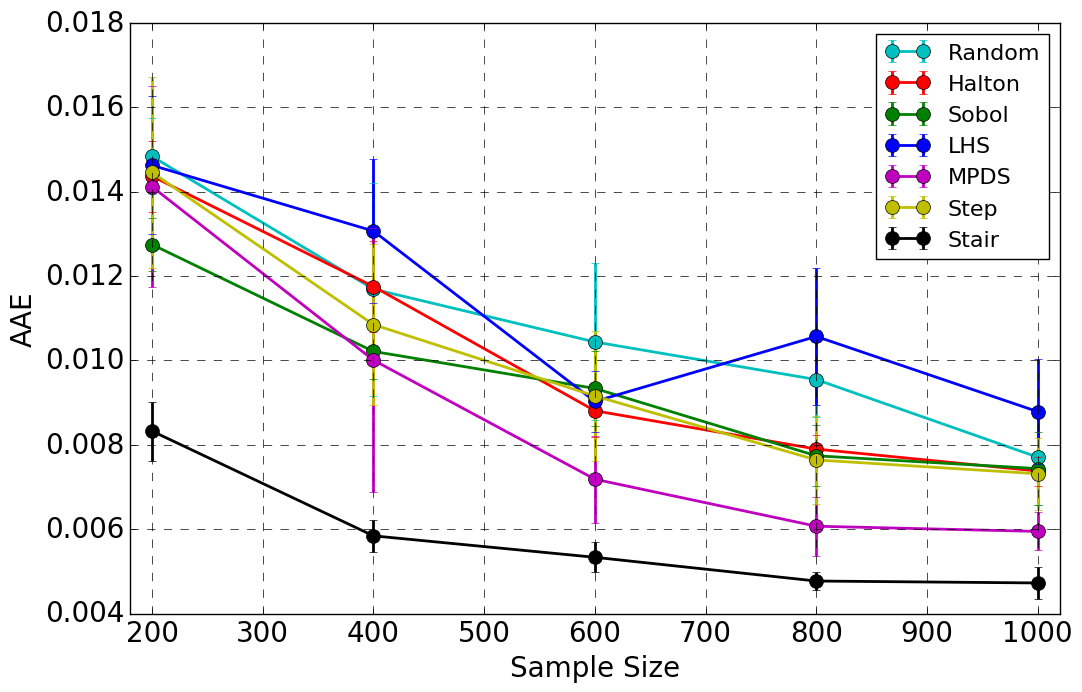}}&\raisebox{-.5\height}{\includegraphics[trim = 1cm 0cm 0cm 0cm ,width=0.25\textwidth,clip=True]{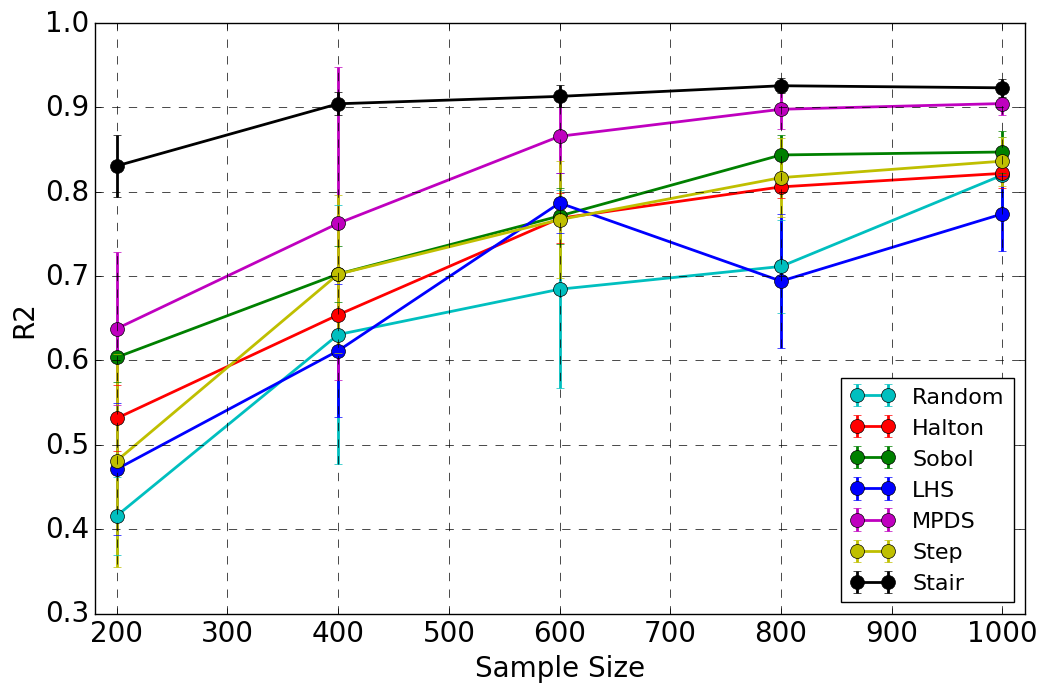}} \\

		\bottomrule
		
	\end{tabularx}
\end{table} 

\begin{table}[th]
	\centering
	
	\caption{Impact of sample design on generalization performance of regression models fit to benchmark analytical functions in $6$ dimensions. Even with highly complex functions such as the \textit{Hartmann6}, the proposed Stair PCF based spectral space-filling design produces more accurate regression models, thus evidencing the importance of improved space-filling characteristics.}\label{reg6}
	\begin{tabularx}{\textwidth}{@{}YYYY@{}}
		\toprule
		\textbf{Function} & \textbf{MSE} & \textbf{AAE} & \textbf{R2-Statistic}\\
		\hline
		
		\footnotesize \colorbox{gray!25}{Trid}&\raisebox{-.5\height}{\includegraphics[trim = 1cm 0cm 0cm 0cm,width=0.25\textwidth,clip=True]{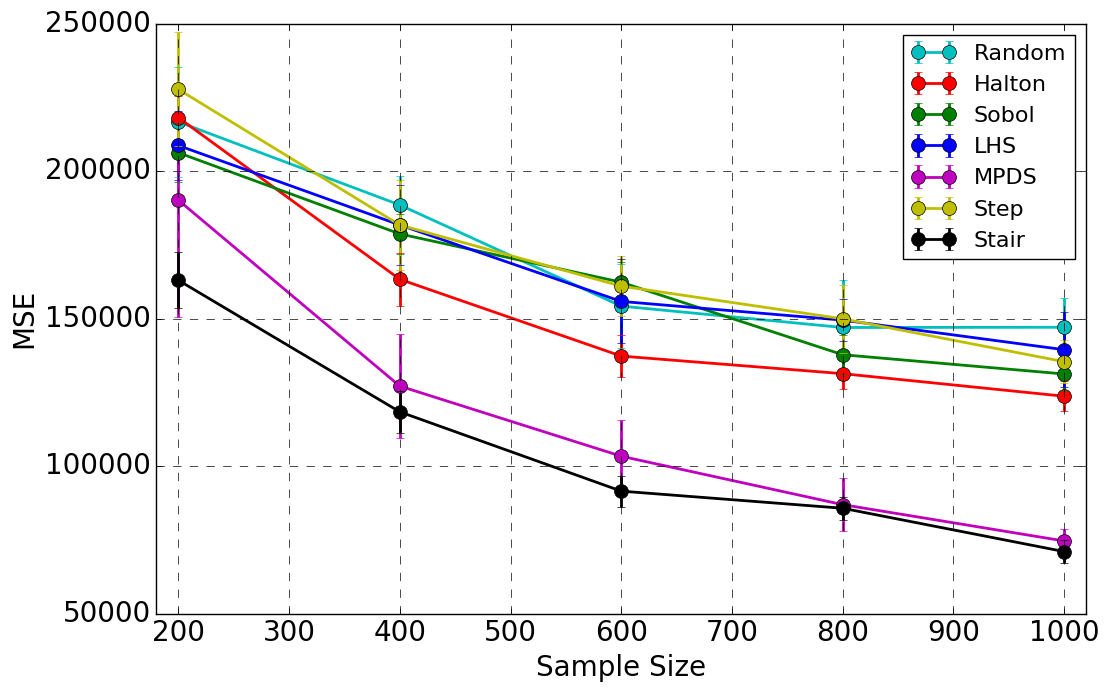}}&\raisebox{-.5\height}{\includegraphics[trim = 1cm 0cm 0cm 0cm ,width=0.25\textwidth,clip=True]{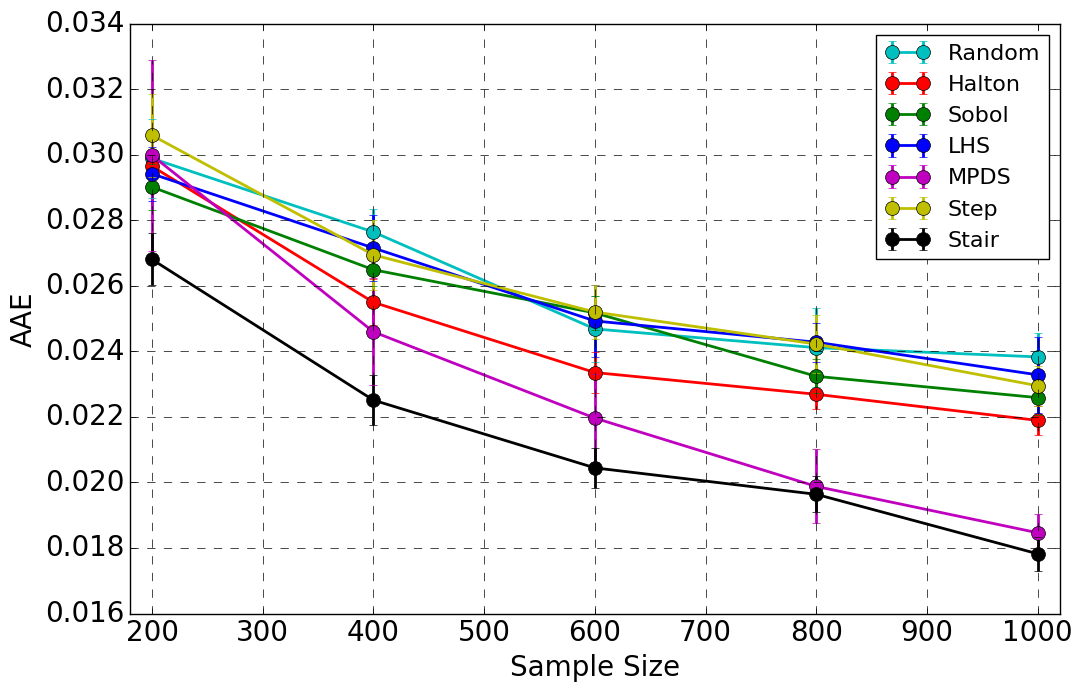}}&\raisebox{-.5\height}{\includegraphics[trim = 1cm 0cm 0cm 0cm ,width=0.25\textwidth,clip=True]{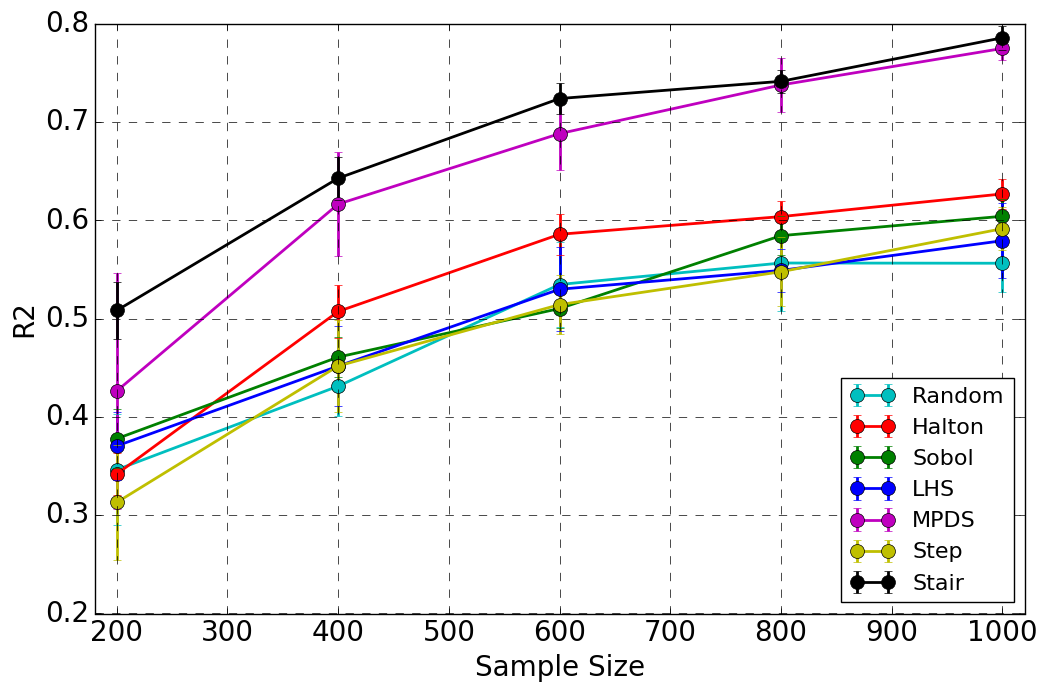}} \\
		
		\hline
		
		\footnotesize \colorbox{gray!25}{Hartmann6}&\raisebox{-.5\height}{\includegraphics[trim = 1cm 0cm 0cm -2 cm,width=0.25\textwidth,clip=True]{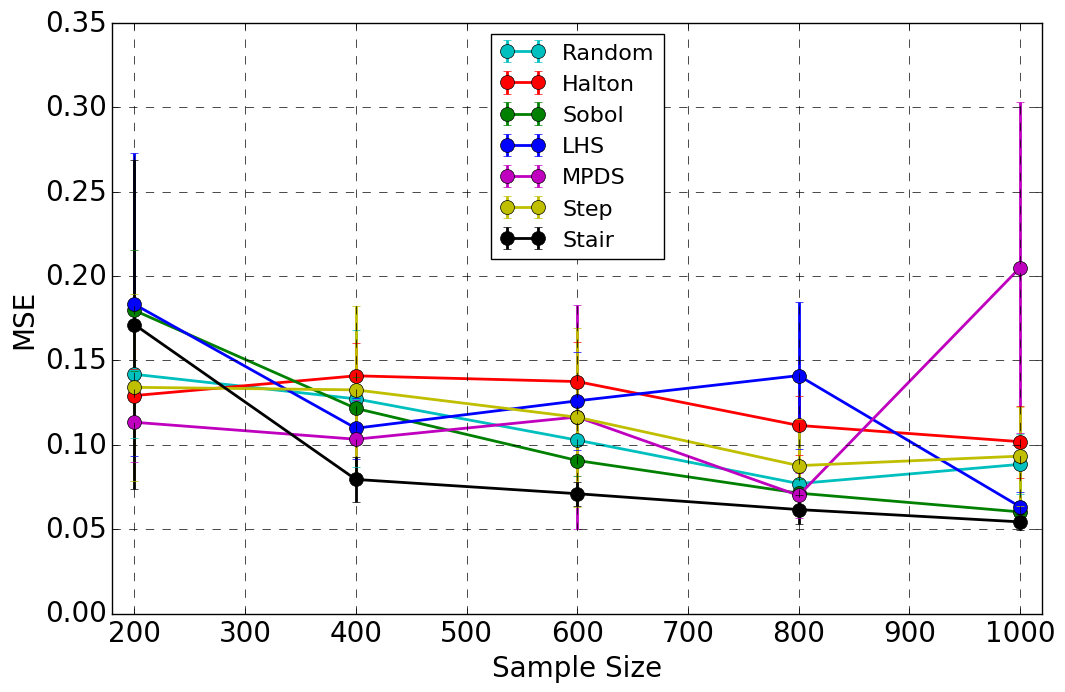}}&\raisebox{-.5\height}{\includegraphics[trim = 1cm 0cm 0cm -2 cm ,width=0.25\textwidth,clip=True]{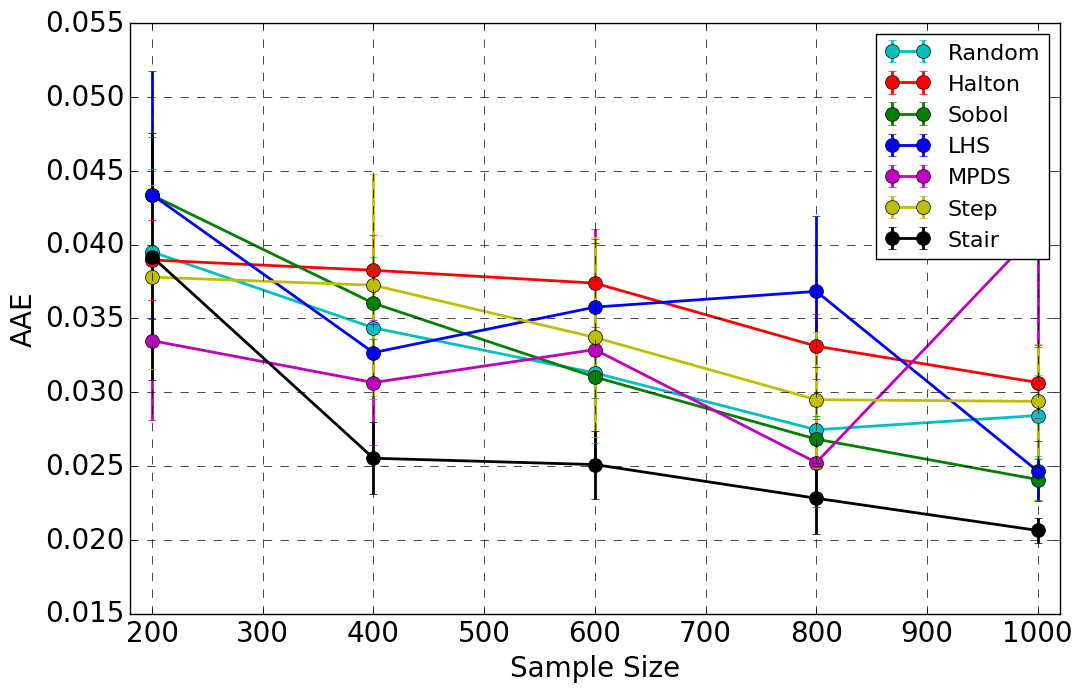}}&\raisebox{-.5\height}{\includegraphics[trim = 1cm 0cm 0cm -2 cm ,width=0.25\textwidth,clip=True]{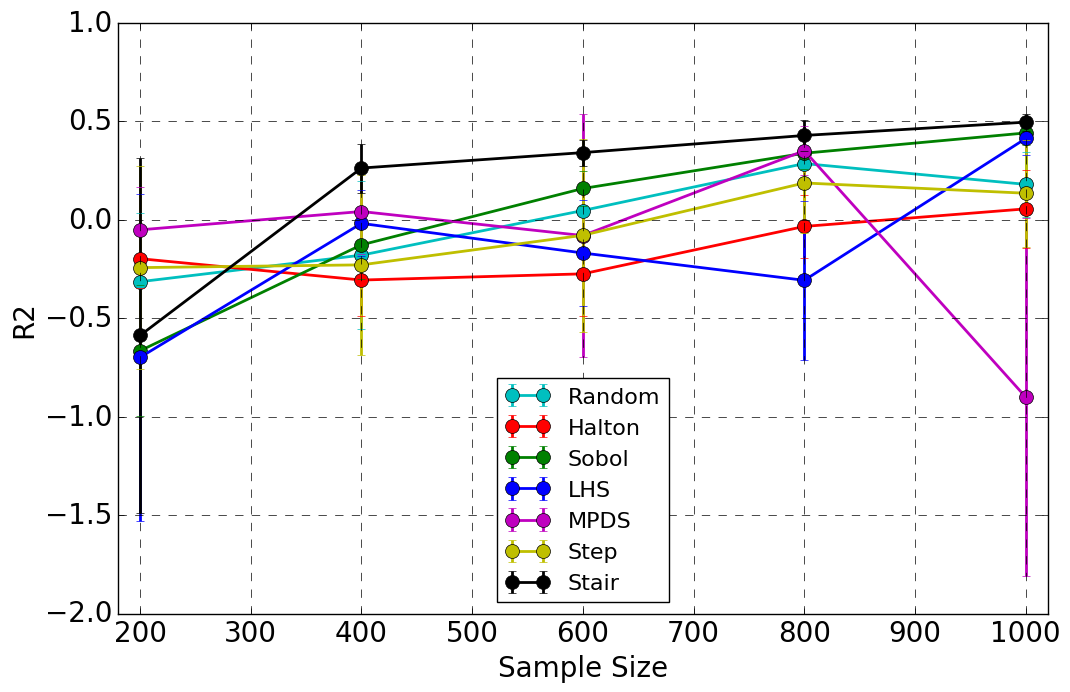}} \\

		\bottomrule
		
	\end{tabularx}
\end{table}

\begin{table}[th]
	\centering
	
	\caption{Performance of surrogate models for the NIF $1-$d HYDRA simulator using different sample design techniques, with varying number of input parameters. While the conventional sample designs achieve reasonable performance in low dimensions, the proposed Stair PCF based design is consistently superior as the dimension of the input space grows.}\label{nif:exp}
	\begin{tabularx}{\textwidth}{@{}YYYY@{}}
		\toprule
		\textbf{Function} & \textbf{MSE} & \textbf{AAE} & \textbf{R2-Statistic}\\
		\hline
		
		\multicolumn{4}{l}{\cellcolor{blue!10}\footnotesize \textbf{Parameter Space Dimension = 2}} \\
		
		\footnotesize \colorbox{gray!25}{PEAK fusion power}&\raisebox{-.5\height}{\includegraphics[trim = 1cm 0cm 0cm 0cm,width=0.25\textwidth,clip=True]{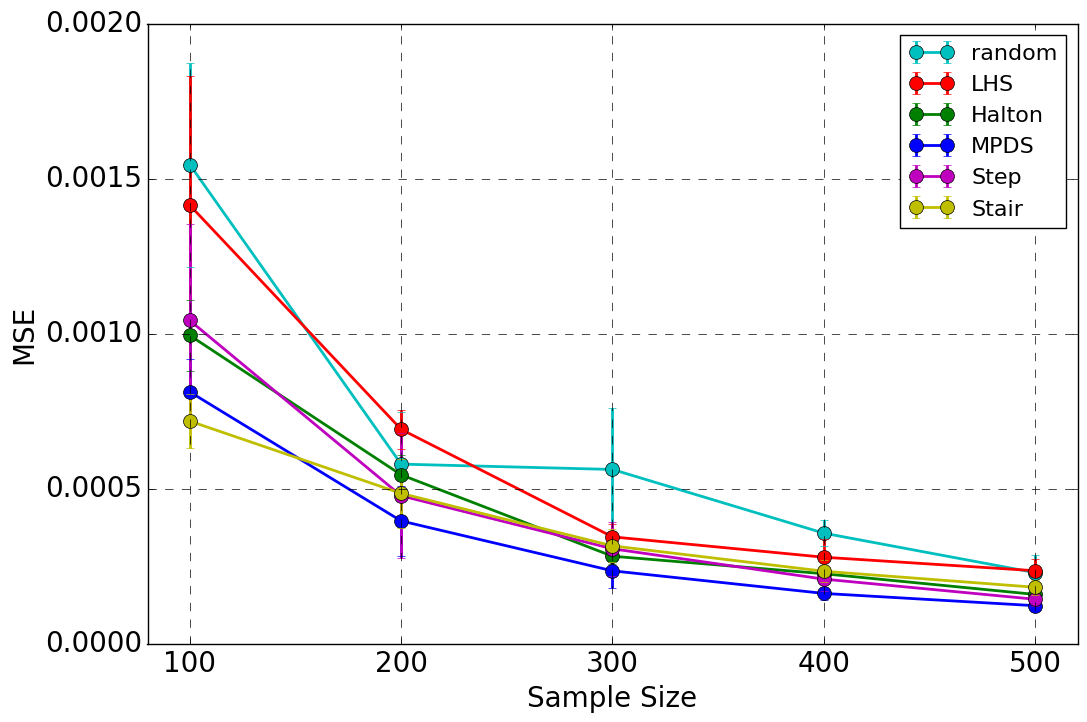}}&\raisebox{-.5\height}{\includegraphics[trim = 1cm 0cm 0cm 0cm ,width=0.25\textwidth,clip=True]{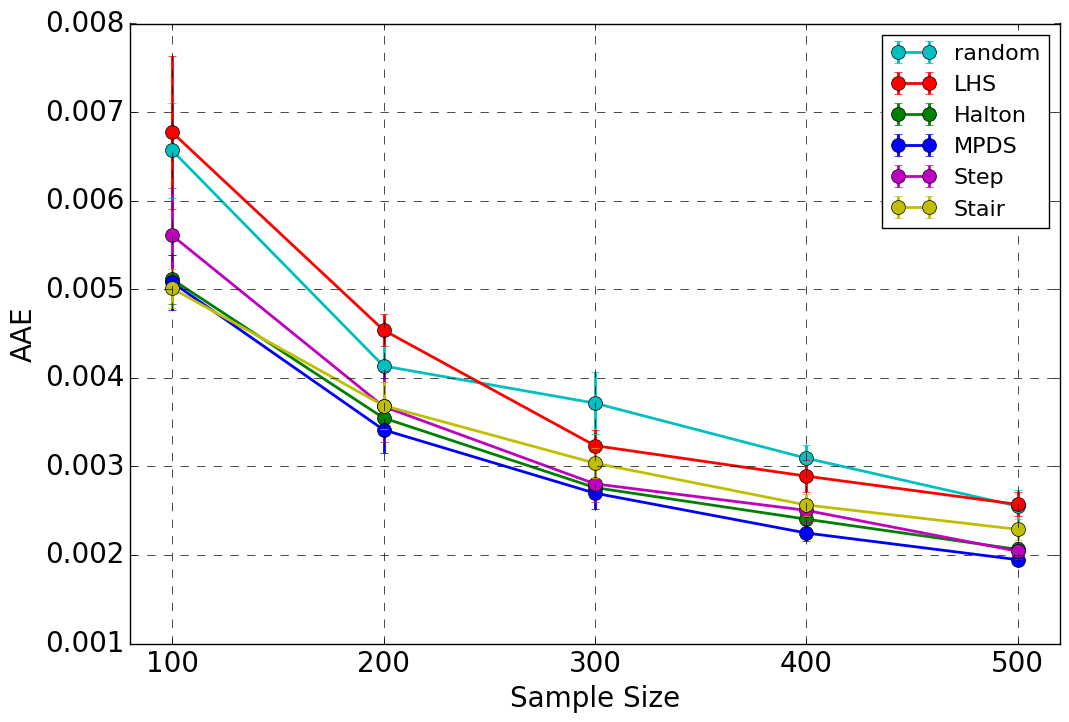}}&\raisebox{-.5\height}{\includegraphics[trim = 1cm 0cm 0cm 0cm ,width=0.25\textwidth,clip=True]{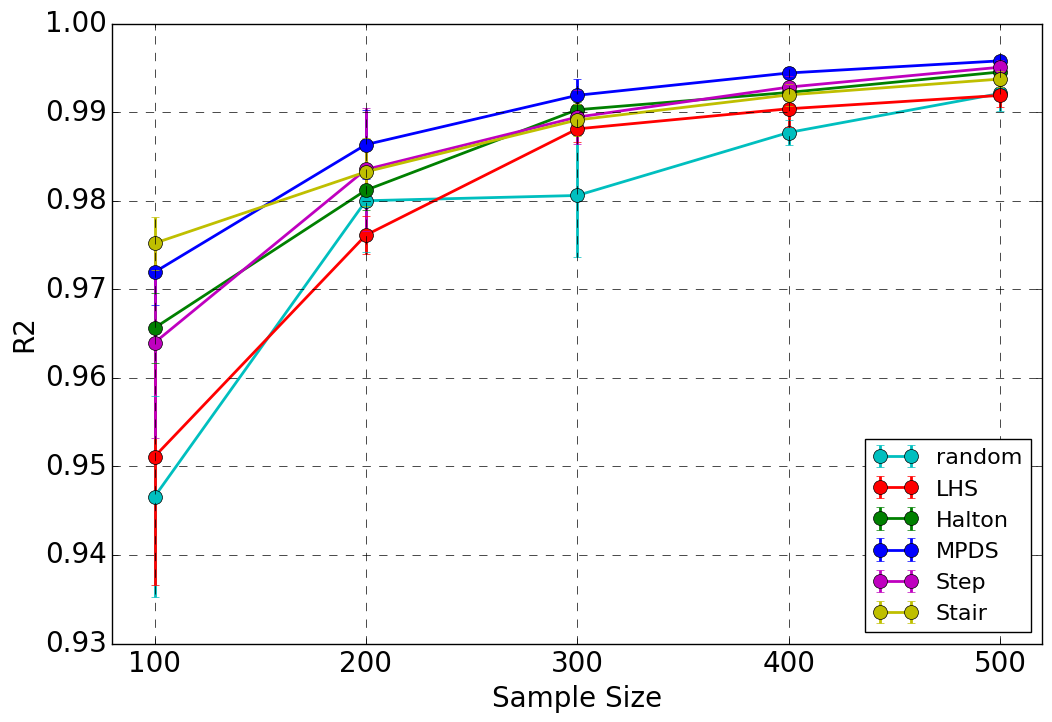}} \\
		
		\hline
		
		\multicolumn{4}{l}{\cellcolor{blue!10} \footnotesize \textbf{Parameter Space Dimension = 3}} \\
		
		\footnotesize \colorbox{gray!25}{Radiation energy}&\raisebox{-.5\height}{\includegraphics[trim = 1cm 0cm 0cm 0cm,width=0.25\textwidth,clip=True]{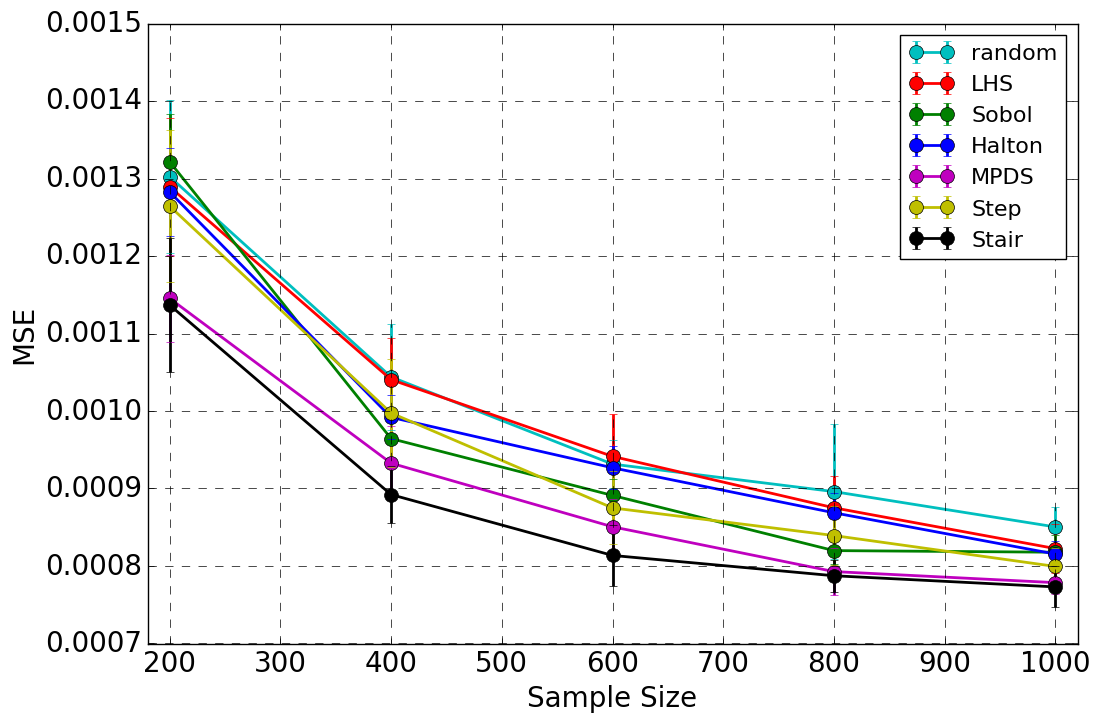}}&\raisebox{-.5\height}{\includegraphics[trim = 1cm 0cm 0cm 0cm ,width=0.25\textwidth,clip=True]{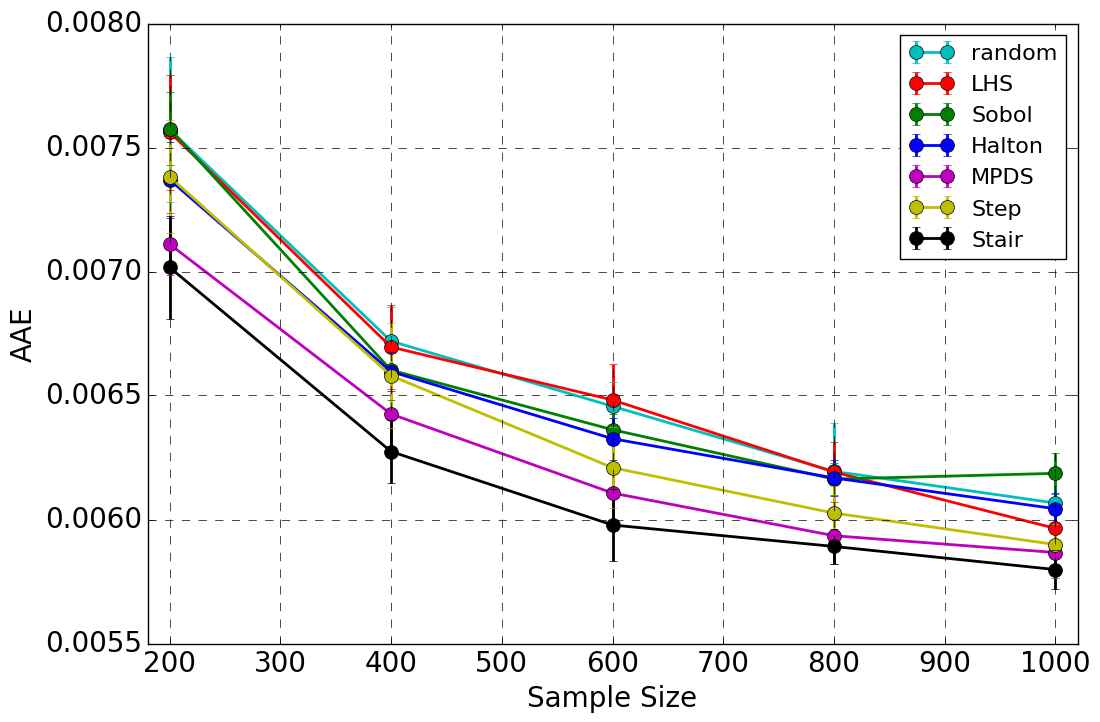}}&\raisebox{-.5\height}{\includegraphics[trim = 1cm 0cm 0cm 0cm ,width=0.25\textwidth,clip=True]{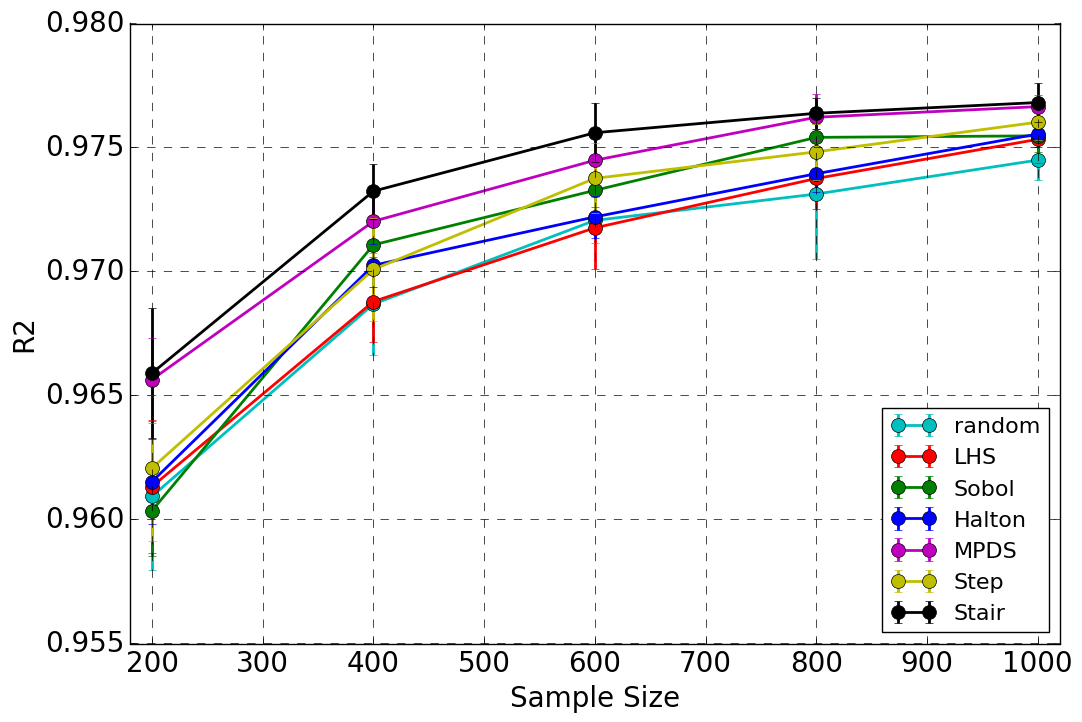}} \\
		
		\hline
		
		\multicolumn{4}{l}{\cellcolor{blue!10}\footnotesize \textbf{Parameter Space Dimension = 4}} \\
		
		\footnotesize \colorbox{gray!25}{MINradius shock}&\raisebox{-.5\height}{\includegraphics[trim = 1cm 0cm 0cm 0cm,width=0.25\textwidth,clip=True]{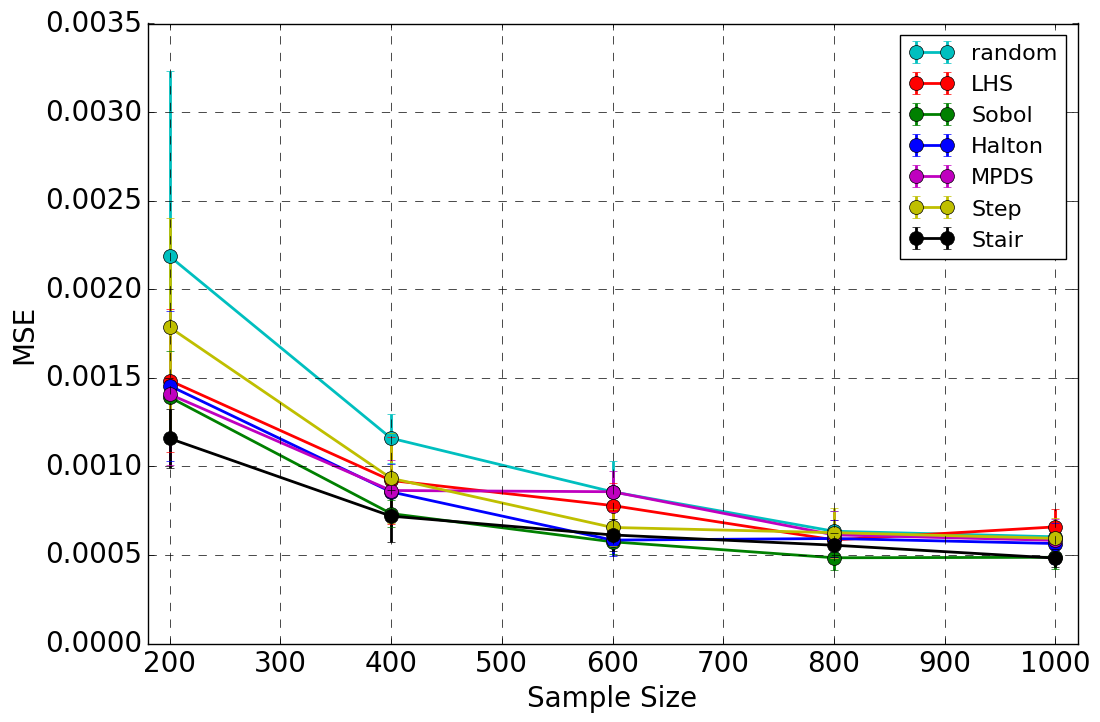}}&\raisebox{-.5\height}{\includegraphics[trim = 1cm 0cm 0cm 0cm ,width=0.25\textwidth,clip=True]{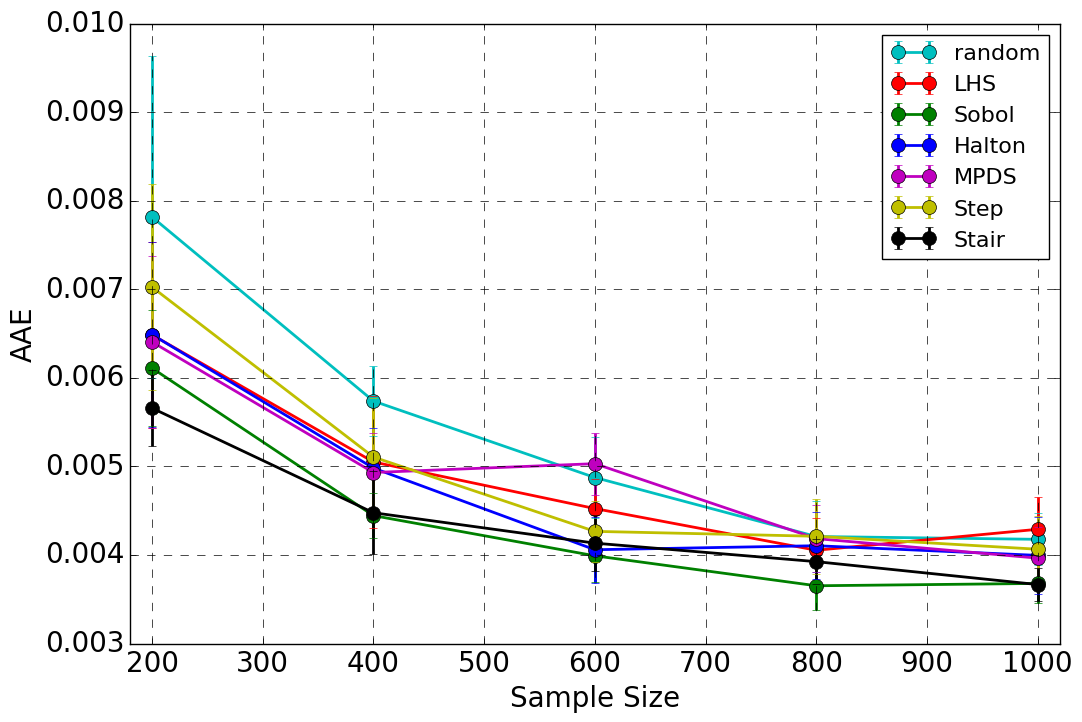}}&\raisebox{-.5\height}{\includegraphics[trim = 1cm 0cm 0cm 0cm ,width=0.25\textwidth,clip=True]{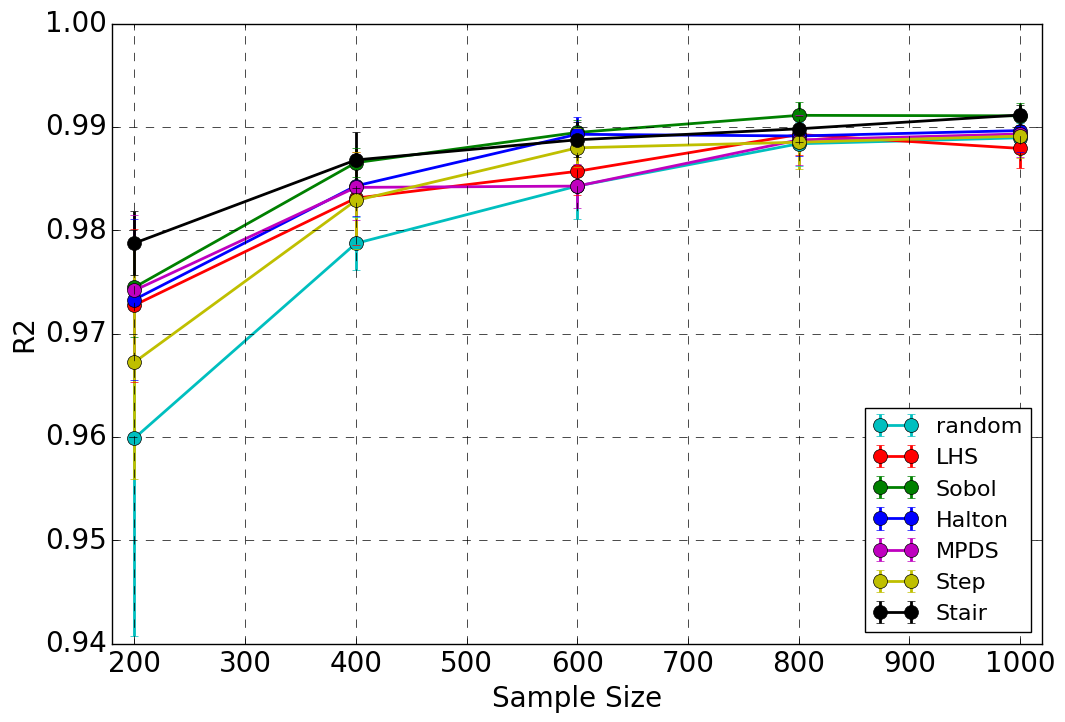}} \\
		
		\hline
		
		 \multicolumn{4}{l}{\cellcolor{blue!10}\footnotesize \textbf{Parameter Space Dimension = 5}} \\
		
		\footnotesize \colorbox{gray!25}{MINradius shock}&\raisebox{-.5\height}{\includegraphics[trim = 1cm 0cm 0cm 0cm,width=0.25\textwidth,clip=True]{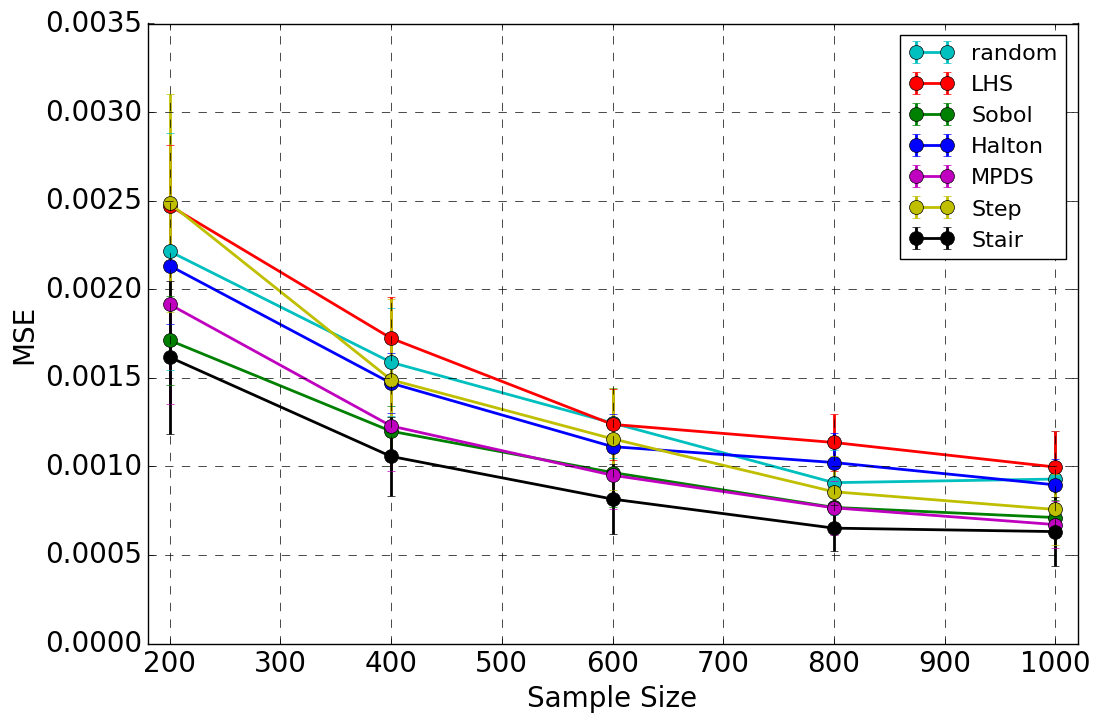}}&\raisebox{-.5\height}{\includegraphics[trim = 1cm 0cm 0cm 0cm ,width=0.25\textwidth,clip=True]{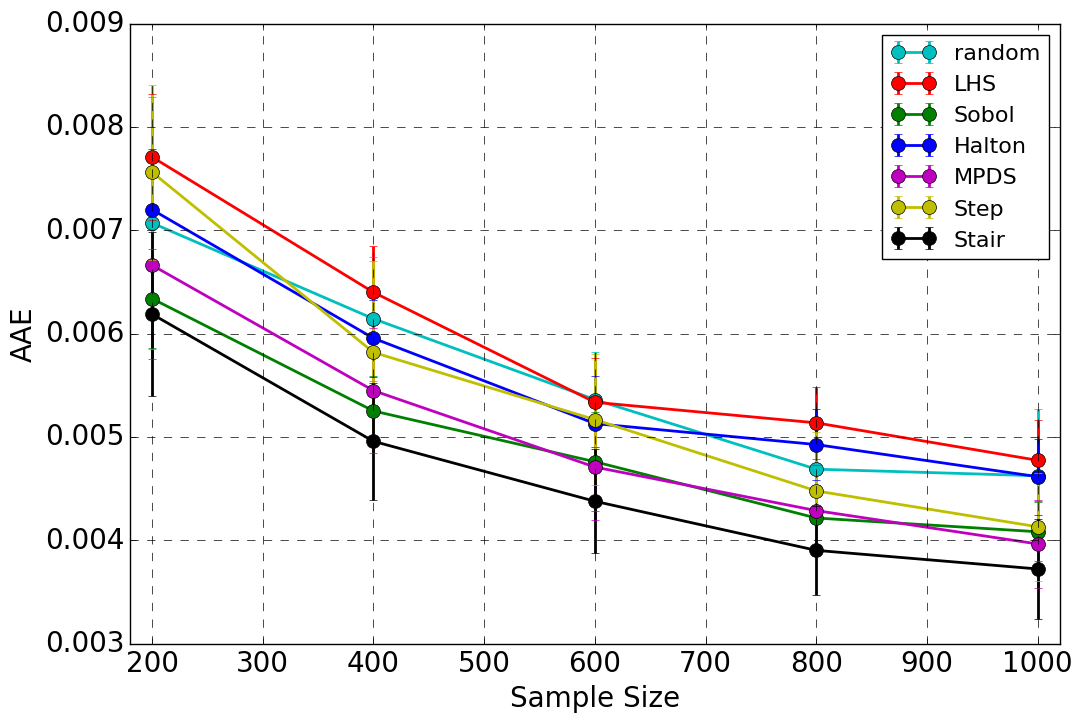}}&\raisebox{-.5\height}{\includegraphics[trim = 1cm 0cm 0cm 0cm ,width=0.25\textwidth,clip=True]{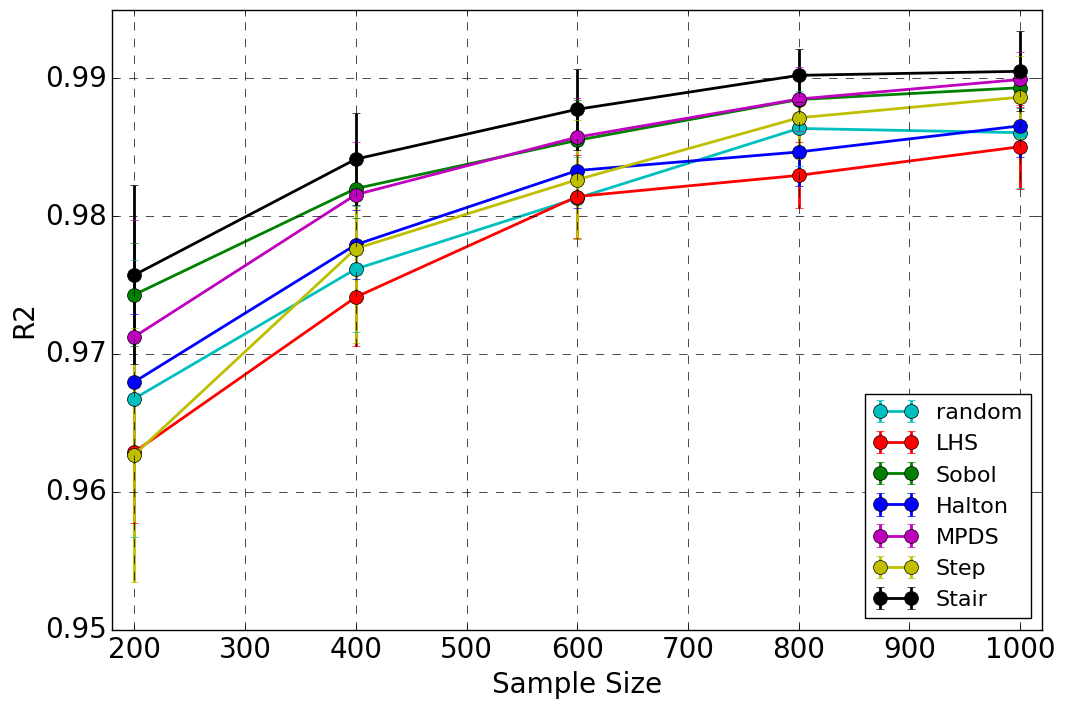}} \\
		
		\hline
		
		\multicolumn{4}{l}{\cellcolor{blue!10}\footnotesize \textbf{Parameter Space Dimension = 6}} \\
		
		\footnotesize \colorbox{gray!25}{MAXpressure}&\raisebox{-.5\height}{\includegraphics[trim = 1cm 0cm 0cm 0cm,width=0.25\textwidth,clip=True]{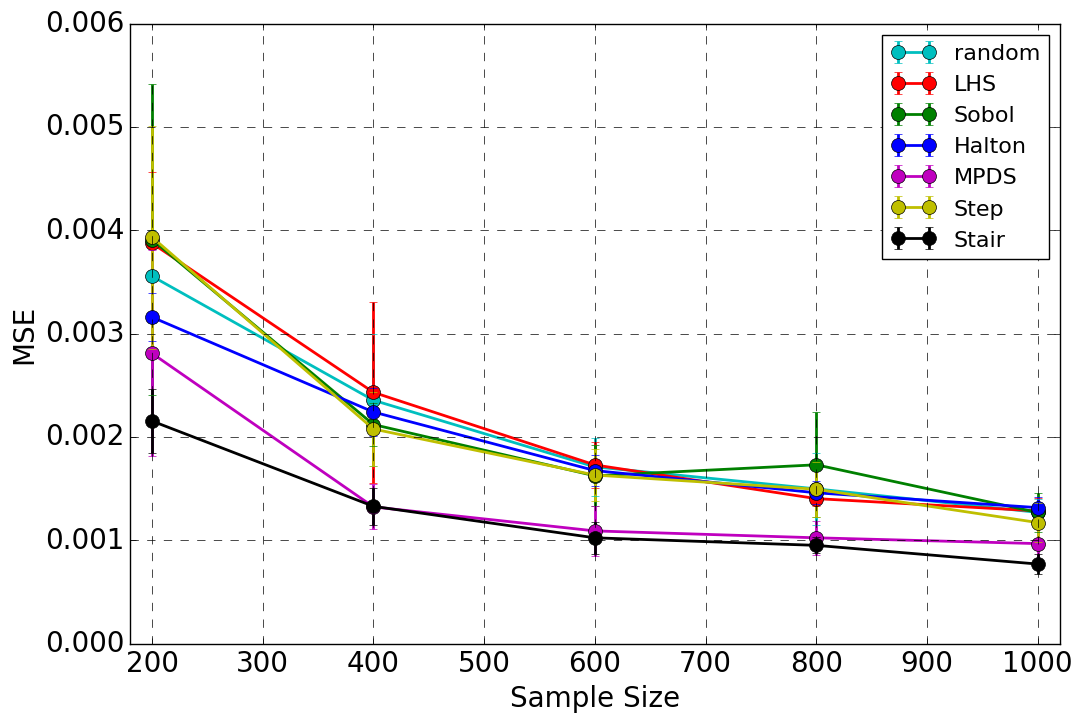}}&\raisebox{-.5\height}{\includegraphics[trim = 1cm 0cm 0cm 0cm ,width=0.25\textwidth,clip=True]{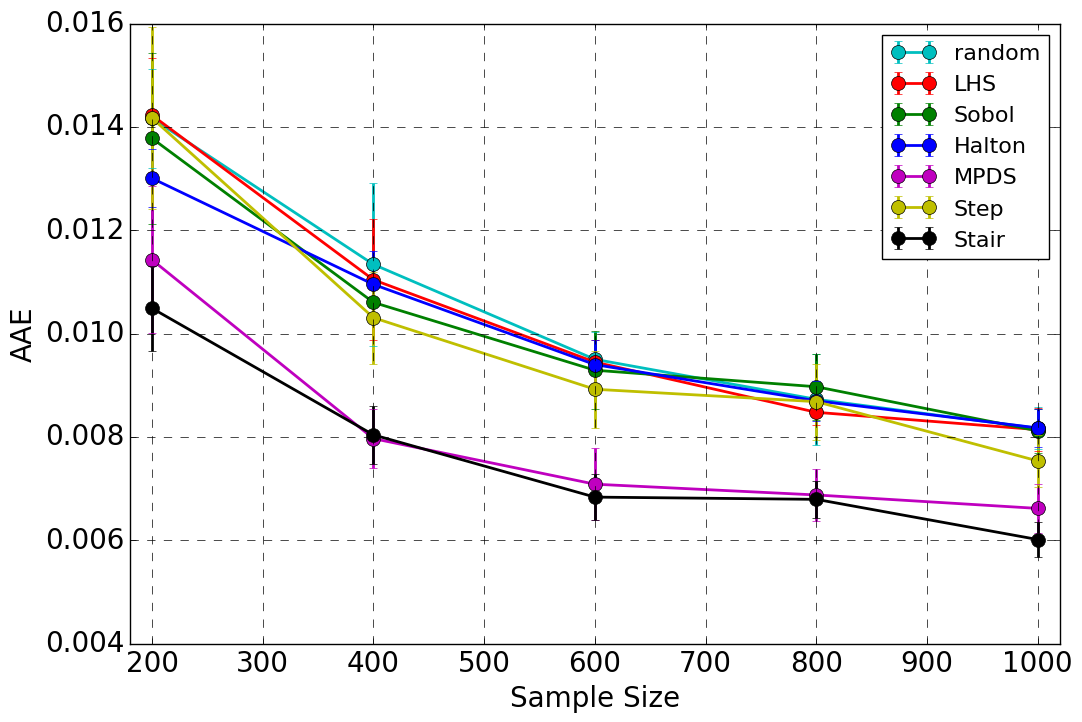}}&\raisebox{-.5\height}{\includegraphics[trim = 1cm 0cm 0cm 0cm ,width=0.25\textwidth,clip=True]{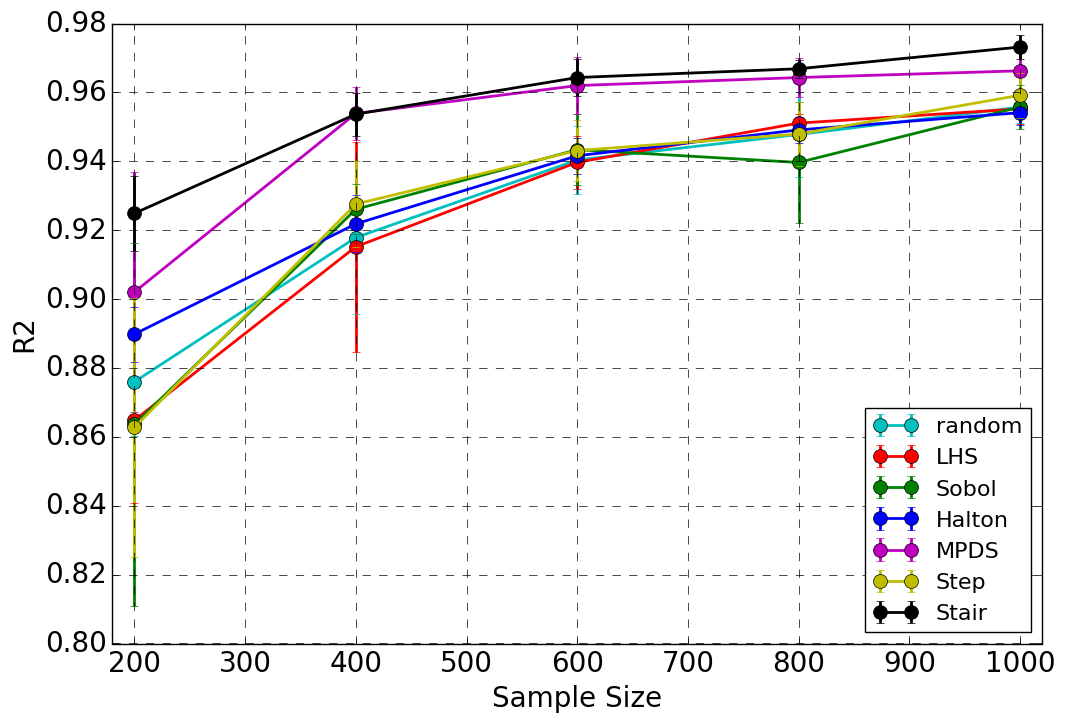}} \\
		
		\bottomrule
		
	\end{tabularx}
\end{table} 

\subsection{Surrogate Model Design for an Inertial Confinement Fusion (ICF) Simulator}

In this subsection, we consider the problem of designing surrogate models for an inertial confinement fusion (ICF) simulator developed at the National Ignition Facility (NIF). The NIF is aimed at demonstrating inertial confinement fusion (ICF), that is, thermonuclear ignition and energy gain in a laboratory setting. The goal is to focus $192$ beams of the most energetic
laser built so far onto a tiny capsule containing frozen deuterium. Under the right conditions, the resulting pressure will collapse the target to the point
of ignition where hydrogen starts to fuse and produce massive amounts of energy, effectively creating a small star which can be harnessed for energy production. Though significant progress has been made, the ultimate
goal of ``ignition" has not yet been reached. 

NIF employs an adaptive pipeline: perform experiments, use post-shot simulations to understand the experimental results, and design new experiments with parameter settings that are expected to improve performance. From an analysis viewpoint, the goal is to search the parameter space to find the region that leads to near-optimal performance. The dataset considered here is a so called engineering or macro-physics simulation ensemble in which an implosion is simulated using different input parameters, such as, laser power, pulse shape etc. From these simulations, scientists extract a set of drivers, physical quantities believed to determine the behavior of the resulting implosion. These drivers are then analyzed with respect to the energy yield to better understand how to optimize future experiments. As one can expect, the success of this pipeline heavily depends on the quality of samples used for post-shot simulations. 

We use the NIF $1$-d HYDRA simulator~\citep{hydra} and compare the performance of proposed space-filling spectral designs with existing approaches (random, LHS, Halton and Sobol and MPDS). 
For each simulation run, a large number of output quantities, such as peak velocity, yield, etc., are computed, and subsequently used to describe the resulting implosion. We vary the number of input parameters between $2$ and $6$, and fix the remaining variables to their default values. In each case, we fit a random forest regressor with $30$ trees and repeated for $20$ independent realizations of sample designs. We evaluated the reconstruction performance on $10^5$ regular grid based test samples using the metrics in the previous experiment.

Table~\ref{nif:exp} shows the regression performance of the different sample designs for various output quantities in dimensions $2$ to $6$. We observe that regression error patterns are consistent with our observations in Section~\ref{sm:bf}. The proposed Stair PCF based design consistently performs the best (followed by MPDS) for $d\geq 3$. Furthermore, the performance gain with the Stair PCF based design improves as we go higher in dimensions. This performance gain can be credited to their ability to achieve better space-filling properties in high dimensions by intelligently balancing the trade-off between coverage and randomness, and the effectiveness of the quality metric (PCF) adopted for design and optimization.


\section{Conclusion and Future Directions}
In this work, we considered the problem of constructing high quality space-filling designs. 
We proposed the use of pair correlation function (PCF) to quantify the space-filling property and systematically traded-off coverage and randomness in sample designs in arbitrary dimensions. 
Next, we linked PCF to the power spectral density (PSD) to analyze the objective measure of the design performance.  
Using the insights provided by this spatial-spectral analysis, we proposed novel space-filling spectral designs. 
We also provided an efficient PCF estimator to evaluate the space-filling properties of sample designs in arbitrary dimensions. 
Next, we devised a gradient descent based optimization algorithm to generate high quality space-filling designs. 
Superiority of proposed space-filling spectral designs were shown 
on two different applications in $2$ to $6$ dimensions: a) image reconstruction and b) surrogate
modeling on several benchmark optimization functions and an inertial confinement fusion (ICF) simulation code.
There are still many interesting questions that remain to be explored in the future work such as an analysis of the problem for non-linear manifolds. Note that, some analytical methodologies used in this paper are certainly exploitable for studying and designing space-filling designs in different manifolds. Other questions such as PCF parameterizations for other variants of space-filling designs and optimization approaches to synthesize them can also be investigated.

\acks{Acknowledgments}
This work was performed under the auspices of the U.S. Department of Energy by Lawrence Livermore National Laboratory under Contract DE-AC52-07NA27344. LLNL-JRNL-743060

\appendix
\section{Benchmark Optimization Functions}\label{apd:first}
\subsection{2 Dimensional Functions}
\textbf{Rosenbrock:} $\sum_{p=1}^2\left( 100((x^p)^2-x^{p+1})+(x^p-1)^2\right)$\\
\textbf{Cube:} $100(x^2-(x^1)^3)^2 + (1-x^1)^2$\\
\textbf{Chichinadze:} $(x^1)^2-12 x^1 + 8\sin(2.5\pi x^1)+10\cos(0.5\pi x^1)+11-0.2\dfrac{\sqrt{5}}{\exp(0.5(x^2-0.5)^2)}$\\
\textbf{GoldsteinPrice:} $\left(1+(x^1+x^2+1)^2(19-14 x^1+3(x^1)^2-14 x^2 +6 x^1 x^2 +3 (x^2)^2)\right)\\
\qquad \left(30+(2 x^1-3 x^2)^2 (18-32 x^1 + 12 (x^1)^2 + 48 x^2 -36 x^1 x^2 + 27 (x^2)^2)\right)$\\

\subsection{3 Dimensional Functions}
\textbf{BoxBetts:} $\sum_{p=1}^3 g(x^i)^2; g(x) = \exp(-0.1(p+1)x^1)-\exp(-0.1(p+1)x^2)-(\exp(-0.1(p+1)) -\exp(-(p+1)) x^3)$\\
\textbf{HelicalValley:} $100 (x^3-10\psi(x^1,x^2))^2 +(\sqrt{(x^1)^2+(x^2)^2}-1)^2 + (x^3)^2; 2\pi \psi(x^1,x^2) = \tan^{-1} (x^2/x^1)\; \textbf{if}\; x^1 \geq 0,\;\textbf{and}\; \pi+\tan^{-1}(x^2/x^1)\;\textbf{otherwise.}$\\
\textbf{Wolfe:} $\frac{4}{3} ((x^1)^2+(x^2)^2-x^1 x^2)^{0.75} + x^3$\\
\textbf{Hartmann3:} $-\sum_{i=1}^4 c_i \exp(-\sum_{j=1}^3 a_{ij}(x^j-p_{ij})^2)$\\

\subsection{4 Dimensional Functions}
\textbf{DeVilliersGlasser01:} $\sum_{i=1}^{24} \left(x^1 (x^2)^{0.1(i-1)} \sin(x^3 (0.1(i-1))+x^4)-y_i\right)^2$\\
\textbf{Powell:} $(x^3+10 x^1)^2 + 5(x^2-x^4)^2 + (x^1 -2 x^2)^4 + 10(x^3-x^4)^4$\\
\textbf{Colville:} $(x^1-1)^2 + 100 ((x^1)^2-x^2)^2 + 10.1 (x^2 -1)^2 +(x^3-1)^2 + 90 ((x^3)^2-x^4)^2+10.1 (x^4-1)^2+19.8 \frac{x^4-1}{x^2}$\\

\subsection{5 Dimensional Functions}
\textbf{BiggsExp05:}  $\sum_{i=1}^{11} (x^3 e^{-t_i x^1} - x^4 e^{-t_i x^2} + 3 e^{-t_i x^5} - y_i)^2;\;t_i = 0.1i;\;y_i = e^{-t_i} - 5e^{-10 t_i} + 3e^{-4 t_i}$\\  
\textbf{Dolan:} $|(x61+1.7 x^2)\sin(x^1) - 1.5 x^3 - 0.1 x^4 \cos(x^5-x^1)+0.2 (x^5)^2-x^2 -1 |$\\

\subsection{6 Dimensional Functions}
\textbf{Trid:} $\sum_{p=1}^6 (x^p-1)^2-\sum_{p=2}^6 x^p x^{p-1}$\\
\textbf{Hartmann6:} $-\sum_{i=1}^4 c_i \exp(-\sum_{j=1}^6 a_{ij}(x^j-p_{ij})^2)$\\

\bibliography{references}

\end{document}